\documentclass{article} 
\usepackage{iclr2027_conference,times}

\usepackage[utf8]{inputenc} 
\usepackage[T1]{fontenc}    
\usepackage{hyperref}       
\usepackage{url}            
\usepackage{booktabs}       
\usepackage{amsfonts}       
\usepackage{nicefrac}       
\usepackage{microtype}      

\usepackage{amsmath}
\usepackage{amssymb}
\usepackage{mathtools}
\usepackage{amsfonts}
\usepackage{amsthm}
\usepackage{dsfont}

\usepackage{multirow}
\usepackage{array}
\usepackage{makecell}
\usepackage[table,dvipsnames]{xcolor}
\usepackage{wrapfig}
\usepackage{colortbl}
\usepackage{subcaption}

\usepackage{graphicx}
\usepackage{graphbox}

\usepackage[inline]{enumitem}
\newlist{todolist}{itemize}{2}
\setlist[todolist]{label=$\square$}
\usepackage{pifont}
\usepackage{algorithm}
\usepackage{algpseudocode}

\usepackage{xcolor}

\usepackage{minitoc}

\newtheorem{theorem}{Theorem}[section]
\newtheorem{lemma}[theorem]{Lemma}
\newtheorem{corollary}[theorem]{Corollary}
\newtheorem{problem}[theorem]{Problem}
\theoremstyle{definition}
\newtheorem{defn}{Definition}[section]

\newcommand\numberthis{\addtocounter{equation}{1}\tag{\theequation}}

\title{CACFG: Curvature-Aware Classifier-Free Guidance and Optimal Control}

\author{Max Collins\\
    School of Physics, Maths and Computing\\
    The University of Western Australia\\
    Perth, WA 6009\\
    \texttt{max.collins@research.uwa.edu.au}\\
    \And
    Dasith de Silva Edirimuni\\
    School of Physics, Maths and Computing\\
    The University of Western Australia\\
    Perth, WA 6009\\
    \texttt{dasith.desilva@uwa.edu.au}\\
    \And
    Jordan Vice\\
    School of Physics, Maths and Computing\\
    The University of Western Australia\\
    Perth, WA 6009\\
    \texttt{jordan.vice@uwa.edu.au}\\
    \And
    Tim French\\
    School of Physics, Maths and Computing\\
    The University of Western Australia\\
    Perth, WA 6009\\
    \texttt{tim.french@uwa.edu.au}\\
    \And
    Ajmal Mian\\
    School of Physics, Maths and Computing\\
    The University of Western Australia\\
    Perth, WA 6009\\
    \texttt{ajmal.mian@uwa.edu.au}\\
}

\iclrfinalcopy 
\begin{document}
\doparttoc 
\faketableofcontents 

\part{} 

\maketitle

\begin{abstract}
    Diffusion models generate samples by learning to reverse a fixed corruption process, and classifier-free guidance (CFG) is the standard mechanism for conditioning this process on a desired class or prompt. CFG can be applied at varying guidance strengths, and while higher strengths improve image quality and conditional alignment, too high a guidance strength can degrade image quality and diversity. Furthermore, CFG violates principled diffusion sampling dynamics, and existing explanations for why it works despite the violation disagree on the underlying theory or do not extend to deterministic samplers used in practice. We address both these issues. We first frame CFG sampling as a continuous-time optimal control problem, treating the sampling trajectory as a sequence of controls chosen to maximise the probability of the desired condition. Solving the resulting Hamilton--Jacobi--Bellman equation shows that CFG is recovered under specific path costs when using an unconstrained control set. We argue this lack of constraint is responsible for CFG's failure at high guidance strengths, since it permits the sampling path to move arbitrarily far from the current image estimate. To fix this, we propose curvature-aware CFG (CACFG), which constrains the control set to a hypersphere informed by the Gaussian regularisation used when training variational autoencoders. We show that the control inputs produced by CFG sampling routinely violate this bound, and that across diffusion models, datasets, and guidance schedules, CACFG achieves superior generative quality at mid-to-high guidance strengths with a less severe quality-diversity tradeoff than regular CFG.
\end{abstract}

\section{Introduction}
Diffusion models generate samples by learning to iteratively reverse a fixed process that gradually corrupts data with noise \citep{DenoisingDiffusion_Ho2020, Song2021}. \emph{Classifier-free guidance (CFG)} \citep{ClassifierFreeGuidance_Ho2022} provides a mechanism to control the content of generated images by using a variable guidance strength parameter to target a tilted conditional distribution. However, this tilted distribution is not the marginal of any true diffusion process, and excessive CFG guidance can reduce image diversity and degrade image quality by introducing visual artefacts \citep{APG_Sadat2025, Karras2024}. Despite this, CFG is frequently used in practice, and practitioners select the largest guidance strengths that do not harm image quality \citep{ClassifierGuidance_Dhariwal2021, ClassifierFreeGuidance_Ho2022}.

Previous work has proposed methods of fixing CFG artefacts: \emph{CFG++} \citep{CFG++_Chung2025} and \emph{Interval-CFG (IntCFG)} \citep{IntCFG_Kynkaanniemi2024} each attribute CFG's artefacts to the sampling trajectory departing the image manifold, correcting for this by varying guidance strength across the trajectory. A separate line of work asks why CFG works \emph{despite} violating diffusion sampling dynamics. Hypotheses attribute its effectiveness to a shared bias in the conditional and unconditional denoisers \citep{Karras2024}, an implicit correction for the poorly-approximated tilted distribution \citep{RenyiCFG_Moufad2025}, or an equivalence with a predictor-corrector scheme in the continuous-time SDE limit \citep{CFGPredCorrect_Bradley2025}. However, these explanations disagree on the mechanism responsible, and they do not rigorously extend to commonly used deterministic samplers. 

We address this gap by reformulating CFG sampling as a continuous-time optimal control problem: rather than fixing the sampling trajectory as a function of stochastic transition kernels, we treat it as a sequence of controls chosen to maximise the terminal reward $p(\xi \mid \boldsymbol{x}_0)$, i.e., how probable it is that the sample is of the desired condition/class $\xi$. Solving the resulting \emph{Hamilton--Jacobi--Bellman (HJB) equation} shows that the optimal control recovers CFG when the control set is unconstrained and the traversal penalty is expressed in the same scale as the diffusion training loss. To the best of our knowledge, we are the first to frame CFG as a deterministic optimal control problem in this way. Additionally, existing guidance scheduling methods such as \emph{CFG++} and \emph{IntCFG} also emerge from this same optimal control formulation under alternative choices of the traversal penalty (Appendix~\ref{Apdx:Comparison Methods CFG}). By examining the path costs of this optimisation problem, we find that the CFG diffusion sampling trajectory is encouraged to: align the conditional and CFG scores at early-to-mid sampling times; converge towards a local maximum at late sampling times; and avoid traversing near a maximum too early. This provides significant insight into how CFG works, and why it exhibits certain pathologies.

\begin{figure}[t]
    \centering
    \begin{subfigure}{0.49\linewidth}
        \centering
        \resizebox{\linewidth}{!}{%
            \includegraphics[height=3cm]{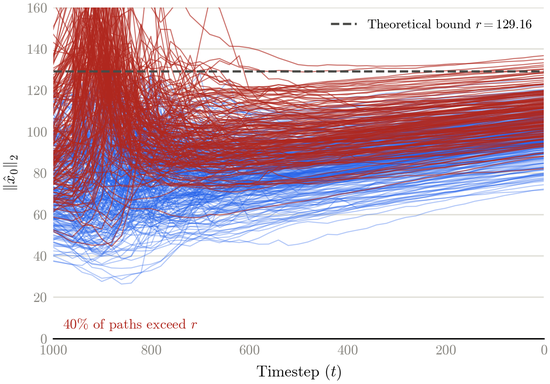}%
            \hspace{0.5pt}%
            \includegraphics[height=3cm]{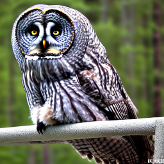}%
        }
        \caption{CFG}
    \end{subfigure}
    \hfill
    \begin{subfigure}{0.49\linewidth}
        \centering
        \resizebox{\linewidth}{!}{%
            \includegraphics[height=3cm]{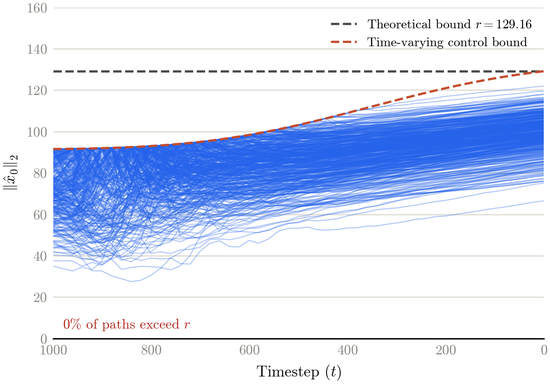}%
            \hspace{0.5pt}%
            \includegraphics[height=3cm]{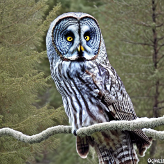}%
        }
        \caption{CACFG}
    \end{subfigure}
    \caption{Tweedie's magnitude ($\lVert \hat{\boldsymbol{x}}_0(t, \boldsymbol{x}_t, \xi) \rVert_2$) of diffusion sampling trajectories across time with $\omega = 7.5$ and using (a) CFG \citep{ClassifierFreeGuidance_Ho2022} and (b) CACFG (ours), alongside a representative sample generated by each method. Note that $40 \%$ of CFG sampling paths violate the theoretical $95 \%$ upper magnitude bound. CACFG restricts permissible controls to lie within a time-varying bounded control set.}
    \label{fig:Sampling magnitude plot}
    \vspace{-1em}
\end{figure}

Our optimal control formulation shows that traditional CFG imposes \emph{no} constraint on the control set. This permits the sampling path to move arbitrarily far from the current image estimate given sufficiently low traversal penalty. We argue this is directly responsible for CFG's failure modes in high-guidance regimes. To address this, we propose a constrained control set and corresponding curvature-aware CFG scheme (CACFG) that relies on a property latent diffusion models have by construction: the training objective of their variational autoencoder (VAE) is explicitly regularised towards an isotropic Gaussian. We hypothesise that this forces most encodings to lie within a hypersphere of an estimable radius, and we show empirically that the Tweedie estimates produced by traditional CFG sampling frequently violate this theoretical upper bound, while those of CACFG do not (Figure~\ref{fig:Sampling magnitude plot}). We find that across diffusion models, datasets, and guidance schedules, CACFG has superior generative quality at mid-to-high guidance strengths as measured by various quality metrics.

Our core contributions are as follows:
\begin{enumerate*}[label=(\roman*)]
    \item We show that diffusion guidance can be interpreted as a continuous-time control problem;
    \item We show that classifier-free guidance (CFG) is one solution to this control problem, and that existing guidance scheduling methods, such as CFG++ \citep{CFG++_Chung2025} and IntCFG \citep{IntCFG_Kynkaanniemi2024}, arise from the same framework under alternative choices of the traversal penalty;
    \item We propose a method of using the Gaussian regularisation term of the VAE training loss to infer the structure of the embedded image manifold;
    \item We propose a constrained control set for the continuous-time optimisation problem that results in a diffusion guidance (CACFG) that provides superior conditional generation with a less severe quality-diversity tradeoff than unconstrained guidance methods, at negligible additional computational cost.
\end{enumerate*}

\section{Preliminaries}
\noindent\textbf{Denoising diffusion models} generate data by learning to reverse a fixed diffusion (noising) process. We write $\boldsymbol{x}_t \in \mathbb{R}^m$ for $t \in [0,T]$ to denote the state acted on by this process. The diffusion itself is driven by the stochastic differential equation (SDE) \citep{DenoisingDiffusion_Ho2020, Song2021},
\begin{equation}
    d\boldsymbol{x}_t = f(t)\boldsymbol{x}_t\,dt + g(t)\,d\boldsymbol{B}_t, \quad \forall\, t \in [0,T], \label{eq:background forward sde}
\end{equation}
where $f(t), g(t) : \mathbb{R} \to \mathbb{R}$ are continuous and $\cdot d\boldsymbol{B}_t$ denotes an It\^{o} integral with respect to the $m$-dimensional Brownian motion process, $\boldsymbol{B}_t$. The form of $f$ and $g$ is chosen such that (\ref{eq:background forward sde}) gradually erases structure in $\boldsymbol{x}_0$ as $t \rightarrow T$, leaving a state that is approximately isotropic Gaussian at termination, $p(\boldsymbol{x}_T) \approx \mathcal{N}(\boldsymbol{0}, \sigma_T^2 I)$. Modern image diffusion models leverage both unconditional and conditional modes, where the initial measure of the state $X_0$ is conditioned on a measurable event $\xi$ of an auxiliary variable $Y$ that takes values in $\Omega_Y$, i.e. $P(\boldsymbol{X}_0|\xi)$. This allows for $\xi = \Omega_Y$ to denote unconditional generation, and $\xi \neq \Omega_Y$ to describe conditional generation with target $\xi$. We formally define the above diffusion as the \emph{canonical image diffusion} in Definition~\ref{Def:Canonical Diffusion}, and in Theorem~\ref{Thm:Conditional Forward} we show that it admits the transition kernel $p(\boldsymbol{x}_t \mid \boldsymbol{x}_0) = \mathcal{N} \left( \alpha_t \boldsymbol{x}_0, \ \beta_t^2 I^{(m \times m)} \right)$ with $\alpha_t, \beta_t : \mathbb{R} \to \mathbb{R}$ and where $I^{(m \times m)}$ denotes the $m$-dimensional identity.

Generating a sample amounts to reversing (\ref{eq:background forward sde}), which can be done stochastically via \citeauthor{Anderson1982}'s reverse-time SDE \citep{Anderson1982}, or (more commonly) deterministically via the probability-flow ODE \citep{Song2021},
\begin{equation}
    d\boldsymbol{x}_t / dt = f(t)\boldsymbol{x}_t - g(t)^2 \nabla_{\boldsymbol{x}_t}\log p(\boldsymbol{x}_t \mid \xi) / 2, \label{eq:background flow ode}
\end{equation}
both of which require an estimate of the score, $\nabla_{\boldsymbol{x}_t}\log p(\boldsymbol{x}_t \mid \xi)$. This is obtained by training a denoiser $\boldsymbol{\epsilon}_\theta : \mathbb{R}^m \times \mathbb{R} \times \mathcal{Y} \to \mathbb{R}^m$ to recover $\boldsymbol{x}_0$ from a noised observation using the objective,
\begin{equation}
    \min_\theta\ \mathbb{E}_{\boldsymbol{x}_0 \sim P(\boldsymbol{x}_0 \mid \xi),\ \boldsymbol{x}_t \sim p(\boldsymbol{x}_t \mid \boldsymbol{x}_0),\ t \sim p(t)} \left[ \left\lVert \boldsymbol{x}_0 - \hat{\boldsymbol{x}}_0 \right\rVert_2^2 \right], \quad \hat{\boldsymbol{x}}_0(t, \boldsymbol{x}_t, \xi; \theta) \triangleq \frac{\boldsymbol{x}_t - \beta_t \boldsymbol{\epsilon}_\theta(t, \boldsymbol{x}_t, \xi)}{\alpha_t}. \label{eq:background training objective}
\end{equation}
For a minimiser $\theta^\star$ of (\ref{eq:background training objective}), the score is proportional to the denoiser (see Theorem~\ref{Thm:Score-Model Link}), 
\begin{equation}
    \nabla_{\boldsymbol{x}_t} \log p(\boldsymbol{x}_t \mid \xi) = - \boldsymbol{\epsilon}_{\theta^\star}(t, \boldsymbol{x}_t, \xi) / \beta_t, \label{eq:score-model link maintext}
\end{equation}
so either reverse-process formulation (SDE or ODE) can be run with $\boldsymbol{\epsilon}_{\theta^\star}$ in place of the score. Additionally, by \emph{Tweedie's formula} \citep{TweediesFormula_Efron2011} the optimal denoiser's output is equal to the posterior mean (see Theorem~\ref{Thm:Tweedie}),
\begin{equation}
    \hat{\boldsymbol{x}}_0(t, \boldsymbol{x}_t, \xi) = (\boldsymbol{x}_t - \beta_t \boldsymbol{\epsilon}_{\theta^\star}(t, \boldsymbol{x}_t, \xi)) / \alpha_t = \mathbb{E}[\boldsymbol{X}_0 \mid \boldsymbol{x}_t, \xi], \label{eq:Tweedie maintext}
\end{equation}
and note that we are taking $\hat{\boldsymbol{x}}_0(t, \boldsymbol{x}_t, \xi) \triangleq \hat{\boldsymbol{x}}_0(t, \boldsymbol{x}_t, \xi; \theta^{\star})$ for brevity.

\noindent\textbf{Classifier-free guidance (CFG).} In many cases the conditional score alone is not sufficient to reconstruct high-quality images of the desired class. Although $\boldsymbol{\epsilon}_{\theta^\star}(t, \boldsymbol{x}_t, \xi)$ recovers $\nabla_{\boldsymbol{x}_t}\log p(\boldsymbol{x}_t \mid \xi)$ exactly at optimality (Theorem~\ref{Thm:Score-Model Link}), faithfully sampling $p(\boldsymbol{x}_0 \mid \xi)$ is not the same as adhering strongly to $\xi$: a specific conditioning event is typically supported by a small number of training samples compared to the entire training dataset. Thus, the model constructs a noisy estimate of $P(\boldsymbol{X}_0 \mid \xi)$, which places significant probability mass over samples weakly consistent with $\xi$ \citep{ClassifierGuidance_Dhariwal2021, ClassifierFreeGuidance_Ho2022}.

To strengthen class adherence, guidance methods target a tilted distribution in place of $p(\boldsymbol{x}_t \mid \xi)$,
\begin{equation}
    \bar{p}(\boldsymbol{x}_t \mid \xi) \propto p(\xi \mid \boldsymbol{x}_t)^\omega\, p(\boldsymbol{x}_t), \qquad \omega \in \mathbb{R}_{>0}. \label{eq:background tilted}
\end{equation}
Increasing $\omega$ concentrates probability mass around these maximal-density regions, whilst leaving its maximisers unchanged. Differentiating the log of (\ref{eq:background tilted}) and substituting (\ref{eq:score-model link maintext}) at $\xi = \Omega_Y$ gives
\begin{align}
    \nabla_{\boldsymbol{x}_t} \log \bar{p}(\boldsymbol{x}_t \mid \xi) &= \omega \nabla_{\boldsymbol{x}_t} \log p(\xi \mid \boldsymbol{x}_t) - \boldsymbol{\epsilon}_{\theta^\star}(t, \boldsymbol{x}_t, \Omega_Y) / \beta_t \label{eq:background classifier guidance} \\
    &= - \Bigl[\boldsymbol{\epsilon}_{\theta^\star}(t, \boldsymbol{x}_t, \Omega_Y) + \omega \bigl( \boldsymbol{\epsilon}_{\theta^\star}(t, \boldsymbol{x}_t, \xi) - \boldsymbol{\epsilon}_{\theta^\star}(t, \boldsymbol{x}_t, \Omega_Y) \bigr)\Bigr] / \beta_t, \label{eq:background cfg}
\end{align}
where the second line eliminates $\nabla_{\boldsymbol{x}_t}\log p(\xi \mid \boldsymbol{x}_t)$ via Bayes' rule and the score identity in (\ref{eq:score-model link maintext}). \emph{Classifier guidance} \citep{ClassifierGuidance_Dhariwal2021} evaluates (\ref{eq:background classifier guidance}) by training a separate model $p_\theta(\xi \mid \boldsymbol{x}_t)$; \emph{classifier-free guidance} \citep{ClassifierFreeGuidance_Ho2022} instead uses (\ref{eq:background cfg}), interpolating the class-conditional and unconditional noise models defined above. Note that this guidance does not represent a true diffusion process: $\bar{p}(\boldsymbol{x}_t \mid \xi)$ is not the marginal induced by diffusing $\boldsymbol{x}_0 \sim \bar{P}(\boldsymbol{X}_0 \mid \xi)$, so following this score abandons the reverse-time SDE/flow ODE dynamics of (\ref{eq:background flow ode}) \citep{Karras2024}; however, it empirically improves perceptual sample quality \citep{ClassifierGuidance_Dhariwal2021, ClassifierFreeGuidance_Ho2022}.

\noindent\textbf{Latent diffusion models.} Let $\boldsymbol{Z} \in \mathbb{R}^n$ be an $\mathbb{R}^n$-valued random variable representing a natural image in $n$-dimensional pixel-space, sharing the auxiliary (label) variable $Y$ defined above (and in Definition~\ref{Def:Canonical Diffusion}). As above, for $\xi \in \mathcal{Y}$ we write $P(\boldsymbol{Z} \mid \xi)$ for the conditional law, and $\boldsymbol{z} \sim P(\boldsymbol{Z} \mid \xi)$ for a realised image. Taking $\boldsymbol{X}_0 \triangleq \boldsymbol{Z}$ (so $m=n$) equates the diffusion state with the image directly, giving $P(\boldsymbol{X}_0 \mid \xi) = P(\boldsymbol{Z} \mid \xi)$ and recovering ordinary \emph{pixel-space diffusion}.

\emph{Latent diffusion models} \citep{StableDiffusion_Rombach2022} instead diffuse over a compressed representation of $\boldsymbol{z}$, obtained from a variational autoencoder (VAE) \citep{VAE_Kingma2014}. The encoder maps a realised image to a distribution of latents, $q_\phi(\boldsymbol{x}_0 \mid \boldsymbol{z}) \triangleq \mathcal{N}\left(\boldsymbol{x}_0;\, \boldsymbol{\mu}_\phi(\boldsymbol{z}),\, \mathrm{diag}\!\left(\boldsymbol{\sigma}_\phi(\boldsymbol{z})^2\right)\right)$,
and the decoder gives the reconstruction likelihood, $p_\psi(\boldsymbol{z} \mid \boldsymbol{x}_0) \triangleq \mathcal{N}\left(\boldsymbol{z};\, \boldsymbol{\nu}_\psi(\boldsymbol{x}_0),\, \mathrm{diag}\!\left(\boldsymbol{\gamma}_\psi(\boldsymbol{x}_0)^2\right)\right)$. The density of $\boldsymbol{X}_0 \mid \xi$ can then be obtained via $p(\boldsymbol{x}_0 \mid \xi) \triangleq \int_{\mathbb{R}^n} q_\phi(\boldsymbol{x}_0 \mid \boldsymbol{z}) \, dP(\boldsymbol{z} \mid \xi)$.

The encoder and decoder are trained jointly, over parameters $\phi, \psi$ respectively, to minimise the negative evidence lower bound
\begin{equation}
    \mathcal{L}_{\text{VAE}}(\phi, \psi) \triangleq \mathbb{E}_{\boldsymbol{z} \sim P(\boldsymbol{Z})} \Big[\, \underbrace{-\, \mathbb{E}_{q_\phi(\boldsymbol{x}_0 \mid \boldsymbol{z})}\!\left[\log p_\psi(\boldsymbol{z} \mid \boldsymbol{x}_0)\right]}_{\text{reconstruction}} \;+\; \underbrace{D_{\mathrm{KL}}\!\left(q_\phi(\boldsymbol{x}_0 \mid \boldsymbol{z}) \,\big\|\, \mathcal{N}(\boldsymbol{0}, I^{(m \times m)})\right)}_{\text{regularisation}} \Big], \label{eq:background vae loss}
\end{equation}
following \citet{VAE_Kingma2014}, where $P(\boldsymbol{Z}) \triangleq P(\boldsymbol{Z} \mid \Omega_Y)$ is the unconditional image law. Note that the KL term regularises the encoder distribution towards the same isotropic standard normal that the forward process of (\ref{eq:background forward sde}) itself converges to as $t \to T$--a fact we leverage later.

\vspace{-0.5em}
\section{Related Work}
\vspace{-0.5em}
\citet{CFG++_Chung2025} attributes CFG artefacts such as colour oversaturation and broken invertibility to an off-manifold effect: since $\omega>1$ extrapolates beyond the conditional and unconditional denoiser outputs, the resulting noise estimate can fall off the data manifold. By reformulating guidance as a manifold-constrained score-matching problem, \citet{CFG++_Chung2025} appeal to the linear manifold hypothesis \citep{ManifoldConstraint_Chung2022} to compute a \emph{manifold constrained gradient} to force the sample to remain on the theorised linear image manifold. Alternatively, \emph{Interval-CFG (IntCFG)} \citep{IntCFG_Kynkaanniemi2024} hypothesises that CFG artefacts can be attributed to \emph{when} guidance is applied: guidance at high noise levels collapses diversity by prematurely committing the trajectory to a small set of modes, while guidance at low noise levels has minimal effect. Thus, IntCFG disables guidance outside an interval of intermediate noise levels. Though motivated differently, both methods let the guidance scale of (\ref{eq:background cfg}) vary across the sampling trajectory, a fact we make concrete in Appendix~\ref{Apdx:Comparison Methods CFG}. Alternatively, techniques such as \emph{Adaptive Projected Guidance (APG)} \citep{APG_Sadat2025} reshape the CFG update itself. However, none of this work provides a rigorous explanation as to why CFG is able to generate conditional images, \emph{despite} violating diffusion sampling dynamics.

\citet{Karras2024} attributes CFG's effectiveness to a shared model bias: a capacity-limited denoiser over-covers sparsely populated regions of the data manifold, and because the unconditional model exhibits this bias more severely than the conditional one, their ratio acts as an attractive force that pulls samples towards the image manifold. \citet{RenyiCFG_Moufad2025} show that the tilted distribution of (\ref{eq:background tilted}) can be recovered by augmenting CFG with a corrective R\'enyi-divergence term, and they argue that CFG is just poorly approximating the true tilted distribution. \citet{CFGPredCorrect_Bradley2025} instead prove that in the continuous-time SDE limit, the stochastic (DDPM) variant \citep{DenoisingDiffusion_Ho2020} of CFG is identical to a predictor-corrector scheme that pairs DDIM \citep{Song2022} denoising with a Langevin-dynamics corrector on the tilted distribution. However, this does not account for deterministic (flow ODE) variants of CFG, and the explanations of \citet{Karras2024} and \citet{RenyiCFG_Moufad2025} disagree on the key mechanisms.

\vspace{-0.5em}
\section{Reframing CFG as an Optimal Control} \label{sec:CFG Optimal Control}
\vspace{-0.5em}
We first consider the tilted density of (\ref{eq:background tilted}), and note that raising $p(\xi \mid \boldsymbol{x}_t)$ to any power $\omega > 0$ leaves its maximiser (w.r.t. $\boldsymbol{x}_t$) unchanged. Thus, the CFG update of (\ref{eq:background cfg}) can be viewed as naively moving towards a local maximum of $p(\xi \mid \boldsymbol{x}_t)$ at each timestep; however, this does not guarantee maximisation with respect to the terminal state, $\boldsymbol{x}_0$. Treating CFG sampling as purely an optimisation problem leads to the natural question: \textit{does CFG sampling construct a path that maximises $p(\xi \mid \boldsymbol{x}_0)$ subject to some reasonable path constraints?} More concretely, we wish to understand whether CFG constructs trajectories that terminate at a state, $\boldsymbol{x}_0$, that \emph{maximise} $p(\xi \mid \boldsymbol{x}_0)$ subject to interpretable path costs that punish/reward movement from the initial position, $\boldsymbol{x}_T$. 

We treat the sequence of denoising targets used along the trajectory as free decisions rather than following a fixed set of transition kernels, and we introduce a time-dependent control variable $\boldsymbol{u}_t \in U(t) \subseteq \mathbb{R}^m$ taking values in a nonempty, closed, and \emph{convex} time-varying control set $U(t)$. Additionally, note that maximising $p(\xi|\boldsymbol{x}_0)$ w.r.t. $\boldsymbol{x}_0$ is equivalent to maximising $p(\boldsymbol{x}_0 \mid \xi) / p(\boldsymbol{x}_0)$ via Bayes' rule. Under this framework, diffusion guidance is the solution to 
\begin{equation}
    \max_{\boldsymbol{u}_{[t_l,T]}} \left\{ \int_{t_l}^{T} L(s, \boldsymbol{x}_s, \boldsymbol{u}_s) \, ds + \phi(t_l, \boldsymbol{x}_{t_l}) \right\}, \qquad \dot{\boldsymbol{x}}_s = \boldsymbol{b}(s, \boldsymbol{x}_s, \boldsymbol{u}_s), \label{eq:oc max path problem}
\end{equation}
with running reward $L : \mathbb{R} \times \mathbb{R}^m \times \mathbb{R}^p \rightarrow \mathbb{R}$, dynamics constraint $\boldsymbol{b} : \mathbb{R} \times \mathbb{R}^m \times \mathbb{R}^p \rightarrow \mathbb{R}^m$, and terminal reward $\phi(t_l, \boldsymbol{x}_{t_l})$, where $\phi(s, \boldsymbol{x}_s) \triangleq \log\left( \frac{p(\boldsymbol{x}_s \mid \xi)}{p(\boldsymbol{x}_s)} \right) \ \forall \ s \in [t_l, T]$. (\ref{eq:oc max path problem}) defines a reverse-time, continuous-time maximisation problem, and we write
\begin{equation}
    \hat{V}(t, \boldsymbol{x}_t) \triangleq \sup_{\boldsymbol{u}_{[t_l,t]}} \left\{ \int_{t_l}^{t} L(s, \boldsymbol{x}_s, \boldsymbol{u}_s) \, ds + \phi(t_l, \boldsymbol{x}_{t_l}) \right\},
\end{equation}
to define the best accumulated reward achievable by time $t$. Bellman's principle of optimality \citep{OriginalHJB_Bellman1957} shows that $\hat{V}$ must satisfy the reverse-time Hamilton--Jacobi--Bellman (HJB) equation\footnote{We state the reverse-time HJB equation in Lemma~\ref{Lma:Reverse HJB}, and provide a rigorous HJB derivation in Appendix~\ref{Apdx:HJB CFG Formulation} based on \citet{OriginalHJB_Bellman1957,CalculusofVariationsHJB_Liberzon2012,NonlinearControlHJB_Berkovitz2012}.}
\begin{equation}
    \hat{V}_t(t, \boldsymbol{x}_t) = \sup_{\boldsymbol{u}_t \in U(t)} \Big\{ L(t, \boldsymbol{x}_t, \boldsymbol{u}_t) - \big\langle \hat{V}_{\boldsymbol{x}_t}(t, \boldsymbol{x}_t), \, \boldsymbol{b}(t, \boldsymbol{x}_t, \boldsymbol{u}_t) \big\rangle \Big\}, \quad \forall\, t \in (t_l, T], \label{eq:oc reverse hjb restated}
\end{equation}
with boundary condition $\hat{V}(t_l, \boldsymbol{x}_{t_l}) = \phi(t_l, \boldsymbol{x}_{t_l})$. Throughout, we take the termination time $t_l \in (0,T)$, which keeps $\beta_s^2 > 0$ (and hence the relevant densities well-defined) over the whole horizon. Additionally, we require that $L$, $\hat{V}$, and $\boldsymbol{b}$ be regular\footnote{See Problem~\ref{Prob:OCP} for full conditions, and note that many are immediately satisfied due to the differentiability of the trained denoiser and Gaussian transition kernels.} and $C^{1}$ differentiable in $t$, $\boldsymbol{x}_t$, and $\boldsymbol{u}_t$.

We now choose $L$ and $\boldsymbol{b}$. For the running reward, we penalise the divergence of the control $\boldsymbol{u}_s$ from the \emph{unconditional} sample, multiplied by some time-dependent cost scale $r(s) > 0$, alongside an additional, as-yet-unspecified term $h$ capturing further path costs/rewards,
\begin{equation}
    L(s, \boldsymbol{x}_s, \boldsymbol{u}_s) \triangleq -r(s) \left\lVert \boldsymbol{u}_s - \hat{\boldsymbol{x}}_0(s, \boldsymbol{x}_s, \Omega_Y) \right\rVert_2^2 + h(s, \boldsymbol{x}_s, \boldsymbol{u}_s). \label{eq:oc running reward}
\end{equation}
The choice of $\hat{\boldsymbol{x}}_0(s, \boldsymbol{x}_s, \Omega_Y)$ as the anchor for the $\ell_2$ penalty is deliberate: by Tweedie's formula (\ref{eq:Tweedie maintext}), it is the unconditional posterior mean of $\boldsymbol{x}_s$, i.e.\ the most plausible denoising target available at that noise level. Penalising the $\ell_2$ deviation of $\boldsymbol{u}_s$ from this quantity therefore encodes a natural prior: a candidate target should only be preferred over the unconditional estimate if there is sufficient reward for doing so. Framing guidance this way turns the question of \emph{why} CFG works into a question about $h$ and $\boldsymbol{b}$: is there a choice of $h$ and $\boldsymbol{b}$ whose optimal control recovers the CFG update of (\ref{eq:background cfg})?

To connect this to diffusion sampling, we take the dynamics $\boldsymbol{b}$ to be the probability-flow ODE of (\ref{eq:background flow ode}), written in Tweedie form via (\ref{eq:score-model link maintext}) and (\ref{eq:Tweedie maintext}), with the denoising target replaced by the control $\boldsymbol{u}$ itself,
\begin{equation}
    \dot{\boldsymbol{x}}_s = \boldsymbol{b}(s, \boldsymbol{x}_s, \boldsymbol{u}_s) \triangleq f(s) \boldsymbol{x}_s - g(s)^2 \big( \alpha_s \boldsymbol{u}_s - \boldsymbol{x}_s \big) / (2 \beta_s^2). \label{eq:oc flow ode control}
\end{equation}
Setting $\boldsymbol{u}_s = \hat{\boldsymbol{x}}_0(s, \boldsymbol{x}_s, \Omega_Y)$ at every step recovers ordinary unconditional sampling, while any other choice of control defines an equally valid, but different, flow-ODE trajectory. What remains is to specify $h$, and here we make a significant assumption: we posit that the value function is equal to the quantity CFG attempts to maximise at each timestep:
\begin{equation}
    \hat{V}(t, \boldsymbol{x}_t) \triangleq \phi(t, \boldsymbol{x}_t) = \log\left( p(\boldsymbol{x}_t \mid \xi) / p(\boldsymbol{x}_t) \right). \label{eq:oc value function is cfg ratio}
\end{equation}
We argue that this is a reasonable assertion: CFG is already effective at conditioning the diffusion trajectory--so choosing a value function whose gradients align with the CFG score incentivises movement in that direction, and this assumption immediately satisfies the boundary condition of (\ref{eq:oc reverse hjb restated}). We now ask what constraints (what $h$) has $\phi$ as its value function under (\ref{eq:oc reverse hjb restated}).

Substituting (\ref{eq:oc flow ode control}) and (\ref{eq:oc value function is cfg ratio}) into the reverse-time HJB equation (\ref{eq:oc reverse hjb restated}) and using the Kolmogorov forward equation to derive $\hat{V}_t$ allows us to solve for a unique $h$. We show in Theorem~\ref{Thm:CFG HJB} that $\hat{V} = \phi$ solves (\ref{eq:oc reverse hjb restated}) for every $s \in [t_l, T]$ when
\begin{align}
    h(s, \boldsymbol{x}_s) =& \underbrace{\frac{1}{2} g(s)^2 \big\langle \nabla_{\boldsymbol{x}_s} \phi(s, \boldsymbol{x}_s), \nabla_{\boldsymbol{x}_s} \log p(\boldsymbol{x}_s \mid \xi) \big\rangle}_{\text{conditional score alignment}} + \underbrace{r(s) \left\lVert \Upsilon(s, \boldsymbol{x}_s) \right\rVert_2^2}_{\text{CFG score magnitude}} \nonumber \\
    & \underbrace{- \frac{1}{2} \frac{g(s)^2 \alpha_s}{\beta_s^2} \big\langle \nabla_{\boldsymbol{x}_s} \phi(s, \boldsymbol{x}_s), \Upsilon(s, \boldsymbol{x}_s) \big\rangle}_{\text{CFG score regularisation}} + \underbrace{\frac{1}{2} g(s)^2 \Delta_{\boldsymbol{x}_s} \phi(s, \boldsymbol{x}_s)}_{\text{early-convergence penalty}}, \quad \text{where} \label{eq:oc h} \\
    \Upsilon(s, \boldsymbol{x}_s) \triangleq& \text{proj}_{U(s) - \hat{\boldsymbol{x}}_0(s, \boldsymbol{x}_s, \Omega_Y)} \left\{ \frac{1}{4} \frac{g(s)^2 \alpha_s}{r(s) \beta_s^2} \nabla_{\boldsymbol{x}_s} \phi(s, \boldsymbol{x}_s) \right\}. \label{eq:oc upsilon}
\end{align}
The corresponding optimal control is then given by
\begin{equation}
    \boldsymbol{u}_t^\star \triangleq \hat{\boldsymbol{x}}_0(t, \boldsymbol{x}_t, \Omega_Y) + \Upsilon(t, \boldsymbol{x}_t). \label{eq:oc optimal control}
\end{equation}
\emph{CFG-like guidance} can then be recovered by reversing Tweedie's formula (\ref{eq:Tweedie maintext}), and using (\ref{eq:score-model link maintext}) to solve for the score imposed by the optimal control:
\begin{equation}
    \nabla_{\boldsymbol{x}_t} \log \bar{p}(\boldsymbol{x}_t \mid \xi) = - \frac{\boldsymbol{x}_t - \alpha_t \boldsymbol{u}_t^\star}{\beta_t^2} = -\frac{1}{\beta_t}\Bigl[\boldsymbol{\epsilon}_{\theta^\star}(t, \boldsymbol{x}_t, \Omega_Y) - \frac{\alpha_t}{\beta_t} \Upsilon \Bigr]. \label{eq:oc score}
\end{equation}

\noindent\textbf{Traditional CFG, scheduled variants, and Kingma's SNR.} CFG and other scheduled guidance variants with a positive guidance-weight function $\omega(t) > 0$ can be recovered by setting
\begin{equation}
    r(t) = \frac{g(t)^2 \alpha_t^2}{\beta_t^4} \cdot \frac{1}{4 \omega(t)}, \quad U(t) = \mathbb{R}^m \ \forall \ t \in (0, T], \label{eq:oc CFG form}
\end{equation}
and substituting (\ref{eq:oc upsilon}) and (\ref{eq:oc CFG form}) into (\ref{eq:oc score}). \emph{Traditional CFG} \citep{ClassifierFreeGuidance_Ho2022} follows when $\omega(t) = w > 0$, and we cover CFG++ \citep{CFG++_Chung2025} and IntCFG \citep{IntCFG_Kynkaanniemi2024} in Appendix~\ref{Apdx:Comparison Methods CFG}. Note that for CFG, CFG++, and IntCFG, the $r(t)$ of (\ref{eq:oc CFG form}) is proportional to $-d\,\mathrm{SNR}(t)/dt = g(t)^2\alpha_t^2/\beta_t^4$, with $\mathrm{SNR}(t) \triangleq \alpha_t^2/\beta_t^2$. Let $\lambda_t \triangleq \log \mathrm{SNR}(t)$ (Kingma's log-SNR), then \citet{DiffusionSNRCorrection_Kingma2023} show $-d\lambda_t/dt$ is the weighting that converts the uniform noise-matching ELBO objective into an equivalent image-matching objective, and $\lVert \boldsymbol{\epsilon} - \hat{\boldsymbol{\epsilon}} \rVert_2^2 = \mathrm{SNR}(t) \lVert \boldsymbol{x}_0 - \hat{\boldsymbol{x}}_0 \rVert_2^2$. Thus, minimising $\mathbb{E}\lVert \boldsymbol{x}_0 - \hat{\boldsymbol{x}}_0 \rVert_2^2$ weighted by $g(t)^2\alpha_t^2/\beta_t^4$ is the same training objective as the (unweighted) noise-prediction ELBO. In other words, CFG, CFG++, and IntCFG are optimal controls with an $\ell_2$ penalty that is specified with the same scale as the denoising training objective.

We now ask: \emph{what do the constraints of (\ref{eq:oc h}) imply about the paths constructed by CFG-like guidance?} The \emph{conditional score alignment} term in (\ref{eq:oc h}) rewards paths that align the conditional and CFG scores, which is reasonable, given we seek trajectories that converge to high-density conditional regions. This term is scaled by $g(s)^2$, which decreases as $s \rightarrow 0$ under most diffusion parameterisations \citep{Karras2022}, suggesting that the conditional alignment matters most in early-to-mid sampling times. The CFG score projection, $\Upsilon$, can be thought of as the \emph{maximum allowable CFG movement} given the current Tweedie estimate, $\hat{\boldsymbol{x}}_0(s, \boldsymbol{x}_s, \Omega_Y)$, at time $s$. Assuming that $\hat{\boldsymbol{x}}_0 \in U(s)$ (the control set should contain all Tweedie estimates), then $\Upsilon$ is either $\boldsymbol{0}$ or aligned with the CFG score, $\nabla_{\boldsymbol{x}_s} \phi$, in the sense that $\langle \nabla_{\boldsymbol{x}_s} \phi(s, \boldsymbol{x}_s), \Upsilon(s, \boldsymbol{x}_s) \rangle \geq 0$ (by \citealp[Prop.~1.1.9]{ConvexOptimizationTheory_Bertsekas2009}), and thus, the \emph{CFG score regularisation} term is $\leq 0$. This term is negligible at early sampling times, $\alpha_s \rightarrow 0$ as $s \rightarrow T$, but grows as $\beta_s \rightarrow 0$ as $s \rightarrow 0$, and the only way to avoid this penalty is to find a local optimum ($\nabla \phi = 0$) or enter a region that no longer has viable movement in the direction of the CFG score, $\Upsilon = 0$. Thus, we can view this term as a penalty for not finding a local maximum, or not maximising allowable growth of $\phi$. The \emph{CFG score magnitude} term blindly rewards large CFG score magnitudes regardless of sign; however, this is entirely negated by the $\ell_2$ penalty of (\ref{eq:oc running reward}) at the optimal control, $\boldsymbol{u}^\star_t$. The \emph{early-convergence penalty} term is negative at a mode of $\phi$  ($\Delta_{\boldsymbol{x}_s}\phi < 0$) and is scaled by $g(s)^2$ (which shrinks as $s \rightarrow 0$), so this term penalises trajectories that sit near a sharp maximum at early sampling times, discouraging them from settling into a mode too early (reducing mode collapse). 

Thus, CFG-like guidance is an optimal control specified in the same loss scale as the denoising training objective, and it finds paths that: 
\begin{enumerate*}[label=(\roman*)]
    \item are encouraged to align the conditional and CFG scores at early-to-mid sampling times;
    \item are encouraged to converge to a local maximum at late sampling times; and
    \item are penalised for nearing local maximums at early sampling times.
\end{enumerate*}

\section{Curvature-Aware CFG (CACFG)} \label{sec:cacfg method}
Section~\ref{sec:CFG Optimal Control} shows that CFG-style guidance solves an optimal control problem, but says nothing about whether the resulting control is a \emph{safe} one to take. Compensating for weak class adherence with a large guidance scale $\omega$ (or small $r(s)$) may move the sampling trajectory into regions with few training samples, and thus, require the model to extrapolate. This is a known failure mode of neural networks--adversarial regions populate extrapolated sections of the learned manifold, and they cause networks to behave erratically \citep{Szegedy2014, Goodfellow2015, Madry2019}.

\begin{wrapfigure}{R}{0.40\textwidth}
    \vspace{-2.4em}
    \begin{minipage}{0.40\textwidth}
    \begin{algorithm}[H]
        \scriptsize
        \caption{Curvature-Aware Guidance} \label{alg:CACFG}
        \begin{algorithmic}
            \Require diffusion model: $\boldsymbol{\epsilon}_{\theta^\star}$; target $y \in \Omega_Y$; sampling steps with $t_0 = t_l \approx 0 < \dots < t_n = T$; noise schedules $\alpha_t$, $\beta_t$; guidance weight schedule $\omega(t)$; control set $U(t)$; upper radius $r_u$; interpolation parameter $\gamma$.\\

            \State $\boldsymbol{x}_{t_n} \sim \mathcal{N}(0, I)$
            \For{$i = n, \dots, 1$}
                \State $\boldsymbol{\epsilon}_{\Omega} \gets \boldsymbol{\epsilon}_{\theta^\star}(t_i, \boldsymbol{x}_{t_i}, \Omega_Y)$
                \State $\boldsymbol{\epsilon}_\xi \gets \boldsymbol{\epsilon}_{\theta^\star}(t_i, \boldsymbol{x}_{t_i}, \xi)$
                \State $\tilde{\boldsymbol{\epsilon}} \gets \boldsymbol{\epsilon}_\Omega + \omega(t_i)(\boldsymbol{\epsilon}_\xi - \boldsymbol{\epsilon}_\Omega)$
                \State $\hat{\boldsymbol{x}}_0^{\mathrm{cfg}} \gets (\boldsymbol{x}_{t_i} - \beta_{t_i}\tilde{\boldsymbol{\epsilon}})/\alpha_{t_i}$
                \State $\rho_i^2 \gets (1-\gamma)\dfrac{\alpha_{t_i}^2}{\alpha_{t_i}^2+\beta_{t_i}^2}\,r_u^2 + \gamma\, r_u^2$
                \State $\boldsymbol{u}_i^\star \gets \mathrm{proj}_{U_\gamma(t_i)}\!\big(\hat{\boldsymbol{x}}_0^{\mathrm{cfg}}\big)$ \Comment{clip to $\|\cdot\|_2 \le \rho_i$}
                \State $\boldsymbol{\epsilon}_i^\star \gets (\boldsymbol{x}_{t_i} - \alpha_{t_i}\boldsymbol{u}_i^\star)/\beta_{t_i}$
                \State $\boldsymbol{x}_{t_{i-1}} \gets \text{Reverse flow-ode step with } \boldsymbol{\epsilon}_i^\star, \ \boldsymbol{x}_{t_i}$
            \EndFor
            \State \textbf{return} $\boldsymbol{x}_{t_0}$
        \end{algorithmic}
    \end{algorithm}
    \end{minipage}
    \vspace{-2em}
\end{wrapfigure}

Existing corrections to CFG address this by assuming the underlying image manifold has some known geometry \citep{ManifoldConstraint_Chung2022, CFG++_Chung2025}. Such hypotheses are reasonable when $\boldsymbol{x}_0$ is itself a natural image, but for the latent diffusion models used in practice \citep{StableDiffusion_Rombach2022}, $\boldsymbol{x}_0$ is instead a VAE latent code, and the manifold it occupies is whatever geometry that particular VAE has learned. \emph{A manifold hypothesis tuned to one model's VAE carries no guarantee of transferring to another's.}

We instead propose a correction built on a property the VAE is trained to have: the regularisation term of the VAE objective in (\ref{eq:background vae loss}) penalises the encoder distribution for departing from an isotropic standard normal, $q_\phi(\boldsymbol{x}_0 \mid \boldsymbol{z}) \approx \mathcal{N}(\boldsymbol{0}, I^{(m\times m)})$ \citep{VAE_Kingma2014}. This term rewards encodings that are close to the origin, and we hypothesise that valid encodings become more sparse as distance from the origin increases (Appendix~\ref{Apdx:True VAE latents}), which means that these regions are likely adversarial from the perspective of the diffusion denoiser. We propose restricting the control set from Section~\ref{sec:CFG Optimal Control} to latents contained within a hypersphere with radius $r_u$. That is, we consider a situation where $\lVert \boldsymbol{X}_0 \rVert_2^2 \le r_u^2$, and then by Lemma~\ref{Lma:Tweedie Bound}, the magnitude of Tweedie's estimate is also upper bounded by $r_u$\footnote{There exists no measure on the truncated distribution that would permit $\lVert \hat{x}_0(t, \boldsymbol{x}_t, \xi) \rVert_2^2 > r_u^2$.}: $\left\lVert \hat{\boldsymbol{x}}_0(t, \boldsymbol{x}_t, \xi) \right\rVert_2^2 \leq r_u^2$. This limits the ability of large control inputs to direct the sample into these hypothesised adversarial regions.

Unfortunately, restricting the generative domain in this way would require knowledge of the learned VAE latent space. To circumvent this we again leverage the Gaussian regularisation of the VAE loss and assume $\boldsymbol{X}_0 \sim \mathcal{N}(\boldsymbol{0}, I^{(m\times m)})$, under which $\lVert \boldsymbol{X}_0 \rVert_2^2$ follows a chi-squared distribution with $m$ degrees of freedom. We then define a confidence level $p$ and a corresponding radius $r_u$ such that:
\begin{equation}
    P\!\left( \lVert \boldsymbol{X}_0 \rVert_2^2 \le r_u^2 \right) = p. \label{eq:cacfg ru}
\end{equation}
Empirically, we find that the Tweedie estimates (i.e.\ the unclipped control input) of standard CFG sampling routinely violate the $95.0\%$ bound imposed by this assumption (Figure~\ref{fig:Sampling magnitude plot}). Furthermore, under $\boldsymbol{X}_0 \sim \mathcal{N}(\boldsymbol{0}, I^{(m\times m)})$ and the transition kernel of Theorem~\ref{Thm:Conditional Forward}, $\boldsymbol{X}_0$ and $\boldsymbol{x}_t$ are jointly Gaussian, so the Tweedie estimate $\hat{\boldsymbol{x}}_0(t, \boldsymbol{x}_t, \xi)$ is itself Gaussian, and thus,
\begin{equation}
    P\!\left( \left\lVert \hat{\boldsymbol{x}}_0(t, \boldsymbol{x}_t, \xi) \right\rVert_2^2 \leq \frac{\alpha_t^2}{\alpha_t^2 + \beta_t^2}\, r_u^2 \right) = p \quad \forall \ t \in [t_l, T]. \label{eq:cacfg shrinkage bound}
\end{equation}
That is, for $t>0$ the Tweedie estimate is expected to lie \emph{closer} to the origin than the raw bound $r_u$.

\noindent\textbf{A curvature-aware control set.} The two bounds above represent the two extremes of the Gaussian hypothesis: (\ref{eq:cacfg ru}) assumes only the raw radius $r_u$, while (\ref{eq:cacfg shrinkage bound}) assumes the hypothesis holds exactly. We interpolate between the two with a single tuneable parameter $\gamma \in [0,1]$, and take the interpolated radius to define a time-varying \emph{convex} control set within the optimal control framework of Section~\ref{sec:CFG Optimal Control},
\begin{equation}
    U_\gamma(t) \triangleq \left\{ \boldsymbol{u}_t \in \mathbb{R}^m : \lVert \boldsymbol{u}_t \rVert_2^2 \le \frac{\alpha_t^2}{\alpha_t^2+\beta_t^2}\,r_u^2 + \gamma \left( r_u^2 - \frac{\alpha_t^2}{\alpha_t^2+\beta_t^2}\,r_u^2 \right) \right\} \quad \forall \ t \in [t_l, T]. \label{eq:cacfg control set}
\end{equation}
At $\gamma=0$ we assume the full Gaussian hypothesis, while $\gamma = 1$ requires only the upper bound to hold. We substitute $U_\gamma(t)$ into (\ref{eq:oc CFG form}), and refer to this guidance scheme as curvature-aware CFG (CACFG).

\section{CACFG Experiments}
\vspace{-0.5em}
\noindent\textbf{Diffusion models.} We evaluate CACFG using two latent diffusion models of varying sizes: Stable Diffusion 1.5 (SD1.5) \citep{StableDiffusion_Rombach2022} and Stable Diffusion XL (SDXL) \citep{SDXL_Podell2024}. Both are text-conditioned latent diffusion models that run the diffusion of (\ref{eq:background forward sde}) on a VAE latent space trained using the same objective of (\ref{eq:background vae loss}). SD1.5 and SDXL have differing generative resolutions of 512$\times$512 and 1024$\times$1024, which have substantially different compute costs. Thus, for SD1.5 experiments we generate 10,000 samples for each guidance-dataset-strength setting, while for SDXL we generate 2,000. We run both models on a NVIDIA RTX4090, and the curvature-aware projection does not incur substantive compute overhead--all models run at their originally reported speeds.

\noindent\textbf{Datasets.} We generate images conditioned on two datasets: ImageNet \citep{ImageNetBenchmark_Russakovsky2015}, which provides class-conditional generation across 1000 object categories, and COCO \citep{COCO_Lin2014}, which provides caption-conditioned generation--a more sophisticated generative task.

\noindent\textbf{Guidance schedules.} We compare three guidance-weight schedules $\omega(t)$: the constant-weight CFG of (\ref{eq:background cfg}) \citep{ClassifierFreeGuidance_Ho2022}, and the two scheduled alternatives CFG++ \citep{CFG++_Chung2025} and IntCFG \citep{IntCFG_Kynkaanniemi2024}, whose exact forms are given in Appendix~\ref{Apdx:Comparison Methods CFG}. Each schedule is evaluated both on its own and combined with the curvature-aware control set of (\ref{eq:cacfg control set}), giving the CACFG, CACFG++, and IntCACFG guidance schemes respectively. Across all curvature-aware variants we fix the confidence level of (\ref{eq:cacfg ru}) at $p=0.95$, which corresponds to $r_u = 129.16$ and $r_u = 257.16$ for SD1.5 and SDXL, respectively. Based on initial ablations (Appendix~\ref{Apdx:Blending parameter}), we use the blending parameter $\gamma = 0.5$ and $\gamma = 1.0$ for SD1.5 and SDXL, respectively. Additionally, we use the DDIM ODE solver \citep{Song2022} for SD1.5 and SDXL, as it is the most common deterministic solver.

\noindent\textbf{Guidance weights.} We report the guidance weight of CFG and compute an approximate equivalent weight under each other schedule. That is, we find schedule parameters, $\theta$, such that $\int_0^T \omega_{\text{CFG}} dt = \int_0^T \omega(t; \theta) dt$. This compares schedules with equal approximate total path perturbations. For SD1.5 we use CFG weights $\omega_{\text{CFG}} = 5, 7.5, 10, 15$, and for SDXL we use $\omega_{\text{CFG}} = 10, 15, 20, 25$. This discrepancy arises because SDXL is significantly more resistant to excessive CFG guidance, making comparisons at lower guidance strengths less meaningful.

\noindent\textbf{Metrics.} We report nine metrics measuring image quality across varying axes. Fr\'echet Inception Distance (FID) \citep{Heusel2018} measures the similarity of real and generated image distributions. Density and Coverage \citep{DensityCoverage_Naeem2020} report the fidelity and diversity of the generated images by comparing manifolds fitted to real and generated images. NIQE \citep{NIQE_Mittal2013}, BRISQUE \citep{BRISQUE_Mittal2012}, and TReS \citep{TReS_Alireza2022} are no-reference perceptual quality metrics, scoring an image's naturalness based on its pixel statistics. CLIP score \citep{CLIP_Radford2021} measures prompt alignment via the cosine similarity between a generated image's CLIP embedding and its conditioning. Human Preference Score v2 (HPSv2) \citep{HPS_Wu2023} is a model trained to mimic human rankings of image quality. We note that HPSv2 and CLIP score are known to artificially improve at large CFG strengths despite poorer image quality and increasing saturation \citep{GuidanceMatters_Xie2026}. We therefore additionally report Saturation \citep{APG_Sadat2025}, computed as the mean of the saturation channel after converting each image from RGB to HSV, to measure this effect.

\subsection{Results} \label{sec:results}
\noindent\textbf{Distribution quality and diversity.} When using SD1.5 we find FID, Density, and Coverage are improved or equivalent under the curvature-aware (CA) correction for all three guidance schedules and both datasets, with the gap widening substantially at higher guidance strengths (Figures~\ref{fig:Metric plot coco sd1.5} and \ref{fig:Metric plot sd1.5}). Since Density and Coverage improve alongside FID rather than trading off against it, the corrected samples are more accurately contained within the real-image manifold and also cover a wider region of it, indicating an improvement in both quality and diversity. 

\begin{wrapfigure}{R}{0.4\textwidth}
    \vspace{-1em}
    \centering
    \includegraphics[width=\linewidth]{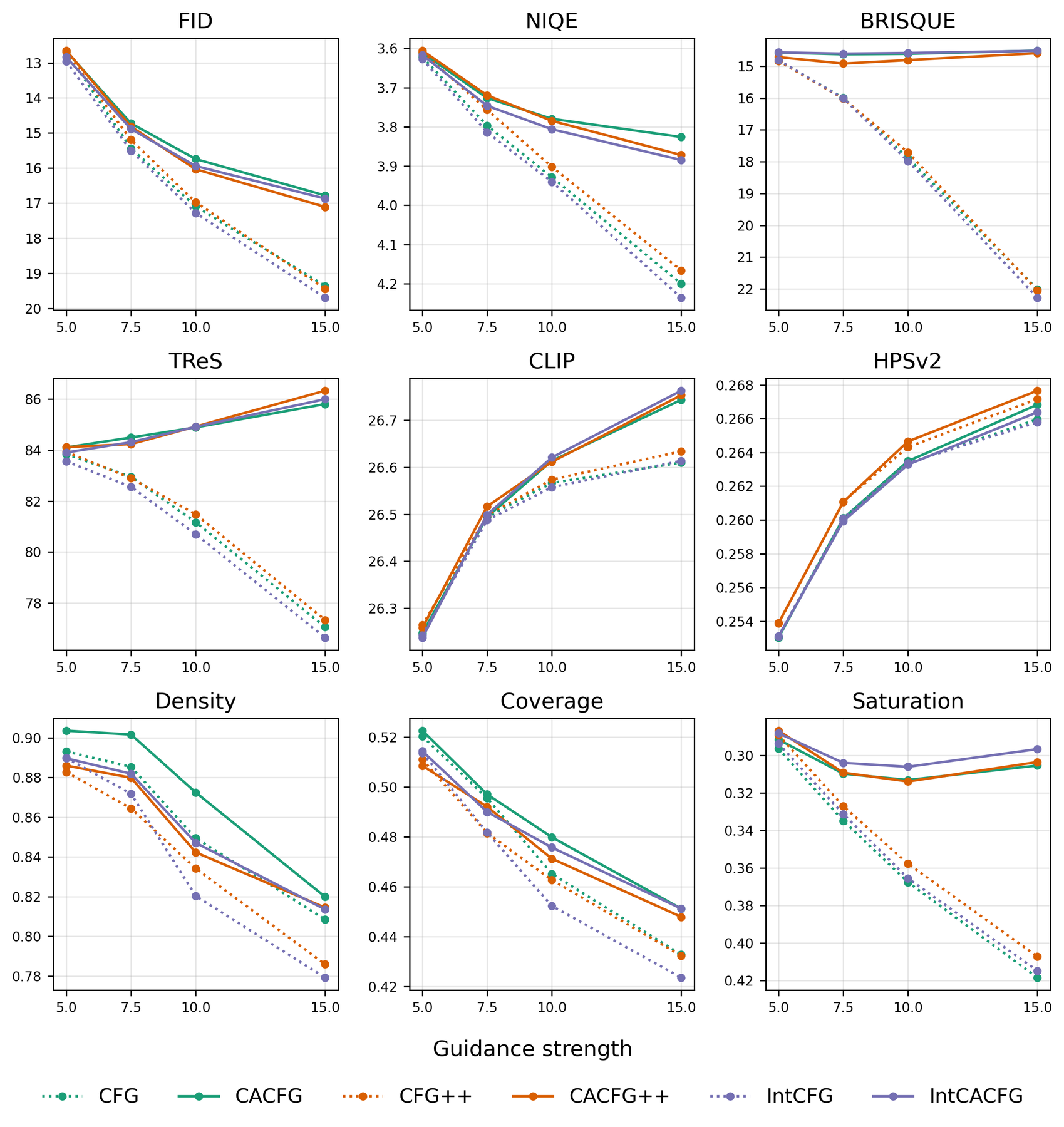}
    \caption{Performance of CFG, CFG++, and IntCFG with (solid line) and without (dotted line) our curvature-aware correction using SD1.5 on COCO. Some metric axes have been flipped so that the improvement direction is vertically up the page. Zoom in for easier viewing.}
    \label{fig:Metric plot coco sd1.5}
    \vspace{-1em}
\end{wrapfigure}

The same pattern holds for SDXL (Figure~\ref{fig:Metric plot sdxl}); however, Density and Coverage also depend on guidance strength in a way they do not for SD1.5. At the lowest guidance strength tested, we find both Density and Coverage are marginally degraded by the CA correction, before favouring the correction from moderate guidance strengths onward, alongside a FID advantage that similarly grows across the same range. We hypothesise two explanations for this phenomenon. First, SDXL is a higher-capacity model and is more robust to the manifold departures, thus, the CA control set has less to correct at low guidance, and its benefit only becomes apparent once uncorrected guidance has sufficiently corrupted the sampling trajectory. Second, at low guidance strengths the more restrictive control constraint, $U_\gamma(t)$, limits the sampler's ability to search the latent space, a cost that is outweighed once the guidance strength (effective search budget) is high enough. Thus, across both SD1.5 and SDXL, the CA correction provides superior conditional generation at moderate-to-high guidance strengths, with a less severe quality-diversity tradeoff than that exhibited by unconstrained guidance methods. These findings are supported qualitatively by the samples in Figure~\ref{fig:SD1.5_CACFG_COCO} and Appendix~\ref{Apdx:Additional samples}.

\begin{wrapfigure}{R}{0.4\textwidth}
    \vspace{-1.3em}
    \centering
    \resizebox{\linewidth}{!}{%
    \begin{tabular}{@{}c@{\hspace{2pt}}c@{}c@{}c@{}}
        \raisebox{-0.5\height}{\rotatebox{90}{\small CFG}} &
        \raisebox{-0.5\height}{\includegraphics[width=0.3\linewidth]{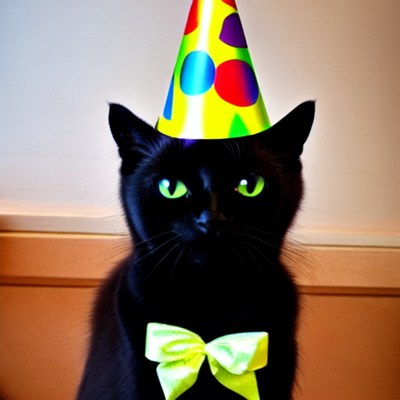}} &
        \raisebox{-0.5\height}{\includegraphics[width=0.3\linewidth]{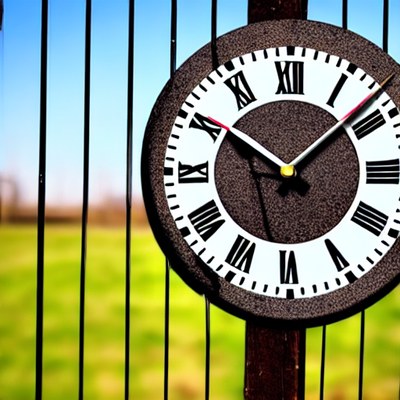}} &
        \raisebox{-0.5\height}{\includegraphics[width=0.3\linewidth]{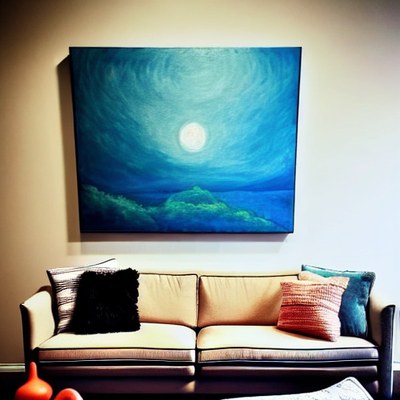}} \\
        \raisebox{-0.5\height}{\rotatebox{90}{\small CACFG}} &
        \raisebox{-0.5\height}{\includegraphics[width=0.3\linewidth]{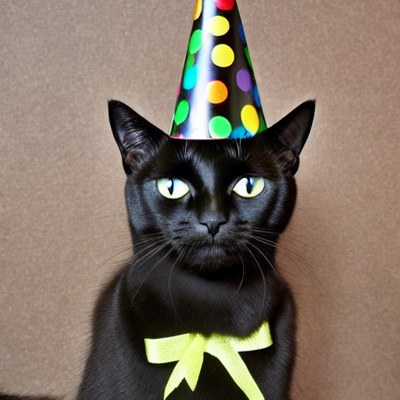}} &
        \raisebox{-0.5\height}{\includegraphics[width=0.3\linewidth]{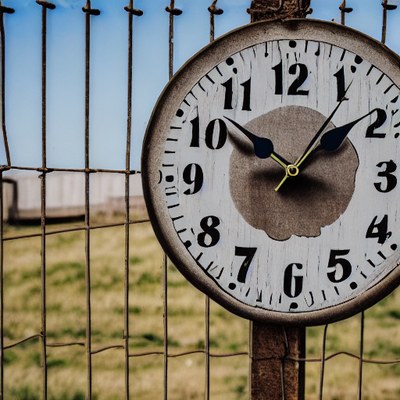}} &
        \raisebox{-0.5\height}{\includegraphics[width=0.3\linewidth]{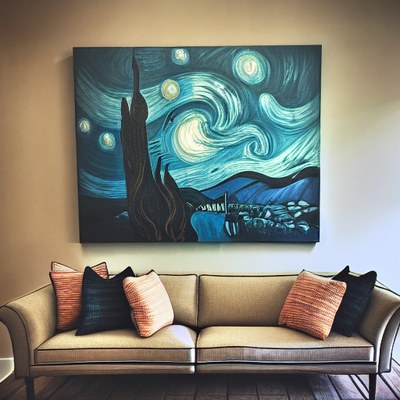}}
    \end{tabular}
    }
    \caption{Comparison of CFG \citep{ClassifierFreeGuidance_Ho2022} and CACFG (ours) at guidance strength $\omega = 15$ on the COCO \citep{COCO_Lin2014} dataset. CACFG better preserves fine detail: the middle image has a correctly numbered clock face, while the rightmost image reconstructs Van Gogh's \textit{The Starry Night}.}
    \label{fig:SD1.5_CACFG_COCO}
    \vspace{-1.5em}
\end{wrapfigure}

\noindent\textbf{Perceptual quality.} NIQE, BRISQUE, and TReS all favour the curvature-aware variants at every guidance strength for both SD1.5 and SDXL, with the largest gains at high guidance. As these are no-reference measures of image pixel statistics, this indicates that curvature-aware guidance produces images with more natural low-level statistics, independently of any comparison to the real data distribution, and that this benefit is consistent across model scale, guidance schedule, and diffusion model.

\noindent\textbf{Prompt alignment.} CLIP score improves only marginally under ImageNet but more substantially under COCO, for both SD1.5 and SDXL. We hypothesise that ImageNet's single-object class prompts are simple enough that standard guidance can satisfy them at these guidance strengths, leaving no room for the CA correction to help, whereas COCO's complex caption prompts are harder to satisfy: by keeping the sampling trajectory closer to the latent image manifold, curvature-aware guidance allows higher guidance strengths to be spent productively toward matching the full caption, rather than being wasted on manifold-departing artefacts.

\noindent\textbf{Human-preference metrics and saturation.} HPSv2 is the one metric where the curvature-aware variants are outperformed by their uncorrected counterparts, an effect that is mild for SD1.5 but more pronounced for SDXL. However, the Saturation values of both Figures~\ref{fig:Metric plot sd1.5} and \ref{fig:Metric plot sdxl} show that this coincides with increasing saturation levels, while curvature-aware guidance keeps saturation roughly constant. Given that \citet{GuidanceMatters_Xie2026} have shown HPSv2 and CLIP scores artificially increase alongside saturation, and that every other image-quality metric favours the curvature-aware correction for both models, we interpret the higher HPSv2 of uncorrected guidance as an instance of this failure mode rather than a quality advantage.

\section{Conclusion}
We show via the HJB equation that CFG and other guidance schedules can be derived as the optimal control for a continuous-time optimisation problem. Existing CFG corrections assume the underlying image manifold has some known geometry, but this is not guaranteed for latent diffusion models, whose latent image manifold is governed by what the VAE has learned. We use the Gaussian regularisation term of the VAE loss to infer that most image encodings lie within a hypersphere of estimable radius. We then propose a curvature-aware CFG correction that constrains the control set of our optimisation problem to a time-varying hypersphere whose radius is informed by the estimated radius. We find that at high guidance scales this generates images with substantially better quality, as measured by FID, NIQE, BRISQUE, TReS, Density, and Coverage. Thus, our constrained guidance provides superior conditional generation with a less severe quality-diversity tradeoff than that exhibited by unconstrained guidance methods, and functionally no additional compute cost.

\subsubsection*{Acknowledgments}
Max Collins is the recipient of the Research Training Program scholarship funded by the Australian Government. This research was partially supported by National Intelligence and Security Discovery Research Grants (project number NS220100007), funded by the Department of Defence, Australia.

\bibliographystyle{iclr2027_conference}
\bibliography{../references}

\appendix
\addcontentsline{toc}{section}{Appendix} 
\part{Appendix} 
\parttoc 

\newpage
\section{Statements}
\subsection{Limitations}
We note three limitations of our work, each of which we view as a promising direction for future work rather than a fundamental shortcoming of the proposed optimal control framework. First, the control set $U_\gamma(t)$ we optimise over is time-varying but not state-varying; extending it to depend jointly on sampling time and the current state $\boldsymbol{x}_t$ would more directly incentivise training a model to perform the image-manifold projection. Second, our optimal control perspective does not currently explain guidance techniques such as APG \citep{APG_Sadat2025}, which act by manipulating the guidance gradient itself, rather than by reweighting it through the guidance schedule. Finally, our formulation is posed for deterministic sampling processes, and does not immediately extend to stochastic samplers. However, these extensions are unnecessary for understanding and improving traditional classifier-free guidance on deterministic samplers, which was the primary objective of this work.

\subsection{Reproducibility}
We ensure reproducibility by providing a detailed description of our algorithm in the main text (Algorithm~\ref{alg:CACFG}). Additionally, our code and the configuration files necessary to replicate our experiments are made publicly available. Comparison methods are implemented using default parameters or with parameters selected based on the respective authors' tuning algorithms.

\subsection{AI use}
We used large language models as a writing and verification aid during the preparation of this paper. Specifically, AI assistance was used to check and typeset proofs that we had already derived by hand, to edit and improve the clarity of rough prose drafts, and to assist with LaTeX formatting of figures. AI assistance was not used to write experiment code, nor to generate the ideas, methods, or theoretical contributions presented in this work; all such content originates from the authors.

\section{Comparison Methods CFG Form} \label{Apdx:Comparison Methods CFG}
Both CFG++ \citep{CFG++_Chung2025} and Interval-CFG (IntCFG) \citep{IntCFG_Kynkaanniemi2024} propose guidance scheduling schemes to modulate the strength of CFG-like guidance across diffusion sampling paths. Similarly to Section~\ref{sec:CFG Optimal Control}, we let the guidance scale of (\ref{eq:background cfg}) vary with time, $\omega(t)$. We consider a discretisation $0 = t_0 < t_1 < \dots < t_k = T$ of the canonical image diffusion of Definition~\ref{Def:Canonical Diffusion}, with sampling proceeding from $t_k$ down to $t_0$, i.e.\ each step moves $\boldsymbol{x}_{t_i}$ to the less-noisy $\boldsymbol{x}_{t_{i-1}}$. The guidance schedules of CFG++ and IntCFG are as follows.

\noindent\textbf{CFG++.} \citet{CFG++_Chung2025} use a manifold-constrained gradient to force the diffusion sampling trajectory to remain on the theorised linear image manifold \citep{ManifoldConstraint_Chung2022, DiffusionDDS_Chung2024}. Under common diffusion scheduling schemes this culminates in a time-varying CFG guidance strength, $\omega(t)$. Given the discretisation above and using a DDIM diffusion schedule, \citeauthor{CFG++_Chung2025}'s CFG++ guidance scale is given by
\begin{equation}
    \omega(t_i) = w \cdot \frac{\alpha_{t_{i-1}} \beta_{t_i}}{\alpha_{t_{i-1}} \beta_{t_i} - \alpha_{t_i} \beta_{t_{i-1}}}, \label{eq:CFG++}
\end{equation}
where $w > 0$ is a hyperparameter that governs the strength of the CFG++ guidance. Using Euler derivative approximations: $\dot{\alpha}_{t_i} \approx \frac{\alpha_{t_i} - \alpha_{t_{i-1}}}{\Delta t} \implies \alpha_{t_{i-1}} \approx \alpha_{t_{i}} - \Delta t \cdot \dot{\alpha}_{t_i}$ and $\dot{\beta}_{t_i} \approx \frac{\beta_{t_{i}} - \beta_{t_{i-1}}}{\Delta t} \implies \beta_{t_{i-1}} \approx \beta_{t_{i}} - \Delta t \cdot \dot{\beta}_{t_i}$, then (\ref{eq:CFG++}) can be written as 
\begin{equation}
    \omega(t) \approx w \cdot \left[ \frac{1}{\Delta t} \cdot \frac{\alpha_{t} \beta_{t}}{\alpha_t \dot{\beta}_t - \dot{\alpha}_t \beta_t} - \frac{\dot{\alpha}_{t} \beta_{t}}{\alpha_t \dot{\beta}_t - \dot{\alpha}_t \beta_t} \right], \label{eq:CFG++ derivative form}
\end{equation}
where $\Delta t$ functions as a parameter describing how coarse the diffusion discretisation is. Notably, from (\ref{eq:CFG++ derivative form}), it is clear that $\omega(t)$ diverges as $\Delta t \rightarrow 0$. This suggests that CFG++ is sensitive to the number of sampling steps, with an exploding guidance scale as the granularity of the ODE solver increases.

\noindent\textbf{Interval CFG (IntCFG).} \citet{IntCFG_Kynkaanniemi2024} hypothesise that guidance at high noise levels reduces sample diversity, and is redundant at low noise levels, where the trajectory has already converged onto the correct mode. Thus, IntCFG confines guidance to an interval of intermediate noise levels. In our notation this is the piecewise-constant schedule
\begin{equation}
    \omega(t) = \begin{cases} w, & \frac{\beta_t}{\alpha_t} \in(\sigma_{\mathrm{lo}}, \sigma_{\mathrm{hi}}], \\ 1, & \text{otherwise}, \end{cases} \label{eq:intcfg schedule}
\end{equation}
for hyperparameters $0<\sigma_{\mathrm{lo}}<\sigma_{\mathrm{hi}}$ and guidance weight $w>1$, selected by grid search against a downstream metric (e.g.\ FID) rather than derived. Note that technically IntCFG results in a discontinuous $\omega(t)$ scale, which conflicts with HJB assumptions made in Section~\ref{sec:CFG Optimal Control}. To circumvent this, a smooth sigmoid transition could be used at each cutoff point $\sigma_{\mathrm{lo}}$ and $\sigma_{\mathrm{hi}}$ to ensure continuity. However, in this paper we use the standard IntCFG proposed by \citet{IntCFG_Kynkaanniemi2024} and specified in (\ref{eq:intcfg schedule}).

\section{Ablations}
\subsection{SD1.5 theoretical Tweedie upper bound}
Figure~\ref{fig:MagStats SD1.5} extends Figure~\ref{fig:Sampling magnitude plot} to the full range of guidance strengths tested in the main paper, showing Tweedie's magnitude ($\lVert \hat{\boldsymbol{x}}_0(t, \boldsymbol{x}_t, \xi) \rVert_2$) of SD1.5 CFG and CACFG sampling trajectories across $\omega \in \{5, 7.5, 10, 15\}$ on (a) ImageNet \citep{ImageNetBenchmark_Russakovsky2015} and (b) COCO \citep{COCO_Lin2014}. Notably, it highlights that increasing guidance strength dramatically increases the number of paths that violate the theoretical $95 \%$ upper limit, $r_u = 129.16$.

\begin{figure}[h]
    \centering
    \begin{subfigure}{\linewidth}
        \centering
        \resizebox{\linewidth}{!}{%
        \begin{tabular}{@{}c@{\hspace{2pt}}c@{}c@{}c@{}c@{}}
            & \makebox[0.18\linewidth][c]{\scriptsize $\omega = 5$} & \makebox[0.18\linewidth][c]{\scriptsize $\omega = 7.5$} & \makebox[0.18\linewidth][c]{\scriptsize $\omega = 10$} & \makebox[0.18\linewidth][c]{\scriptsize $\omega = 15$} \\
            \noalign{\vskip 2pt}
            \raisebox{-0.5\height}{\rotatebox{90}{\scriptsize CFG}} & \raisebox{-0.5\height}{\includegraphics[width=0.18\linewidth]{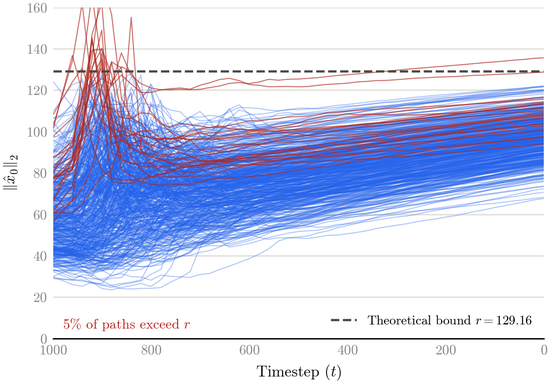}} & \raisebox{-0.5\height}{\includegraphics[width=0.18\linewidth]{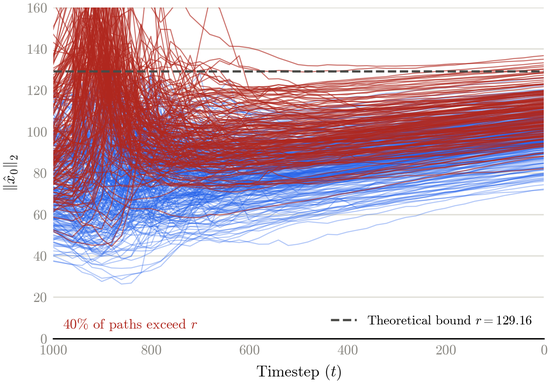}} & \raisebox{-0.5\height}{\includegraphics[width=0.18\linewidth]{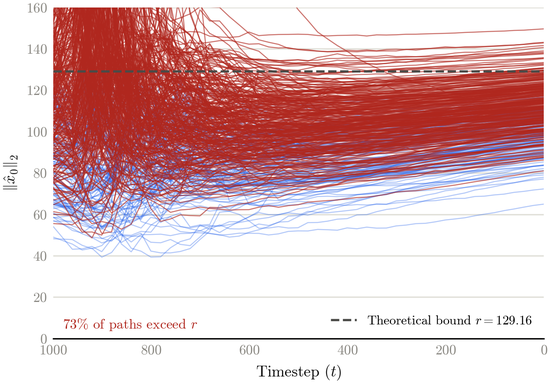}} & \raisebox{-0.5\height}{\includegraphics[width=0.18\linewidth]{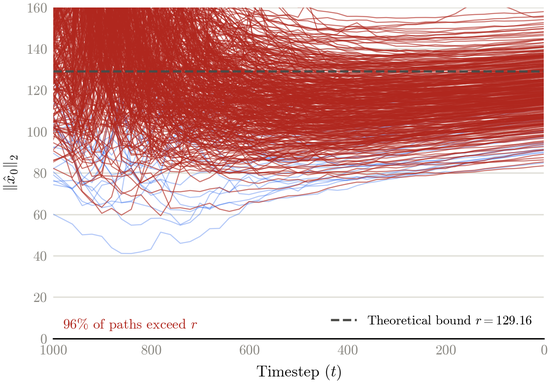}} \\
            \raisebox{-0.5\height}{\rotatebox{90}{\scriptsize CACFG}} & \raisebox{-0.5\height}{\includegraphics[width=0.18\linewidth]{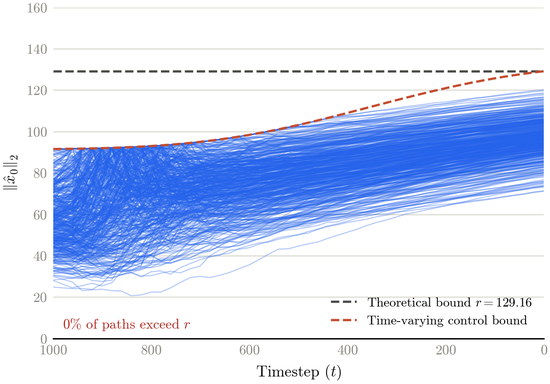}} & \raisebox{-0.5\height}{\includegraphics[width=0.18\linewidth]{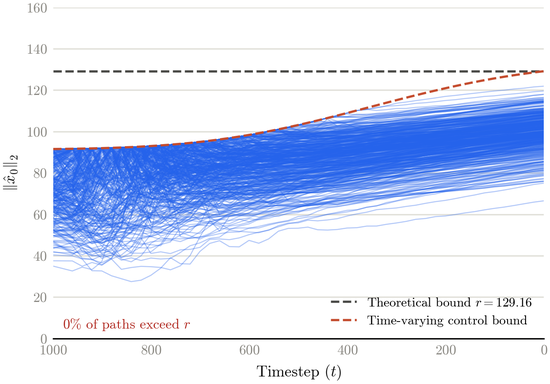}} & \raisebox{-0.5\height}{\includegraphics[width=0.18\linewidth]{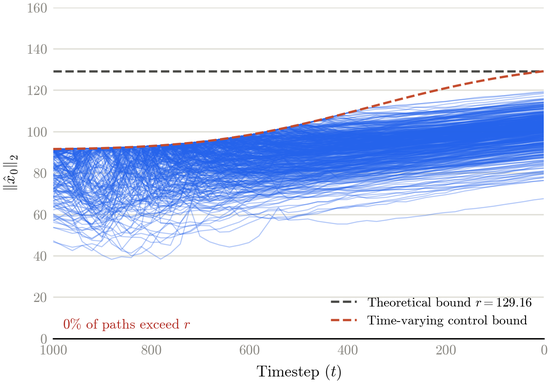}} & \raisebox{-0.5\height}{\includegraphics[width=0.18\linewidth]{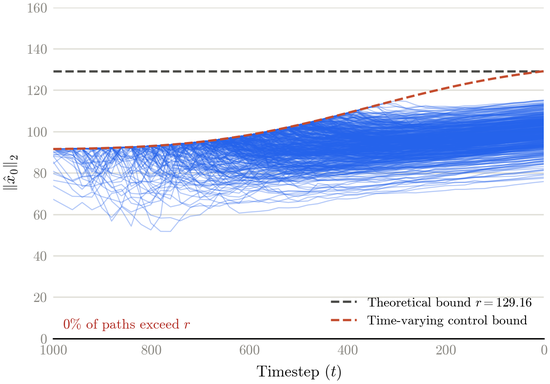}}
        \end{tabular}
        }
        \caption{ImageNet}
    \end{subfigure}

    \vspace{1em}

    \begin{subfigure}{\linewidth}
        \centering
        \resizebox{\linewidth}{!}{%
        \begin{tabular}{@{}c@{\hspace{2pt}}c@{}c@{}c@{}c@{}}
            & \makebox[0.18\linewidth][c]{\scriptsize $\omega = 5$} & \makebox[0.18\linewidth][c]{\scriptsize $\omega = 7.5$} & \makebox[0.18\linewidth][c]{\scriptsize $\omega = 10$} & \makebox[0.18\linewidth][c]{\scriptsize $\omega = 15$} \\
            \noalign{\vskip 2pt}
            \raisebox{-0.5\height}{\rotatebox{90}{\scriptsize CFG}} & \raisebox{-0.5\height}{\includegraphics[width=0.18\linewidth]{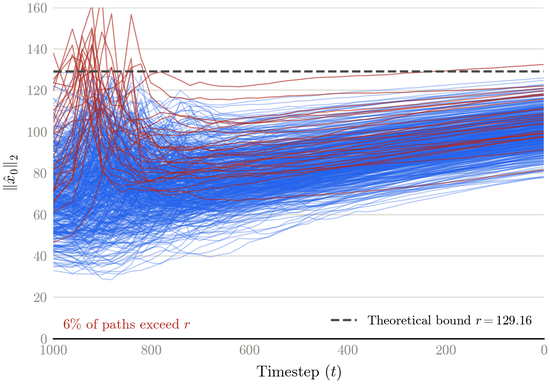}} & \raisebox{-0.5\height}{\includegraphics[width=0.18\linewidth]{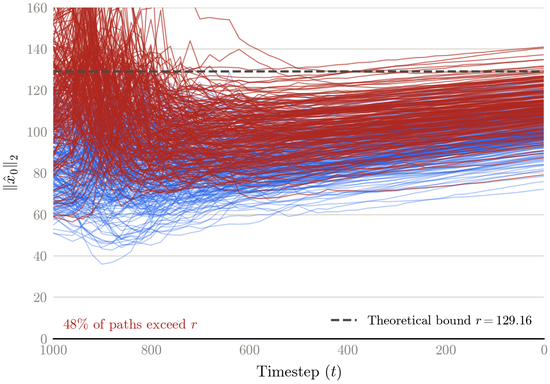}} & \raisebox{-0.5\height}{\includegraphics[width=0.18\linewidth]{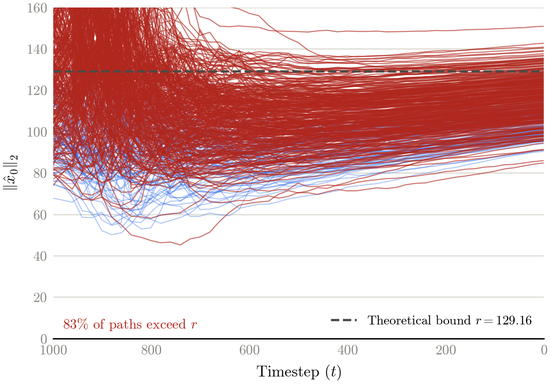}} & \raisebox{-0.5\height}{\includegraphics[width=0.18\linewidth]{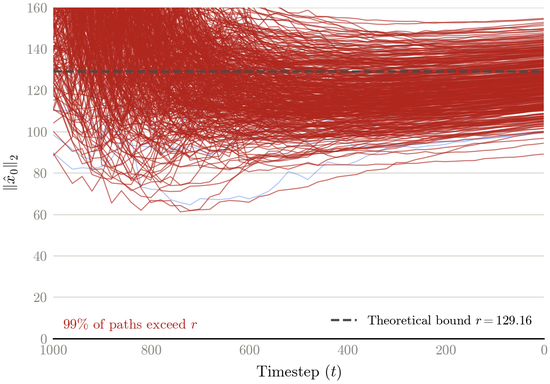}} \\
            \raisebox{-0.5\height}{\rotatebox{90}{\scriptsize CACFG}} & \raisebox{-0.5\height}{\includegraphics[width=0.18\linewidth]{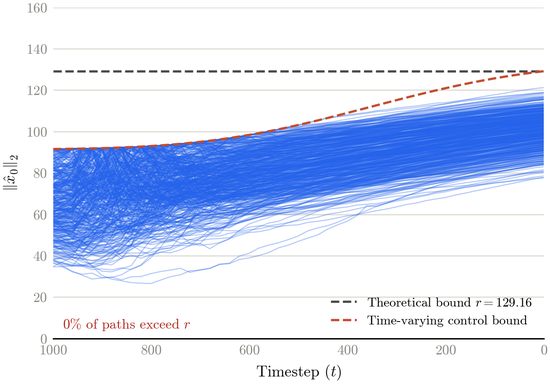}} & \raisebox{-0.5\height}{\includegraphics[width=0.18\linewidth]{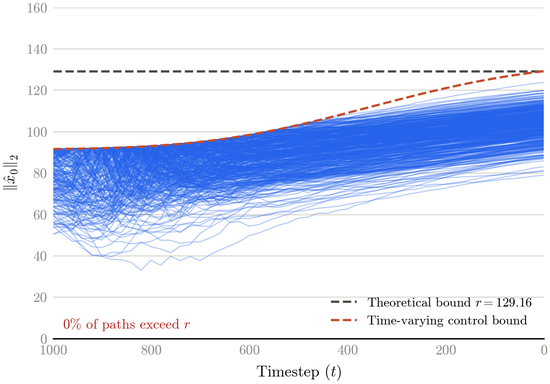}} & \raisebox{-0.5\height}{\includegraphics[width=0.18\linewidth]{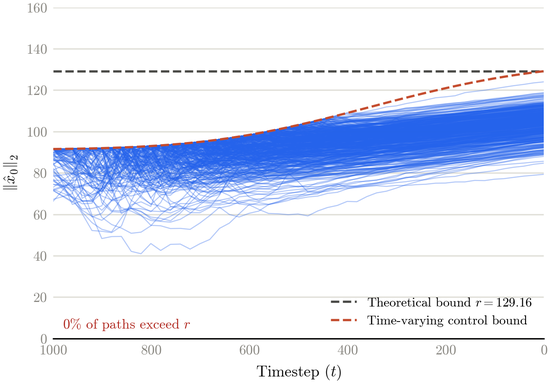}} & \raisebox{-0.5\height}{\includegraphics[width=0.18\linewidth]{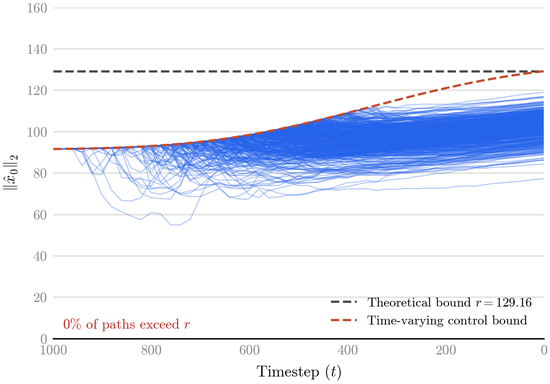}}
        \end{tabular}
        }
        \caption{COCO}
    \end{subfigure}

    \caption{Tweedie's magnitude ($\lVert \hat{\boldsymbol{x}}_0(t, \boldsymbol{x}_t, \xi) \rVert_2$) of SD1.5 CFG and CACFG (ours) sampling trajectories at guidance strengths $\omega \in \{5, 7.5, 10, 15\}$ on (a) ImageNet \citep{ImageNetBenchmark_Russakovsky2015} and (b) COCO \citep{COCO_Lin2014}. Red paths violate the theoretical $95\%$ upper magnitude bound $r_u = 129.16$ (black dashed line); the violating fraction is annotated in each panel. The proportion of violating paths grows sharply with guidance strength for CFG, and this growth is suppressed by our curvature-aware correction, with its time-varying upper bound (red dashed line). We recommend zooming in for easier viewing.}
    \label{fig:MagStats SD1.5}
\end{figure}

\clearpage
\subsection{True VAE latent distribution} \label{Apdx:True VAE latents}
Figure~\ref{fig:VAE Encoding Histogram} shows the empirical distribution of SD1.5 VAE encodings ($\lVert \boldsymbol{x}_0 \rVert_2$) of the ImageNet validation set, alongside the theoretical chi density implied by the isotropic Gaussian hypothesis proposed in Section~\ref{sec:cacfg method}. The two do not agree because the VAE is trained with a regularisation term that encourages encodings towards the theoretical density, but does not enforce their adherence. In practice we find that the high-dimensional theoretical density appears as a narrow spike, whereas the true distribution of encodings is visibly broader and right-skewed, with a tail extending up to an empirical maximum of $211.19$. Despite this, $r_u=129.16$ falls just past the mode of the empirical distribution, where the bulk of the density has already accumulated, so the bound continues to capture a large proportion of real encodings even though the theoretical density does not capture the distribution's true shape.

\begin{figure}[h]
    \centering
    \includegraphics[width=0.35\textwidth]{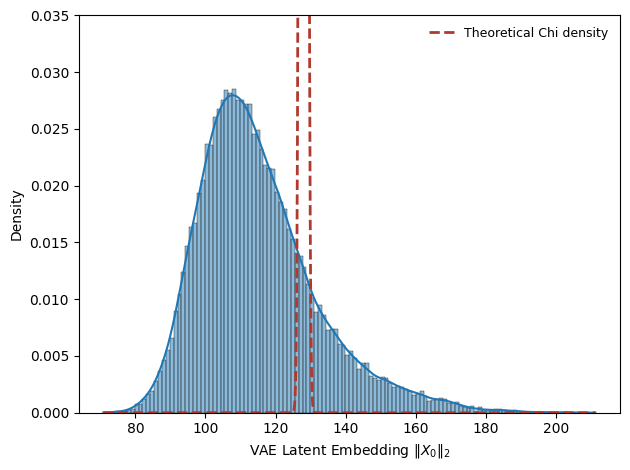}
    \caption{Empirical distribution of SD1.5 VAE encodings ($\lVert \boldsymbol{x}_0 \rVert_2$) of the ImageNet validation set, i.e. the distribution of $\lVert \boldsymbol{x}_0 \rVert_2$. Red line denotes the theoretical chi density implied by the Gaussian hypothesis in Section~\ref{sec:cacfg method}.}
    \label{fig:VAE Encoding Histogram}
\end{figure}

To test whether this mismatch matters in practice, we compare CACFG against \emph{CACFG*}, an otherwise identical scheme that instead sets $r_u=211.19$, i.e. the maximum SD1.5 VAE encoding magnitude observed on the ImageNet validation set. From Table~\ref{tab:SD1.5 vaeablation table} we see that CACFG* is worse than CACFG on BRISQUE, TReS, and Saturation at every guidance strength tested. FID, NIQE, Density, and Coverage follow the same pattern from strength $7.5$ onward. CLIP favours CACFG* at strengths 5 and 7.5, ties at strength 10, and favours CACFG at 15. HPSv2 favours CACFG* across all guidance strengths. However, as we discuss in the main paper, \citet{GuidanceMatters_Xie2026} tie HPSv2 to oversaturation, and CACFG*'s saturation exceeds CACFG's throughout, and all other metrics favour CACFG at high guidance strengths. Thus, we interpret the higher HPSv2 of CACFG* as an instance of the saturation-induced failure mode rather than a quality advantage.

We hypothesise that CACFG outperforms CACFG* because the tighter, theoretically-derived radius encloses a denser region of the encoding manifold: Figure~\ref{fig:VAE Encoding Histogram} shows valid encodings are more sparse with distance from the origin, so the smaller radius keeps sampling trajectories within the well-populated region the denoiser was trained on, rather than permitting convergence to low-confidence regions \citep{Szegedy2014, Goodfellow2015, Madry2019}.

\begin{table}[h]
\centering
\caption{Image quality for SD1.5 samples generated for ImageNet using CFG \citep{ClassifierFreeGuidance_Ho2022} and CACFG with varying upper bound limits, $r_u$. CACFG denotes the $r_u = 129.16$ bound used in the main paper, while CACFG* is the $r_u = 211.19$ bound derived from the maximum SD1.5 VAE encoding of the ImageNet validation dataset. The best result for each dataset and guidance strength is \textcolor{ForestGreen}{\textbf{bold}}.}
\label{tab:SD1.5 vaeablation table}
\resizebox{\textwidth}{!}{%
\small
\begin{tabular}{l l ccccccc cc}
\toprule
\textbf{Strength} & \textbf{Guidance} & \textbf{FID} $\downarrow$ & \textbf{NIQE} $\downarrow$ & \textbf{BRISQUE} $\downarrow$ & \textbf{TReS} $\uparrow$ & \textbf{CLIP} $\uparrow$ & \textbf{HPSv2} $\uparrow$ & \textbf{Density} $\uparrow$ & \textbf{Coverage} $\uparrow$ & \textbf{Saturation} $\downarrow$ \\
\midrule

\multirow[c]{3}{*}{5}
 & CFG      & $\textcolor{ForestGreen}{\mathbf{13.67}}$ & $4.06$ & $14.98$ & $83.24$ & $24.851$ & $25.076$ & $\textcolor{ForestGreen}{\mathbf{1.128}}$ & $0.453$ & $0.299$ \\
 \noalign{\vskip -1.5pt}
 \cmidrule[0.1pt](lr){2-11}
 \noalign{\vskip -1.5pt}
 & CACFG    & $13.70$ & $\textcolor{ForestGreen}{\mathbf{4.05}}$ & $\textcolor{ForestGreen}{\mathbf{14.83}}$ & $\textcolor{ForestGreen}{\mathbf{83.47}}$ & $24.849$ & $25.070$ & $1.124$ & $\textcolor{ForestGreen}{\mathbf{0.454}}$ & $\textcolor{ForestGreen}{\mathbf{0.295}}$ \\
 & CACFG*   & $\textcolor{ForestGreen}{\mathbf{13.67}}$ & $\textcolor{ForestGreen}{\mathbf{4.05}}$ & $14.98$ & $83.27$ & $\textcolor{ForestGreen}{\mathbf{24.856}}$ & $\textcolor{ForestGreen}{\mathbf{25.081}}$ & $1.123$ & $\textcolor{ForestGreen}{\mathbf{0.454}}$ & $0.299$ \\
 \cmidrule(lr){1-11}
\multirow[c]{3}{*}{7.5}
 & CFG      & $15.12$ & $4.27$ & $15.47$ & $82.80$ & $\textcolor{ForestGreen}{\mathbf{25.251}}$ & $25.680$ & $1.139$ & $0.443$ & $0.338$ \\
 \noalign{\vskip -1.5pt}
 \cmidrule[0.1pt](lr){2-11}
 \noalign{\vskip -1.5pt}
 & CACFG    & $\textcolor{ForestGreen}{\mathbf{14.97}}$ & $\textcolor{ForestGreen}{\mathbf{4.22}}$ & $\textcolor{ForestGreen}{\mathbf{14.59}}$ & $\textcolor{ForestGreen}{\mathbf{83.80}}$ & $25.243$ & $25.653$ & $\textcolor{ForestGreen}{\mathbf{1.140}}$ & $\textcolor{ForestGreen}{\mathbf{0.445}}$ & $\textcolor{ForestGreen}{\mathbf{0.316}}$ \\
 & CACFG*   & $15.11$ & $4.27$ & $15.33$ & $82.85$ & $25.249$ & $\textcolor{ForestGreen}{\mathbf{25.687}}$ & $1.132$ & $0.444$ & $0.337$ \\
\cmidrule(lr){1-11}
\multirow[c]{3}{*}{10}
 & CFG      & $16.22$ & $4.44$ & $17.13$ & $81.41$ & $25.391$ & $25.967$ & $1.084$ & $0.424$ & $0.371$ \\
 \noalign{\vskip -1.5pt}
 \cmidrule[0.1pt](lr){2-11}
 \noalign{\vskip -1.5pt}
 & CACFG    & $\textcolor{ForestGreen}{\mathbf{15.73}}$ & $\textcolor{ForestGreen}{\mathbf{4.32}}$ & $\textcolor{ForestGreen}{\mathbf{14.66}}$ & $\textcolor{ForestGreen}{\mathbf{84.10}}$ & $\textcolor{ForestGreen}{\mathbf{25.395}}$ & $25.933$ & $\textcolor{ForestGreen}{\mathbf{1.119}}$ & $\textcolor{ForestGreen}{\mathbf{0.433}}$ & $\textcolor{ForestGreen}{\mathbf{0.321}}$ \\
 & CACFG*   & $16.16$ & $4.40$ & $16.57$ & $81.79$ & $\textcolor{ForestGreen}{\mathbf{25.395}}$ & $\textcolor{ForestGreen}{\mathbf{25.979}}$ & $1.094$ & $0.426$ & $0.365$ \\
\cmidrule(lr){1-11}
\multirow[c]{3}{*}{15}
 & CFG      & $17.78$ & $4.75$ & $21.95$ & $78.14$ & $25.491$ & $26.214$ & $0.994$ & $0.391$ & $0.424$ \\
 \noalign{\vskip -1.5pt}
 \cmidrule[0.1pt](lr){2-11}
 \noalign{\vskip -1.5pt}
 & CACFG    & $\textcolor{ForestGreen}{\mathbf{16.52}}$ & $\textcolor{ForestGreen}{\mathbf{4.42}}$ & $\textcolor{ForestGreen}{\mathbf{14.80}}$ & $\textcolor{ForestGreen}{\mathbf{84.60}}$ & $\textcolor{ForestGreen}{\mathbf{25.540}}$ & $26.175$ & $\textcolor{ForestGreen}{\mathbf{1.063}}$ & $\textcolor{ForestGreen}{\mathbf{0.413}}$ & $\textcolor{ForestGreen}{\mathbf{0.317}}$ \\
 & CACFG*   & $17.55$ & $4.57$ & $19.06$ & $80.02$ & $25.505$ & $\textcolor{ForestGreen}{\mathbf{26.264}}$ & $1.023$ & $0.400$ & $0.397$ \\
\bottomrule
\end{tabular}%
}
\end{table}

\subsection{Curvature-aware upper bound blending parameter} \label{Apdx:Blending parameter}
The blending parameter $\gamma$ interpolates between the two extremes of the Gaussian hypothesis underlying $U_\gamma(t)$: at $\gamma = 0$ the control set assumes the hypothesis holds exactly, while at $\gamma = 1$ it retains only the raw upper bound $r_u$ (\ref{eq:cacfg control set}). We sweep $\gamma \in \{0.0, 0.1, \dots, 1.0\}$ for CACFG on ImageNet, for both SD1.5 (Table~\ref{tab:SD1.5 blending}) and SDXL (Table~\ref{tab:SDXL blending}). A consistent pattern emerges across both models: loosening the control set steadily improves prompt alignment and human-preference scores (CLIP, HPSv2) and steadily increases saturation, while the remaining fidelity metrics (FID, NIQE, BRISQUE, Density, Coverage) improve sharply over the first few steps of the sweep and then plateau. No single value of $\gamma$ is simultaneously optimal across every metric, so selecting $\gamma$ requires an explicit choice about how to balance prompt alignment against the residual fidelity cost of moving away from $\gamma=0.0$; we describe that choice, and its cost, for each model below.

For SD1.5, we select $\gamma = 0.5$. At this value, six of the nine metrics (NIQE, BRISQUE, Density, Coverage, CLIP, and HPSv2) sit within $1\%$ of the best value achieved anywhere in the sweep, and FID ($14.97$) is close to its sweep-best ($14.53$) and far below the $\gamma=0.0$ value ($16.95$). Beyond $\gamma=0.5$, prompt-alignment and preference gains diminish sharply: CLIP and HPSv2 each move by less than $0.15\%$ between $\gamma=0.5$ and $\gamma=1.0$. Over the same range, the two metrics that continue to degrade past $\gamma=0.5$ worsen further: TReS, which peaks early at $\gamma=0.1$ and declines steadily thereafter, falling below even its $\gamma=0.0$ value; and saturation, which increases monotonically with $\gamma$ throughout the sweep. We therefore treat $\gamma=0.5$ as close to the point of diminishing returns for SD1.5: it captures nearly all of the available prompt-alignment gain without paying the additional TReS and saturation cost of continuing to $\gamma=1.0$.

For SDXL, we instead select $\gamma = 1.0$. Unlike SD1.5, fidelity metrics for SDXL plateau from roughly $\gamma=0.3$--$0.4$ onward and remain within a small margin of their sweep-best all the way to $\gamma=1.0$. FID, TReS, Density, and Coverage are each within $2\%$ of their respective optima at $\gamma=1.0$, while CLIP and HPSv2 are both maximised at $\gamma=1.0$. This is consistent with our hypothesis that SDXL, as a higher-capacity model, is more robust to the manifold departure the curvature-aware correction addresses, and so tolerates a looser control set with comparatively little fidelity cost. The corresponding cost is a mild increase in BRISQUE ($2.7\%$ above its sweep-best) and saturation, which reaches its highest value in the sweep at $\gamma=1.0$; we accept this cost given the consistent, and in several cases maximal, gains elsewhere.

\begin{table}[h]
\centering
\caption{Image quality for SD1.5 \citep{StableDiffusion_Rombach2022} samples generated for ImageNet \citep{ImageNetBenchmark_Russakovsky2015} dataset using CACFG across varying curvature-aware blending parameter values.}
\label{tab:SD1.5 blending}
\resizebox{\textwidth}{!}{%
\small
\begin{tabular}{cccccccc cc}
\toprule
$\boldsymbol{\gamma}$ & \textbf{FID} $\downarrow$ & \textbf{NIQE} $\downarrow$ & \textbf{BRISQUE} $\downarrow$ & \textbf{TReS} $\uparrow$ & \textbf{CLIP} $\uparrow$ & \textbf{HPSv2} $\uparrow$ & \textbf{Density} $\uparrow$ & \textbf{Coverage} $\uparrow$ & \textbf{Saturation} $\downarrow$ \\
\midrule
   $0.0$ & $16.95$ & $5.44$ & $23.59$ & $85.15$ & $24.835$ & $23.844$ & $0.938$ & $0.395$ & $0.194$ \\
   $0.1$ & $14.53$ & $4.56$ & $17.36$ & $86.28$ & $25.159$ & $25.069$ & $1.111$ & $0.444$ & $0.236$ \\
   $0.2$ & $14.54$ & $4.32$ & $15.28$ & $85.65$ & $25.199$ & $25.403$ & $1.142$ & $0.446$ & $0.266$ \\
   $0.3$ & $14.72$ & $4.24$ & $14.64$ & $84.96$ & $25.233$ & $25.550$ & $1.140$ & $0.448$ & $0.289$ \\
   $0.4$ & $14.89$ & $4.22$ & $14.50$ & $84.29$ & $25.242$ & $25.621$ & $1.140$ & $0.447$ & $0.305$ \\
   $0.5$ & $14.97$ & $4.22$ & $14.59$ & $83.80$ & $25.243$ & $25.653$ & $1.140$ & $0.445$ & $0.316$ \\
   $0.6$ & $15.03$ & $4.23$ & $14.75$ & $83.48$ & $25.244$ & $25.669$ & $1.145$ & $0.447$ & $0.323$ \\
   $0.7$ & $15.11$ & $4.24$ & $14.93$ & $83.28$ & $25.244$ & $25.672$ & $1.138$ & $0.445$ & $0.328$ \\
   $0.8$ & $15.10$ & $4.25$ & $15.04$ & $83.12$ & $25.247$ & $25.673$ & $1.137$ & $0.444$ & $0.331$ \\
   $0.9$ & $15.14$ & $4.26$ & $15.14$ & $83.04$ & $25.255$ & $25.681$ & $1.137$ & $0.445$ & $0.333$ \\
   $1.0$ & $15.09$ & $4.26$ & $15.20$ & $82.97$ & $25.255$ & $25.690$ & $1.133$ & $0.445$ & $0.335$ \\
\bottomrule
\end{tabular}%
}
\end{table}

\begin{table}[h]
\centering
\caption{Image quality for SDXL \citep{SDXL_Podell2024} samples generated for ImageNet \citep{ImageNetBenchmark_Russakovsky2015} dataset using CACFG across varying curvature-aware blending parameter values.}
\label{tab:SDXL blending}
\resizebox{\textwidth}{!}{%
\small
\begin{tabular}{cccccccc cc}
\toprule
$\boldsymbol{\gamma}$ & \textbf{FID} $\downarrow$ & \textbf{NIQE} $\downarrow$ & \textbf{BRISQUE} $\downarrow$ & \textbf{TReS} $\uparrow$ & \textbf{CLIP} $\uparrow$ & \textbf{HPSv2} $\uparrow$ & \textbf{Density} $\uparrow$ & \textbf{Coverage} $\uparrow$ & \textbf{Saturation} $\downarrow$ \\
\midrule
   $0.0$ & $35.84$ & $4.96$ & $26.93$ & $78.06$ & $24.751$ & $23.531$ & $0.863$ & $0.143$ & $0.139$ \\
   $0.1$ & $29.85$ & $4.48$ & $26.04$ & $81.82$ & $25.213$ & $25.409$ & $1.073$ & $0.174$ & $0.165$ \\
   $0.2$ & $29.22$ & $4.42$ & $26.05$ & $83.37$ & $25.274$ & $25.924$ & $1.133$ & $0.183$ & $0.182$ \\
   $0.3$ & $29.09$ & $4.43$ & $26.21$ & $83.96$ & $25.291$ & $26.114$ & $1.147$ & $0.184$ & $0.193$ \\
   $0.4$ & $29.02$ & $4.44$ & $26.45$ & $84.22$ & $25.322$ & $26.234$ & $1.117$ & $0.181$ & $0.201$ \\
   $0.5$ & $29.02$ & $4.47$ & $26.53$ & $84.31$ & $25.361$ & $26.294$ & $1.140$ & $0.184$ & $0.206$ \\
   $0.6$ & $29.06$ & $4.48$ & $26.59$ & $84.38$ & $25.342$ & $26.319$ & $1.157$ & $0.187$ & $0.209$ \\
   $0.7$ & $29.09$ & $4.48$ & $26.70$ & $84.41$ & $25.350$ & $26.339$ & $1.159$ & $0.186$ & $0.211$ \\
   $0.8$ & $29.10$ & $4.49$ & $26.66$ & $84.43$ & $25.356$ & $26.353$ & $1.159$ & $0.185$ & $0.213$ \\
   $0.9$ & $29.10$ & $4.50$ & $26.76$ & $84.40$ & $25.358$ & $26.370$ & $1.159$ & $0.187$ & $0.214$ \\
   $1.0$ & $29.12$ & $4.49$ & $26.75$ & $84.39$ & $25.369$ & $26.380$ & $1.158$ & $0.186$ & $0.215$ \\
\bottomrule
\end{tabular}%
}
\end{table}

\clearpage
\section{Curvature-Aware Experiments}
\subsection{Metric plots} \label{Apdx:Metric plots}
Here we provide the metric plots of SD1.5 \citep{StableDiffusion_Rombach2022} and SDXL \citep{SDXL_Podell2024} diffusion models on the ImageNet \citep{ImageNetBenchmark_Russakovsky2015} and COCO \citep{COCO_Lin2014} generation tasks. We discuss these results in Section~\ref{sec:results}.

\begin{figure}[h]
    \centering
    \begin{subfigure}{0.53\linewidth}
        \centering
        \includegraphics[width=\linewidth]{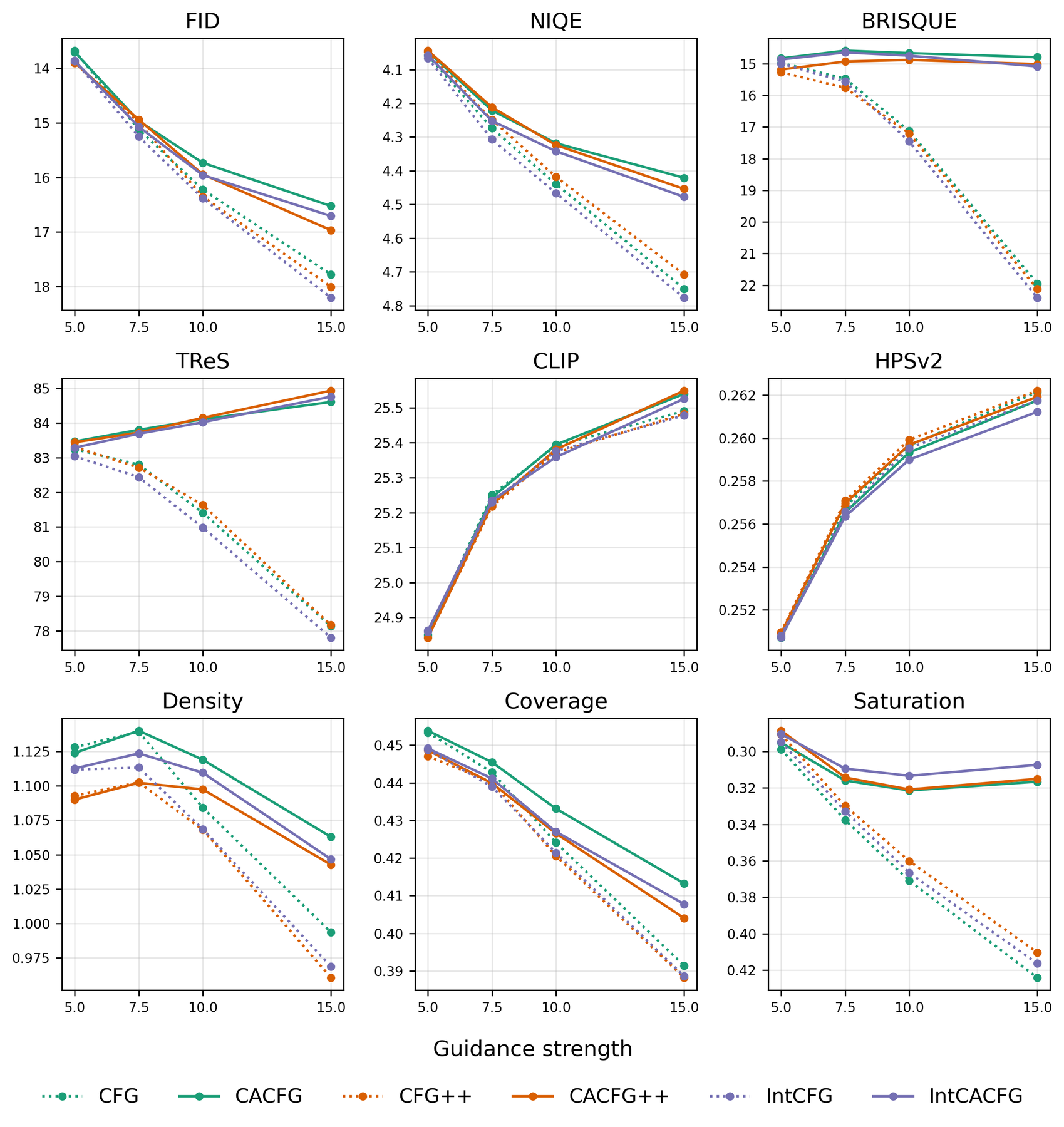}
        \caption{ImageNet}
    \end{subfigure}

    \vspace{1em}

    \begin{subfigure}{0.53\linewidth}
        \centering
        \includegraphics[width=\linewidth]{arxiv_images/MetricPlots/SD1_5_COCO.png}
        \caption{COCO}
    \end{subfigure}
    \caption{Performance of CFG, CFG++, and IntCFG with (solid line) and without (dotted line) our curvature-aware correction using SD1.5 on (a) ImageNet and (b) COCO. Some metric axes have been flipped so that the improvement direction is vertically up the page. Note at larger guidance strengths FID, NIQE, BRISQUE, TReS, Density, Coverage, and CLIP are all improved by our curvature-aware correction. We recommend zooming in for easier viewing.}
    \label{fig:Metric plot sd1.5}
\end{figure}

\begin{figure}[h]
    \centering
    \begin{subfigure}{0.53\linewidth}
        \centering
        \includegraphics[width=\linewidth]{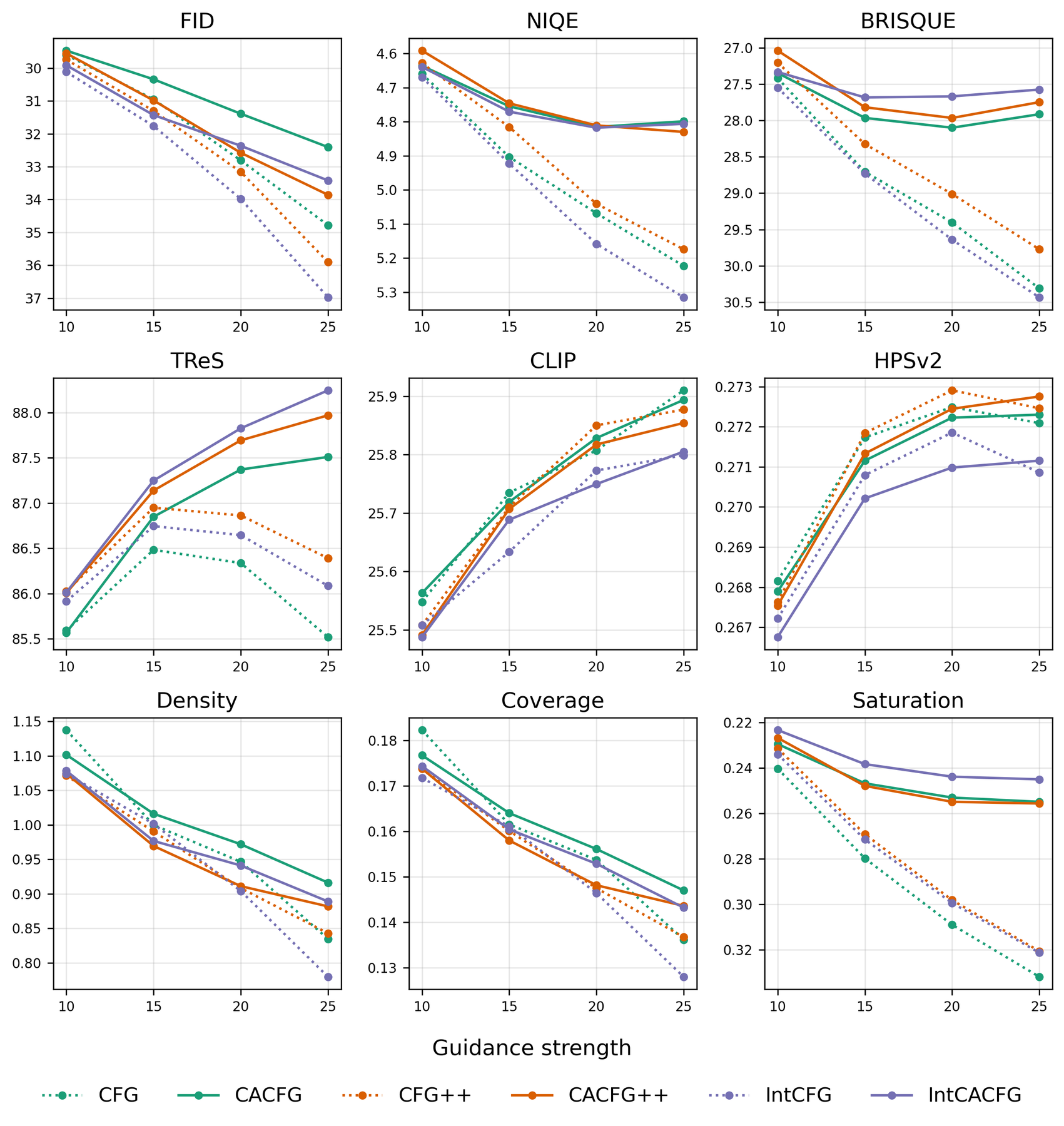}
        \caption{ImageNet}
    \end{subfigure}

    \vspace{1em}

    \begin{subfigure}{0.53\linewidth}
        \centering
        \includegraphics[width=\linewidth]{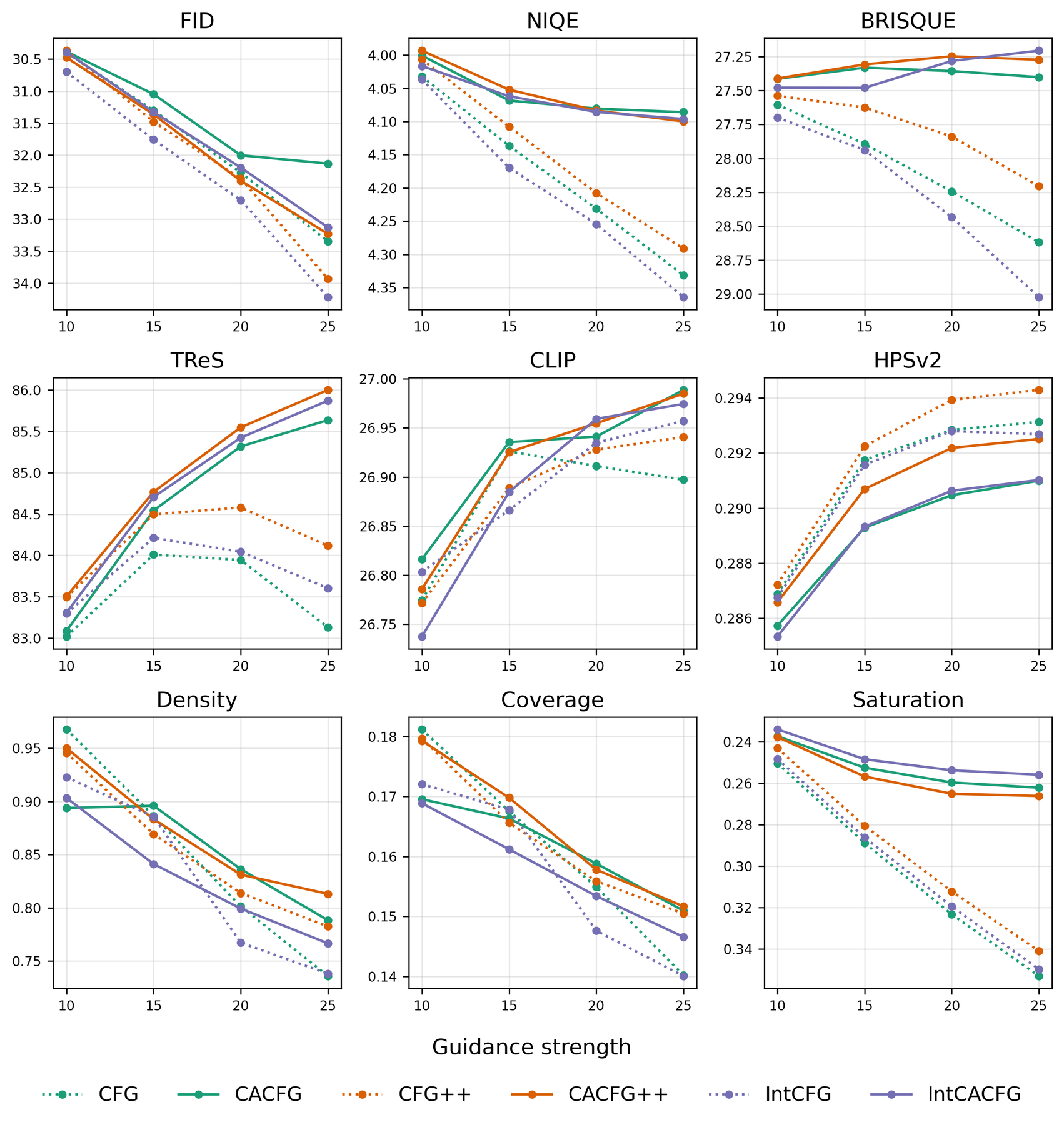}
        \caption{COCO}
    \end{subfigure}
    \caption{Performance of CFG, CFG++, and IntCFG with (solid line) and without (dotted line) our curvature-aware correction using SDXL on (a) ImageNet and (b) COCO. Some metric axes have been flipped so that the improvement direction is vertically up the page. Note at larger guidance strengths FID, NIQE, BRISQUE, TReS, Density, and Coverage are all improved by our curvature-aware correction. We recommend zooming in for easier viewing.}
    \label{fig:Metric plot sdxl}
\end{figure}

\clearpage
\subsection{Additional samples} \label{Apdx:Additional samples}
Here we present additional curvature-aware diffusion samples across varying guidance methods, guidance strengths, and models.
\subsubsection{SD1.5}

\begin{figure}[h]
    \centering
    \begin{subfigure}{\linewidth}
    \centering
    \resizebox{\linewidth}{!}{%
    \begin{tabular}{@{}c@{\hspace{2pt}}c@{}c@{}c@{}c@{}c@{}c@{}}
        \raisebox{-0.5\height}{\rotatebox{90}{\small CFG}} & \raisebox{-0.5\height}{\includegraphics[width=0.15\linewidth]{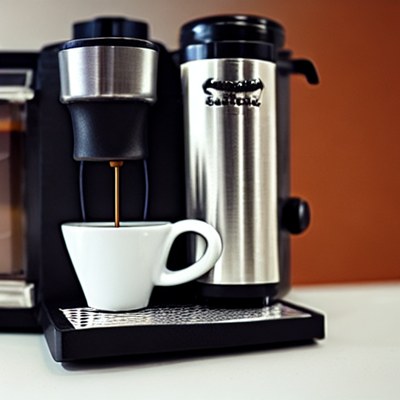}} & \raisebox{-0.5\height}{\includegraphics[width=0.15\linewidth]{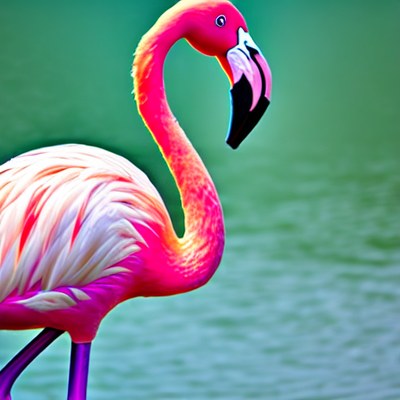}} & \raisebox{-0.5\height}{\includegraphics[width=0.15\linewidth]{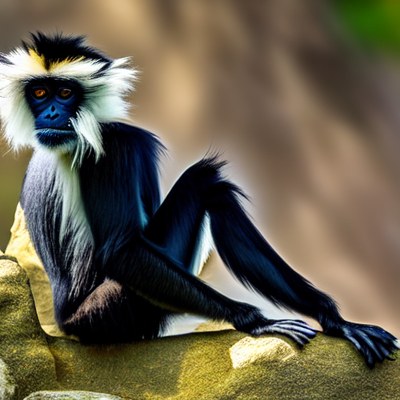}} & \raisebox{-0.5\height}{\includegraphics[width=0.15\linewidth]{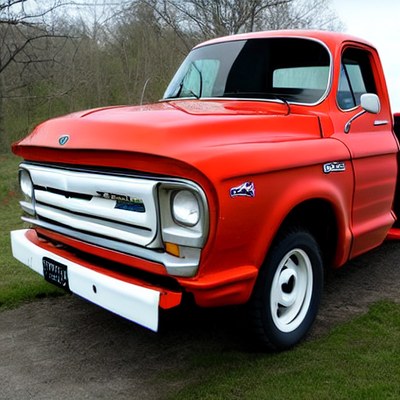}} & \raisebox{-0.5\height}{\includegraphics[width=0.15\linewidth]{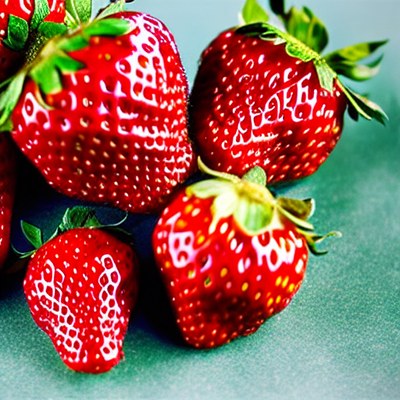}} & \raisebox{-0.5\height}{\includegraphics[width=0.15\linewidth]{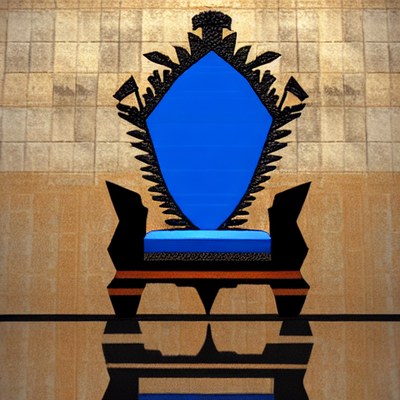}} \\
        \raisebox{-0.5\height}{\rotatebox{90}{\small CACFG}} & \raisebox{-0.5\height}{\includegraphics[width=0.15\linewidth]{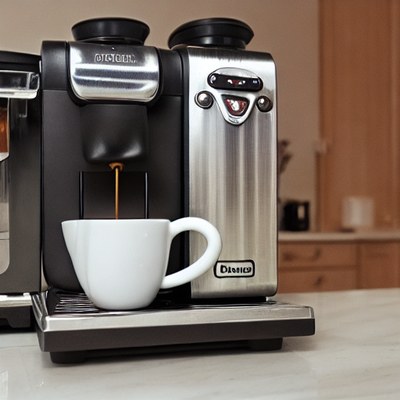}} & \raisebox{-0.5\height}{\includegraphics[width=0.15\linewidth]{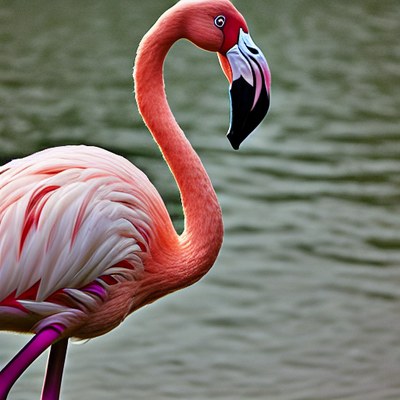}} & \raisebox{-0.5\height}{\includegraphics[width=0.15\linewidth]{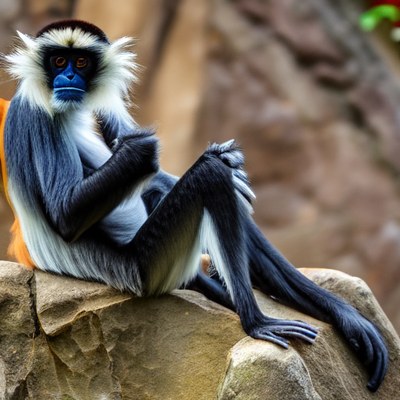}} & \raisebox{-0.5\height}{\includegraphics[width=0.15\linewidth]{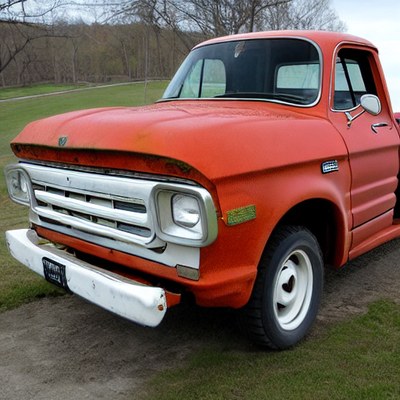}} & \raisebox{-0.5\height}{\includegraphics[width=0.15\linewidth]{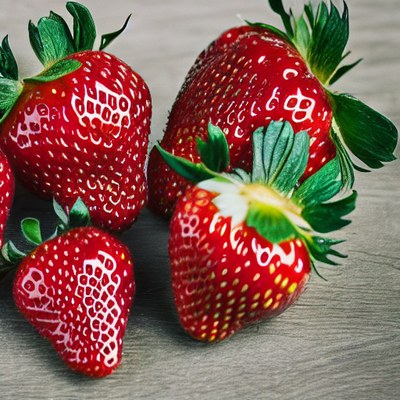}} & \raisebox{-0.5\height}{\includegraphics[width=0.15\linewidth]{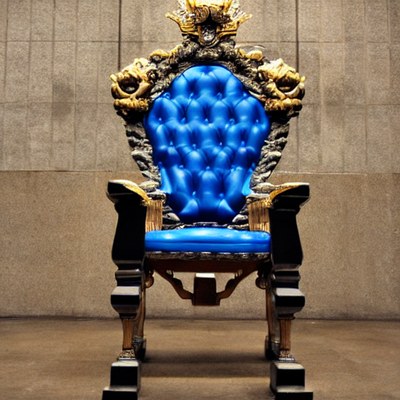}}
    \end{tabular}
    }
    \caption{CFG vs CACFG (ours)}
    \end{subfigure}
    \vspace{1.5em}

    \begin{subfigure}{\linewidth}
        \centering
        \resizebox{\linewidth}{!}{%
        \begin{tabular}{@{}c@{\hspace{2pt}}c@{}c@{}c@{}c@{}c@{}c@{}}
            \raisebox{-0.5\height}{\rotatebox{90}{\small CFG++}} & \raisebox{-0.5\height}{\includegraphics[width=0.15\linewidth]{arxiv_images/SD1_5_ImageNet_Images/SD1_5_ImNe_15_CFGpp_espresso_maker.jpg}} & \raisebox{-0.5\height}{\includegraphics[width=0.15\linewidth]{arxiv_images/SD1_5_ImageNet_Images/SD1_5_ImNe_15_CFGpp_flamingo.jpg}} & \raisebox{-0.5\height}{\includegraphics[width=0.15\linewidth]{arxiv_images/SD1_5_ImageNet_Images/SD1_5_ImNe_15_CFGpp_langur.jpg}} & \raisebox{-0.5\height}{\includegraphics[width=0.15\linewidth]{arxiv_images/SD1_5_ImageNet_Images/SD1_5_ImNe_15_CFGpp_pickup_pickup_truck.jpg}} & \raisebox{-0.5\height}{\includegraphics[width=0.15\linewidth]{arxiv_images/SD1_5_ImageNet_Images/SD1_5_ImNe_15_CFGpp_strawberry.jpg}} & \raisebox{-0.5\height}{\includegraphics[width=0.15\linewidth]{arxiv_images/SD1_5_ImageNet_Images/SD1_5_ImNe_15_CFGpp_throne.jpg}} \\
            \raisebox{-0.5\height}{\rotatebox{90}{\small CACFG++}} & \raisebox{-0.5\height}{\includegraphics[width=0.15\linewidth]{arxiv_images/SD1_5_ImageNet_Images/SD1_5_ImNe_15_CACFGpp_espresso_maker.jpg}} & \raisebox{-0.5\height}{\includegraphics[width=0.15\linewidth]{arxiv_images/SD1_5_ImageNet_Images/SD1_5_ImNe_15_CACFGpp_flamingo.jpg}} & \raisebox{-0.5\height}{\includegraphics[width=0.15\linewidth]{arxiv_images/SD1_5_ImageNet_Images/SD1_5_ImNe_15_CACFGpp_langur.jpg}} & \raisebox{-0.5\height}{\includegraphics[width=0.15\linewidth]{arxiv_images/SD1_5_ImageNet_Images/SD1_5_ImNe_15_CACFGpp_pickup_pickup_truck.jpg}} & \raisebox{-0.5\height}{\includegraphics[width=0.15\linewidth]{arxiv_images/SD1_5_ImageNet_Images/SD1_5_ImNe_15_CACFGpp_strawberry.jpg}} & \raisebox{-0.5\height}{\includegraphics[width=0.15\linewidth]{arxiv_images/SD1_5_ImageNet_Images/SD1_5_ImNe_15_CACFGpp_throne.jpg}}
        \end{tabular}
        }
        \caption{CFG++ vs CACFG++ (ours)}
    \end{subfigure}
    \vspace{1.5em}

    \begin{subfigure}{\linewidth}
        \centering
        \resizebox{\linewidth}{!}{%
        \begin{tabular}{@{}c@{\hspace{2pt}}c@{}c@{}c@{}c@{}c@{}c@{}}
            \raisebox{-0.5\height}{\rotatebox{90}{\small IntCFG}} & \raisebox{-0.5\height}{\includegraphics[width=0.15\linewidth]{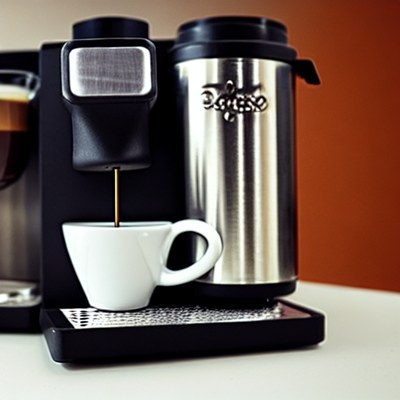}} & \raisebox{-0.5\height}{\includegraphics[width=0.15\linewidth]{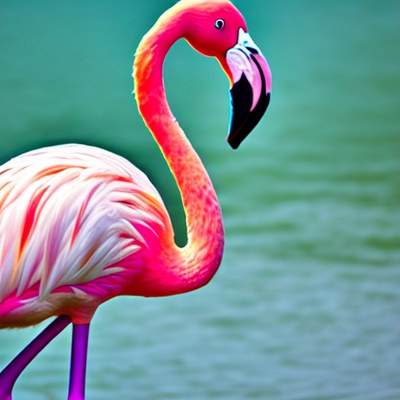}} & \raisebox{-0.5\height}{\includegraphics[width=0.15\linewidth]{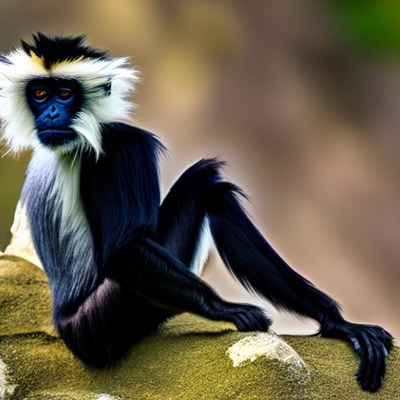}} & \raisebox{-0.5\height}{\includegraphics[width=0.15\linewidth]{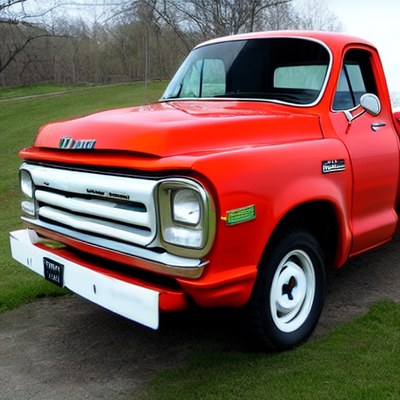}} & \raisebox{-0.5\height}{\includegraphics[width=0.15\linewidth]{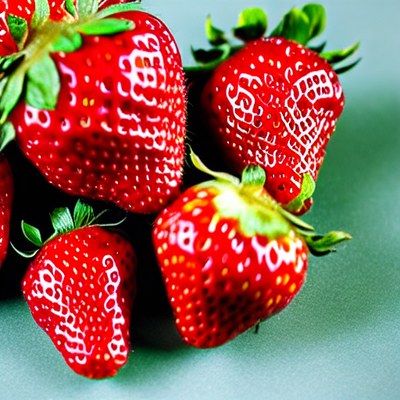}} & \raisebox{-0.5\height}{\includegraphics[width=0.15\linewidth]{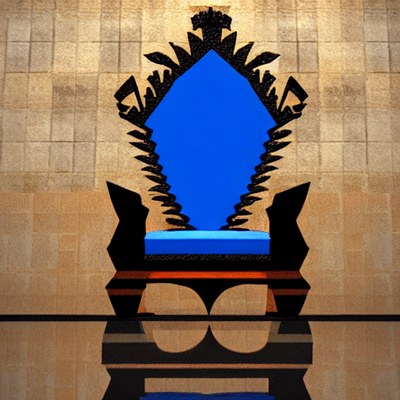}} \\
            \raisebox{-0.5\height}{\rotatebox{90}{\small IntCACFG}} & \raisebox{-0.5\height}{\includegraphics[width=0.15\linewidth]{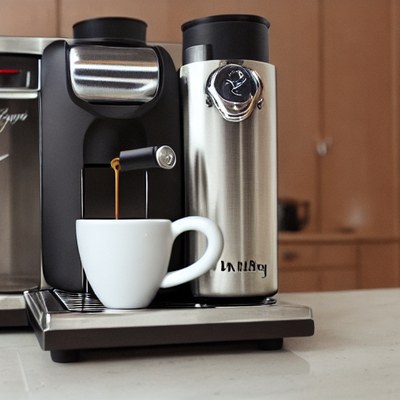}} & \raisebox{-0.5\height}{\includegraphics[width=0.15\linewidth]{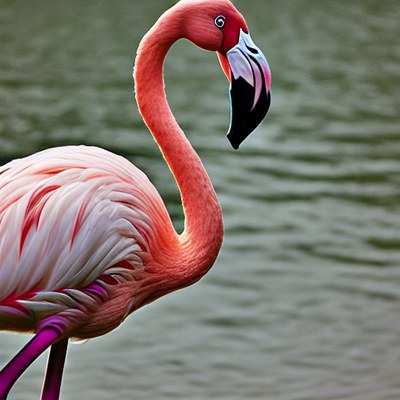}} & \raisebox{-0.5\height}{\includegraphics[width=0.15\linewidth]{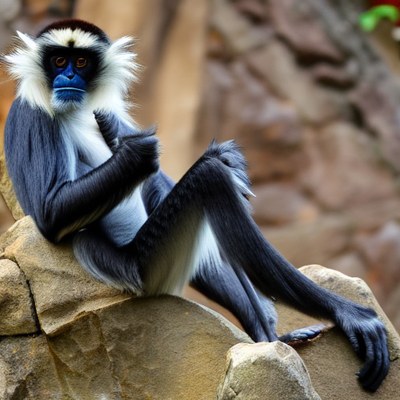}} & \raisebox{-0.5\height}{\includegraphics[width=0.15\linewidth]{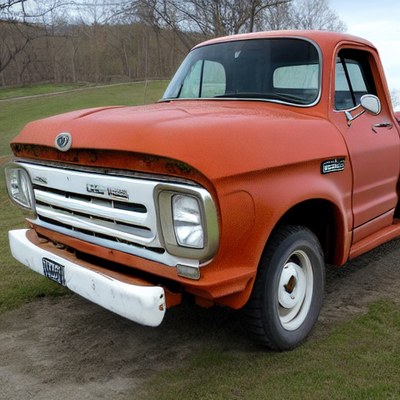}} & \raisebox{-0.5\height}{\includegraphics[width=0.15\linewidth]{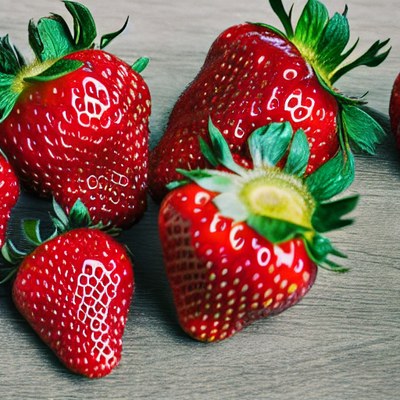}} & \raisebox{-0.5\height}{\includegraphics[width=0.15\linewidth]{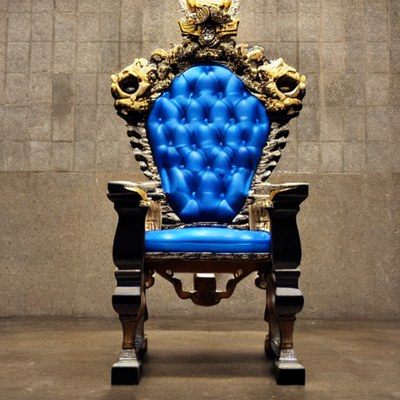}}
        \end{tabular}
        }
        \caption{IntCFG vs IntCACFG (ours)}
    \end{subfigure}
    \caption{CFG \citep{ClassifierFreeGuidance_Ho2022}, CFG++ \citep{CFG++_Chung2025}, and IntCFG \citep{IntCFG_Kynkaanniemi2024} visualised against their curvature-aware variants (ours): CACFG, CACFG++, IntCACFG. Samples are generated using SD1.5 \citep{StableDiffusion_Rombach2022} targeting the ImageNet \citep{ImageNetBenchmark_Russakovsky2015} dataset with a guidance strength of $\omega=15$.}
    \label{fig:SD1.5 ImageNet Guidance Grid}
\end{figure}

\begin{figure}[h]
    \centering
    \resizebox{0.7\linewidth}{!}{%
    \begin{tabular}{@{}c@{\hspace{2pt}}c@{}c@{}c@{}c@{}}
        & \makebox[0.15\linewidth][c]{\small $\omega = 5$} & \makebox[0.15\linewidth][c]{\small $\omega = 7.5$} & \makebox[0.15\linewidth][c]{\small $\omega = 10$} & \makebox[0.15\linewidth][c]{\small $\omega = 15$} \\
        \noalign{\vskip 4pt}
        \raisebox{-0.5\height}{\rotatebox{90}{\small CFG}} & \raisebox{-0.5\height}{\includegraphics[width=0.15\linewidth]{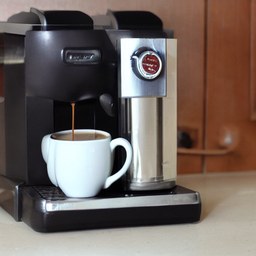}} & \raisebox{-0.5\height}{\includegraphics[width=0.15\linewidth]{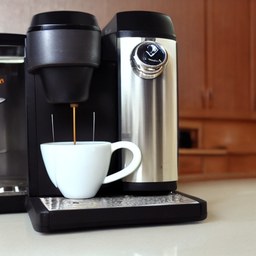}} & \raisebox{-0.5\height}{\includegraphics[width=0.15\linewidth]{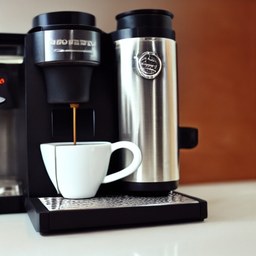}} & \raisebox{-0.5\height}{\includegraphics[width=0.15\linewidth]{arxiv_images/SD1_5_ImageNet_Images/SD1_5_ImNe_15_CFGpp_espresso_maker.jpg}} \\
        \raisebox{-0.5\height}{\rotatebox{90}{\small CACFG}} & \raisebox{-0.5\height}{\includegraphics[width=0.15\linewidth]{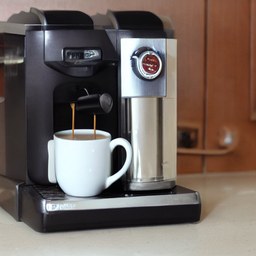}} & \raisebox{-0.5\height}{\includegraphics[width=0.15\linewidth]{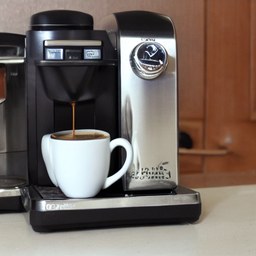}} & \raisebox{-0.5\height}{\includegraphics[width=0.15\linewidth]{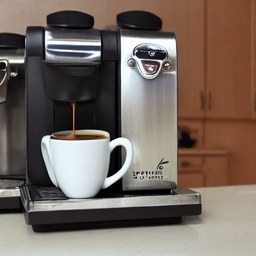}} & \raisebox{-0.5\height}{\includegraphics[width=0.15\linewidth]{arxiv_images/SD1_5_ImageNet_Images/SD1_5_ImNe_15_CACFGpp_espresso_maker.jpg}} \\
        \noalign{\vskip 4pt}
        \raisebox{-0.5\height}{\rotatebox{90}{\small CFG}} & \raisebox{-0.5\height}{\includegraphics[width=0.15\linewidth]{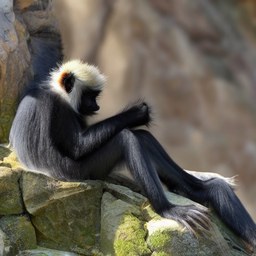}} & \raisebox{-0.5\height}{\includegraphics[width=0.15\linewidth]{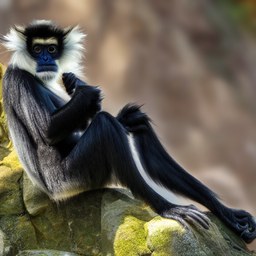}} & \raisebox{-0.5\height}{\includegraphics[width=0.15\linewidth]{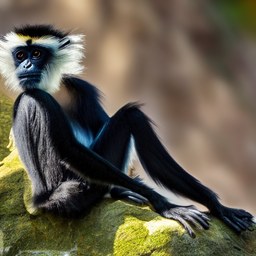}} & \raisebox{-0.5\height}{\includegraphics[width=0.15\linewidth]{arxiv_images/SD1_5_ImageNet_Images/SD1_5_ImNe_15_CFGpp_langur.jpg}} \\
        \raisebox{-0.5\height}{\rotatebox{90}{\small CACFG}} & \raisebox{-0.5\height}{\includegraphics[width=0.15\linewidth]{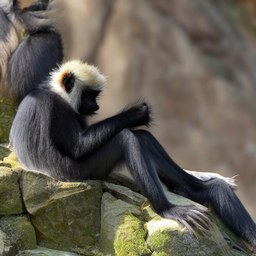}} & \raisebox{-0.5\height}{\includegraphics[width=0.15\linewidth]{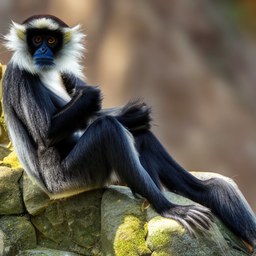}} & \raisebox{-0.5\height}{\includegraphics[width=0.15\linewidth]{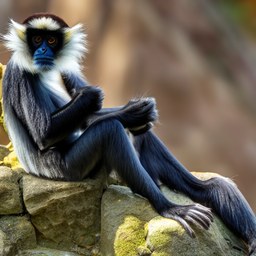}} & \raisebox{-0.5\height}{\includegraphics[width=0.15\linewidth]{arxiv_images/SD1_5_ImageNet_Images/SD1_5_ImNe_15_CACFGpp_langur.jpg}} \\
        \noalign{\vskip 4pt}
        \raisebox{-0.5\height}{\rotatebox{90}{\small CFG}} & \raisebox{-0.5\height}{\includegraphics[width=0.15\linewidth]{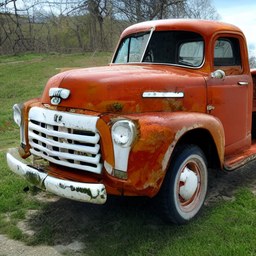}} & \raisebox{-0.5\height}{\includegraphics[width=0.15\linewidth]{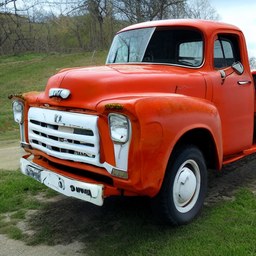}} & \raisebox{-0.5\height}{\includegraphics[width=0.15\linewidth]{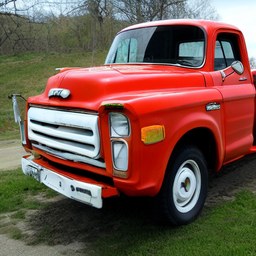}} & \raisebox{-0.5\height}{\includegraphics[width=0.15\linewidth]{arxiv_images/SD1_5_ImageNet_Images/SD1_5_ImNe_15_CFGpp_pickup_pickup_truck.jpg}} \\
        \raisebox{-0.5\height}{\rotatebox{90}{\small CACFG}} & \raisebox{-0.5\height}{\includegraphics[width=0.15\linewidth]{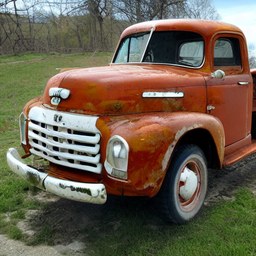}} & \raisebox{-0.5\height}{\includegraphics[width=0.15\linewidth]{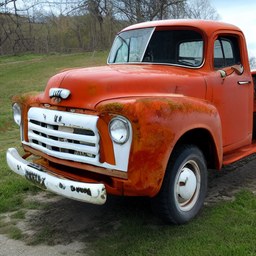}} & \raisebox{-0.5\height}{\includegraphics[width=0.15\linewidth]{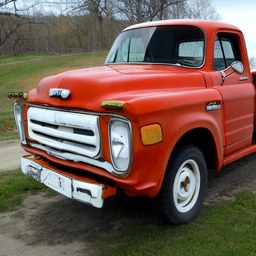}} & \raisebox{-0.5\height}{\includegraphics[width=0.15\linewidth]{arxiv_images/SD1_5_ImageNet_Images/SD1_5_ImNe_15_CACFGpp_pickup_pickup_truck.jpg}} \\
        \noalign{\vskip 4pt}
        \raisebox{-0.5\height}{\rotatebox{90}{\small CFG}} & \raisebox{-0.5\height}{\includegraphics[width=0.15\linewidth]{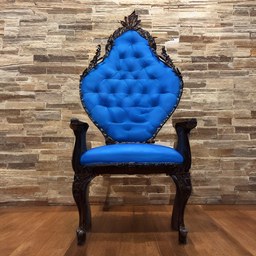}} & \raisebox{-0.5\height}{\includegraphics[width=0.15\linewidth]{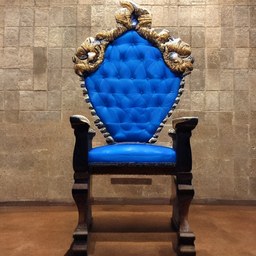}} & \raisebox{-0.5\height}{\includegraphics[width=0.15\linewidth]{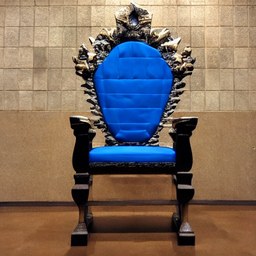}} & \raisebox{-0.5\height}{\includegraphics[width=0.15\linewidth]{arxiv_images/SD1_5_ImageNet_Images/SD1_5_ImNe_15_CFGpp_throne.jpg}} \\
        \raisebox{-0.5\height}{\rotatebox{90}{\small CACFG}} & \raisebox{-0.5\height}{\includegraphics[width=0.15\linewidth]{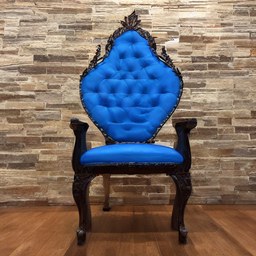}} & \raisebox{-0.5\height}{\includegraphics[width=0.15\linewidth]{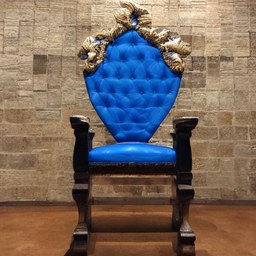}} & \raisebox{-0.5\height}{\includegraphics[width=0.15\linewidth]{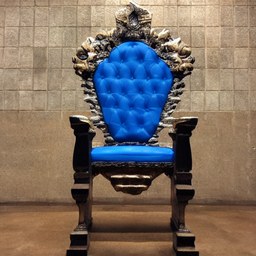}} & \raisebox{-0.5\height}{\includegraphics[width=0.15\linewidth]{arxiv_images/SD1_5_ImageNet_Images/SD1_5_ImNe_15_CACFGpp_throne.jpg}}
    \end{tabular}
    }
    \caption{CFG \citep{ClassifierFreeGuidance_Ho2022} and CACFG (ours) samples generated using SD1.5 \citep{StableDiffusion_Rombach2022} targeting the ImageNet \citep{ImageNetBenchmark_Russakovsky2015} dataset. Guidance scale varies across $\omega = 5, 7.5, 10, 15$.}
    \label{fig:SD1.5 ImageNet Guidance Strength}
\end{figure}

\begin{figure}[h]
    \centering
    \begin{subfigure}{\linewidth}
        \centering
        \resizebox{\linewidth}{!}{%
        \begin{tabular}{@{}c@{\hspace{2pt}}c@{}c@{}c@{}c@{}c@{}c@{}}
            \raisebox{-0.5\height}{\rotatebox{90}{\small CFG}} & \raisebox{-0.5\height}{\includegraphics[width=0.15\linewidth]{arxiv_images/SD1_5_COCO_Images/SD1_5_COCO_15_CFGpp_A_black_cat_wearing_a_little_party_hat.jpg}} & \raisebox{-0.5\height}{\includegraphics[width=0.15\linewidth]{arxiv_images/SD1_5_COCO_Images/SD1_5_COCO_15_CFGpp_A_clock_sitting_in_a_yard_behind_a_fence.jpg}} & \raisebox{-0.5\height}{\includegraphics[width=0.15\linewidth]{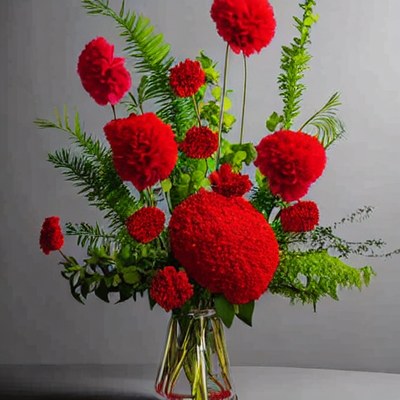}} & \raisebox{-0.5\height}{\includegraphics[width=0.15\linewidth]{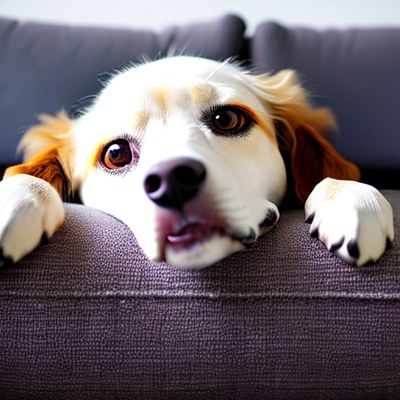}} & \raisebox{-0.5\height}{\includegraphics[width=0.15\linewidth]{arxiv_images/SD1_5_COCO_Images/SD1_5_COCO_15_CFGpp_A_picture_of_a_painting_and_a_couch.jpg}} & \raisebox{-0.5\height}{\includegraphics[width=0.15\linewidth]{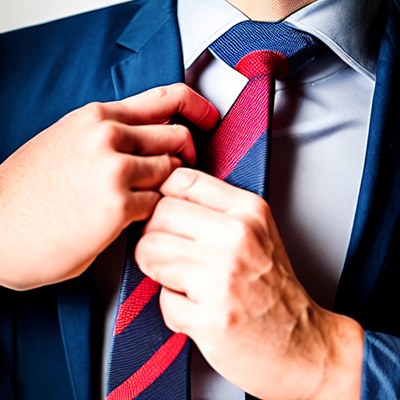}} \\
            \raisebox{-0.5\height}{\rotatebox{90}{\small CACFG}} & \raisebox{-0.5\height}{\includegraphics[width=0.15\linewidth]{arxiv_images/SD1_5_COCO_Images/SD1_5_COCO_15_CACFGpp_A_black_cat_wearing_a_little_party_hat.jpg}} & \raisebox{-0.5\height}{\includegraphics[width=0.15\linewidth]{arxiv_images/SD1_5_COCO_Images/SD1_5_COCO_15_CACFGpp_A_clock_sitting_in_a_yard_behind_a_fence.jpg}} & \raisebox{-0.5\height}{\includegraphics[width=0.15\linewidth]{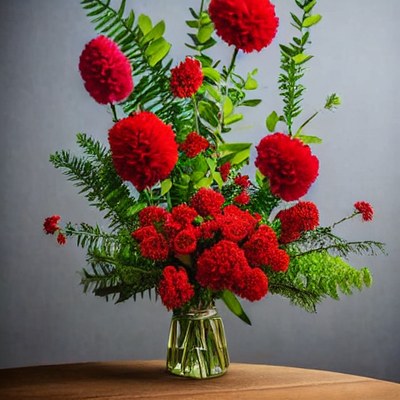}} & \raisebox{-0.5\height}{\includegraphics[width=0.15\linewidth]{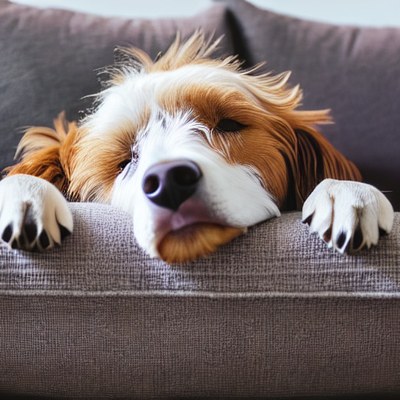}} & \raisebox{-0.5\height}{\includegraphics[width=0.15\linewidth]{arxiv_images/SD1_5_COCO_Images/SD1_5_COCO_15_CACFGpp_A_picture_of_a_painting_and_a_couch.jpg}} & \raisebox{-0.5\height}{\includegraphics[width=0.15\linewidth]{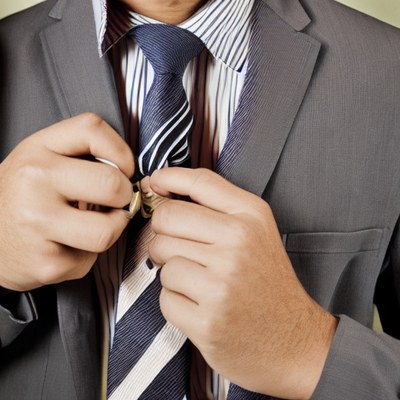}}
        \end{tabular}
        }
        \caption{CFG vs CACFG (ours)}
    \end{subfigure}
    \vspace{1.5em}
    
    \begin{subfigure}{\linewidth}
        \centering
        \resizebox{\linewidth}{!}{%
        \begin{tabular}{@{}c@{\hspace{2pt}}c@{}c@{}c@{}c@{}c@{}c@{}}
            \raisebox{-0.5\height}{\rotatebox{90}{\small CFG++}} & \raisebox{-0.5\height}{\includegraphics[width=0.15\linewidth]{arxiv_images/SD1_5_COCO_Images/SD1_5_COCO_15_CFGpp_A_black_cat_wearing_a_little_party_hat.jpg}} & \raisebox{-0.5\height}{\includegraphics[width=0.15\linewidth]{arxiv_images/SD1_5_COCO_Images/SD1_5_COCO_15_CFGpp_A_clock_sitting_in_a_yard_behind_a_fence.jpg}} & \raisebox{-0.5\height}{\includegraphics[width=0.15\linewidth]{arxiv_images/SD1_5_COCO_Images/SD1_5_COCO_15_CFGpp_A_decorative_arrangement_with_red_flowers_in_a_vase.jpg}} & \raisebox{-0.5\height}{\includegraphics[width=0.15\linewidth]{arxiv_images/SD1_5_COCO_Images/SD1_5_COCO_15_CFGpp_A_furry_dog_with_its_head_on_a_pillow_on_the_couch.jpg}} & \raisebox{-0.5\height}{\includegraphics[width=0.15\linewidth]{arxiv_images/SD1_5_COCO_Images/SD1_5_COCO_15_CFGpp_A_picture_of_a_painting_and_a_couch.jpg}} & \raisebox{-0.5\height}{\includegraphics[width=0.15\linewidth]{arxiv_images/SD1_5_COCO_Images/SD1_5_COCO_15_CFGpp_a_man_attemping_to_tie_his_tie.jpg}} \\
            \raisebox{-0.5\height}{\rotatebox{90}{\small CACFG++}} & \raisebox{-0.5\height}{\includegraphics[width=0.15\linewidth]{arxiv_images/SD1_5_COCO_Images/SD1_5_COCO_15_CACFGpp_A_black_cat_wearing_a_little_party_hat.jpg}} & \raisebox{-0.5\height}{\includegraphics[width=0.15\linewidth]{arxiv_images/SD1_5_COCO_Images/SD1_5_COCO_15_CACFGpp_A_clock_sitting_in_a_yard_behind_a_fence.jpg}} & \raisebox{-0.5\height}{\includegraphics[width=0.15\linewidth]{arxiv_images/SD1_5_COCO_Images/SD1_5_COCO_15_CACFGpp_A_decorative_arrangement_with_red_flowers_in_a_vase.jpg}} & \raisebox{-0.5\height}{\includegraphics[width=0.15\linewidth]{arxiv_images/SD1_5_COCO_Images/SD1_5_COCO_15_CACFGpp_A_furry_dog_with_its_head_on_a_pillow_on_the_couch.jpg}} & \raisebox{-0.5\height}{\includegraphics[width=0.15\linewidth]{arxiv_images/SD1_5_COCO_Images/SD1_5_COCO_15_CACFGpp_A_picture_of_a_painting_and_a_couch.jpg}} & \raisebox{-0.5\height}{\includegraphics[width=0.15\linewidth]{arxiv_images/SD1_5_COCO_Images/SD1_5_COCO_15_CACFGpp_a_man_attemping_to_tie_his_tie.jpg}}
        \end{tabular}
        }
        \caption{CFG++ vs CACFG++ (ours)}
    \end{subfigure}
    \vspace{1.5em}

    \begin{subfigure}{\linewidth}
        \centering
        \resizebox{\linewidth}{!}{%
        \begin{tabular}{@{}c@{\hspace{2pt}}c@{}c@{}c@{}c@{}c@{}c@{}}
            \raisebox{-0.5\height}{\rotatebox{90}{\small IntCFG}} & \raisebox{-0.5\height}{\includegraphics[width=0.15\linewidth]{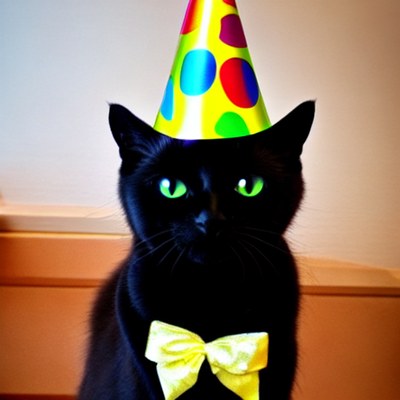}} & \raisebox{-0.5\height}{\includegraphics[width=0.15\linewidth]{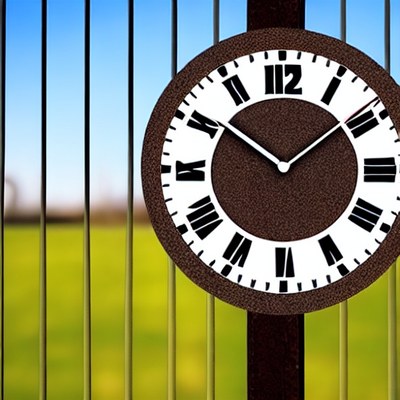}} & \raisebox{-0.5\height}{\includegraphics[width=0.15\linewidth]{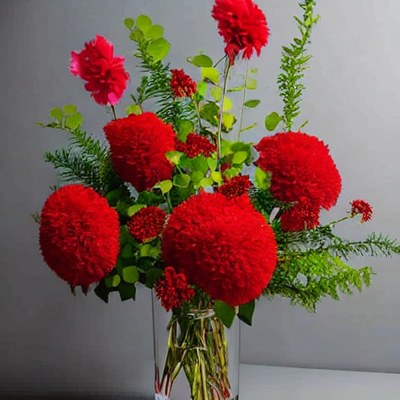}} & \raisebox{-0.5\height}{\includegraphics[width=0.15\linewidth]{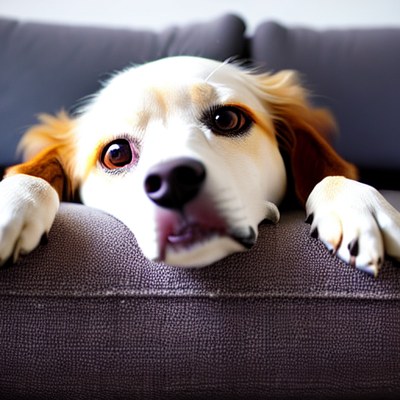}} & \raisebox{-0.5\height}{\includegraphics[width=0.15\linewidth]{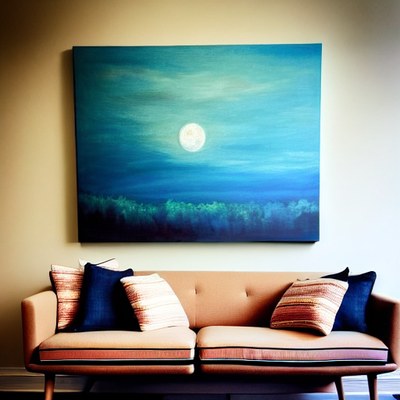}} & \raisebox{-0.5\height}{\includegraphics[width=0.15\linewidth]{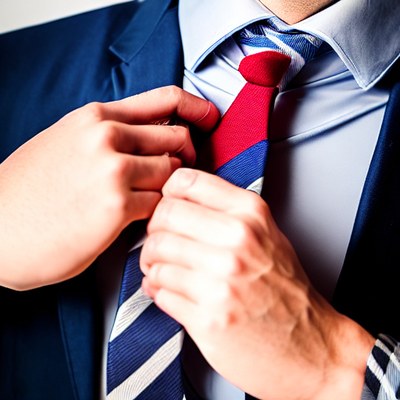}} \\
            \raisebox{-0.5\height}{\rotatebox{90}{\small IntCACFG}} & \raisebox{-0.5\height}{\includegraphics[width=0.15\linewidth]{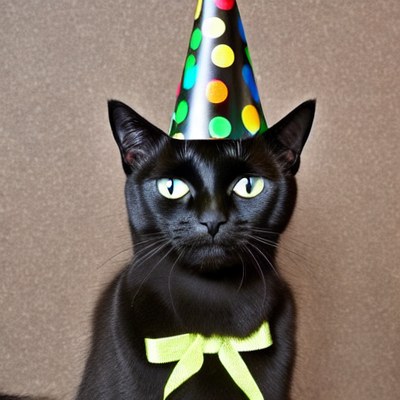}} & \raisebox{-0.5\height}{\includegraphics[width=0.15\linewidth]{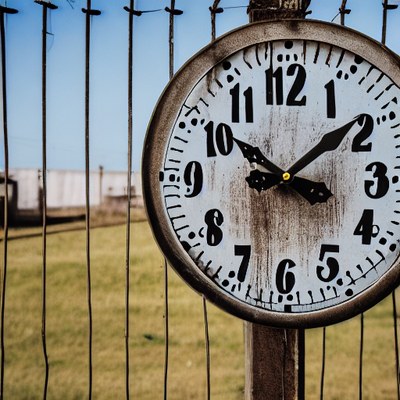}} & \raisebox{-0.5\height}{\includegraphics[width=0.15\linewidth]{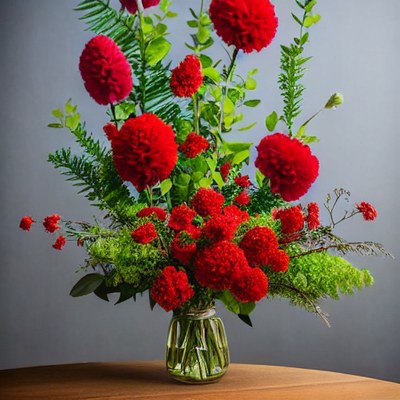}} & \raisebox{-0.5\height}{\includegraphics[width=0.15\linewidth]{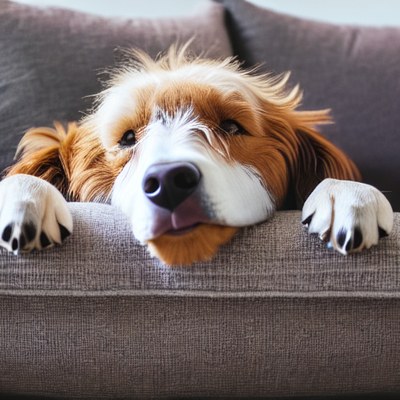}} & \raisebox{-0.5\height}{\includegraphics[width=0.15\linewidth]{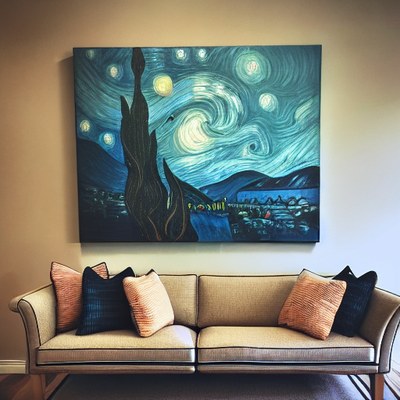}} & \raisebox{-0.5\height}{\includegraphics[width=0.15\linewidth]{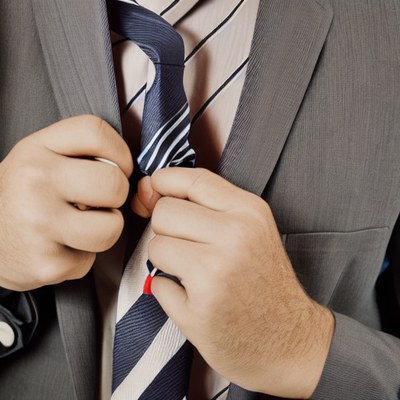}}
        \end{tabular}
        }
        \caption{IntCFG vs IntCACFG (ours)}
    \end{subfigure}
    \caption{CFG \citep{ClassifierFreeGuidance_Ho2022}, CFG++ \citep{CFG++_Chung2025}, and IntCFG \citep{IntCFG_Kynkaanniemi2024} visualised against their curvature-aware variants (ours): CACFG, CACFG++, IntCACFG. Samples are generated using SD1.5 \citep{StableDiffusion_Rombach2022} targeting the COCO \citep{COCO_Lin2014} dataset with a guidance strength of $\omega=15$.}
    \label{fig:SD1.5 COCO Guidance Grid}
\end{figure}

\begin{figure}[h]
    \centering
    \resizebox{0.7\linewidth}{!}{%
    \begin{tabular}{@{}c@{\hspace{2pt}}c@{}c@{}c@{}c@{}}
        & \makebox[0.15\linewidth][c]{\small $\omega = 5$} & \makebox[0.15\linewidth][c]{\small $\omega = 7.5$} & \makebox[0.15\linewidth][c]{\small $\omega = 10$} & \makebox[0.15\linewidth][c]{\small $\omega = 15$} \\
        \noalign{\vskip 4pt}
        \raisebox{-0.5\height}{\rotatebox{90}{\small CFG}} & \raisebox{-0.5\height}{\includegraphics[width=0.15\linewidth]{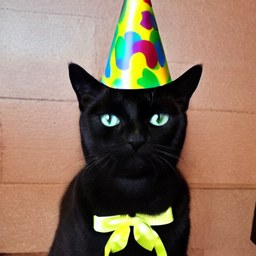}} & \raisebox{-0.5\height}{\includegraphics[width=0.15\linewidth]{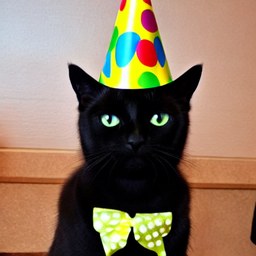}} & \raisebox{-0.5\height}{\includegraphics[width=0.15\linewidth]{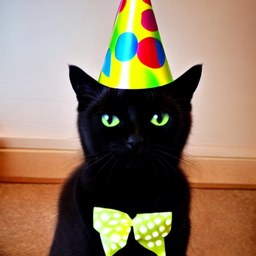}} & \raisebox{-0.5\height}{\includegraphics[width=0.15\linewidth]{arxiv_images/SD1_5_COCO_Images/SD1_5_COCO_15_CFGpp_A_black_cat_wearing_a_little_party_hat.jpg}} \\
        \raisebox{-0.5\height}{\rotatebox{90}{\small CACFG}} & \raisebox{-0.5\height}{\includegraphics[width=0.15\linewidth]{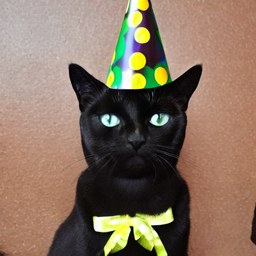}} & \raisebox{-0.5\height}{\includegraphics[width=0.15\linewidth]{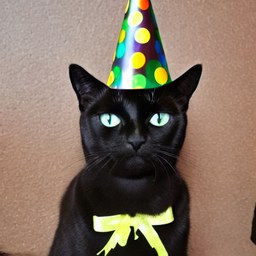}} & \raisebox{-0.5\height}{\includegraphics[width=0.15\linewidth]{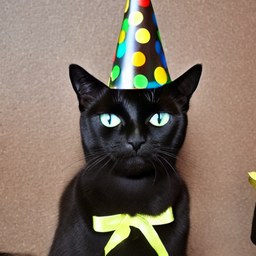}} & \raisebox{-0.5\height}{\includegraphics[width=0.15\linewidth]{arxiv_images/SD1_5_COCO_Images/SD1_5_COCO_15_CACFGpp_A_black_cat_wearing_a_little_party_hat.jpg}} \\
        \noalign{\vskip 4pt}
        \raisebox{-0.5\height}{\rotatebox{90}{\small CFG}} & \raisebox{-0.5\height}{\includegraphics[width=0.15\linewidth]{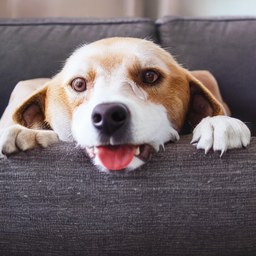}} & \raisebox{-0.5\height}{\includegraphics[width=0.15\linewidth]{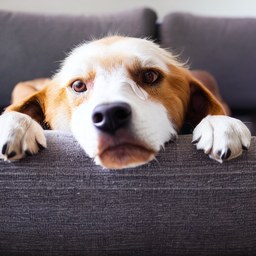}} & \raisebox{-0.5\height}{\includegraphics[width=0.15\linewidth]{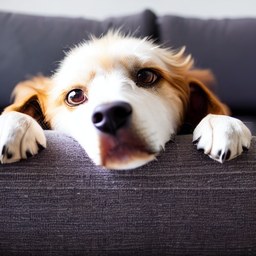}} & \raisebox{-0.5\height}{\includegraphics[width=0.15\linewidth]{arxiv_images/SD1_5_COCO_Images/SD1_5_COCO_15_CFGpp_A_furry_dog_with_its_head_on_a_pillow_on_the_couch.jpg}} \\
        \raisebox{-0.5\height}{\rotatebox{90}{\small CACFG}} & \raisebox{-0.5\height}{\includegraphics[width=0.15\linewidth]{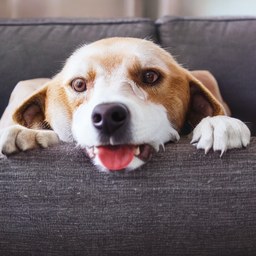}} & \raisebox{-0.5\height}{\includegraphics[width=0.15\linewidth]{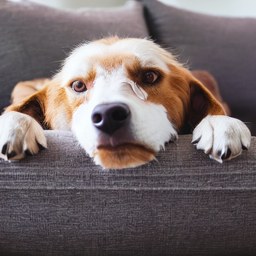}} & \raisebox{-0.5\height}{\includegraphics[width=0.15\linewidth]{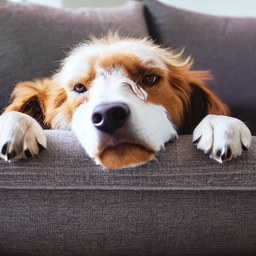}} & \raisebox{-0.5\height}{\includegraphics[width=0.15\linewidth]{arxiv_images/SD1_5_COCO_Images/SD1_5_COCO_15_CACFGpp_A_furry_dog_with_its_head_on_a_pillow_on_the_couch.jpg}} \\
        \noalign{\vskip 4pt}
        \raisebox{-0.5\height}{\rotatebox{90}{\small CFG}} & \raisebox{-0.5\height}{\includegraphics[width=0.15\linewidth]{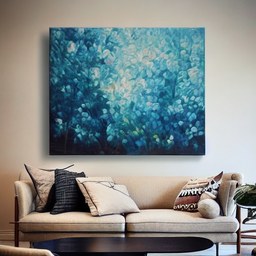}} & \raisebox{-0.5\height}{\includegraphics[width=0.15\linewidth]{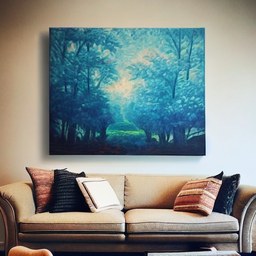}} & \raisebox{-0.5\height}{\includegraphics[width=0.15\linewidth]{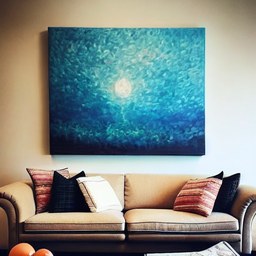}} & \raisebox{-0.5\height}{\includegraphics[width=0.15\linewidth]{arxiv_images/SD1_5_COCO_Images/SD1_5_COCO_15_CFGpp_A_picture_of_a_painting_and_a_couch.jpg}} \\
        \raisebox{-0.5\height}{\rotatebox{90}{\small CACFG}} & \raisebox{-0.5\height}{\includegraphics[width=0.15\linewidth]{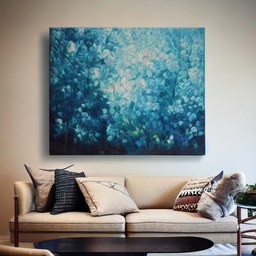}} & \raisebox{-0.5\height}{\includegraphics[width=0.15\linewidth]{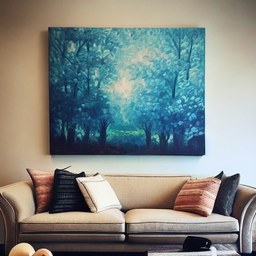}} & \raisebox{-0.5\height}{\includegraphics[width=0.15\linewidth]{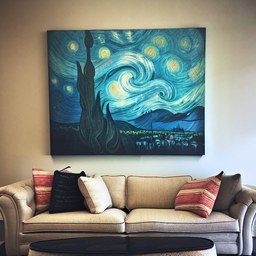}} & \raisebox{-0.5\height}{\includegraphics[width=0.15\linewidth]{arxiv_images/SD1_5_COCO_Images/SD1_5_COCO_15_CACFGpp_A_picture_of_a_painting_and_a_couch.jpg}} \\
        \noalign{\vskip 4pt}
        \raisebox{-0.5\height}{\rotatebox{90}{\small CFG}} & \raisebox{-0.5\height}{\includegraphics[width=0.15\linewidth]{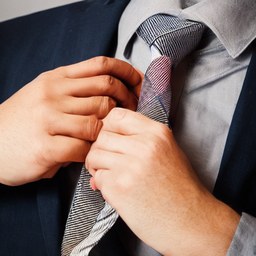}} & \raisebox{-0.5\height}{\includegraphics[width=0.15\linewidth]{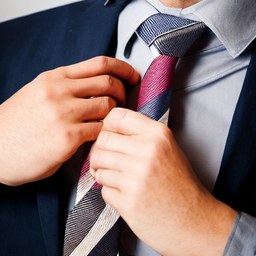}} & \raisebox{-0.5\height}{\includegraphics[width=0.15\linewidth]{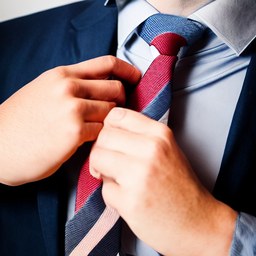}} & \raisebox{-0.5\height}{\includegraphics[width=0.15\linewidth]{arxiv_images/SD1_5_COCO_Images/SD1_5_COCO_15_CFGpp_a_man_attemping_to_tie_his_tie.jpg}} \\
        \raisebox{-0.5\height}{\rotatebox{90}{\small CACFG}} & \raisebox{-0.5\height}{\includegraphics[width=0.15\linewidth]{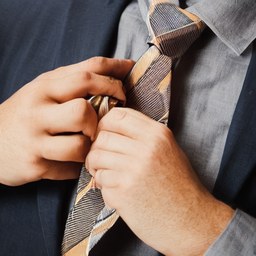}} & \raisebox{-0.5\height}{\includegraphics[width=0.15\linewidth]{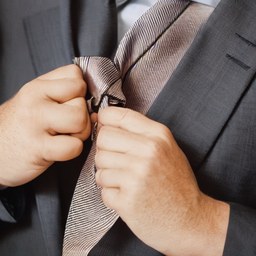}} & \raisebox{-0.5\height}{\includegraphics[width=0.15\linewidth]{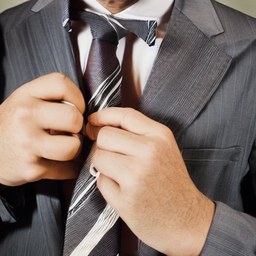}} & \raisebox{-0.5\height}{\includegraphics[width=0.15\linewidth]{arxiv_images/SD1_5_COCO_Images/SD1_5_COCO_15_CACFGpp_a_man_attemping_to_tie_his_tie.jpg}}
    \end{tabular}
    }
    \caption{CFG \citep{ClassifierFreeGuidance_Ho2022} and CACFG (ours) samples generated using SD1.5 \citep{StableDiffusion_Rombach2022} targeting the COCO \citep{COCO_Lin2014} dataset. Guidance scale varies across $\omega = 5, 7.5, 10, 15$.}
    \label{fig:SD1.5 COCO Guidance Strength}
\end{figure}

\clearpage
\subsubsection{SDXL}
\begin{figure}[h]
    \centering
    \begin{subfigure}{\linewidth}
        \centering
        \resizebox{\linewidth}{!}{%
        \begin{tabular}{@{}c@{\hspace{2pt}}c@{}c@{}c@{}c@{}c@{}c@{}}
            \raisebox{-0.5\height}{\rotatebox{90}{\small CFG}} & \raisebox{-0.5\height}{\includegraphics[width=0.15\linewidth]{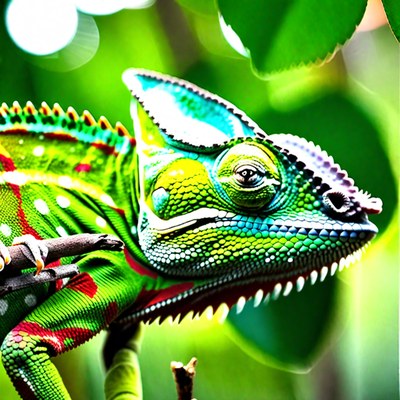}} & \raisebox{-0.5\height}{\includegraphics[width=0.15\linewidth]{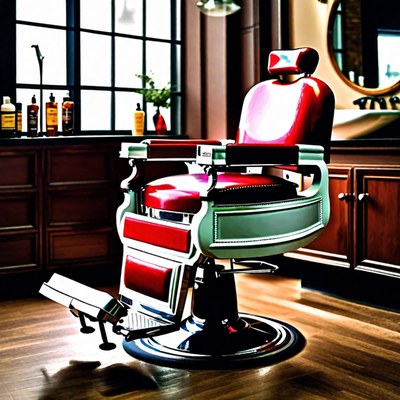}} & \raisebox{-0.5\height}{\includegraphics[width=0.15\linewidth]{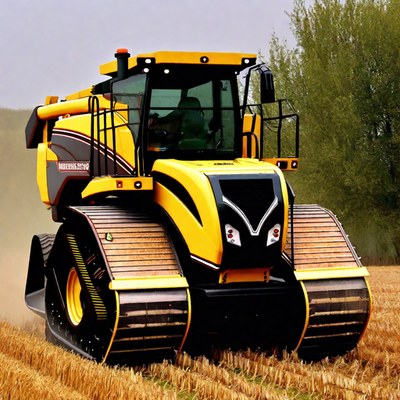}} & \raisebox{-0.5\height}{\includegraphics[width=0.15\linewidth]{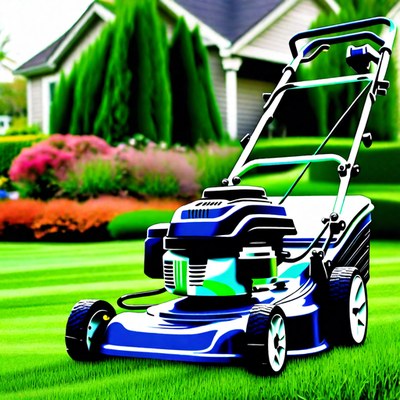}} & \raisebox{-0.5\height}{\includegraphics[width=0.15\linewidth]{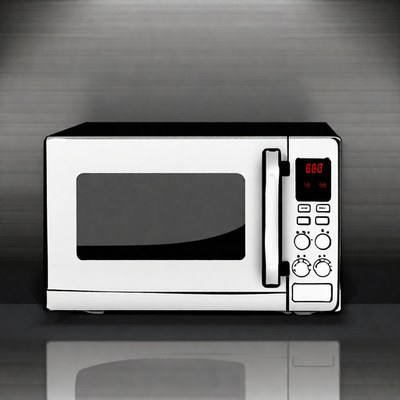}} & \raisebox{-0.5\height}{\includegraphics[width=0.15\linewidth]{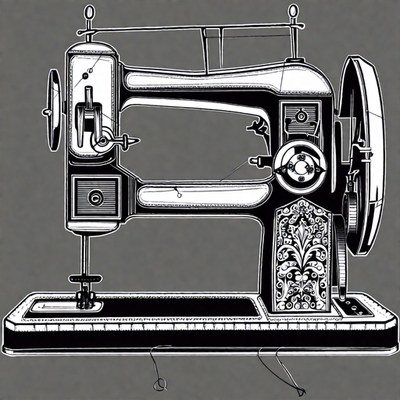}} \\
            \raisebox{-0.5\height}{\rotatebox{90}{\small CACFG}} & \raisebox{-0.5\height}{\includegraphics[width=0.15\linewidth]{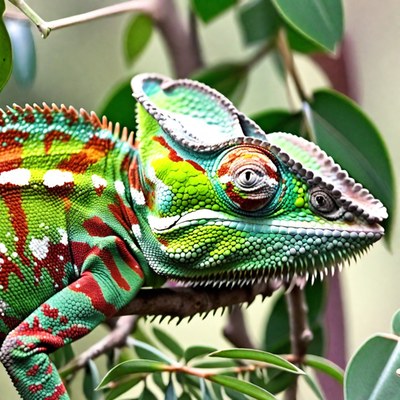}} & \raisebox{-0.5\height}{\includegraphics[width=0.15\linewidth]{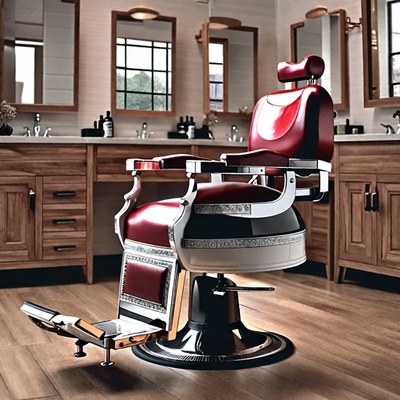}} & \raisebox{-0.5\height}{\includegraphics[width=0.15\linewidth]{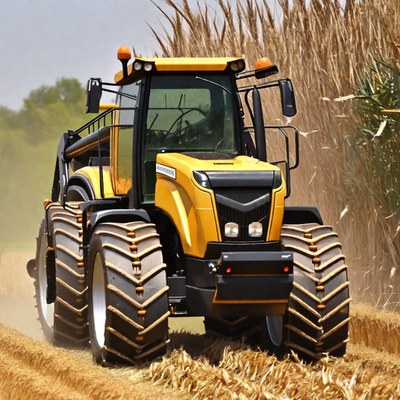}} & \raisebox{-0.5\height}{\includegraphics[width=0.15\linewidth]{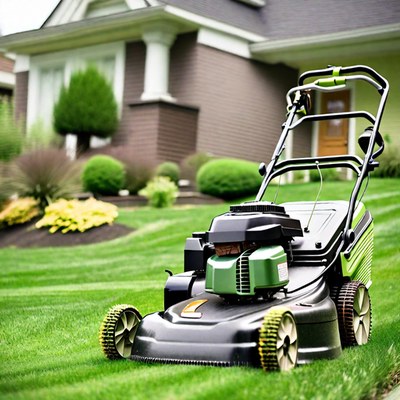}} & \raisebox{-0.5\height}{\includegraphics[width=0.15\linewidth]{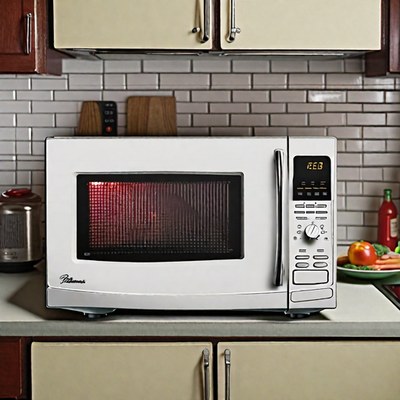}} & \raisebox{-0.5\height}{\includegraphics[width=0.15\linewidth]{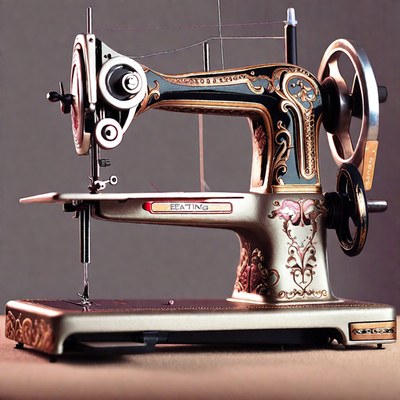}}
        \end{tabular}
        }
        \caption{CFG vs CACFG (ours)}
    \end{subfigure}
    \vspace{1.5em}

    \begin{subfigure}{\linewidth}
    \centering
    \resizebox{\linewidth}{!}{%
    \begin{tabular}{@{}c@{\hspace{2pt}}c@{}c@{}c@{}c@{}c@{}c@{}}
        \raisebox{-0.5\height}{\rotatebox{90}{\small CFG++}} & \raisebox{-0.5\height}{\includegraphics[width=0.15\linewidth]{arxiv_images/SDXL_ImageNet_Images/SDXL_ImNe_25_CFGpp_African_chameleon_Chamaeleo_chamaeleon.jpg}} & \raisebox{-0.5\height}{\includegraphics[width=0.15\linewidth]{arxiv_images/SDXL_ImageNet_Images/SDXL_ImNe_25_CFGpp_barber_chair.jpg}} & \raisebox{-0.5\height}{\includegraphics[width=0.15\linewidth]{arxiv_images/SDXL_ImageNet_Images/SDXL_ImNe_25_CFGpp_harvester_reaper.jpg}} & \raisebox{-0.5\height}{\includegraphics[width=0.15\linewidth]{arxiv_images/SDXL_ImageNet_Images/SDXL_ImNe_25_CFGpp_lawn_mower_mower.jpg}} & \raisebox{-0.5\height}{\includegraphics[width=0.15\linewidth]{arxiv_images/SDXL_ImageNet_Images/SDXL_ImNe_25_CFGpp_microwave_microwave_oven.jpg}} & \raisebox{-0.5\height}{\includegraphics[width=0.15\linewidth]{arxiv_images/SDXL_ImageNet_Images/SDXL_ImNe_25_CFGpp_sewing_machine.jpg}} \\
        \raisebox{-0.5\height}{\rotatebox{90}{\small CACFG++}} & \raisebox{-0.5\height}{\includegraphics[width=0.15\linewidth]{arxiv_images/SDXL_ImageNet_Images/SDXL_ImNe_25_CACFGpp_African_chameleon_Chamaeleo_chamaeleon.jpg}} & \raisebox{-0.5\height}{\includegraphics[width=0.15\linewidth]{arxiv_images/SDXL_ImageNet_Images/SDXL_ImNe_25_CACFGpp_barber_chair.jpg}} & \raisebox{-0.5\height}{\includegraphics[width=0.15\linewidth]{arxiv_images/SDXL_ImageNet_Images/SDXL_ImNe_25_CACFGpp_harvester_reaper.jpg}} & \raisebox{-0.5\height}{\includegraphics[width=0.15\linewidth]{arxiv_images/SDXL_ImageNet_Images/SDXL_ImNe_25_CACFGpp_lawn_mower_mower.jpg}} & \raisebox{-0.5\height}{\includegraphics[width=0.15\linewidth]{arxiv_images/SDXL_ImageNet_Images/SDXL_ImNe_25_CACFGpp_microwave_microwave_oven.jpg}} & \raisebox{-0.5\height}{\includegraphics[width=0.15\linewidth]{arxiv_images/SDXL_ImageNet_Images/SDXL_ImNe_25_CACFGpp_sewing_machine.jpg}}
    \end{tabular}
    }
    \caption{CFG++ vs CACFG++ (ours)}
    \end{subfigure}
    \vspace{1.5em}

    \begin{subfigure}{\linewidth}
    \centering
    \resizebox{\linewidth}{!}{%
    \begin{tabular}{@{}c@{\hspace{2pt}}c@{}c@{}c@{}c@{}c@{}c@{}}
        \raisebox{-0.5\height}{\rotatebox{90}{\small IntCFG}} & \raisebox{-0.5\height}{\includegraphics[width=0.15\linewidth]{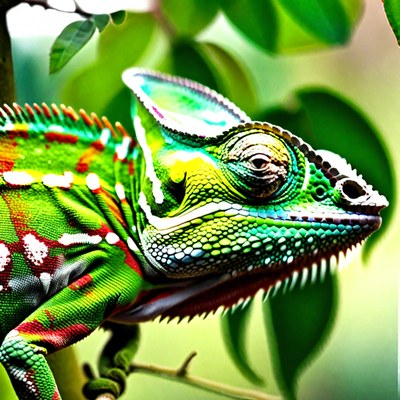}} & \raisebox{-0.5\height}{\includegraphics[width=0.15\linewidth]{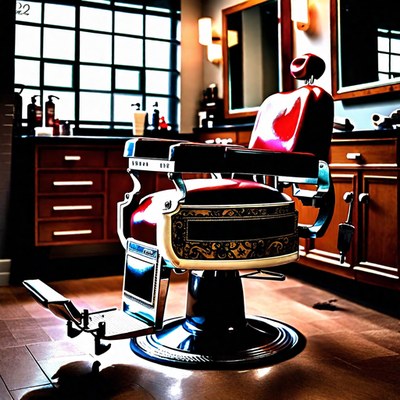}} & \raisebox{-0.5\height}{\includegraphics[width=0.15\linewidth]{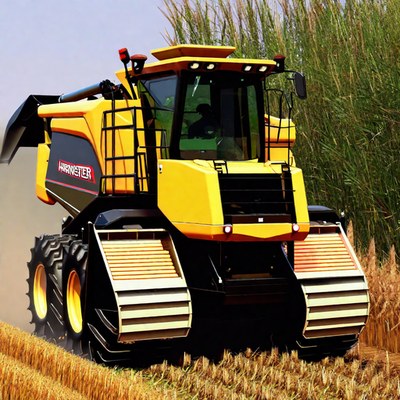}} & \raisebox{-0.5\height}{\includegraphics[width=0.15\linewidth]{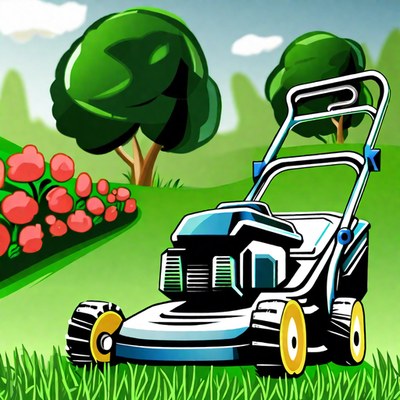}} & \raisebox{-0.5\height}{\includegraphics[width=0.15\linewidth]{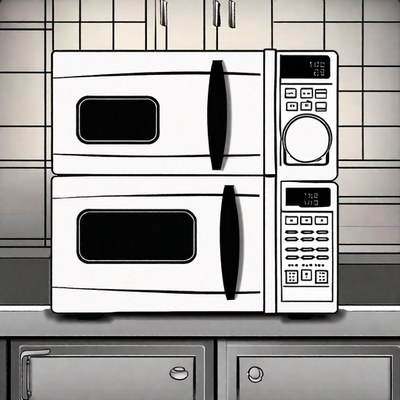}} & \raisebox{-0.5\height}{\includegraphics[width=0.15\linewidth]{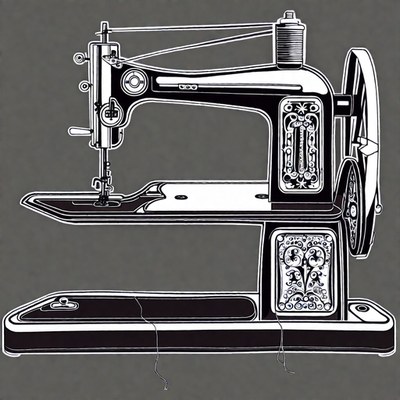}} \\
        \raisebox{-0.5\height}{\rotatebox{90}{\small IntCACFG}} & \raisebox{-0.5\height}{\includegraphics[width=0.15\linewidth]{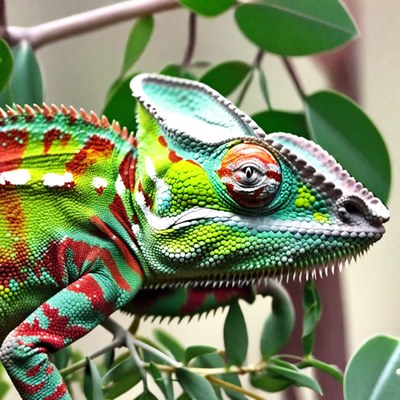}} & \raisebox{-0.5\height}{\includegraphics[width=0.15\linewidth]{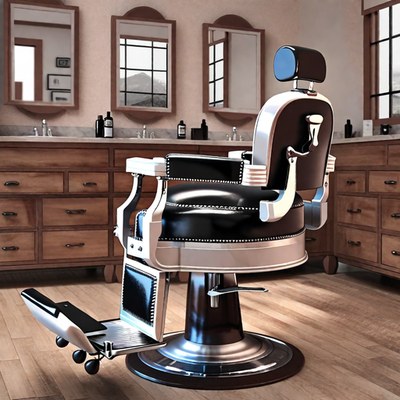}} & \raisebox{-0.5\height}{\includegraphics[width=0.15\linewidth]{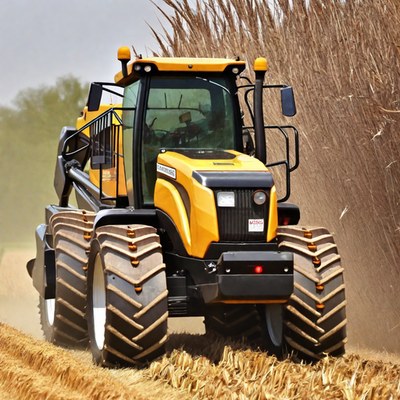}} & \raisebox{-0.5\height}{\includegraphics[width=0.15\linewidth]{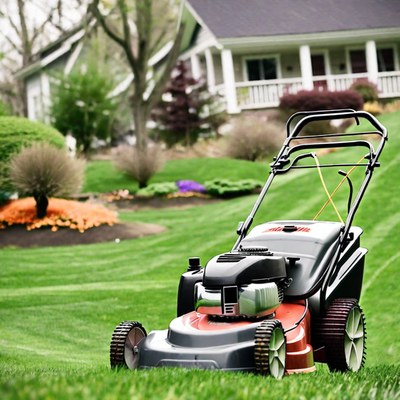}} & \raisebox{-0.5\height}{\includegraphics[width=0.15\linewidth]{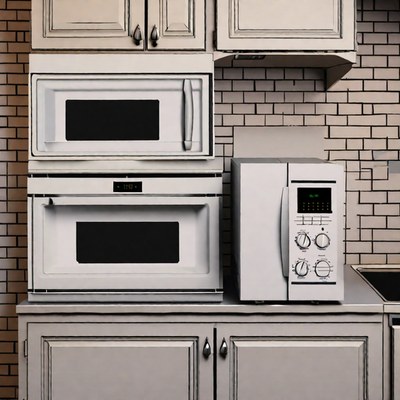}} & \raisebox{-0.5\height}{\includegraphics[width=0.15\linewidth]{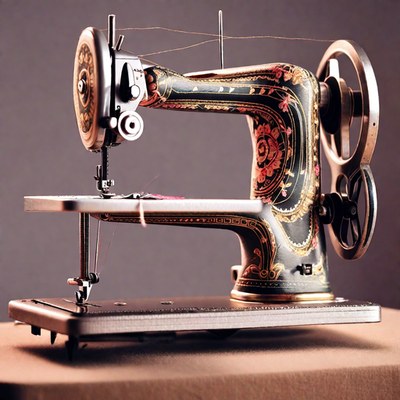}}
    \end{tabular}
    }
    \caption{IntCFG vs IntCACFG (ours)}
    \end{subfigure}
    \caption{CFG \citep{ClassifierFreeGuidance_Ho2022}, CFG++ \citep{CFG++_Chung2025}, and IntCFG \citep{IntCFG_Kynkaanniemi2024} visualised against their curvature-aware variants (ours): CACFG, CACFG++, IntCACFG. Samples are generated using SDXL \citep{SDXL_Podell2024} targeting the ImageNet \citep{ImageNetBenchmark_Russakovsky2015} dataset with a guidance strength of $\omega=25$.}
    \label{fig:SDXL ImageNet Guidance Grid}
\end{figure}

\begin{figure}[h]
    \centering
    \resizebox{0.7\linewidth}{!}{%
    \begin{tabular}{@{}c@{\hspace{2pt}}c@{}c@{}c@{}c@{}}
        & \makebox[0.15\linewidth][c]{\small $\omega = 10$} & \makebox[0.15\linewidth][c]{\small $\omega = 15$} & \makebox[0.15\linewidth][c]{\small $\omega = 20$} & \makebox[0.15\linewidth][c]{\small $\omega = 25$} \\
        \noalign{\vskip 4pt}
        \raisebox{-0.5\height}{\rotatebox{90}{\small CFG}} & \raisebox{-0.5\height}{\includegraphics[width=0.15\linewidth]{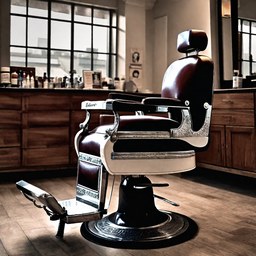}} & \raisebox{-0.5\height}{\includegraphics[width=0.15\linewidth]{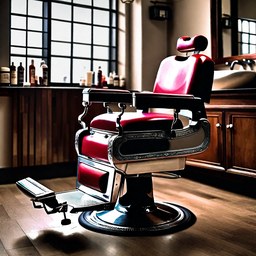}} & \raisebox{-0.5\height}{\includegraphics[width=0.15\linewidth]{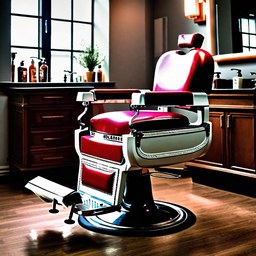}} & \raisebox{-0.5\height}{\includegraphics[width=0.15\linewidth]{arxiv_images/SDXL_ImageNet_Images/SDXL_ImNe_25_CFGpp_barber_chair.jpg}} \\
        \raisebox{-0.5\height}{\rotatebox{90}{\small CACFG}} & \raisebox{-0.5\height}{\includegraphics[width=0.15\linewidth]{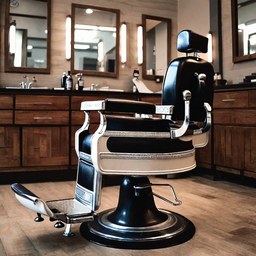}} & \raisebox{-0.5\height}{\includegraphics[width=0.15\linewidth]{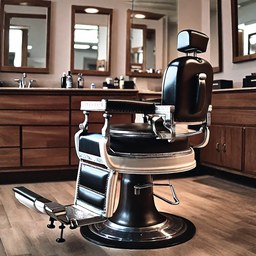}} & \raisebox{-0.5\height}{\includegraphics[width=0.15\linewidth]{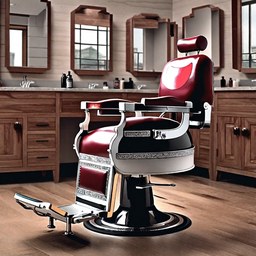}} & \raisebox{-0.5\height}{\includegraphics[width=0.15\linewidth]{arxiv_images/SDXL_ImageNet_Images/SDXL_ImNe_25_CACFGpp_barber_chair.jpg}} \\
        \noalign{\vskip 4pt}
        \raisebox{-0.5\height}{\rotatebox{90}{\small CFG}} & \raisebox{-0.5\height}{\includegraphics[width=0.15\linewidth]{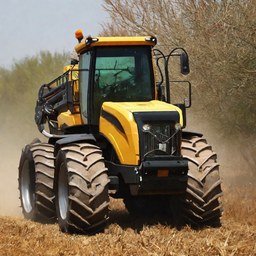}} & \raisebox{-0.5\height}{\includegraphics[width=0.15\linewidth]{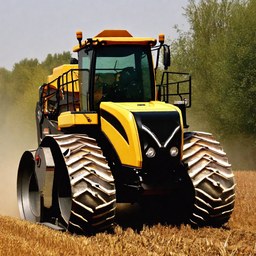}} & \raisebox{-0.5\height}{\includegraphics[width=0.15\linewidth]{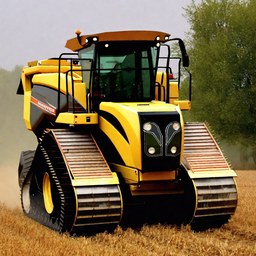}} & \raisebox{-0.5\height}{\includegraphics[width=0.15\linewidth]{arxiv_images/SDXL_ImageNet_Images/SDXL_ImNe_25_CFGpp_harvester_reaper.jpg}} \\
        \raisebox{-0.5\height}{\rotatebox{90}{\small CACFG}} & \raisebox{-0.5\height}{\includegraphics[width=0.15\linewidth]{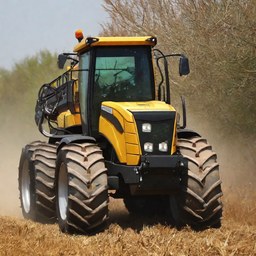}} & \raisebox{-0.5\height}{\includegraphics[width=0.15\linewidth]{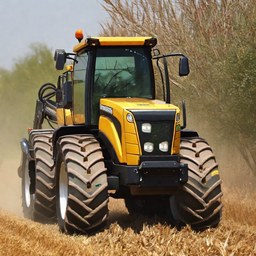}} & \raisebox{-0.5\height}{\includegraphics[width=0.15\linewidth]{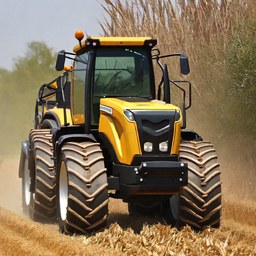}} & \raisebox{-0.5\height}{\includegraphics[width=0.15\linewidth]{arxiv_images/SDXL_ImageNet_Images/SDXL_ImNe_25_CACFGpp_harvester_reaper.jpg}} \\
        \noalign{\vskip 4pt}
        \raisebox{-0.5\height}{\rotatebox{90}{\small CFG}} & \raisebox{-0.5\height}{\includegraphics[width=0.15\linewidth]{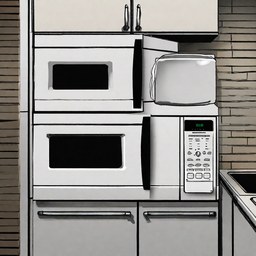}} & \raisebox{-0.5\height}{\includegraphics[width=0.15\linewidth]{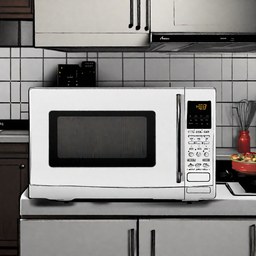}} & \raisebox{-0.5\height}{\includegraphics[width=0.15\linewidth]{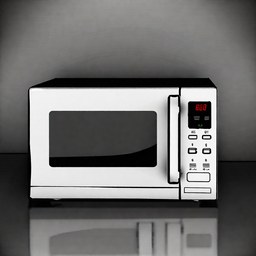}} & \raisebox{-0.5\height}{\includegraphics[width=0.15\linewidth]{arxiv_images/SDXL_ImageNet_Images/SDXL_ImNe_25_CFGpp_microwave_microwave_oven.jpg}} \\
        \raisebox{-0.5\height}{\rotatebox{90}{\small CACFG}} & \raisebox{-0.5\height}{\includegraphics[width=0.15\linewidth]{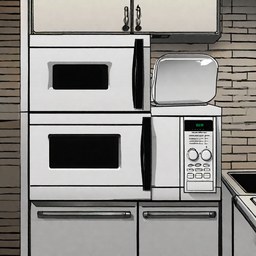}} & \raisebox{-0.5\height}{\includegraphics[width=0.15\linewidth]{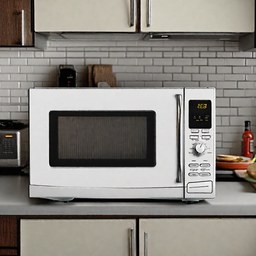}} & \raisebox{-0.5\height}{\includegraphics[width=0.15\linewidth]{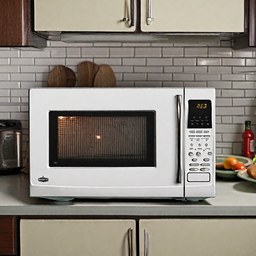}} & \raisebox{-0.5\height}{\includegraphics[width=0.15\linewidth]{arxiv_images/SDXL_ImageNet_Images/SDXL_ImNe_25_CACFGpp_microwave_microwave_oven.jpg}} \\
        \noalign{\vskip 4pt}
        \raisebox{-0.5\height}{\rotatebox{90}{\small CFG}} & \raisebox{-0.5\height}{\includegraphics[width=0.15\linewidth]{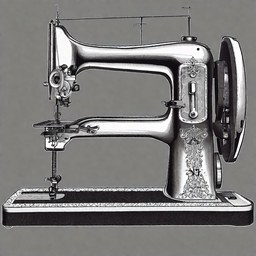}} & \raisebox{-0.5\height}{\includegraphics[width=0.15\linewidth]{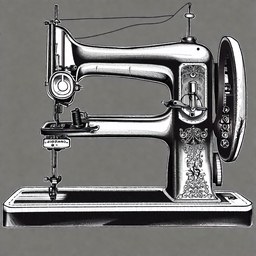}} & \raisebox{-0.5\height}{\includegraphics[width=0.15\linewidth]{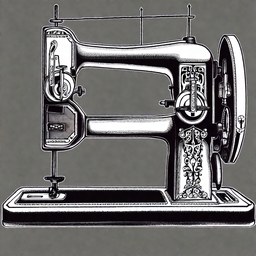}} & \raisebox{-0.5\height}{\includegraphics[width=0.15\linewidth]{arxiv_images/SDXL_ImageNet_Images/SDXL_ImNe_25_CFGpp_sewing_machine.jpg}} \\
        \raisebox{-0.5\height}{\rotatebox{90}{\small CACFG}} & \raisebox{-0.5\height}{\includegraphics[width=0.15\linewidth]{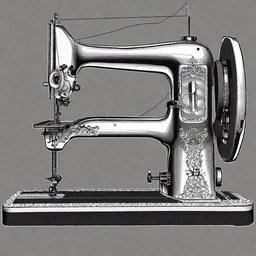}} & \raisebox{-0.5\height}{\includegraphics[width=0.15\linewidth]{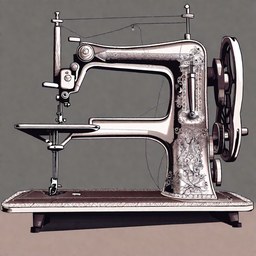}} & \raisebox{-0.5\height}{\includegraphics[width=0.15\linewidth]{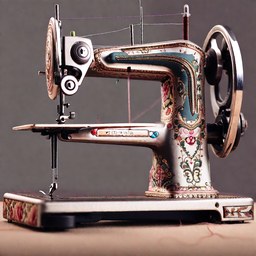}} & \raisebox{-0.5\height}{\includegraphics[width=0.15\linewidth]{arxiv_images/SDXL_ImageNet_Images/SDXL_ImNe_25_CACFGpp_sewing_machine.jpg}}
    \end{tabular}
    }
    \caption{CFG \citep{ClassifierFreeGuidance_Ho2022} and CACFG (ours) samples generated using SDXL \citep{SDXL_Podell2024} targeting the ImageNet \citep{ImageNetBenchmark_Russakovsky2015} dataset. Guidance scale varies across $\omega = 10, 15, 20, 25$.}
    \label{fig:SDXL ImageNet Guidance Strength}
\end{figure}

\begin{figure}[h]
    \centering
    \begin{subfigure}{\linewidth}
        \centering
        \resizebox{\linewidth}{!}{%
        \begin{tabular}{@{}c@{\hspace{2pt}}c@{}c@{}c@{}c@{}c@{}c@{}}
            \raisebox{-0.5\height}{\rotatebox{90}{\small CFG}} & \raisebox{-0.5\height}{\includegraphics[width=0.15\linewidth]{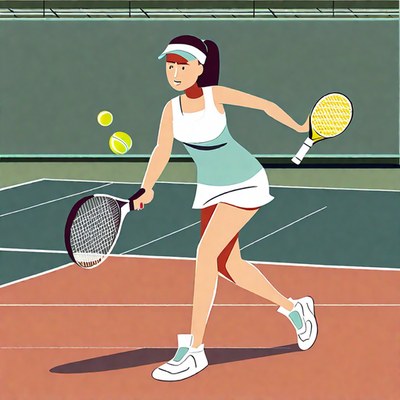}} & \raisebox{-0.5\height}{\includegraphics[width=0.15\linewidth]{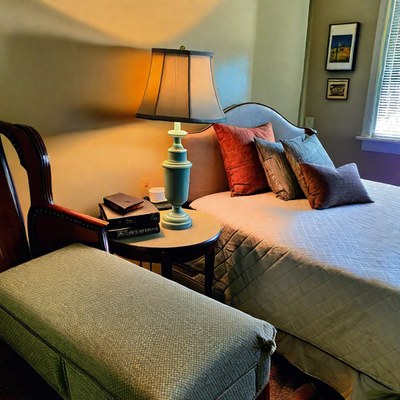}} & \raisebox{-0.5\height}{\includegraphics[width=0.15\linewidth]{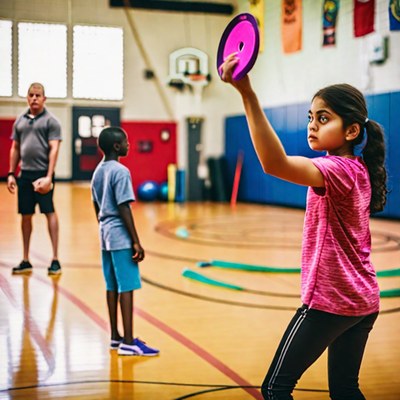}} & \raisebox{-0.5\height}{\includegraphics[width=0.15\linewidth]{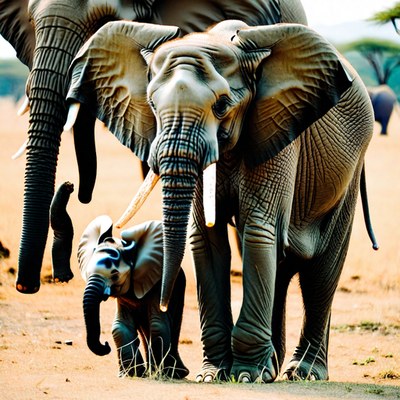}} & \raisebox{-0.5\height}{\includegraphics[width=0.15\linewidth]{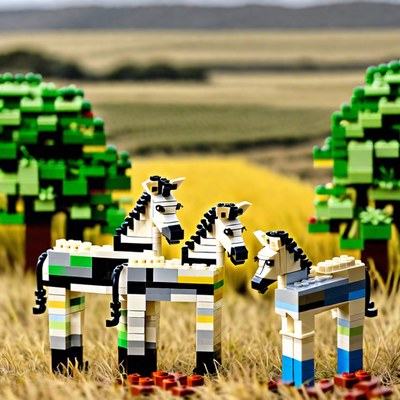}} & \raisebox{-0.5\height}{\includegraphics[width=0.15\linewidth]{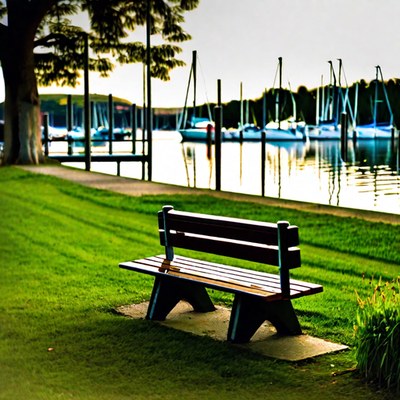}} \\
            \raisebox{-0.5\height}{\rotatebox{90}{\small CACFG}} & \raisebox{-0.5\height}{\includegraphics[width=0.15\linewidth]{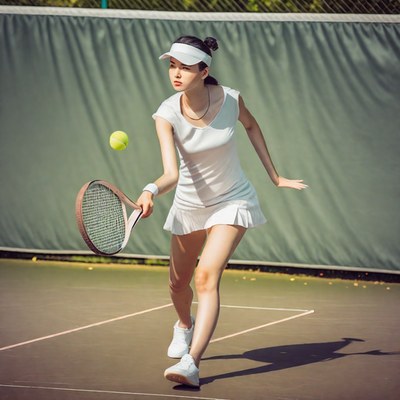}} & \raisebox{-0.5\height}{\includegraphics[width=0.15\linewidth]{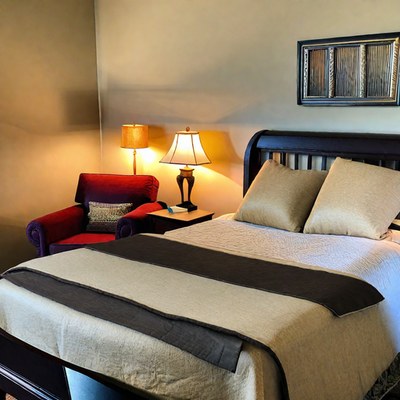}} & \raisebox{-0.5\height}{\includegraphics[width=0.15\linewidth]{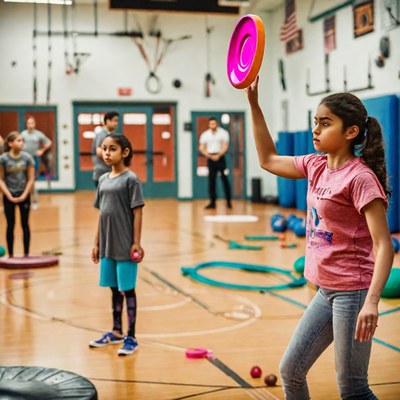}} & \raisebox{-0.5\height}{\includegraphics[width=0.15\linewidth]{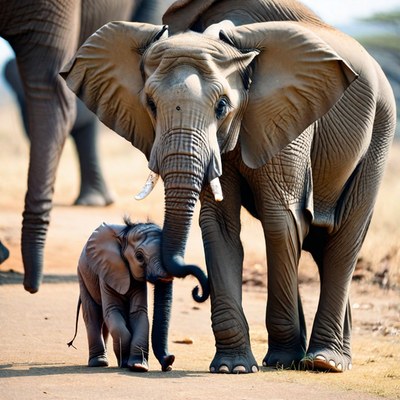}} & \raisebox{-0.5\height}{\includegraphics[width=0.15\linewidth]{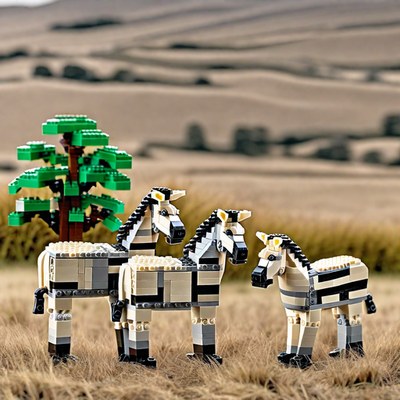}} & \raisebox{-0.5\height}{\includegraphics[width=0.15\linewidth]{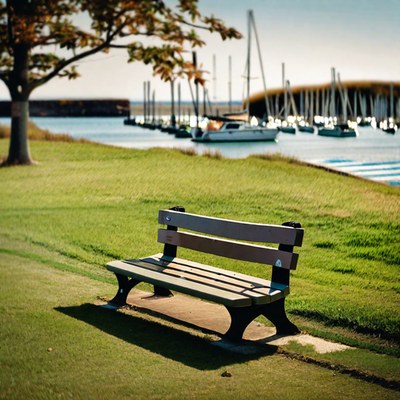}}
        \end{tabular}
        }
        \caption{CFG vs CACFG (ours)}
    \end{subfigure}

    \vspace{1.5em}
    \begin{subfigure}{\linewidth}
        \centering
        \resizebox{\linewidth}{!}{%
        \begin{tabular}{@{}c@{\hspace{2pt}}c@{}c@{}c@{}c@{}c@{}c@{}}
            \raisebox{-0.5\height}{\rotatebox{90}{\small CFG++}} & \raisebox{-0.5\height}{\includegraphics[width=0.15\linewidth]{arxiv_images/SDXL_COCO_Images/SDXL_COCO_25_CFGpp_A_YOUNG_LADY_IS_ON_THE_COURT_PLAYING_TENNIS.jpg}} & \raisebox{-0.5\height}{\includegraphics[width=0.15\linewidth]{arxiv_images/SDXL_COCO_Images/SDXL_COCO_25_CFGpp_A_bed_sitting_in_a_room_next_to_a_lamp_and_chair.jpg}} & \raisebox{-0.5\height}{\includegraphics[width=0.15\linewidth]{arxiv_images/SDXL_COCO_Images/SDXL_COCO_25_CFGpp_A_young_girl_prepares_to_throw_a_disc_through_a_hoop_while_other_watch_in_a_gym.jpg}} & \raisebox{-0.5\height}{\includegraphics[width=0.15\linewidth]{arxiv_images/SDXL_COCO_Images/SDXL_COCO_25_CFGpp_An_elephant_and_baby_stand_next_to_each_other.jpg}} & \raisebox{-0.5\height}{\includegraphics[width=0.15\linewidth]{arxiv_images/SDXL_COCO_Images/SDXL_COCO_25_CFGpp_Two_zebras_made_of_Lego_stand_in_a_field.jpg}} & \raisebox{-0.5\height}{\includegraphics[width=0.15\linewidth]{arxiv_images/SDXL_COCO_Images/SDXL_COCO_25_CFGpp_a_bench_sitting_in_the_grass_facing_the_water_and_boats.jpg}} \\
            \raisebox{-0.5\height}{\rotatebox{90}{\small CACFG++}} & \raisebox{-0.5\height}{\includegraphics[width=0.15\linewidth]{arxiv_images/SDXL_COCO_Images/SDXL_COCO_25_CACFGpp_A_YOUNG_LADY_IS_ON_THE_COURT_PLAYING_TENNIS.jpg}} & \raisebox{-0.5\height}{\includegraphics[width=0.15\linewidth]{arxiv_images/SDXL_COCO_Images/SDXL_COCO_25_CACFGpp_A_bed_sitting_in_a_room_next_to_a_lamp_and_chair.jpg}} & \raisebox{-0.5\height}{\includegraphics[width=0.15\linewidth]{arxiv_images/SDXL_COCO_Images/SDXL_COCO_25_CACFGpp_A_young_girl_prepares_to_throw_a_disc_through_a_hoop_while_other_watch_in_a_gym.jpg}} & \raisebox{-0.5\height}{\includegraphics[width=0.15\linewidth]{arxiv_images/SDXL_COCO_Images/SDXL_COCO_25_CACFGpp_An_elephant_and_baby_stand_next_to_each_other.jpg}} & \raisebox{-0.5\height}{\includegraphics[width=0.15\linewidth]{arxiv_images/SDXL_COCO_Images/SDXL_COCO_25_CACFGpp_Two_zebras_made_of_Lego_stand_in_a_field.jpg}} & \raisebox{-0.5\height}{\includegraphics[width=0.15\linewidth]{arxiv_images/SDXL_COCO_Images/SDXL_COCO_25_CACFGpp_a_bench_sitting_in_the_grass_facing_the_water_and_boats.jpg}}
        \end{tabular}
        }
        \caption{CFG++ vs CACFG++ (ours)}
    \end{subfigure}

    \vspace{1.5em}
    \begin{subfigure}{\linewidth}
        \centering
        \resizebox{\linewidth}{!}{%
        \begin{tabular}{@{}c@{\hspace{2pt}}c@{}c@{}c@{}c@{}c@{}c@{}}
            \raisebox{-0.5\height}{\rotatebox{90}{\small IntCFG}} & \raisebox{-0.5\height}{\includegraphics[width=0.15\linewidth]{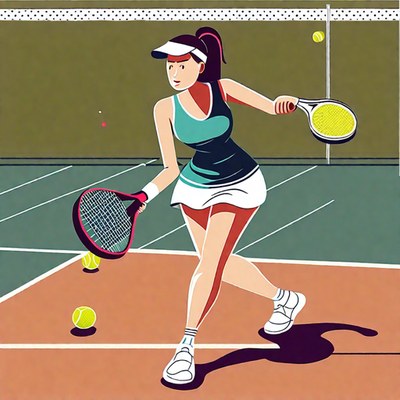}} & \raisebox{-0.5\height}{\includegraphics[width=0.15\linewidth]{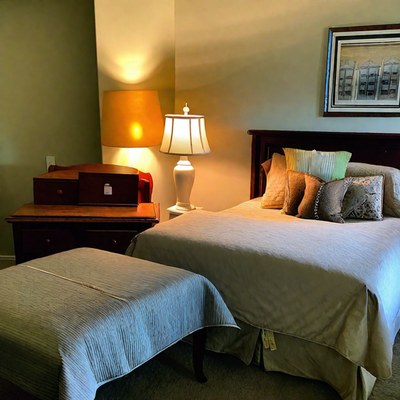}} & \raisebox{-0.5\height}{\includegraphics[width=0.15\linewidth]{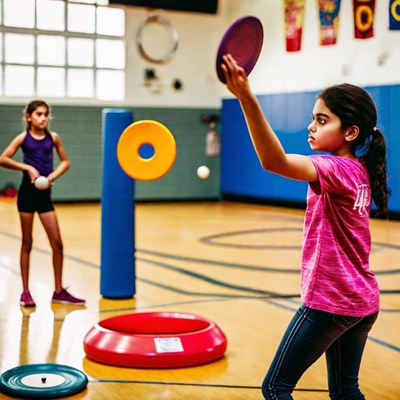}} & \raisebox{-0.5\height}{\includegraphics[width=0.15\linewidth]{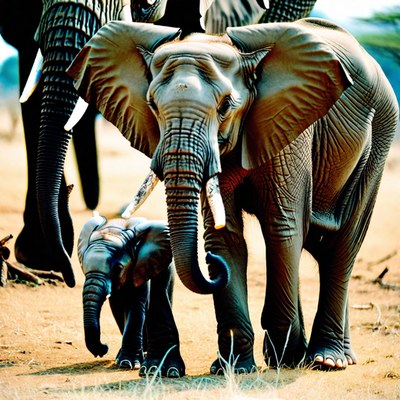}} & \raisebox{-0.5\height}{\includegraphics[width=0.15\linewidth]{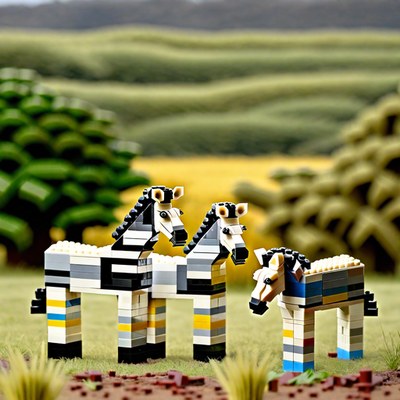}} & \raisebox{-0.5\height}{\includegraphics[width=0.15\linewidth]{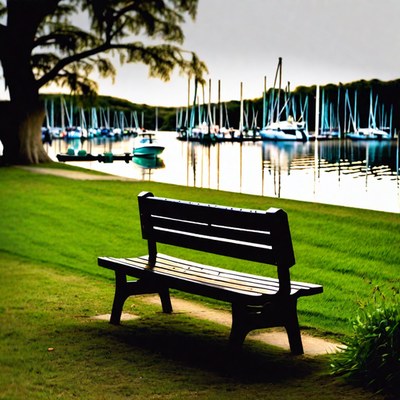}} \\
            \raisebox{-0.5\height}{\rotatebox{90}{\small IntCACFG}} & \raisebox{-0.5\height}{\includegraphics[width=0.15\linewidth]{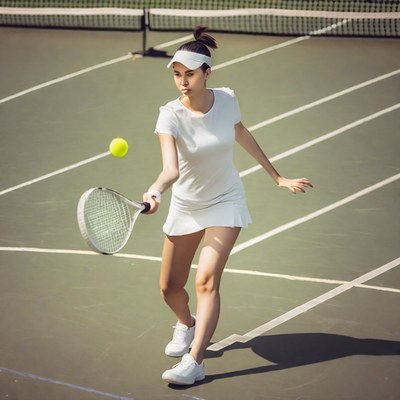}} & \raisebox{-0.5\height}{\includegraphics[width=0.15\linewidth]{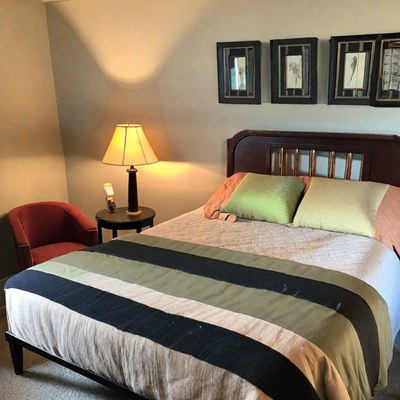}} & \raisebox{-0.5\height}{\includegraphics[width=0.15\linewidth]{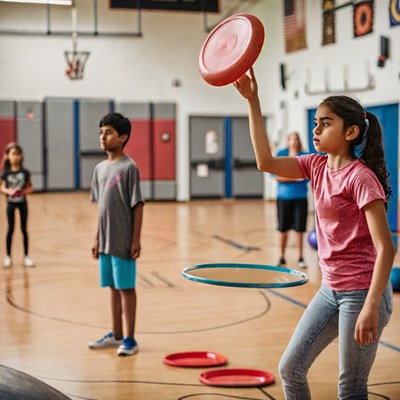}} & \raisebox{-0.5\height}{\includegraphics[width=0.15\linewidth]{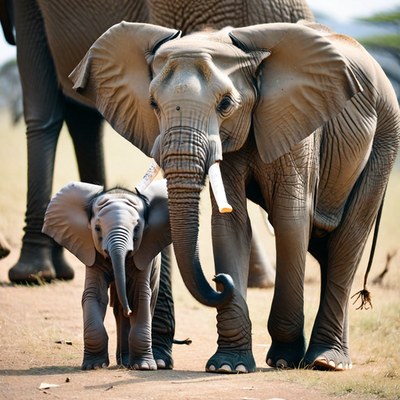}} & \raisebox{-0.5\height}{\includegraphics[width=0.15\linewidth]{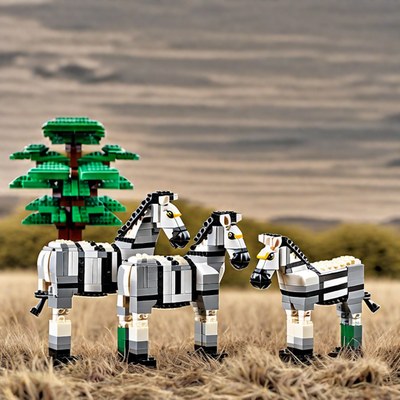}} & \raisebox{-0.5\height}{\includegraphics[width=0.15\linewidth]{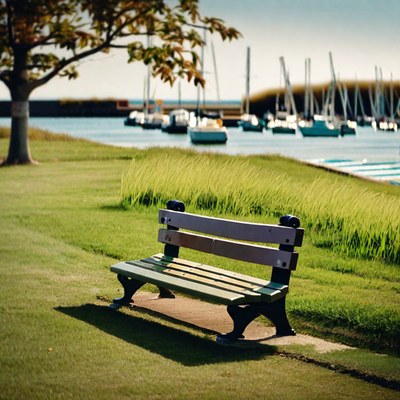}}
        \end{tabular}
        }
        \caption{IntCFG vs IntCACFG (ours)}
    \end{subfigure}
    \caption{CFG \citep{ClassifierFreeGuidance_Ho2022}, CFG++ \citep{CFG++_Chung2025}, and IntCFG \citep{IntCFG_Kynkaanniemi2024} visualised against their curvature-aware variants (ours): CACFG, CACFG++, IntCACFG. Samples are generated using SDXL \citep{SDXL_Podell2024} targeting the COCO \citep{COCO_Lin2014} dataset with a guidance strength of $\omega=25$.}
    \label{fig:guidance_strength_grid}
\end{figure}

\begin{figure}[h]
    \centering
    \resizebox{0.7\linewidth}{!}{%
    \begin{tabular}{@{}c@{\hspace{2pt}}c@{}c@{}c@{}c@{}}
        & \makebox[0.15\linewidth][c]{\small $\omega = 10$} & \makebox[0.15\linewidth][c]{\small $\omega = 15$} & \makebox[0.15\linewidth][c]{\small $\omega = 20$} & \makebox[0.15\linewidth][c]{\small $\omega = 25$} \\
        \noalign{\vskip 4pt}
        \raisebox{-0.5\height}{\rotatebox{90}{\small CFG}} & \raisebox{-0.5\height}{\includegraphics[width=0.15\linewidth]{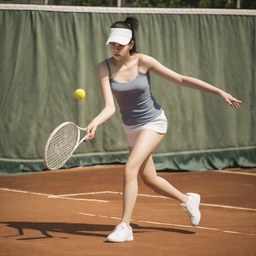}} & \raisebox{-0.5\height}{\includegraphics[width=0.15\linewidth]{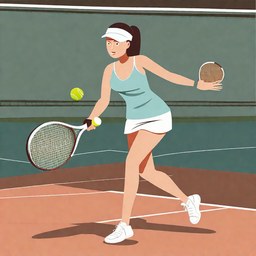}} & \raisebox{-0.5\height}{\includegraphics[width=0.15\linewidth]{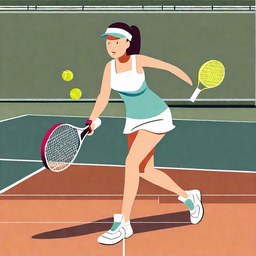}} & \raisebox{-0.5\height}{\includegraphics[width=0.15\linewidth]{arxiv_images/SDXL_COCO_Images/SDXL_COCO_25_CFGpp_A_YOUNG_LADY_IS_ON_THE_COURT_PLAYING_TENNIS.jpg}} \\
        \raisebox{-0.5\height}{\rotatebox{90}{\small CACFG}} & \raisebox{-0.5\height}{\includegraphics[width=0.15\linewidth]{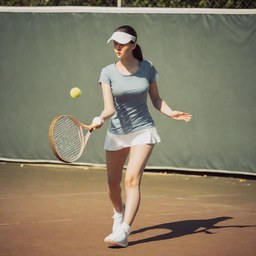}} & \raisebox{-0.5\height}{\includegraphics[width=0.15\linewidth]{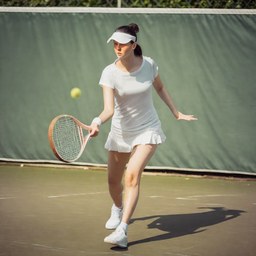}} & \raisebox{-0.5\height}{\includegraphics[width=0.15\linewidth]{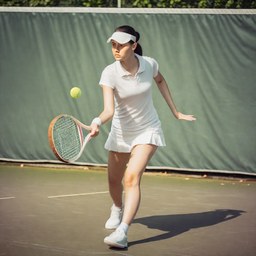}} & \raisebox{-0.5\height}{\includegraphics[width=0.15\linewidth]{arxiv_images/SDXL_COCO_Images/SDXL_COCO_25_CACFGpp_A_YOUNG_LADY_IS_ON_THE_COURT_PLAYING_TENNIS.jpg}} \\
        \noalign{\vskip 4pt}
        \raisebox{-0.5\height}{\rotatebox{90}{\small CFG}} & \raisebox{-0.5\height}{\includegraphics[width=0.15\linewidth]{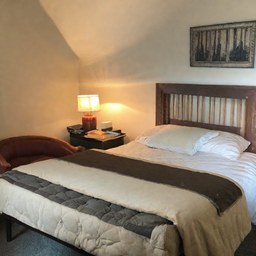}} & \raisebox{-0.5\height}{\includegraphics[width=0.15\linewidth]{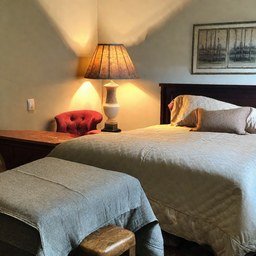}} & \raisebox{-0.5\height}{\includegraphics[width=0.15\linewidth]{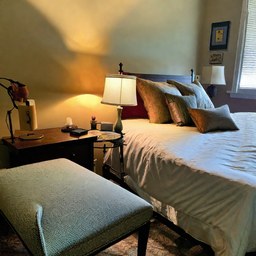}} & \raisebox{-0.5\height}{\includegraphics[width=0.15\linewidth]{arxiv_images/SDXL_COCO_Images/SDXL_COCO_25_CFGpp_A_bed_sitting_in_a_room_next_to_a_lamp_and_chair.jpg}} \\
        \raisebox{-0.5\height}{\rotatebox{90}{\small CACFG}} & \raisebox{-0.5\height}{\includegraphics[width=0.15\linewidth]{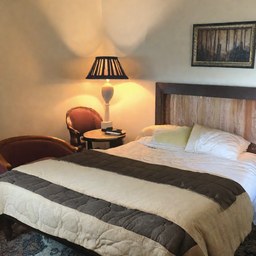}} & \raisebox{-0.5\height}{\includegraphics[width=0.15\linewidth]{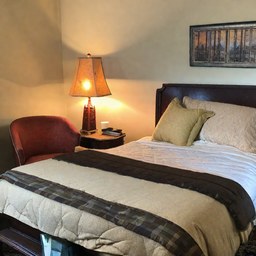}} & \raisebox{-0.5\height}{\includegraphics[width=0.15\linewidth]{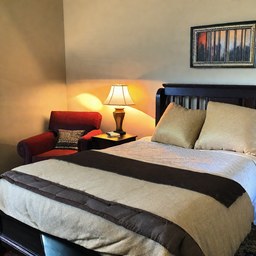}} & \raisebox{-0.5\height}{\includegraphics[width=0.15\linewidth]{arxiv_images/SDXL_COCO_Images/SDXL_COCO_25_CACFGpp_A_bed_sitting_in_a_room_next_to_a_lamp_and_chair.jpg}} \\
        \noalign{\vskip 4pt}
        \raisebox{-0.5\height}{\rotatebox{90}{\small CFG}} & \raisebox{-0.5\height}{\includegraphics[width=0.15\linewidth]{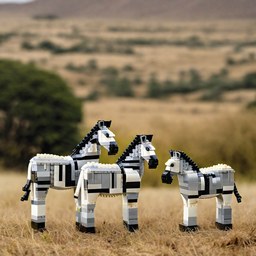}} & \raisebox{-0.5\height}{\includegraphics[width=0.15\linewidth]{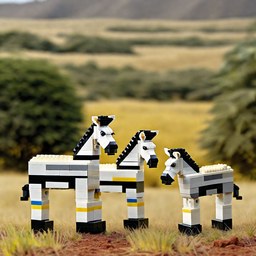}} & \raisebox{-0.5\height}{\includegraphics[width=0.15\linewidth]{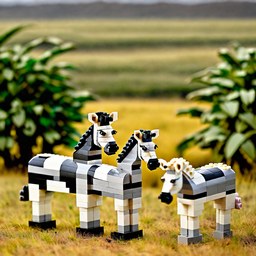}} & \raisebox{-0.5\height}{\includegraphics[width=0.15\linewidth]{arxiv_images/SDXL_COCO_Images/SDXL_COCO_25_CFGpp_Two_zebras_made_of_Lego_stand_in_a_field.jpg}} \\
        \raisebox{-0.5\height}{\rotatebox{90}{\small CACFG}} & \raisebox{-0.5\height}{\includegraphics[width=0.15\linewidth]{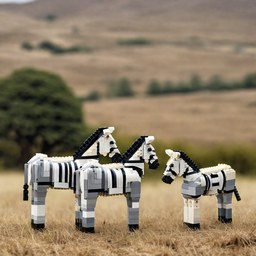}} & \raisebox{-0.5\height}{\includegraphics[width=0.15\linewidth]{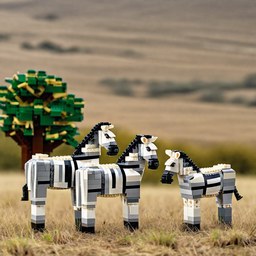}} & \raisebox{-0.5\height}{\includegraphics[width=0.15\linewidth]{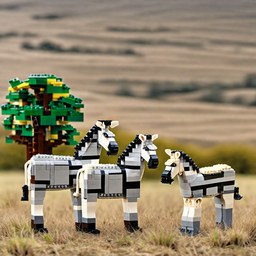}} & \raisebox{-0.5\height}{\includegraphics[width=0.15\linewidth]{arxiv_images/SDXL_COCO_Images/SDXL_COCO_25_CACFGpp_Two_zebras_made_of_Lego_stand_in_a_field.jpg}} \\
        \noalign{\vskip 4pt}
        \raisebox{-0.5\height}{\rotatebox{90}{\small CFG}} & \raisebox{-0.5\height}{\includegraphics[width=0.15\linewidth]{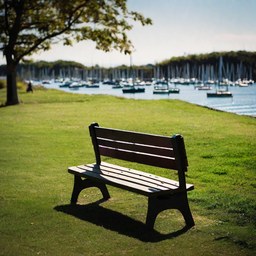}} & \raisebox{-0.5\height}{\includegraphics[width=0.15\linewidth]{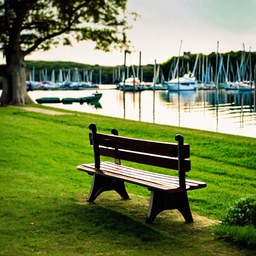}} & \raisebox{-0.5\height}{\includegraphics[width=0.15\linewidth]{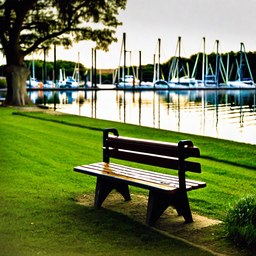}} & \raisebox{-0.5\height}{\includegraphics[width=0.15\linewidth]{arxiv_images/SDXL_COCO_Images/SDXL_COCO_25_CFGpp_a_bench_sitting_in_the_grass_facing_the_water_and_boats.jpg}} \\
        \raisebox{-0.5\height}{\rotatebox{90}{\small CACFG}} & \raisebox{-0.5\height}{\includegraphics[width=0.15\linewidth]{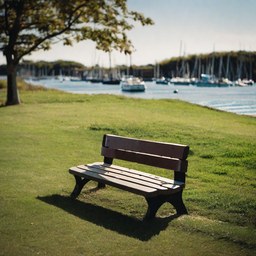}} & \raisebox{-0.5\height}{\includegraphics[width=0.15\linewidth]{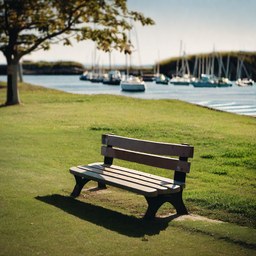}} & \raisebox{-0.5\height}{\includegraphics[width=0.15\linewidth]{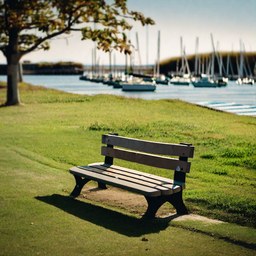}} & \raisebox{-0.5\height}{\includegraphics[width=0.15\linewidth]{arxiv_images/SDXL_COCO_Images/SDXL_COCO_25_CACFGpp_a_bench_sitting_in_the_grass_facing_the_water_and_boats.jpg}}
    \end{tabular}
    }
    \caption{CFG \citep{ClassifierFreeGuidance_Ho2022} and CACFG (ours) samples generated using SDXL \citep{SDXL_Podell2024} targeting the COCO \citep{COCO_Lin2014} dataset. Guidance scale varies across $\omega = 10, 15, 20, 25$.}
    \label{fig:SDXL COCO Guidance Strength}
\end{figure}

\clearpage
\subsection{Quantitative results}
Here we provide the numeric values that the Figures~\ref{fig:Metric plot sd1.5} and \ref{fig:Metric plot sdxl} were created from.

\begin{table}[h]
\centering
\caption{Image quality for SD1.5 samples generated for ImageNet and COCO datasets across a range of guidance strengths and schedules. Best result for each dataset and guidance strength is \textcolor{ForestGreen}{\textbf{bold}}, and the best value between curvature-aware and non-curvature-aware guidance is given in \textcolor{Green}{light green}, i.e., colour denotes whether CA or non-CA methods were favoured.}
\label{tab:SD1.5 image quality table}
\resizebox{\textwidth}{!}{%
\small
\begin{tabular}{l l l ccccccc cc}
\toprule
\textbf{Dataset} & \textbf{Strength} & \textbf{Guidance} & \textbf{FID} $\downarrow$ & \textbf{NIQE} $\downarrow$ & \textbf{BRISQUE} $\downarrow$ & \textbf{TReS} $\uparrow$ & \textbf{CLIP} $\uparrow$ & \textbf{HPSv2} $\uparrow$ & \textbf{Density} $\uparrow$ & \textbf{Coverage} $\uparrow$ & \textbf{Saturation} $\downarrow$ \\
\midrule

\multirow[c]{24}{*}{ImageNet}
 & \multirow[c]{6}{*}{5}
   & CFG      & $\textcolor{ForestGreen}{\mathbf{13.67}}$ & $4.06$ & $14.98$ & $83.24$ & $\textcolor{Green}{24.851}$ & $\textcolor{Green}{25.076}$ & $\textcolor{ForestGreen}{\mathbf{1.128}}$ & $0.453$ & $0.299$ \\
 & & CFG++    & $\textcolor{Green}{13.88}$ & $4.05$ & $15.27$ & $83.31$ & $24.842$ & $\textcolor{ForestGreen}{\mathbf{25.096}}$ & $\textcolor{Green}{1.093}$ & $0.447$ & $0.291$ \\
 & & IntCFG   & $\textcolor{Green}{13.86}$ & $4.07$ & $15.00$ & $83.04$ & $24.859$ & $\textcolor{Green}{25.079}$ & $1.112$ & $\textcolor{Green}{0.449}$ & $0.295$ \\
 \noalign{\vskip -1.5pt}
 \cmidrule[0.1pt](lr){4-12}
 \noalign{\vskip -1.5pt}
 & & CACFG    & $13.70$ & $\textcolor{Green}{4.05}$ & $\textcolor{ForestGreen}{\mathbf{14.83}}$ & $\textcolor{ForestGreen}{\mathbf{83.47}}$ & $24.849$ & $25.070$ & $1.124$ & $\textcolor{ForestGreen}{\mathbf{0.454}}$ & $\textcolor{Green}{0.295}$ \\
 & & CACFG++  & $13.90$ & $\textcolor{ForestGreen}{\mathbf{4.04}}$ & $\textcolor{Green}{15.19}$ & $\textcolor{Green}{83.44}$ & $\textcolor{Green}{24.843}$ & $25.090$ & $1.090$ & $\textcolor{Green}{0.449}$ & $\textcolor{ForestGreen}{\mathbf{0.289}}$ \\
 & & IntCACFG & $\textcolor{Green}{13.86}$ & $\textcolor{Green}{4.06}$ & $\textcolor{Green}{14.87}$ & $\textcolor{Green}{83.29}$ & $\textcolor{ForestGreen}{\mathbf{24.862}}$ & $25.072$ & $\textcolor{Green}{1.113}$ & $\textcolor{Green}{0.449}$ & $\textcolor{Green}{0.290}$ \\
 \cmidrule(lr){2-12}
 & \multirow[c]{6}{*}{7.5}
   & CFG      & $15.12$ & $4.27$ & $15.47$ & $82.80$ & $\textcolor{ForestGreen}{\mathbf{25.251}}$ & $\textcolor{Green}{25.680}$ & $1.139$ & $0.443$ & $0.338$ \\
 & & CFG++    & $14.99$ & $4.25$ & $15.76$ & $82.71$ & $25.218$ & $\textcolor{ForestGreen}{\mathbf{25.709}}$ & $\textcolor{Green}{1.103}$ & $\textcolor{Green}{0.440}$ & $0.330$ \\
 & & IntCFG   & $15.25$ & $4.31$ & $15.57$ & $82.44$ & $25.231$ & $\textcolor{Green}{25.658}$ & $1.113$ & $0.439$ & $0.333$ \\
 \noalign{\vskip -1.5pt}
 \cmidrule[0.1pt](lr){4-12}
 \noalign{\vskip -1.5pt}
 & & CACFG    & $\textcolor{Green}{14.97}$ & $\textcolor{Green}{4.22}$ & $\textcolor{ForestGreen}{\mathbf{14.59}}$ & $\textcolor{ForestGreen}{\mathbf{83.80}}$ & $25.243$ & $25.653$ & $\textcolor{ForestGreen}{\mathbf{1.140}}$ & $\textcolor{ForestGreen}{\mathbf{0.445}}$ & $\textcolor{Green}{0.316}$ \\
 & & CACFG++  & $\textcolor{ForestGreen}{\mathbf{14.94}}$ & $\textcolor{ForestGreen}{\mathbf{4.21}}$ & $\textcolor{Green}{14.93}$ & $\textcolor{Green}{83.72}$ & $\textcolor{Green}{25.223}$ & $25.696$ & $1.102$ & $\textcolor{Green}{0.440}$ & $\textcolor{Green}{0.314}$ \\
 & & IntCACFG & $\textcolor{Green}{15.07}$ & $\textcolor{Green}{4.25}$ & $\textcolor{Green}{14.64}$ & $\textcolor{Green}{83.69}$ & $\textcolor{Green}{25.235}$ & $25.635$ & $\textcolor{Green}{1.123}$ & $\textcolor{Green}{0.441}$ & $\textcolor{ForestGreen}{\mathbf{0.309}}$ \\
\cmidrule(lr){2-12}
 & \multirow[c]{6}{*}{10}
   & CFG      & $16.22$ & $4.44$ & $17.13$ & $81.41$ & $25.391$ & $\textcolor{Green}{25.967}$ & $1.084$ & $0.424$ & $0.371$ \\
 & & CFG++    & $16.35$ & $4.42$ & $17.21$ & $81.63$ & $25.372$ & $\textcolor{ForestGreen}{\mathbf{25.993}}$ & $1.068$ & $0.421$ & $0.360$ \\
 & & IntCFG   & $16.38$ & $4.47$ & $17.46$ & $80.98$ & $\textcolor{Green}{25.376}$ & $\textcolor{Green}{25.954}$ & $1.069$ & $0.421$ & $0.367$ \\
 \noalign{\vskip -1.5pt}
 \cmidrule[0.1pt](lr){4-12}
 \noalign{\vskip -1.5pt}
 & & CACFG    & $\textcolor{ForestGreen}{\mathbf{15.73}}$ & $\textcolor{ForestGreen}{\mathbf{4.32}}$ & $\textcolor{ForestGreen}{\mathbf{14.66}}$ & $\textcolor{Green}{84.10}$ & $\textcolor{ForestGreen}{\mathbf{25.395}}$ & $25.933$ & $\textcolor{ForestGreen}{\mathbf{1.119}}$ & $\textcolor{ForestGreen}{\mathbf{0.433}}$ & $\textcolor{Green}{0.321}$ \\
 & & CACFG++  & $\textcolor{Green}{15.94}$ & $\textcolor{ForestGreen}{\mathbf{4.32}}$ & $\textcolor{Green}{14.88}$ & $\textcolor{ForestGreen}{\mathbf{84.14}}$ & $\textcolor{Green}{25.381}$ & $25.970$ & $\textcolor{Green}{1.097}$ & $\textcolor{Green}{0.427}$ & $\textcolor{Green}{0.321}$ \\
 & & IntCACFG & $\textcolor{Green}{15.96}$ & $\textcolor{Green}{4.34}$ & $\textcolor{Green}{14.75}$ & $\textcolor{Green}{84.02}$ & $25.359$ & $25.900$ & $\textcolor{Green}{1.109}$ & $\textcolor{Green}{0.427}$ & $\textcolor{ForestGreen}{\mathbf{0.313}}$ \\
\cmidrule(lr){2-12}
 & \multirow[c]{6}{*}{15}
   & CFG      & $17.78$ & $4.75$ & $21.95$ & $78.14$ & $25.491$ & $\textcolor{Green}{26.214}$ & $0.994$ & $0.391$ & $0.424$ \\
 & & CFG++    & $18.01$ & $4.71$ & $22.12$ & $78.17$ & $25.483$ & $\textcolor{ForestGreen}{\mathbf{26.221}}$ & $0.961$ & $0.388$ & $0.410$ \\
 & & IntCFG   & $18.21$ & $4.78$ & $22.39$ & $77.81$ & $25.478$ & $\textcolor{Green}{26.175}$ & $0.969$ & $0.389$ & $0.416$ \\
 \noalign{\vskip -1.5pt}
 \cmidrule[0.1pt](lr){4-12}
 \noalign{\vskip -1.5pt}
 & & CACFG    & $\textcolor{ForestGreen}{\mathbf{16.52}}$ & $\textcolor{ForestGreen}{\mathbf{4.42}}$ & $\textcolor{ForestGreen}{\mathbf{14.80}}$ & $\textcolor{Green}{84.60}$ & $\textcolor{Green}{25.540}$ & $26.175$ & $\textcolor{ForestGreen}{\mathbf{1.063}}$ & $\textcolor{ForestGreen}{\mathbf{0.413}}$ & $\textcolor{Green}{0.317}$ \\
 & & CACFG++  & $\textcolor{Green}{16.97}$ & $\textcolor{Green}{4.45}$ & $\textcolor{Green}{15.01}$ & $\textcolor{ForestGreen}{\mathbf{84.93}}$ & $\textcolor{ForestGreen}{\mathbf{25.549}}$ & $26.192$ & $\textcolor{Green}{1.043}$ & $\textcolor{Green}{0.404}$ & $\textcolor{Green}{0.315}$ \\
 & & IntCACFG & $\textcolor{Green}{16.71}$ & $\textcolor{Green}{4.48}$ & $\textcolor{Green}{15.09}$ & $\textcolor{Green}{84.76}$ & $\textcolor{Green}{25.526}$ & $26.122$ & $\textcolor{Green}{1.047}$ & $\textcolor{Green}{0.408}$ & $\textcolor{ForestGreen}{\mathbf{0.307}}$ \\

\midrule

\multirow[c]{24}{*}{COCO}
 & \multirow[c]{6}{*}{5}
   & CFG      & $12.69$ & $3.62$ & $14.82$ & $83.84$ & $26.245$ & $\textcolor{Green}{25.310}$ & $0.893$ & $0.520$ & $0.296$ \\
 & & CFG++    & $12.68$ & $\textcolor{ForestGreen}{\mathbf{3.61}}$ & $14.83$ & $83.93$ & $\textcolor{ForestGreen}{\mathbf{26.264}}$ & $25.386$ & $0.883$ & $\textcolor{Green}{0.511}$ & $0.289$ \\
 & & IntCFG   & $12.96$ & $3.63$ & $14.81$ & $83.56$ & $\textcolor{Green}{26.243}$ & $\textcolor{Green}{25.312}$ & $\textcolor{Green}{0.890}$ & $\textcolor{Green}{0.514}$ & $0.294$ \\
 \noalign{\vskip -1.5pt}
 \cmidrule[0.1pt](lr){4-12}
 \noalign{\vskip -1.5pt}
 & & CACFG    & $\textcolor{Green}{12.67}$ & $\textcolor{ForestGreen}{\mathbf{3.61}}$ & $\textcolor{Green}{14.57}$ & $\textcolor{Green}{84.11}$ & $\textcolor{Green}{26.247}$ & $25.303$ & $\textcolor{ForestGreen}{\mathbf{0.904}}$ & $\textcolor{ForestGreen}{\mathbf{0.523}}$ & $\textcolor{Green}{0.292}$ \\
 & & CACFG++  & $\textcolor{ForestGreen}{\mathbf{12.65}}$ & $\textcolor{ForestGreen}{\mathbf{3.61}}$ & $\textcolor{Green}{14.71}$ & $\textcolor{ForestGreen}{\mathbf{84.12}}$ & $26.258$ & $\textcolor{ForestGreen}{\mathbf{25.389}}$ & $\textcolor{Green}{0.886}$ & $0.509$ & $\textcolor{ForestGreen}{\mathbf{0.287}}$ \\
 & & IntCACFG & $\textcolor{Green}{12.84}$ & $\textcolor{Green}{3.62}$ & $\textcolor{ForestGreen}{\mathbf{14.56}}$ & $\textcolor{Green}{83.91}$ & $26.237$ & $25.310$ & $\textcolor{Green}{0.890}$ & $\textcolor{Green}{0.514}$ & $\textcolor{Green}{0.288}$ \\
\cmidrule(lr){2-12}
 & \multirow[c]{6}{*}{7.5}
   & CFG      & $15.44$ & $3.80$ & $16.00$ & $82.96$ & $\textcolor{Green}{26.495}$ & $\textcolor{Green}{26.011}$ & $0.885$ & $0.496$ & $0.335$ \\
 & & CFG++    & $15.19$ & $3.76$ & $16.02$ & $82.90$ & $26.498$ & $\textcolor{ForestGreen}{\mathbf{26.109}}$ & $0.864$ & $0.482$ & $0.327$ \\
 & & IntCFG   & $15.51$ & $3.81$ & $16.01$ & $82.56$ & $26.488$ & $\textcolor{Green}{26.009}$ & $0.872$ & $0.482$ & $0.331$ \\
 \noalign{\vskip -1.5pt}
 \cmidrule[0.1pt](lr){4-12}
 \noalign{\vskip -1.5pt}
 & & CACFG    & $\textcolor{ForestGreen}{\mathbf{14.73}}$ & $\textcolor{Green}{3.73}$ & $\textcolor{Green}{14.63}$ & $\textcolor{ForestGreen}{\mathbf{84.50}}$ & $\textcolor{Green}{26.495}$ & $26.010$ & $\textcolor{ForestGreen}{\mathbf{0.902}}$ & $\textcolor{ForestGreen}{\mathbf{0.497}}$ & $\textcolor{Green}{0.310}$ \\
 & & CACFG++  & $\textcolor{Green}{14.82}$ & $\textcolor{ForestGreen}{\mathbf{3.72}}$ & $\textcolor{Green}{14.92}$ & $\textcolor{Green}{84.24}$ & $\textcolor{ForestGreen}{\mathbf{26.517}}$ & $26.107$ & $\textcolor{Green}{0.880}$ & $\textcolor{Green}{0.492}$ & $\textcolor{Green}{0.309}$ \\
 & & IntCACFG & $\textcolor{Green}{14.89}$ & $\textcolor{Green}{3.75}$ & $\textcolor{ForestGreen}{\mathbf{14.60}}$ & $\textcolor{Green}{84.32}$ & $\textcolor{Green}{26.499}$ & $25.993$ & $\textcolor{Green}{0.882}$ & $\textcolor{Green}{0.490}$ & $\textcolor{ForestGreen}{\mathbf{0.304}}$ \\
 \cmidrule(lr){2-12}
 & \multirow[c]{6}{*}{10}
   & CFG      & $17.09$ & $3.93$ & $17.88$ & $81.17$ & $26.567$ & $26.335$ & $0.850$ & $0.465$ & $0.368$ \\
 & & CFG++    & $16.98$ & $3.90$ & $17.70$ & $81.48$ & $26.574$ & $26.432$ & $0.834$ & $0.463$ & $0.358$ \\
 & & IntCFG   & $17.28$ & $3.94$ & $17.98$ & $80.70$ & $26.558$ & $\textcolor{Green}{26.338}$ & $0.820$ & $0.452$ & $0.366$ \\
 \noalign{\vskip -1.5pt}
 \cmidrule[0.1pt](lr){4-12}
 \noalign{\vskip -1.5pt}
 & & CACFG    & $\textcolor{ForestGreen}{\mathbf{15.74}}$ & $\textcolor{ForestGreen}{\mathbf{3.78}}$ & $\textcolor{Green}{14.61}$ & $\textcolor{Green}{84.90}$ & $\textcolor{Green}{26.614}$ & $\textcolor{Green}{26.349}$ & $\textcolor{ForestGreen}{\mathbf{0.872}}$ & $\textcolor{ForestGreen}{\mathbf{0.480}}$ & $\textcolor{Green}{0.313}$ \\
 & & CACFG++  & $\textcolor{Green}{16.03}$ & $\textcolor{ForestGreen}{\mathbf{3.78}}$ & $\textcolor{Green}{14.81}$ & $\textcolor{ForestGreen}{\mathbf{84.93}}$ & $\textcolor{Green}{26.612}$ & $\textcolor{ForestGreen}{\mathbf{26.466}}$ & $\textcolor{Green}{0.842}$ & $\textcolor{Green}{0.471}$ & $\textcolor{Green}{0.314}$ \\
 & & IntCACFG & $\textcolor{Green}{15.94}$ & $\textcolor{Green}{3.81}$ & $\textcolor{ForestGreen}{\mathbf{14.58}}$ & $\textcolor{Green}{84.92}$ & $\textcolor{ForestGreen}{\mathbf{26.621}}$ & $26.328$ & $\textcolor{Green}{0.847}$ & $\textcolor{Green}{0.476}$ & $\textcolor{ForestGreen}{\mathbf{0.306}}$ \\
\cmidrule(lr){2-12}
 & \multirow[c]{6}{*}{15}
   & CFG      & $19.36$ & $4.20$ & $22.01$ & $77.05$ & $26.610$ & $26.597$ & $0.809$ & $0.433$ & $0.419$ \\
 & & CFG++    & $19.44$ & $4.17$ & $22.05$ & $77.33$ & $26.634$ & $26.717$ & $0.786$ & $0.432$ & $0.407$ \\
 & & IntCFG   & $19.70$ & $4.24$ & $22.27$ & $76.64$ & $26.614$ & $26.580$ & $0.779$ & $0.423$ & $0.415$ \\
 \noalign{\vskip -1.5pt}
 \cmidrule[0.1pt](lr){4-12}
 \noalign{\vskip -1.5pt}
 & & CACFG    & $\textcolor{ForestGreen}{\mathbf{16.78}}$ & $\textcolor{ForestGreen}{\mathbf{3.83}}$ & $\textcolor{ForestGreen}{\mathbf{14.51}}$ & $\textcolor{Green}{85.81}$ & $\textcolor{Green}{26.744}$ & $\textcolor{Green}{26.683}$ & $\textcolor{ForestGreen}{\mathbf{0.820}}$ & $\textcolor{ForestGreen}{\mathbf{0.451}}$ & $\textcolor{Green}{0.305}$ \\
 & & CACFG++  & $\textcolor{Green}{17.11}$ & $\textcolor{Green}{3.87}$ & $\textcolor{Green}{14.59}$ & $\textcolor{ForestGreen}{\mathbf{86.34}}$ & $\textcolor{Green}{26.754}$ & $\textcolor{ForestGreen}{\mathbf{26.766}}$ & $\textcolor{Green}{0.814}$ & $\textcolor{Green}{0.448}$ & $\textcolor{Green}{0.304}$ \\
 & & IntCACFG & $\textcolor{Green}{16.87}$ & $\textcolor{Green}{3.88}$ & $\textcolor{Green}{14.52}$ & $\textcolor{Green}{86.00}$ & $\textcolor{ForestGreen}{\mathbf{26.763}}$ & $\textcolor{Green}{26.639}$ & $\textcolor{Green}{0.814}$ & $\textcolor{ForestGreen}{\mathbf{0.451}}$ & $\textcolor{ForestGreen}{\mathbf{0.297}}$ \\

\bottomrule
\end{tabular}%
}
\end{table}

\begin{table}[h]
\centering
\caption{Image quality for SDXL samples generated for ImageNet and COCO datasets across a range of guidance strengths and schedules. Best result for each dataset and guidance strength is \textcolor{ForestGreen}{\textbf{bold}}, and the best value between curvature-aware and non-curvature-aware guidance is given in \textcolor{Green}{light green}, i.e., colour denotes whether CA or non-CA methods were favoured.}
\label{tab:SDXL image quality table}
\resizebox{\textwidth}{!}{%
\small
\begin{tabular}{l l l ccccccc cc}
\toprule
\textbf{Dataset} & \textbf{Strength} & \textbf{Guidance} & \textbf{FID} $\downarrow$ & \textbf{NIQE} $\downarrow$ & \textbf{BRISQUE} $\downarrow$ & \textbf{TReS} $\uparrow$ & \textbf{CLIP} $\uparrow$ & \textbf{HPSv2} $\uparrow$ & \textbf{Density} $\uparrow$ & \textbf{Coverage} $\uparrow$ & \textbf{Saturation} $\downarrow$ \\
\midrule

\multirow[c]{24}{*}{ImageNet}
 & \multirow[c]{6}{*}{10}
   & CFG      & $29.60$ & $4.66$ & $27.42$ & $\textcolor{Green}{85.59}$ & $25.548$ & $\textcolor{ForestGreen}{\mathbf{26.815}}$ & $\textcolor{ForestGreen}{\mathbf{1.137}}$ & $\textcolor{ForestGreen}{\mathbf{0.182}}$ & $0.240$ \\
 & & CFG++    & $29.74$ & $4.63$ & $27.21$ & $\textcolor{ForestGreen}{\mathbf{86.03}}$ & $\textcolor{Green}{25.508}$ & $\textcolor{Green}{26.762}$ & $1.071$ & $\textcolor{Green}{0.174}$ & $0.231$ \\
 & & IntCFG   & $30.12$ & $4.67$ & $27.55$ & $85.91$ & $\textcolor{Green}{25.508}$ & $\textcolor{Green}{26.722}$ & $1.074$ & $0.172$ & $0.234$ \\
 \noalign{\vskip -1.5pt}
 \cmidrule[0.1pt](lr){4-12}
 \noalign{\vskip -1.5pt}
 & & CACFG    & $\textcolor{ForestGreen}{\mathbf{29.47}}$ & $\textcolor{Green}{4.64}$ & $\textcolor{Green}{27.35}$ & $85.57$ & $\textcolor{ForestGreen}{\mathbf{25.563}}$ & $26.789$ & $1.102$ & $0.177$ & $\textcolor{Green}{0.229}$ \\
 & & CACFG++  & $\textcolor{Green}{29.55}$ & $\textcolor{ForestGreen}{\mathbf{4.59}}$ & $\textcolor{ForestGreen}{\mathbf{27.04}}$ & $86.01$ & $25.491$ & $26.754$ & $\textcolor{Green}{1.074}$ & $\textcolor{Green}{0.174}$ & $\textcolor{Green}{0.227}$ \\
 & & IntCACFG & $\textcolor{Green}{29.91}$ & $\textcolor{Green}{4.64}$ & $\textcolor{Green}{27.33}$ & $\textcolor{Green}{86.01}$ & $25.488$ & $26.675$ & $\textcolor{Green}{1.078}$ & $\textcolor{Green}{0.174}$ & $\textcolor{ForestGreen}{\mathbf{0.223}}$ \\
 \cmidrule(lr){2-12}
 & \multirow[c]{6}{*}{15}
   & CFG      & $30.95$ & $4.90$ & $28.71$ & $86.48$ & $\textcolor{ForestGreen}{\mathbf{25.735}}$ & $\textcolor{Green}{27.173}$ & $1.000$ & $0.162$ & $0.280$ \\
 & & CFG++    & $31.30$ & $4.82$ & $28.32$ & $86.95$ & $\textcolor{Green}{25.710}$ & $\textcolor{ForestGreen}{\mathbf{27.184}}$ & $\textcolor{Green}{0.990}$ & $\textcolor{Green}{0.160}$ & $0.269$ \\
 & & IntCFG   & $31.76$ & $4.92$ & $28.73$ & $86.75$ & $25.633$ & $\textcolor{Green}{27.079}$ & $\textcolor{Green}{1.001}$ & $\textcolor{Green}{0.161}$ & $0.271$ \\
 \noalign{\vskip -1.5pt}
 \cmidrule[0.1pt](lr){4-12}
 \noalign{\vskip -1.5pt}
 & & CACFG    & $\textcolor{ForestGreen}{\mathbf{30.34}}$ & $\textcolor{ForestGreen}{\mathbf{4.75}}$ & $\textcolor{Green}{27.97}$ & $\textcolor{Green}{86.85}$ & $25.719$ & $27.116$ & $\textcolor{ForestGreen}{\mathbf{1.016}}$ & $\textcolor{ForestGreen}{\mathbf{0.164}}$ & $\textcolor{Green}{0.247}$ \\
 & & CACFG++  & $\textcolor{Green}{30.98}$ & $\textcolor{ForestGreen}{\mathbf{4.75}}$ & $\textcolor{Green}{27.82}$ & $\textcolor{Green}{87.14}$ & $25.707$ & $27.133$ & $0.969$ & $0.158$ & $\textcolor{Green}{0.248}$ \\
 & & IntCACFG & $\textcolor{Green}{31.43}$ & $\textcolor{Green}{4.77}$ & $\textcolor{ForestGreen}{\mathbf{27.68}}$ & $\textcolor{ForestGreen}{\mathbf{87.25}}$ & $\textcolor{Green}{25.689}$ & $27.021$ & $0.977$ & $0.160$ & $\textcolor{ForestGreen}{\mathbf{0.238}}$ \\
\cmidrule(lr){2-12}
 & \multirow[c]{6}{*}{20}
   & CFG      & $32.81$ & $5.07$ & $29.40$ & $86.34$ & $25.807$ & $\textcolor{Green}{27.249}$ & $0.947$ & $0.154$ & $0.309$ \\
 & & CFG++    & $33.16$ & $5.04$ & $29.01$ & $86.86$ & $\textcolor{ForestGreen}{\mathbf{25.850}}$ & $\textcolor{ForestGreen}{\mathbf{27.291}}$ & $0.908$ & $0.147$ & $0.298$ \\
 & & IntCFG   & $33.98$ & $5.16$ & $29.64$ & $86.65$ & $\textcolor{Green}{25.773}$ & $\textcolor{Green}{27.186}$ & $0.904$ & $0.146$ & $0.299$ \\
 \noalign{\vskip -1.5pt}
 \cmidrule[0.1pt](lr){4-12}
 \noalign{\vskip -1.5pt}
 & & CACFG    & $\textcolor{ForestGreen}{\mathbf{31.39}}$ & $\textcolor{Green}{4.82}$ & $\textcolor{Green}{28.10}$ & $\textcolor{Green}{87.37}$ & $\textcolor{Green}{25.828}$ & $27.223$ & $\textcolor{ForestGreen}{\mathbf{0.972}}$ & $\textcolor{ForestGreen}{\mathbf{0.156}}$ & $\textcolor{Green}{0.253}$ \\
 & & CACFG++  & $\textcolor{Green}{32.58}$ & $\textcolor{ForestGreen}{\mathbf{4.81}}$ & $\textcolor{Green}{27.97}$ & $\textcolor{Green}{87.69}$ & $25.817$ & $27.245$ & $\textcolor{Green}{0.911}$ & $\textcolor{Green}{0.148}$ & $\textcolor{Green}{0.255}$ \\
 & & IntCACFG & $\textcolor{Green}{32.36}$ & $\textcolor{Green}{4.82}$ & $\textcolor{ForestGreen}{\mathbf{27.67}}$ & $\textcolor{ForestGreen}{\mathbf{87.83}}$ & $25.750$ & $27.098$ & $\textcolor{Green}{0.941}$ & $\textcolor{Green}{0.153}$ & $\textcolor{ForestGreen}{\mathbf{0.244}}$ \\
\cmidrule(lr){2-12}
 & \multirow[c]{6}{*}{25}
   & CFG      & $34.78$ & $5.22$ & $30.31$ & $85.52$ & $\textcolor{ForestGreen}{\mathbf{25.910}}$ & $27.209$ & $0.835$ & $0.136$ & $0.332$ \\
 & & CFG++    & $35.90$ & $5.17$ & $29.77$ & $86.39$ & $\textcolor{Green}{25.877}$ & $27.246$ & $0.843$ & $0.137$ & $0.321$ \\
 & & IntCFG   & $36.98$ & $5.32$ & $30.43$ & $86.08$ & $25.799$ & $27.086$ & $0.780$ & $0.128$ & $0.321$ \\
 \noalign{\vskip -1.5pt}
 \cmidrule[0.1pt](lr){4-12}
 \noalign{\vskip -1.5pt}
 & & CACFG    & $\textcolor{ForestGreen}{\mathbf{32.40}}$ & $\textcolor{ForestGreen}{\mathbf{4.80}}$ & $\textcolor{Green}{27.91}$ & $\textcolor{Green}{87.51}$ & $25.894$ & $\textcolor{Green}{27.230}$ & $\textcolor{ForestGreen}{\mathbf{0.916}}$ & $\textcolor{ForestGreen}{\mathbf{0.147}}$ & $\textcolor{Green}{0.255}$ \\
 & & CACFG++  & $\textcolor{Green}{33.86}$ & $\textcolor{Green}{4.83}$ & $\textcolor{Green}{27.75}$ & $\textcolor{Green}{87.97}$ & $25.854$ & $\textcolor{ForestGreen}{\mathbf{27.276}}$ & $\textcolor{Green}{0.882}$ & $\textcolor{Green}{0.144}$ & $\textcolor{Green}{0.256}$ \\
 & & IntCACFG & $\textcolor{Green}{33.42}$ & $\textcolor{Green}{4.81}$ & $\textcolor{ForestGreen}{\mathbf{27.58}}$ & $\textcolor{ForestGreen}{\mathbf{88.25}}$ & $\textcolor{Green}{25.805}$ & $\textcolor{Green}{27.115}$ & $\textcolor{Green}{0.889}$ & $\textcolor{Green}{0.143}$ & $\textcolor{ForestGreen}{\mathbf{0.245}}$ \\

\midrule

\multirow[c]{24}{*}{COCO}
 & \multirow[c]{6}{*}{10}
   & CFG      & $30.40$ & $4.03$ & $27.60$ & $83.02$ & $26.774$ & $\textcolor{Green}{28.688}$ & $\textcolor{ForestGreen}{\mathbf{0.968}}$ & $\textcolor{ForestGreen}{\mathbf{0.181}}$ & $0.250$ \\
 & & CFG++    & $\textcolor{ForestGreen}{\mathbf{30.37}}$ & $4.01$ & $27.54$ & $83.49$ & $26.771$ & $\textcolor{ForestGreen}{\mathbf{28.722}}$ & $0.946$ & $\textcolor{Green}{0.180}$ & $0.243$ \\
 & & IntCFG   & $30.70$ & $4.04$ & $27.70$ & $83.30$ & $\textcolor{Green}{26.803}$ & $\textcolor{Green}{28.675}$ & $\textcolor{Green}{0.923}$ & $\textcolor{Green}{0.172}$ & $0.248$ \\
 \noalign{\vskip -1.5pt}
 \cmidrule[0.1pt](lr){4-12}
 \noalign{\vskip -1.5pt}
 & & CACFG    & $\textcolor{Green}{30.38}$ & $\textcolor{Green}{4.00}$ & $\textcolor{ForestGreen}{\mathbf{27.41}}$ & $\textcolor{Green}{83.09}$ & $\textcolor{ForestGreen}{\mathbf{26.816}}$ & $28.573$ & $0.894$ & $0.170$ & $\textcolor{Green}{0.237}$ \\
 & & CACFG++  & $30.47$ & $\textcolor{ForestGreen}{\mathbf{3.99}}$ & $\textcolor{ForestGreen}{\mathbf{27.41}}$ & $\textcolor{ForestGreen}{\mathbf{83.50}}$ & $\textcolor{Green}{26.786}$ & $28.658$ & $\textcolor{Green}{0.950}$ & $0.179$ & $\textcolor{Green}{0.238}$ \\
 & & IntCACFG & $\textcolor{Green}{30.39}$ & $\textcolor{Green}{4.02}$ & $\textcolor{Green}{27.48}$ & $\textcolor{Green}{83.31}$ & $26.737$ & $28.533$ & $0.903$ & $0.169$ & $\textcolor{ForestGreen}{\mathbf{0.234}}$ \\
\cmidrule(lr){2-12}
 & \multirow[c]{6}{*}{15}
   & CFG      & $31.30$ & $4.14$ & $27.89$ & $84.01$ & $26.926$ & $\textcolor{Green}{29.175}$ & $0.887$ & $\textcolor{Green}{0.168}$ & $0.289$ \\
 & & CFG++    & $31.48$ & $4.11$ & $27.62$ & $84.50$ & $26.889$ & $\textcolor{ForestGreen}{\mathbf{29.225}}$ & $0.869$ & $0.166$ & $0.281$ \\
 & & IntCFG   & $31.76$ & $4.17$ & $27.94$ & $84.21$ & $26.866$ & $\textcolor{Green}{29.158}$ & $\textcolor{Green}{0.886}$ & $\textcolor{Green}{0.168}$ & $0.286$ \\
 \noalign{\vskip -1.5pt}
 \cmidrule[0.1pt](lr){4-12}
 \noalign{\vskip -1.5pt}
 & & CACFG    & $\textcolor{ForestGreen}{\mathbf{31.05}}$ & $\textcolor{Green}{4.07}$ & $\textcolor{Green}{27.33}$ & $\textcolor{Green}{84.54}$ & $\textcolor{ForestGreen}{\mathbf{26.936}}$ & $28.929$ & $\textcolor{ForestGreen}{\mathbf{0.896}}$ & $0.166$ & $\textcolor{Green}{0.252}$ \\
 & & CACFG++  & $\textcolor{Green}{31.36}$ & $\textcolor{ForestGreen}{\mathbf{4.05}}$ & $\textcolor{ForestGreen}{\mathbf{27.31}}$ & $\textcolor{ForestGreen}{\mathbf{84.77}}$ & $\textcolor{Green}{26.925}$ & $29.069$ & $\textcolor{Green}{0.883}$ & $\textcolor{ForestGreen}{\mathbf{0.170}}$ & $\textcolor{Green}{0.257}$ \\
 & & IntCACFG & $\textcolor{Green}{31.33}$ & $\textcolor{Green}{4.06}$ & $\textcolor{Green}{27.48}$ & $\textcolor{Green}{84.71}$ & $\textcolor{Green}{26.885}$ & $28.933$ & $0.841$ & $0.161$ & $\textcolor{ForestGreen}{\mathbf{0.248}}$ \\
 \cmidrule(lr){2-12}
 & \multirow[c]{6}{*}{20}
   & CFG      & $32.27$ & $4.23$ & $28.24$ & $83.94$ & $26.911$ & $\textcolor{Green}{29.284}$ & $0.801$ & $0.155$ & $0.323$ \\
 & & CFG++    & $\textcolor{Green}{32.37}$ & $4.21$ & $27.84$ & $84.58$ & $26.928$ & $\textcolor{ForestGreen}{\mathbf{29.393}}$ & $0.814$ & $0.156$ & $0.312$ \\
 & & IntCFG   & $32.70$ & $4.25$ & $28.43$ & $84.04$ & $26.935$ & $\textcolor{Green}{29.280}$ & $0.767$ & $0.148$ & $0.319$ \\
 \noalign{\vskip -1.5pt}
 \cmidrule[0.1pt](lr){4-12}
 \noalign{\vskip -1.5pt}
 & & CACFG    & $\textcolor{ForestGreen}{\mathbf{32.00}}$ & $\textcolor{ForestGreen}{\mathbf{4.08}}$ & $\textcolor{Green}{27.36}$ & $\textcolor{Green}{85.32}$ & $\textcolor{Green}{26.941}$ & $29.047$ & $\textcolor{ForestGreen}{\mathbf{0.836}}$ & $\textcolor{ForestGreen}{\mathbf{0.159}}$ & $\textcolor{Green}{0.260}$ \\
 & & CACFG++  & $32.40$ & $\textcolor{ForestGreen}{\mathbf{4.08}}$ & $\textcolor{ForestGreen}{\mathbf{27.25}}$ & $\textcolor{ForestGreen}{\mathbf{85.55}}$ & $\textcolor{Green}{26.955}$ & $29.219$ & $\textcolor{Green}{0.831}$ & $\textcolor{Green}{0.158}$ & $\textcolor{Green}{0.265}$ \\
 & & IntCACFG & $\textcolor{Green}{32.19}$ & $\textcolor{Green}{4.09}$ & $\textcolor{Green}{27.28}$ & $\textcolor{Green}{85.42}$ & $\textcolor{ForestGreen}{\mathbf{26.959}}$ & $29.063$ & $\textcolor{Green}{0.799}$ & $\textcolor{Green}{0.153}$ & $\textcolor{ForestGreen}{\mathbf{0.254}}$ \\
\cmidrule(lr){2-12}
 & \multirow[c]{6}{*}{25}
   & CFG      & $33.35$ & $4.33$ & $28.62$ & $83.13$ & $26.897$ & $\textcolor{Green}{29.313}$ & $0.736$ & $0.140$ & $0.353$ \\
 & & CFG++    & $33.93$ & $4.29$ & $28.20$ & $84.12$ & $26.941$ & $\textcolor{ForestGreen}{\mathbf{29.429}}$ & $0.783$ & $0.150$ & $0.341$ \\
 & & IntCFG   & $34.22$ & $4.36$ & $29.02$ & $83.60$ & $26.957$ & $\textcolor{Green}{29.269}$ & $0.738$ & $0.140$ & $0.350$ \\
 \noalign{\vskip -1.5pt}
 \cmidrule[0.1pt](lr){4-12}
 \noalign{\vskip -1.5pt}
 & & CACFG    & $\textcolor{ForestGreen}{\mathbf{32.13}}$ & $\textcolor{ForestGreen}{\mathbf{4.09}}$ & $\textcolor{Green}{27.40}$ & $\textcolor{Green}{85.64}$ & $\textcolor{ForestGreen}{\mathbf{26.989}}$ & $29.099$ & $\textcolor{Green}{0.788}$ & $\textcolor{Green}{0.151}$ & $\textcolor{Green}{0.262}$ \\
 & & CACFG++  & $\textcolor{Green}{33.23}$ & $\textcolor{Green}{4.10}$ & $\textcolor{Green}{27.27}$ & $\textcolor{ForestGreen}{\mathbf{86.00}}$ & $\textcolor{Green}{26.985}$ & $29.251$ & $\textcolor{ForestGreen}{\mathbf{0.813}}$ & $\textcolor{ForestGreen}{\mathbf{0.152}}$ & $\textcolor{Green}{0.266}$ \\
 & & IntCACFG & $\textcolor{Green}{33.13}$ & $\textcolor{Green}{4.10}$ & $\textcolor{ForestGreen}{\mathbf{27.21}}$ & $\textcolor{Green}{85.87}$ & $\textcolor{Green}{26.975}$ & $29.102$ & $\textcolor{Green}{0.767}$ & $\textcolor{Green}{0.147}$ & $\textcolor{ForestGreen}{\mathbf{0.256}}$ \\

\bottomrule
\end{tabular}%
}
\end{table}

\clearpage
\subsection{Comparison with adaptive projected guidance (APG)}
\emph{Adaptive Projected Guidance (APG)} \citep{APG_Sadat2025} decomposes the CFG update direction, $\hat{\boldsymbol{x}}_0(t,\boldsymbol{x}_t,\xi)-\hat{\boldsymbol{x}}_0(t,\boldsymbol{x}_t,\Omega_Y)$, into components parallel and orthogonal to the conditional prediction. They show that manipulating the parallel and orthogonal gradient components allows the saturation and image quality to be controlled more effectively. In this section we compare APG against CFG and CACFG; however, unlike CFG++ \citep{CFG++_Chung2025} and IntCFG \citep{IntCFG_Kynkaanniemi2024}, there is no form of $\omega(t)$ that converts the CFG-like guidance in Section~\ref{sec:CFG Optimal Control} to APG. This prevents us from comparing methods at equivalent path costs as we do in the main paper. To address this, we instead compare methods across a range of strengths. We report image quality of samples generated using CFG \citep{ClassifierFreeGuidance_Ho2022}, CACFG, and APG \citep{APG_Sadat2025} across a range of guidance strengths for the ImageNet \citep{ImageNetBenchmark_Russakovsky2015} and COCO \citep{COCO_Lin2014} datasets, and SD1.5 \citep{StableDiffusion_Rombach2022} and SDXL \citep{SDXL_Podell2024} diffusion models.

Counting only best-in-category and second-best-in-category results, we find CACFG is equal to or better than all other guidance methods; however, this overlooks certain patterns of performance (see Tables~\ref{tab:SD1.5 image quality table with APG} and \ref{tab:SDXL image quality table with APG}). We find APG has lower saturation across all regimes, which is reasonable given that it was designed specifically to address oversaturation. By comparison, CACFG's strengths are more nuanced: it achieves superior no-reference quality (NIQE, BRISQUE) across models and datasets. However, on SD1.5 it most consistently attains the best or second-best Density/Coverage scores, whereas on SDXL, CACFG's advantage shifts to TReS, with FID differences between all three methods narrowing to within hundredths. This suggests CACFG's benefit is not tied to a fixed metric, but rather it maintains a strong performance across metrics as model scale changes. Taken together, no method is necessarily favoured, but CACFG is more theoretically grounded due to its optimal control formulation, and can be applied across guidance schedules, while APG requires tuning to a specific diffusion model.

\begin{table}[h]
\centering
\caption{Image quality for SD1.5 \citep{StableDiffusion_Rombach2022} samples generated for ImageNet \citep{ImageNetBenchmark_Russakovsky2015} and COCO \citep{COCO_Lin2014} datasets using CFG \citep{ClassifierFreeGuidance_Ho2022}, CACFG, and APG \citep{APG_Sadat2025}. Best result for each dataset are given in \textcolor{ForestGreen}{\textbf{bold green}} and second best in \textcolor{Green}{light green}.}
\label{tab:SD1.5 image quality table with APG}
\resizebox{\textwidth}{!}{%
\small
\begin{tabular}{l l l ccccccc cc}
\toprule
\textbf{Dataset} & \textbf{Guidance} & \textbf{Strength} & \textbf{FID} $\downarrow$ & \textbf{NIQE} $\downarrow$ & \textbf{BRISQUE} $\downarrow$ & \textbf{TReS} $\uparrow$ & \textbf{CLIP} $\uparrow$ & \textbf{HPSv2} $\uparrow$ & \textbf{Density} $\uparrow$ & \textbf{Coverage} $\uparrow$ & \textbf{Saturation} $\downarrow$ \\
\midrule
\multirow[c]{12}{*}{ImageNet}
 & \multirow[c]{4}{*}{CFG}
   & 5      & $\textcolor{ForestGreen}{\mathbf{13.67}}$ & $\textcolor{Green}{4.06}$ & $14.98$ & $83.24$ & $24.851$ & $25.076$ & $1.128$ & $\textcolor{Green}{0.453}$ & $0.299$ \\
 & & 7.5    & $15.12$ & $4.27$ & $15.47$ & $82.80$ & $25.251$ & $25.680$ & $\textcolor{Green}{1.139}$ & $0.443$ & $0.338$ \\
 & & 10     & $16.22$ & $4.44$ & $17.13$ & $81.41$ & $25.391$ & $25.967$ & $1.084$ & $0.424$ & $0.371$ \\
 & & 15     & $17.78$ & $4.75$ & $21.95$ & $78.14$ & $\textcolor{Green}{25.491}$ & $\textcolor{ForestGreen}{\mathbf{26.214}}$ & $0.994$ & $0.391$ & $0.424$ \\
\cmidrule(lr){2-12}
 & \multirow[c]{4}{*}{APG}
   & 5      & $14.07$ & $4.19$ & $17.07$ & $84.85$ & $24.240$ & $23.826$ & $1.009$ & $0.427$ & $\textcolor{ForestGreen}{\mathbf{0.230}}$ \\
 & & 7.5    & $14.06$ & $4.51$ & $17.04$ & $85.55$ & $24.806$ & $24.567$ & $1.091$ & $0.443$ & $\textcolor{Green}{0.233}$ \\
 & & 10     & $14.77$ & $4.72$ & $16.94$ & $\textcolor{Green}{86.05}$ & $25.105$ & $24.965$ & $1.123$ & $0.443$ & $0.237$ \\
 & & 15     & $16.12$ & $4.82$ & $15.87$ & $\textcolor{ForestGreen}{\mathbf{86.17}}$ & $25.396$ & $25.481$ & $1.132$ & $0.436$ & $0.252$ \\
\cmidrule(lr){2-12}
 & \multirow[c]{4}{*}{CACFG}
   & 5      & $\textcolor{Green}{13.70}$ & $\textcolor{ForestGreen}{\mathbf{4.05}}$ & $14.83$ & $83.47$ & $24.849$ & $25.070$ & $1.124$ & $\textcolor{ForestGreen}{\mathbf{0.454}}$ & $0.295$ \\
 & & 7.5    & $14.97$ & $4.22$ & $\textcolor{ForestGreen}{\mathbf{14.59}}$ & $83.80$ & $25.243$ & $25.653$ & $\textcolor{ForestGreen}{\mathbf{1.140}}$ & $0.445$ & $0.316$ \\
 & & 10     & $15.73$ & $4.32$ & $\textcolor{Green}{14.66}$ & $84.10$ & $25.395$ & $25.933$ & $1.119$ & $0.433$ & $0.321$ \\
 & & 15     & $16.52$ & $4.42$ & $14.80$ & $84.60$ & $\textcolor{ForestGreen}{\mathbf{25.540}}$ & $\textcolor{Green}{26.175}$ & $1.063$ & $0.413$ & $0.317$ \\

\midrule

\multirow[c]{12}{*}{COCO}
 & \multirow[c]{4}{*}{CFG}
   & 5      & $12.69$ & $\textcolor{Green}{3.62}$ & $14.82$ & $83.84$ & $26.245$ & $25.310$ & $0.893$ & $\textcolor{Green}{0.520}$ & $0.296$ \\
 & & 7.5    & $15.44$ & $3.80$ & $16.00$ & $82.96$ & $26.495$ & $26.011$ & $0.885$ & $0.496$ & $0.335$ \\
 & & 10     & $17.09$ & $3.93$ & $17.88$ & $81.17$ & $26.567$ & $26.335$ & $0.850$ & $0.465$ & $0.368$ \\
 & & 15     & $19.36$ & $4.20$ & $22.01$ & $77.05$ & $26.610$ & $\textcolor{Green}{26.597}$ & $0.809$ & $0.433$ & $0.419$ \\
\cmidrule(lr){2-12}
 & \multirow[c]{4}{*}{APG}
   & 5      & $\textcolor{ForestGreen}{\mathbf{11.56}}$ & $3.88$ & $16.02$ & $85.88$ & $25.755$ & $23.653$ & $0.820$ & $0.504$ & $\textcolor{ForestGreen}{\mathbf{0.222}}$ \\
 & & 7.5    & $\textcolor{Green}{12.43}$ & $4.12$ & $15.98$ & $86.63$ & $26.210$ & $24.528$ & $0.859$ & $0.503$ & $\textcolor{Green}{0.224}$ \\
 & & 10     & $13.68$ & $4.29$ & $15.95$ & $\textcolor{Green}{87.01}$ & $26.451$ & $25.030$ & $0.870$ & $0.495$ & $0.226$ \\
 & & 15     & $15.47$ & $4.32$ & $15.13$ & $\textcolor{ForestGreen}{\mathbf{87.35}}$ & $\textcolor{Green}{26.648}$ & $25.649$ & $0.838$ & $0.467$ & $0.241$ \\
\cmidrule(lr){2-12}
 & \multirow[c]{4}{*}{CACFG}
   & 5      & $12.67$ & $\textcolor{ForestGreen}{\mathbf{3.61}}$ & $\textcolor{Green}{14.57}$ & $84.11$ & $26.247$ & $25.303$ & $\textcolor{ForestGreen}{\mathbf{0.904}}$ & $\textcolor{ForestGreen}{\mathbf{0.523}}$ & $0.292$ \\
 & & 7.5    & $14.73$ & $3.73$ & $14.63$ & $84.50$ & $26.495$ & $26.010$ & $\textcolor{Green}{0.902}$ & $0.497$ & $0.310$ \\
 & & 10     & $15.74$ & $3.78$ & $14.61$ & $84.90$ & $26.614$ & $26.349$ & $0.872$ & $0.480$ & $0.313$ \\
 & & 15     & $16.78$ & $3.83$ & $\textcolor{ForestGreen}{\mathbf{14.51}}$ & $85.81$ & $\textcolor{ForestGreen}{\mathbf{26.744}}$ & $\textcolor{ForestGreen}{\mathbf{26.683}}$ & $0.820$ & $0.451$ & $0.305$ \\

\bottomrule
\end{tabular}%
}
\end{table}

\begin{table}[h]
\centering
\caption{Image quality for SDXL \citep{SDXL_Podell2024} samples generated for ImageNet \citep{ImageNetBenchmark_Russakovsky2015} and COCO \citep{COCO_Lin2014} datasets using CFG \citep{ClassifierFreeGuidance_Ho2022}, CACFG, and APG \citep{APG_Sadat2025}. Best result for each dataset are given in \textcolor{ForestGreen}{\textbf{bold green}} and second best in \textcolor{Green}{light green}.}
\label{tab:SDXL image quality table with APG}
\resizebox{\textwidth}{!}{%
\small
\begin{tabular}{l l l ccccccc cc}
\toprule
\textbf{Dataset} & \textbf{Guidance} & \textbf{Strength} & \textbf{FID} $\downarrow$ & \textbf{NIQE} $\downarrow$ & \textbf{BRISQUE} $\downarrow$ & \textbf{TReS} $\uparrow$ & \textbf{CLIP} $\uparrow$ & \textbf{HPSv2} $\uparrow$ & \textbf{Density} $\uparrow$ & \textbf{Coverage} $\uparrow$ & \textbf{Saturation} $\downarrow$ \\
\midrule
\multirow[c]{12}{*}{ImageNet}
 & \multirow[c]{4}{*}{CFG}
   & 10     & $29.60$ & $4.66$ & $27.42$ & $85.59$ & $25.548$ & $26.815$ & $\textcolor{ForestGreen}{\mathbf{1.137}}$ & $\textcolor{ForestGreen}{\mathbf{0.182}}$ & $0.240$ \\
 & & 15     & $30.95$ & $4.90$ & $28.71$ & $86.48$ & $25.735$ & $27.173$ & $1.000$ & $0.162$ & $0.280$ \\
 & & 20     & $32.81$ & $5.07$ & $29.40$ & $86.34$ & $25.807$ & $\textcolor{ForestGreen}{\mathbf{27.249}}$ & $0.947$ & $0.154$ & $0.309$ \\
 & & 25     & $34.78$ & $5.22$ & $30.31$ & $85.52$ & $\textcolor{ForestGreen}{\mathbf{25.910}}$ & $27.209$ & $0.835$ & $0.136$ & $0.332$ \\
\cmidrule(lr){2-12}
 & \multirow[c]{4}{*}{APG}
   & 10     & $29.70$ & $\textcolor{ForestGreen}{\mathbf{4.57}}$ & $\textcolor{ForestGreen}{\mathbf{26.92}}$ & $81.76$ & $25.256$ & $25.528$ & $\textcolor{Green}{1.107}$ & $\textcolor{Green}{0.178}$ & $\textcolor{ForestGreen}{\mathbf{0.160}}$ \\
 & & 15     & $\textcolor{ForestGreen}{\mathbf{29.46}}$ & $4.70$ & $27.36$ & $83.78$ & $25.612$ & $26.136$ & $1.058$ & $0.172$ & $\textcolor{Green}{0.168}$ \\
 & & 20     & $30.51$ & $4.76$ & $27.68$ & $85.41$ & $25.706$ & $26.521$ & $1.017$ & $0.165$ & $0.179$ \\
 & & 25     & $31.19$ & $4.80$ & $27.86$ & $86.37$ & $25.818$ & $26.756$ & $1.009$ & $0.165$ & $0.193$ \\
\cmidrule(lr){2-12}
 & \multirow[c]{4}{*}{CACFG}
   & 10     & $\textcolor{Green}{29.47}$ & $\textcolor{Green}{4.64}$ & $\textcolor{Green}{27.35}$ & $85.57$ & $25.563$ & $26.789$ & $1.102$ & $0.177$ & $0.229$ \\
 & & 15     & $30.34$ & $4.75$ & $27.97$ & $86.85$ & $25.719$ & $27.116$ & $1.016$ & $0.164$ & $0.247$ \\
 & & 20     & $31.39$ & $4.82$ & $28.10$ & $\textcolor{Green}{87.37}$ & $25.828$ & $27.223$ & $0.972$ & $0.156$ & $0.253$ \\
 & & 25     & $32.40$ & $4.80$ & $27.91$ & $\textcolor{ForestGreen}{\mathbf{87.51}}$ & $\textcolor{Green}{25.894}$ & $\textcolor{Green}{27.230}$ & $0.916$ & $0.147$ & $0.255$ \\

\midrule

\multirow[c]{12}{*}{COCO}
 & \multirow[c]{4}{*}{CFG}
   & 10     & $\textcolor{Green}{30.40}$ & $\textcolor{Green}{4.03}$ & $27.60$ & $83.02$ & $26.774$ & $28.688$ & $\textcolor{ForestGreen}{\mathbf{0.968}}$ & $\textcolor{ForestGreen}{\mathbf{0.181}}$ & $0.250$ \\
 & & 15     & $31.30$ & $4.14$ & $27.89$ & $84.01$ & $26.926$ & $29.175$ & $0.887$ & $0.168$ & $0.289$ \\
 & & 20     & $32.27$ & $4.23$ & $28.24$ & $83.94$ & $26.911$ & $\textcolor{Green}{29.284}$ & $0.801$ & $0.155$ & $0.323$ \\
 & & 25     & $33.35$ & $4.33$ & $28.62$ & $83.13$ & $26.897$ & $\textcolor{ForestGreen}{\mathbf{29.313}}$ & $0.736$ & $0.140$ & $0.353$ \\
\cmidrule(lr){2-12}
 & \multirow[c]{4}{*}{APG}
   & 10     & $\textcolor{Green}{30.40}$ & $4.15$ & $28.63$ & $78.51$ & $26.659$ & $26.580$ & $0.892$ & $0.170$ & $\textcolor{ForestGreen}{\mathbf{0.166}}$ \\
 & & 15     & $31.08$ & $4.19$ & $28.43$ & $80.84$ & $26.880$ & $27.391$ & $0.870$ & $0.163$ & $\textcolor{Green}{0.175}$ \\
 & & 20     & $31.01$ & $4.15$ & $27.94$ & $82.62$ & $26.984$ & $27.950$ & $\textcolor{Green}{0.933}$ & $\textcolor{Green}{0.172}$ & $0.187$ \\
 & & 25     & $31.39$ & $4.13$ & $27.63$ & $83.94$ & $\textcolor{ForestGreen}{\mathbf{26.990}}$ & $28.362$ & $0.855$ & $0.161$ & $0.202$ \\
\cmidrule(lr){2-12}
 & \multirow[c]{4}{*}{CACFG}
   & 10     & $\textcolor{ForestGreen}{\mathbf{30.38}}$ & $\textcolor{ForestGreen}{\mathbf{4.00}}$ & $27.41$ & $83.09$ & $26.816$ & $28.573$ & $0.894$ & $0.170$ & $0.237$ \\
 & & 15     & $31.05$ & $4.07$ & $\textcolor{ForestGreen}{\mathbf{27.33}}$ & $84.54$ & $26.936$ & $28.929$ & $0.896$ & $0.166$ & $0.252$ \\
 & & 20     & $32.00$ & $4.08$ & $\textcolor{Green}{27.36}$ & $\textcolor{Green}{85.32}$ & $26.941$ & $29.047$ & $0.836$ & $0.159$ & $0.260$ \\
 & & 25     & $32.13$ & $4.09$ & $27.40$ & $\textcolor{ForestGreen}{\mathbf{85.64}}$ & $\textcolor{Green}{26.989}$ & $29.099$ & $0.788$ & $0.151$ & $0.262$ \\

\bottomrule
\end{tabular}%
}
\end{table}

\clearpage
\section{HJB Derivation}
In this section we present the derivation of the HJB formulation of CFG-like diffusion guidance, and the modifications necessary to re-derive classic CFG.
 
\subsection{Necessary background results}
 
\textbf{Canonical image diffusion:} We first fix the notation for the forward diffusion process used throughout the remainder of this appendix, following \citet{NatADiff_Collins2026}.
 
\begin{defn}[Canonical Image Diffusion] \label{Def:Canonical Diffusion}
    Let $\boldsymbol{x}_t \in \mathbb{R}^m$ for $t \in [0, T]$, let $f : \mathbb{R} \rightarrow \mathbb{R}$ and $g : \mathbb{R} \rightarrow \mathbb{R}$ be continuous functions of $t$, and let $\mathop{\cdot} d\boldsymbol{B}_t$ denote an It\^{o} integral with respect to the standard $m$-dimensional Brownian motion process $\boldsymbol{B}_t$. We define the \emph{canonical image diffusion} to be the forward process
    \begin{equation}
        d\boldsymbol{x}_t = f(t) \boldsymbol{x}_t \, dt + g(t) I^{(m \times m)} d\boldsymbol{B}_t, \quad \forall \ t \in [0, T]. \label{eq:canonical diffusion}
    \end{equation}
    Let $\boldsymbol{X}_0$ be an $\mathbb{R}^m$-valued random variable representing the initial (image) state, and let $Y$ denote auxiliary conditioning information available for $\boldsymbol{X}_0$ (e.g. a class label), taking values in a countable set $\Omega_Y$ with associated sigma-algebra $\mathcal{Y}$ (we take $\mathcal{Y}$ to contain every singleton $\{y\} \ \forall \ y \in \Omega_Y$). We take $\boldsymbol{X}_0$ and $Y$ to be jointly distributed, with joint law $P(\boldsymbol{X}_0, Y)$, a probability measure on $\left( \mathbb{R}^m \times \Omega_Y, \ \mathcal{B}(\mathbb{R}^m) \otimes \mathcal{Y} \right)$; the marginal laws $P(\boldsymbol{X}_0)$ and $P(Y \in \xi)$ for $\xi \in \mathcal{Y}$, follow in the usual way. For $\xi \in \mathcal{Y}$ with $P(Y \in \xi) > 0$, we write $P(\boldsymbol{X}_0 \mid \xi)$ for the conditional law of $\boldsymbol{X}_0$ given the event $\{ Y \in \xi \}$, i.e.
    \begin{equation}
        P \left( \boldsymbol{X}_0 \middle| \ \xi \right) \triangleq \frac{P(\boldsymbol{X}_0, \ Y \in \xi)}{P(Y \in \xi)}, \label{eq:canonical conditional law}
    \end{equation}
    a probability measure on $\left( \mathbb{R}^m, \mathcal{B}(\mathbb{R}^m) \right)$. Two special cases of $\xi$ recur throughout: taking $\xi = \Omega_Y$ recovers the \textbf{unconditional} case, $P(\boldsymbol{X}_0 \mid \Omega_Y) = P(\boldsymbol{X}_0)$, since $\{ Y \in \Omega_Y \}$ is the sure event; and for a specific label $y \in \Omega_Y$ with $P(Y = y) > 0$, taking $\xi = \{ y \}$ gives the \textbf{class-conditional} law $P(\boldsymbol{X}_0 \mid \{ y \})$, which we abbreviate $P(\boldsymbol{X}_0 \mid y)$. For a measurable function $h : \mathbb{R}^m \times \mathbb{R}^m \rightarrow \mathbb{R}$ and for $\xi \in \mathcal{Y}$ with $P(Y \in \xi) > 0$, we understand $\int h(\boldsymbol{x}_t, \boldsymbol{x}_0) \, dP(\boldsymbol{x}_0 \mid \xi)$ to denote integration of $h(\boldsymbol{x}_t, \cdot)$ against $P(\boldsymbol{X}_0 \mid \xi)$, without requiring this measure to admit a density (this accommodates empirical measures over a finite dataset).
\end{defn}
 
\textbf{Conditional forward distribution:} The following theorem describes the conditional forward distribution of the canonical image diffusion when conditioned on the state at an arbitrary time $\tau$. This is a well-known result, and we restate the formulation by \citet{NatADiff_Collins2026}; we omit the proof here and refer the reader there (the derivation solves (\ref{eq:canonical diffusion}) via its Stratonovich representation and converts back to the equivalent It\^{o} form).
 
\begin{theorem}[Conditional Forward Distribution for Denoising Diffusion, \citealp{NatADiff_Collins2026}] \label{Thm:Conditional Forward}
    Under the canonical image diffusion of Definition \ref{Def:Canonical Diffusion}, define for $0 \le \tau \le t \le T$
    \begin{equation}
        \alpha(\tau, t) \triangleq \exp\left( \int_\tau^t f(u) \, du \right), \qquad \beta(\tau, t)^2 \triangleq \alpha(\tau, t)^2 \int_\tau^t \frac{g(u)^2}{\alpha(\tau, u)^2} \, du. \label{eq:canonical alpha beta}
    \end{equation}
    Then the forward process in (\ref{eq:canonical diffusion}) admits the conditional forward distribution
    \begin{equation}
        \boldsymbol{X}_t \mid \boldsymbol{X}_\tau = \boldsymbol{x}_\tau \sim \mathcal{N} \left( \alpha(\tau, t) \boldsymbol{x}_\tau, \ \beta(\tau, t)^2 I^{(m \times m)} \right) \quad \forall \ t > \tau. \label{eq:canonical conditional dist}
    \end{equation}
    With a slight abuse of notation we write $\alpha_t \triangleq \alpha(0, t)$ and $\beta_t \triangleq \beta(0, t)$, so that
    \begin{equation}
        \boldsymbol{X}_t \mid \boldsymbol{X}_0 = \boldsymbol{x}_0 \sim \mathcal{N} \left( \alpha_t \boldsymbol{x}_0, \ \beta_t^2 I^{(m \times m)} \right). \label{eq:canonical marginal dist}
    \end{equation}
\end{theorem}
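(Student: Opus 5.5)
The SDE in (\ref{eq:canonical diffusion}) is linear with a scalar, time-dependent drift coefficient and additive noise, so I will solve it explicitly with an integrating factor. Fix $\tau$ and set $\boldsymbol{y}_t \triangleq \alpha(\tau,t)^{-1}\boldsymbol{x}_t$ with $\alpha(\tau,t) = \exp(\int_\tau^t f(u)\,du)$. The factor $\alpha(\tau,t)^{-1}$ is deterministic and $C^1$ in $t$, because $f$ is continuous. Itô's product rule therefore produces no second-order correction, and gives
\begin{equation*}
d\boldsymbol{y}_t = -f(t)\alpha(\tau,t)^{-1}\boldsymbol{x}_t\,dt + \alpha(\tau,t)^{-1}\bigl(f(t)\boldsymbol{x}_t\,dt + g(t)\,d\boldsymbol{B}_t\bigr) = \alpha(\tau,t)^{-1}g(t)\,d\boldsymbol{B}_t .
\end{equation*}
The drift cancels. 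This is why Itô and Stratonovich forms coincide here, which explains the remark that one may pass through the Stratonovich representation. Integrating from $\tau$ to $t$ and multiplying back by $\alpha(\tau,t)$ gives
\begin{equation*}
\boldsymbol{x}_t = \alpha(\tau,t)\boldsymbol{x}_\tau + \alpha(\tau,t)\int_\tau^t \frac{g(u)}{\alpha(\tau,u)}\,d\boldsymbol{B}_u .
\end{equation*}

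Next I identify the law of the stochastic integral. Its integrand is deterministic and continuous on $[\tau,t]$. It is the $L^2$ limit of Riemann--Itô sums $\sum_k c_k(\boldsymbol{B}_{u_{k+1}}-\boldsymbol{B}_{u_k})$ with deterministic $c_k$. Each such sum is a centred Gaussian vector with covariance $\sum_k c_k^2 (u_{k+1}-u_k) I$, and $L^2$ limits of centred Gaussians are centred Gaussian. Passing to the limit, the integral is $\mathcal{N}(\boldsymbol{0}, \int_\tau^t g(u)^2\alpha(\tau,u)^{-2}du\; I)$; equivalently, the Itô isometry gives the covariance directly, applied coordinatewise using independence of the Brownian components. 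Multiplying by $\alpha(\tau,t)^2$ gives exactly $\beta(\tau,t)^2 I^{(m\times m)}$ as defined in (\ref{eq:canonical alpha beta}).

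To obtain the conditional statement, I use that the increments $\{\boldsymbol{B}_u-\boldsymbol{B}_\tau\}_{u\ge\tau}$ are independent of $\mathcal{F}_\tau$. The state $\boldsymbol{x}_\tau$ is $\mathcal{F}_\tau$-measurable, where $\boldsymbol{X}_0$ is taken independent of $\boldsymbol{B}$ as usual. Conditioning on $\boldsymbol{X}_\tau=\boldsymbol{x}_\tau$ therefore fixes the first term and leaves the Gaussian term unchanged, which yields (\ref{eq:canonical conditional dist}). Setting $\tau=0$ gives (\ref{eq:canonical marginal dist}). This also holds when we condition on $\xi$, since $Y$ enters only through $\boldsymbol{X}_0$.

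The main obstacle is a technical one: establishing that the Gaussian limit in the second step is rigorous and that conditioning is legitimate. I will handle the limit by citing the standard fact that Wiener integrals of deterministic $L^2$ integrands are Gaussian. For conditioning, I will invoke the independence lemma (freezing the $\mathcal{F}_\tau$-measurable variable), rather than working with regular conditional distributions by hand.
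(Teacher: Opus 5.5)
Your proof is correct and follows essentially the derivation the paper cites. You solve the linear SDE explicitly with the deterministic integrating factor $\alpha(\tau,t)^{-1}$, identify the remaining Wiener integral with deterministic integrand as Gaussian with covariance $\beta(\tau,t)^2 I^{(m\times m)}$, and condition using independence of the post-$\tau$ Brownian increments from $\mathcal{F}_\tau$. The one difference is that you apply It\^{o}'s product rule directly rather than passing through the Stratonovich form; this is harmless here, but the reason the two forms coincide is that the noise coefficient $g(t)$ does not depend on the state, not that the drift cancels in $d\boldsymbol{y}_t$.
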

 
\textbf{Score-model link:} The following Score-Model Link theorem is based on \citeauthor{Karras2022}'s \citeyear{Karras2022} argument, extended by \citet{NatADiff_Collins2026} by conditioning on the auxiliary random variable $Y$ of Definition \ref{Def:Canonical Diffusion}. For additional treatments see \citep{Karras2022, Hyvarinen2005, Vincent2011}. We restate it here under the canonical image diffusion of Definition \ref{Def:Canonical Diffusion}, replacing the empirical-density formulation of the original statement with integration against the measure $P(\boldsymbol{X}_0 \mid \xi)$ directly; this is equivalent to the version in \citep{NatADiff_Collins2026}, and we omit the proof, referring the reader there.
\begin{theorem}[Score-Model Link, \citealp{NatADiff_Collins2026, Karras2022}] \label{Thm:Score-Model Link}
    Let $\boldsymbol{x}_t$ evolve according to the canonical image diffusion of Definition \ref{Def:Canonical Diffusion}, with initial law $P(\boldsymbol{X}_0 \mid \xi)$ for some $\xi \in \mathcal{Y}$.
 
    Define \begin{equation}
        \hat{\boldsymbol{x}}_0(t, \boldsymbol{x}_t, \xi; \theta) = \frac{\boldsymbol{x}_t - \beta_t \boldsymbol{\epsilon}_{\theta}(t, \boldsymbol{x}_t, \xi)}{\alpha_t},
    \end{equation}
    where $\alpha_t$ and $\beta_t$ are as in Theorem \ref{Thm:Conditional Forward}, and $\boldsymbol{\epsilon}_{\theta} : \mathbb{R}^m \times \mathbb{R} \times \mathcal{Y} \rightarrow \mathbb{R}^m$ is a model parameterised by $\theta \in \Theta$ with sufficient capacity such that the Universal Approximation Theorem \citep{Cybenko1989, Hornik1991} holds for all $\xi \in \mathcal{Y}$. Then if $\beta_t^2 > 0 \ \forall \ t \in (0, T]$, the following statements are true:
    \begin{gather}
        \intertext{1.}
        \nabla_{\boldsymbol{x}_t} \log(p(\boldsymbol{x}_t \mid \xi)) = - \frac{1}{\beta_t} \boldsymbol{\epsilon}_{\theta^\star}(t, \boldsymbol{x}_t, \xi) \quad \forall \ t \in (0, T], \ \boldsymbol{x}_t \in \mathbb{R}^m, \ \xi \in \mathcal{Y}; \numberthis \label{eq:score-model equation 1}
        \intertext{2.}
        \boldsymbol{\epsilon}_{\theta^{\star}}(t, \boldsymbol{x}_t, \xi) = \frac{\displaystyle\int_{\mathbb{R}^m} \frac{1}{\beta_t} \bigl(\boldsymbol{x}_t - \alpha_t \boldsymbol{x}_0 \bigr) p(\boldsymbol{x}_t | \boldsymbol{x}_0) \, dP(\boldsymbol{x}_0 \mid \xi)}{p(\boldsymbol{x}_t \mid \xi)} \quad \forall \ t \in (0, T], \numberthis \label{eq:score-model equation 2} \\
        \intertext{$\boldsymbol{x}_t \in \mathbb{R}^m, \ \xi \in \mathcal{Y}$; where}
        \theta^{\star} \triangleq \underset{\theta}{\textup{arg} \min} \ \mathbb{E}_{\boldsymbol{x}_0 \sim P(\boldsymbol{x}_0 \mid \xi),\ \boldsymbol{x}_t \sim p(\boldsymbol{x}_t \mid \boldsymbol{x}_0),\ t \sim p(t)} \left[ \left\lVert \boldsymbol{x}_0 - \hat{\boldsymbol{x}}_0 \right\rVert_2^2 \right], \numberthis \label{eq:score-model optimisation} \\
        \intertext{and $p(t) \triangleq \frac{1}{T}$.}
    \end{gather}
\end{theorem}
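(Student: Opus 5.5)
The proof splits into two parts. First we identify the minimiser of the denoising objective (\ref{eq:score-model optimisation}) pointwise. Then we compute the score of the Gaussian mixture $p(\boldsymbol{x}_t \mid \xi)$ directly and match it to that minimiser.

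\textbf{Identifying the minimiser.} Rewrite the objective in noise-prediction form. Since $\hat{\boldsymbol{x}}_0 = (\boldsymbol{x}_t - \beta_t \boldsymbol{\epsilon}_\theta)/\alpha_t$, we have
\begin{equation*}
    \lVert \boldsymbol{x}_0 - \hat{\boldsymbol{x}}_0 \rVert_2^2 = \frac{\beta_t^2}{\alpha_t^2} \left\lVert \boldsymbol{\epsilon}_\theta(t, \boldsymbol{x}_t, \xi) - \frac{\boldsymbol{x}_t - \alpha_t \boldsymbol{x}_0}{\beta_t} \right\rVert_2^2 .
\end{equation*}
Next, use Fubini to swap the order of integration. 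Writing the joint law of $(\boldsymbol{x}_0, \boldsymbol{x}_t)$ as $p(\boldsymbol{x}_t \mid \boldsymbol{x}_0)\, dP(\boldsymbol{x}_0 \mid \xi)\, d\boldsymbol{x}_t$ via Theorem~\ref{Thm:Conditional Forward}, the objective becomes an integral over $(t, \boldsymbol{x}_t)$ of an inner quadratic functional in the value $\boldsymbol{\epsilon} = \boldsymbol{\epsilon}_\theta(t, \boldsymbol{x}_t, \xi)$. The inner functional has positive weight $\beta_t^2/\alpha_t^2$ because $\beta_t^2 > 0$.

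Minimising this quadratic pointwise, by setting its gradient in $\boldsymbol{\epsilon}$ to zero, gives the conditional expectation of the regression target given $\boldsymbol{x}_t$:
\begin{equation*}
    \boldsymbol{\epsilon}^\star(t, \boldsymbol{x}_t, \xi) = \frac{\int \frac{1}{\beta_t}(\boldsymbol{x}_t - \alpha_t \boldsymbol{x}_0)\, p(\boldsymbol{x}_t \mid \boldsymbol{x}_0)\, dP(\boldsymbol{x}_0 \mid \xi)}{\int p(\boldsymbol{x}_t \mid \boldsymbol{x}_0)\, dP(\boldsymbol{x}_0 \mid \xi)} .
\end{equation*}
The denominator is exactly $p(\boldsymbol{x}_t \mid \xi)$, the marginal density. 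It exists and is strictly positive and smooth because it is a mixture of nondegenerate Gaussians, even when $P(\boldsymbol{X}_0 \mid \xi)$ is an empirical measure. The capacity assumption (Universal Approximation) then lets us assert that $\boldsymbol{\epsilon}_{\theta^\star}$ realises this pointwise minimiser, separately for each $\xi$. This gives statement 2.

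\textbf{Computing the score.} Differentiate $p(\boldsymbol{x}_t \mid \xi) = \int p(\boldsymbol{x}_t \mid \boldsymbol{x}_0)\, dP(\boldsymbol{x}_0 \mid \xi)$ under the integral sign, using
\begin{equation*}
    \nabla_{\boldsymbol{x}_t} p(\boldsymbol{x}_t \mid \boldsymbol{x}_0) = -\frac{\boldsymbol{x}_t - \alpha_t \boldsymbol{x}_0}{\beta_t^2}\, p(\boldsymbol{x}_t \mid \boldsymbol{x}_0) .
\end{equation*}
Dividing by $p(\boldsymbol{x}_t \mid \xi)$ gives $\nabla \log p(\boldsymbol{x}_t \mid \xi) = -\boldsymbol{\epsilon}^\star / \beta_t$. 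Combined with statement 2, this proves statement 1.

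\textbf{Main obstacle.} The hardest part is rigour rather than algebra, and there are two points to secure.

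\emph{Differentiating under the integral.} The Gaussian kernel's gradient is bounded by $C(1 + \lVert \boldsymbol{x}_0 \rVert)$ times a Gaussian in $\boldsymbol{x}_0$, so dominated convergence applies locally uniformly in $\boldsymbol{x}_t$. This requires $P(\boldsymbol{X}_0 \mid \xi)$ to have a finite first moment, which I would assume (trivially true for empirical measures).

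\emph{Pointwise minimiser versus objective minimiser.} The minimiser of the objective equals the pointwise minimiser only almost everywhere. Continuity of both $\boldsymbol{\epsilon}_{\theta^\star}$ and $\boldsymbol{\epsilon}^\star$ upgrades this to every $\boldsymbol{x}_t$. For every $t \in (0, T]$, the uniform time density $p(t) = 1/T$ combined with continuity in $t$ does the same.

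If the approximation is only approximate, I would phrase the result as holding in the idealised limit, which is what the statement intends.
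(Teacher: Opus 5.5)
Your proposal is correct and follows the argument the paper defers to. That argument is Karras et al.'s pointwise minimisation of the denoising loss after swapping the integrals, followed by differentiating the Gaussian mixture $p(\boldsymbol{x}_t \mid \xi)$ under the integral exactly as in the paper's proof of Theorem~\ref{Thm:Tweedie}; your noise-prediction form is only a reparametrisation. One small correction: no finite first moment is needed, because $\lVert \boldsymbol{x}_t - \alpha_t \boldsymbol{x}_0 \rVert_2 \, p(\boldsymbol{x}_t \mid \boldsymbol{x}_0)$ is bounded uniformly in $(\boldsymbol{x}_t, \boldsymbol{x}_0)$ for fixed $t$ (your own bound of $C(1+\lVert \boldsymbol{x}_0 \rVert_2)$ times a Gaussian in $\boldsymbol{x}_0$ is already bounded), so a constant dominates and the interchange holds for arbitrary $P(\boldsymbol{X}_0 \mid \xi)$.
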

 
By the shorthand of Definition \ref{Def:Canonical Diffusion}, taking $\xi = \{ y \}$ for a specific label $y \in \Omega_Y$ gives the class-conditional score-model $\boldsymbol{\epsilon}_{\theta^\star}(t, \boldsymbol{x}_t, y)$, and taking $\xi = \Omega_Y$ gives the unconditional score-model $\boldsymbol{\epsilon}_{\theta^\star}(t, \boldsymbol{x}_t, \Omega_Y)$; both are the objects we later manipulate in the HJB derivation.
 
\textbf{Tweedie's formula:} Tweedie's formula \citep{TweediesFormula_Efron2011} relates the posterior mean of $\boldsymbol{X}_0$ given a noisy observation $\boldsymbol{x}_t$ to the score of the marginal density of $\boldsymbol{X}_t$. We state and prove the specialisation of this result to the canonical image diffusion of Definition \ref{Def:Canonical Diffusion}, conditioned on the event $\xi$ of Definition \ref{Def:Canonical Diffusion}, as we require both the conditioning and the proof mechanics directly in the upper-bounding lemma that follows.
 
Given the initial law $P(\boldsymbol{X}_0 \mid \xi)$ of Definition \ref{Def:Canonical Diffusion}, which we do not assume admits a density, define the conditional marginal density of $\boldsymbol{x}_t$ by
\begin{equation}
    p(\boldsymbol{x}_t \mid \xi) \triangleq \int_{\mathbb{R}^m} p(\boldsymbol{x}_t \mid \boldsymbol{x}_0) \, dP(\boldsymbol{x}_0 \mid \xi), \label{eq:tweedie marginal density}
\end{equation}
where $p(\boldsymbol{x}_t \mid \boldsymbol{x}_0)$ is the Gaussian transition density of Theorem \ref{Thm:Conditional Forward}. As this transition density is, for $\beta_t^2 > 0$, positive-everywhere and integrates the mass of $P(\boldsymbol{X}_0 \mid \xi)$ against a bounded, smooth kernel, then (\ref{eq:tweedie marginal density}) defines a genuine, positive-everywhere probability density on $\mathbb{R}^m$, regardless of whether $P(\boldsymbol{X}_0 \mid \xi)$ itself admits one.
 
\begin{theorem}[Tweedie's Formula, adapted from \citealp{TweediesFormula_Efron2011}] \label{Thm:Tweedie}
    Under the canonical image diffusion of Definition \ref{Def:Canonical Diffusion}, with $\beta_t^2 > 0 \ \forall \ t \in (0, T]$, and let $p(\boldsymbol{x}_t \mid \xi)$ be as in (\ref{eq:tweedie marginal density}) for $\xi \in \mathcal{Y}$. Define
    \begin{equation}
        \boldsymbol{\epsilon}(t, \boldsymbol{x}_t, \xi) \triangleq -\beta_t \nabla_{\boldsymbol{x}_t} \log(p(\boldsymbol{x}_t \mid \xi)). \label{eq:tweedie epsilon def}
    \end{equation}
    Then
    \begin{equation}
        \mathbb{E}_{\boldsymbol{X}_0 \mid \boldsymbol{x}_t, \, \xi} \left[ \boldsymbol{X}_0 \right] = \frac{\boldsymbol{x}_t - \beta_t \boldsymbol{\epsilon}(t, \boldsymbol{x}_t, \xi)}{\alpha_t} \quad \forall \ \boldsymbol{x}_t \in \mathbb{R}^m, \ t \in (0, T], \ \xi \in \mathcal{Y}, \nonumber
    \end{equation}
    where $\mathbb{E}_{\boldsymbol{X}_0 \mid \boldsymbol{x}_t, \, \xi}$ denotes expectation with respect to the posterior measure $P(\boldsymbol{X}_0 \mid \boldsymbol{x}_t, \, \xi)$ constructed in the proof (writing $\boldsymbol{x}_t$ for the conditioning event $\{ \boldsymbol{X}_t = \boldsymbol{x}_t \}$, analogous to the $\xi$ shorthand of Definition \ref{Def:Canonical Diffusion}).
\end{theorem}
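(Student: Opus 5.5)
The plan is the classical Tweedie argument: differentiate the marginal density (\ref{eq:tweedie marginal density}) under the integral sign, and read off the posterior mean against an explicitly constructed posterior measure.

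\textbf{Step 1: construct the posterior.} For fixed $t \in (0,T]$ and $\boldsymbol{x}_t$, define a measure on $(\mathbb{R}^m, \mathcal{B}(\mathbb{R}^m))$ by
\begin{equation}
    P(A \mid \boldsymbol{x}_t, \xi) \triangleq \frac{\int_A p(\boldsymbol{x}_t \mid \boldsymbol{x}_0)\, dP(\boldsymbol{x}_0 \mid \xi)}{p(\boldsymbol{x}_t \mid \xi)}, \quad A \in \mathcal{B}(\mathbb{R}^m). \nonumber
\end{equation}
This is a well-defined probability measure. The reason is that the kernel $p(\boldsymbol{x}_t \mid \boldsymbol{x}_0) = (2\pi\beta_t^2)^{-m/2}\exp(-\lVert \boldsymbol{x}_t - \alpha_t \boldsymbol{x}_0\rVert_2^2/(2\beta_t^2))$ is positive and bounded, so the denominator is finite and strictly positive. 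This is a Bayes-type construction that does not require $P(\boldsymbol{X}_0 \mid \xi)$ to have a density.

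\textbf{Step 2: differentiate under the integral.} Compute
\begin{equation}
    \nabla_{\boldsymbol{x}_t} p(\boldsymbol{x}_t \mid \boldsymbol{x}_0) = -\frac{\boldsymbol{x}_t - \alpha_t \boldsymbol{x}_0}{\beta_t^2}\, p(\boldsymbol{x}_t \mid \boldsymbol{x}_0). \nonumber
\end{equation}
Then justify $\nabla_{\boldsymbol{x}_t}\int p\, dP = \int \nabla_{\boldsymbol{x}_t} p\, dP$ by dominated convergence. This is the main technical step. For $\boldsymbol{x}_t$ in a bounded neighbourhood $B$, the map $\boldsymbol{x}_0 \mapsto \sup_{\boldsymbol{x}_t \in B} \lVert \boldsymbol{x}_t - \alpha_t \boldsymbol{x}_0\rVert\, e^{-\lVert \boldsymbol{x}_t - \alpha_t\boldsymbol{x}_0\rVert^2/(2\beta_t^2)}$ is uniformly bounded in $\boldsymbol{x}_0$, because $r e^{-r^2/(2\beta_t^2)}$ is bounded in $r$. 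A constant is integrable against a probability measure, so no moment assumption on $\boldsymbol{X}_0$ is needed. The same bound shows that $\int \lVert \boldsymbol{x}_0\rVert\, dP(\boldsymbol{x}_0 \mid \boldsymbol{x}_t,\xi) < \infty$ when $\alpha_t \neq 0$, so the posterior mean exists.

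\textbf{Step 3: rearrange.} Dividing by $p(\boldsymbol{x}_t \mid \xi)$ gives
\begin{equation}
    \nabla_{\boldsymbol{x}_t}\log p(\boldsymbol{x}_t \mid \xi) = -\frac{1}{\beta_t^2}\Bigl(\boldsymbol{x}_t - \alpha_t\, \mathbb{E}_{\boldsymbol{X}_0 \mid \boldsymbol{x}_t,\xi}[\boldsymbol{X}_0]\Bigr). \nonumber
\end{equation}
Substituting the definition (\ref{eq:tweedie epsilon def}), $\nabla_{\boldsymbol{x}_t}\log p = -\boldsymbol{\epsilon}/\beta_t$, and solving for the expectation yields $(\boldsymbol{x}_t - \beta_t\boldsymbol{\epsilon})/\alpha_t$, as claimed.

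I expect the dominated-convergence justification in Step 2 to be the only delicate point, since $P(\boldsymbol{X}_0 \mid \xi)$ is arbitrary and possibly empirical. The boundedness of $r e^{-cr^2}$ resolves it.
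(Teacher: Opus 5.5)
Your proposal is correct and follows essentially the same route as the paper: differentiate the marginal density under the integral sign, justified by dominated convergence via the uniform boundedness of the Gaussian kernel's gradient. You then reweight $P(\boldsymbol{X}_0 \mid \xi)$ by $p(\boldsymbol{x}_t \mid \boldsymbol{x}_0)/p(\boldsymbol{x}_t \mid \xi)$ to obtain the posterior and rearrange. Your explicit check that the posterior mean is finite, which is needed to split $\int (\boldsymbol{x}_t - \alpha_t \boldsymbol{x}_0)\, dP(\boldsymbol{x}_0 \mid \boldsymbol{x}_t, \xi)$, fills in a small step the paper leaves implicit.
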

\begin{proof}
    Differentiating (\ref{eq:tweedie marginal density}) under the integral sign (justified since $\beta_t^2 > 0$ implies $p(\boldsymbol{x}_t \mid \boldsymbol{x}_0)$ and $\nabla_{\boldsymbol{x}_t} p(\boldsymbol{x}_t \mid \boldsymbol{x}_0)$ are uniformly bounded in $\boldsymbol{x}_0$, so dominated convergence permits the interchange),
    \begin{align*}
        \nabla_{\boldsymbol{x}_t} \log(p(\boldsymbol{x}_t \mid \xi)) &= \nabla_{\boldsymbol{x}_t} \log \left( \int_{\mathbb{R}^m} p(\boldsymbol{x}_t \mid \boldsymbol{x}_0) \, dP(\boldsymbol{x}_0 \mid \xi) \right) \\
        &= \frac{1}{p(\boldsymbol{x}_t \mid \xi)} \int_{\mathbb{R}^m} \nabla_{\boldsymbol{x}_t} p(\boldsymbol{x}_t \mid \boldsymbol{x}_0) \, dP(\boldsymbol{x}_0 \mid \xi).
        \intertext{By Theorem \ref{Thm:Conditional Forward}, $p(\boldsymbol{x}_t \mid \boldsymbol{x}_0) = \mathcal{N} \left( \alpha_t \boldsymbol{x}_0, \ \beta_t^2 I^{(m \times m)} \right) \implies \nabla_{\boldsymbol{x}_t} p(\boldsymbol{x}_t \mid \boldsymbol{x}_0) = -\frac{1}{\beta_t^2} (\boldsymbol{x}_t - \alpha_t \boldsymbol{x}_0) p(\boldsymbol{x}_t \mid \boldsymbol{x}_0)$, thus,}
        \nabla_{\boldsymbol{x}_t} \log(p(\boldsymbol{x}_t \mid \xi)) &= \frac{1}{p(\boldsymbol{x}_t \mid \xi)} \int_{\mathbb{R}^m} \left[ -\frac{1}{\beta_t^2} (\boldsymbol{x}_t - \alpha_t \boldsymbol{x}_0) \right] p(\boldsymbol{x}_t \mid \boldsymbol{x}_0) \, dP(\boldsymbol{x}_0 \mid \xi). \numberthis \label{eq:tweedie proof step 1}
    \end{align*}
    Define the posterior measure $P(\boldsymbol{X}_0 \mid \boldsymbol{x}_t, \, \xi)$ on $\left( \mathbb{R}^m, \mathcal{B}(\mathbb{R}^m) \right)$ by
    \begin{equation}
        dP(\boldsymbol{x}_0 \mid \boldsymbol{x}_t, \, \xi) \triangleq \frac{p(\boldsymbol{x}_t \mid \boldsymbol{x}_0)}{p(\boldsymbol{x}_t \mid \xi)} \, dP(\boldsymbol{x}_0 \mid \xi). \label{eq:tweedie posterior measure}
    \end{equation}
    As $p(\boldsymbol{x}_t \mid \boldsymbol{x}_0) / p(\boldsymbol{x}_t \mid \xi) \ge 0$ and $\int_{\mathbb{R}^m} p(\boldsymbol{x}_t \mid \boldsymbol{x}_0) / p(\boldsymbol{x}_t \mid \xi) \, dP(\boldsymbol{x}_0 \mid \xi) = p(\boldsymbol{x}_t \mid \xi) / p(\boldsymbol{x}_t \mid \xi) = 1$ by (\ref{eq:tweedie marginal density}), (\ref{eq:tweedie posterior measure}) is a valid Radon-Nikodym reweighting of $P(\boldsymbol{X}_0 \mid \xi)$ and hence defines a genuine probability measure; this is just Bayes' theorem expressed without requiring $P(\boldsymbol{X}_0 \mid \xi)$ to admit a density. In particular, $P(\boldsymbol{X}_0 \mid \boldsymbol{x}_t, \, \xi) \ll P(\boldsymbol{X}_0 \mid \xi)$: \textit{absolute continuity}, i.e., any set assigned zero probability by $P(\boldsymbol{X}_0 \mid \xi)$ is also assigned zero probability by the posterior, since the posterior is obtained by integrating a finite, nonnegative Radon-Nikodym derivative against $P(\boldsymbol{X}_0 \mid \xi)$. Substituting (\ref{eq:tweedie posterior measure}) into (\ref{eq:tweedie proof step 1}),
    \begin{align*}
        \nabla_{\boldsymbol{x}_t} \log(p(\boldsymbol{x}_t \mid \xi)) &= \int_{\mathbb{R}^m} \left[ -\frac{1}{\beta_t^2} (\boldsymbol{x}_t - \alpha_t \boldsymbol{x}_0) \right] dP(\boldsymbol{x}_0 \mid \boldsymbol{x}_t, \, \xi) \\
        &= -\frac{1}{\beta_t^2} \boldsymbol{x}_t + \frac{\alpha_t}{\beta_t^2} \int_{\mathbb{R}^m} \boldsymbol{x}_0 \, dP(\boldsymbol{x}_0 \mid \boldsymbol{x}_t, \, \xi) \\
        &= -\frac{1}{\beta_t^2} \boldsymbol{x}_t + \frac{\alpha_t}{\beta_t^2} \, \mathbb{E}_{\boldsymbol{X}_0 \mid \boldsymbol{x}_t, \, \xi} \left[ \boldsymbol{X}_0 \right]. \numberthis \label{eq:tweedie proof step 2}
    \end{align*}
    Multiplying (\ref{eq:tweedie proof step 2}) by $-\beta_t$ and applying (\ref{eq:tweedie epsilon def}),
    \begin{equation}
        \boldsymbol{\epsilon}(t, \boldsymbol{x}_t, \xi) = \frac{\boldsymbol{x}_t - \alpha_t \, \mathbb{E}_{\boldsymbol{X}_0 \mid \boldsymbol{x}_t, \, \xi} \left[ \boldsymbol{X}_0 \right]}{\beta_t} \implies \mathbb{E}_{\boldsymbol{X}_0 \mid \boldsymbol{x}_t, \, \xi} \left[ \boldsymbol{X}_0 \right] = \frac{\boldsymbol{x}_t - \beta_t \boldsymbol{\epsilon}(t, \boldsymbol{x}_t, \xi)}{\alpha_t}, \nonumber
    \end{equation}
    which proves the claim.
\end{proof}
 
Comparing (\ref{eq:tweedie epsilon def}) with Theorem \ref{Thm:Score-Model Link}, we note that $\boldsymbol{\epsilon}(t, \boldsymbol{x}_t, \xi) = \boldsymbol{\epsilon}_{\theta^{\star}}(t, \boldsymbol{x}_t, \xi)$ for every $\xi \in \mathcal{Y}$; that is, the population noise-prediction function of Tweedie's formula is the optimal score-model of Theorem \ref{Thm:Score-Model Link}, and by the shorthand of Definition \ref{Def:Canonical Diffusion} this holds equally for the class-conditional case ($\xi = \{y\}$, written $\boldsymbol{\epsilon}(t,\boldsymbol{x}_t,y)$) and the unconditional case ($\xi = \Omega_Y$,  written $\boldsymbol{\epsilon}(t,\boldsymbol{x}_t,\Omega_Y)$).
 
\textbf{Boundedness of the Tweedie estimate:} An important property of Tweedie's estimate is that it inherits some bounding properties of the underlying data distribution. The following lemma shows that if the data lies almost surely within a closed ball of radius $r$, then so does the Tweedie estimate, for every $\boldsymbol{x}_t$, $t$, and conditioning event $\xi$.
 
\begin{lemma}[Boundedness of the Tweedie Estimate] \label{Lma:Tweedie Bound}
    Under the canonical image diffusion of Definition \ref{Def:Canonical Diffusion}, with $\beta_t^2 > 0 \ \forall \ t \in (0, T]$, and suppose there exists $r > 0$ and $\xi \in \mathcal{Y}$ such that
    \begin{equation}
        P \left( \left\lVert \boldsymbol{X}_0 \right\rVert_2^2 \le r^2 \ \middle| \ \xi \right) = 1. \nonumber
    \end{equation}
    Then the Tweedie estimator $\hat{\boldsymbol{x}}_0(t, \boldsymbol{x}_t, \xi) \triangleq \hat{\boldsymbol{x}}_0(t, \boldsymbol{x}_t, \xi; \theta^{\star}) = \mathbb{E}_{\boldsymbol{X}_0 \mid \boldsymbol{x}_t, \, \xi} \left[ \boldsymbol{X}_0 \right]$ of Theorem \ref{Thm:Tweedie} satisfies
    \begin{equation}
        \left\lVert \hat{\boldsymbol{x}}_0(t, \boldsymbol{x}_t, \xi) \right\rVert_2^2 \le r^2 \quad \forall \ \boldsymbol{x}_t \in \mathbb{R}^m, \ t \in (0, T]. \nonumber
    \end{equation}
\end{lemma}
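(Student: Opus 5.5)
The plan is to use the measure-theoretic form of Tweedie's formula from Theorem~\ref{Thm:Tweedie}. It expresses $\hat{\boldsymbol{x}}_0(t, \boldsymbol{x}_t, \xi)$ as the barycentre of the posterior probability measure $P(\boldsymbol{X}_0 \mid \boldsymbol{x}_t, \xi)$ defined in (\ref{eq:tweedie posterior measure}). Once that is in hand, the claim reduces to a convexity statement: a barycentre of a probability measure supported in a closed ball lies in that ball.

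\textbf{Step 1: the posterior is supported in the ball.} Let $B_r \triangleq \{\boldsymbol{x}_0 : \lVert \boldsymbol{x}_0 \rVert_2 \le r\}$. By hypothesis $P(B_r^c \mid \xi) = 0$. The proof of Theorem~\ref{Thm:Tweedie} gives the absolute continuity $P(\boldsymbol{X}_0 \mid \boldsymbol{x}_t, \xi) \ll P(\boldsymbol{X}_0 \mid \xi)$, since the Radon--Nikodym weight $p(\boldsymbol{x}_t \mid \boldsymbol{x}_0)/p(\boldsymbol{x}_t \mid \xi)$ is finite and nonnegative; the denominator is strictly positive because $\beta_t^2>0$. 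Hence $P(B_r^c \mid \boldsymbol{x}_t, \xi) = 0$ for every $\boldsymbol{x}_t$ and every $t \in (0,T]$. The same observation gives integrability: $\boldsymbol{x}_0$ is bounded on a set of full posterior measure, so $\mathbb{E}_{\boldsymbol{X}_0 \mid \boldsymbol{x}_t, \xi}[\boldsymbol{X}_0]$ is well defined and finite.

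\textbf{Step 2: Jensen's inequality.} Apply Jensen's inequality to the convex function $\boldsymbol{v} \mapsto \lVert \boldsymbol{v} \rVert_2$ under the posterior:
\begin{equation}
\left\lVert \hat{\boldsymbol{x}}_0(t, \boldsymbol{x}_t, \xi) \right\rVert_2 = \left\lVert \int_{B_r} \boldsymbol{x}_0 \, dP(\boldsymbol{x}_0 \mid \boldsymbol{x}_t, \xi) \right\rVert_2 \le \int_{B_r} \lVert \boldsymbol{x}_0 \rVert_2 \, dP(\boldsymbol{x}_0 \mid \boldsymbol{x}_t, \xi) \le r. \nonumber
\end{equation}
Squaring gives the stated bound. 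If one prefers to avoid vector-valued Jensen, there is an equivalent route: take the unit vector $\boldsymbol{e}$ in the direction of the mean (or any unit vector if the mean is zero), so that $\lVert \hat{\boldsymbol{x}}_0 \rVert_2 = \langle \boldsymbol{e}, \hat{\boldsymbol{x}}_0 \rangle = \int \langle \boldsymbol{e}, \boldsymbol{x}_0 \rangle \, dP$. Cauchy--Schwarz then bounds the integrand by $r$ on $B_r$.

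\textbf{Step 3: identify with the trained denoiser.} The remark following Theorem~\ref{Thm:Tweedie} identifies $\boldsymbol{\epsilon}_{\theta^\star}$ with the population $\boldsymbol{\epsilon}$, via Theorem~\ref{Thm:Score-Model Link}. So the bound transfers to $\hat{\boldsymbol{x}}_0(t,\boldsymbol{x}_t,\xi;\theta^\star)$.

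The main obstacle is Step 1, specifically making rigorous that the posterior inherits the null set $B_r^c$ and that the posterior mean exists, without assuming $P(\boldsymbol{X}_0 \mid \xi)$ has a density. Absolute continuity from the Radon--Nikodym construction handles both, so the remaining steps are routine.
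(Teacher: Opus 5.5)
Your proposal is correct and follows essentially the same route as the paper: posterior absolute continuity transfers the null set $\{\lVert \boldsymbol{x}_0\rVert_2 > r\}$ to the posterior, and then Jensen's inequality bounds the posterior mean. The only differences are cosmetic. You apply Jensen (equivalently, the triangle inequality for the integral) to $\lVert\cdot\rVert_2$ and then square, whereas the paper applies it directly to $\lVert\cdot\rVert_2^2$. You also make explicit that the posterior mean exists because $\boldsymbol{X}_0$ is posterior-a.s.\ bounded, a point the paper leaves implicit.
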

\begin{proof}
    Fix $\boldsymbol{x}_t \in \mathbb{R}^m$ and $t \in (0, T]$, and let $A \triangleq \left\{ \boldsymbol{x}_0 \in \mathbb{R}^m : \left\lVert \boldsymbol{x}_0 \right\rVert_2^2 > r^2 \right\}$. By hypothesis, $P(\boldsymbol{X}_0 \in A \mid \xi) = 0$. By the absolute continuity $P(\boldsymbol{X}_0 \mid \boldsymbol{x}_t, \, \xi) \ll P(\boldsymbol{X}_0 \mid \xi)$ established in the proof of Theorem \ref{Thm:Tweedie} (via (\ref{eq:tweedie posterior measure})), it follows that
    \begin{equation}
        P \left( \boldsymbol{X}_0 \in A \ \middle| \ \boldsymbol{x}_t, \, \xi \right) = \int_A \frac{p(\boldsymbol{x}_t \mid \boldsymbol{x}_0)}{p(\boldsymbol{x}_t \mid \xi)} \, dP(\boldsymbol{x}_0 \mid \xi) = 0, \nonumber
    \end{equation}
    since the integrand is finite and nonnegative and is being integrated over the $P(\boldsymbol{X}_0 \mid \xi)$-null set. Hence $\left\lVert \boldsymbol{X}_0 \right\rVert_2^2 \le r^2$ almost surely under the posterior measure $P(\boldsymbol{X}_0 \mid \boldsymbol{x}_t, \, \xi)$.
 
    Integrating this almost-sure bound against the probability measure $P(\boldsymbol{X}_0 \mid \boldsymbol{x}_t, \, \xi)$ gives
    \begin{equation}
        \mathbb{E}_{\boldsymbol{X}_0 \mid \boldsymbol{x}_t, \, \xi} \left[ \left\lVert \boldsymbol{X}_0 \right\rVert_2^2 \right] \le r^2. \label{eq:tweedie bound step 1}
    \end{equation}
    As $\left\lVert \cdot \right\rVert_2^2$ is convex on $\mathbb{R}^m$, Jensen's inequality gives
    \begin{equation}
        \left\lVert \hat{\boldsymbol{x}}_0(t, \boldsymbol{x}_t, \xi) \right\rVert_2^2 = \left\lVert \mathbb{E}_{\boldsymbol{X}_0 \mid \boldsymbol{x}_t, \, \xi} \left[ \boldsymbol{X}_0 \right] \right\rVert_2^2 \le \mathbb{E}_{\boldsymbol{X}_0 \mid \boldsymbol{x}_t, \, \xi} \left[ \left\lVert \boldsymbol{X}_0 \right\rVert_2^2 \right]. \label{eq:tweedie bound step 2}
    \end{equation}
    Combining (\ref{eq:tweedie bound step 1}) and (\ref{eq:tweedie bound step 2}),
    \begin{equation}
        \left\lVert \hat{\boldsymbol{x}}_0(t, \boldsymbol{x}_t, \xi) \right\rVert_2^2 \le r^2. \nonumber
    \end{equation}
    As $\boldsymbol{x}_t \in \mathbb{R}^m$ and $t \in (0, T]$ were arbitrary, the claim holds for all $\boldsymbol{x}_t$ and $t$.
\end{proof}
 
Since conditioning cannot introduce probability mass on a set the unconditional law assigns zero mass to, taking $\xi = \Omega_Y$ in the hypothesis of Lemma \ref{Lma:Tweedie Bound}, i.e. an unconditional bound $P(\lVert \boldsymbol{X}_0 \rVert_2^2 \le r^2) = 1$, then $P(\lVert \boldsymbol{X}_0 \rVert_2^2 \le r^2 \mid \xi) = 1$, and hence $\lVert \hat{\boldsymbol{x}}_0(t, \boldsymbol{x}_t, \xi) \rVert_2^2 \le r^2$, for \emph{every} $\xi \in \mathcal{Y}$ with $P(Y \in \xi) > 0$--in particular, for every class-conditional label $y \in \Omega_Y$. Additionally, note that we are taking $\hat{\boldsymbol{x}}_0(t, \boldsymbol{x}_t, \xi)$ to refer to the optimal $\hat{\boldsymbol{x}}_0(t, \boldsymbol{x}_t, \xi; \theta^{\star})$ for brevity.
 
\subsection{HJB CFG formulation} \label{Apdx:HJB CFG Formulation}
\textbf{Optimal control problem and HJB equation:} We first fix our notation for the optimal control problem underlying the HJB equation, following the general formulation of \citet{NonlinearControlHJB_Berkovitz2012} (\S 2.3, pp. 19--21) and \citet{CalculusofVariationsHJB_Liberzon2012} (\S 5.1.2, pp. 158--159), generalising the control constraint set to vary with time as in \citet{NonlinearControlHJB_Berkovitz2012} (\S 6.2, p. 150).
 
\begin{problem}[Optimal Control Problem] \label{Prob:OCP}
    Fix $t_l < T$. Let $\boldsymbol{x}_t \in \mathbb{R}^m$ for $t \in [t_l, T]$, let $\boldsymbol{u}_t \in U(t) \subseteq \mathbb{R}^p$ be a control taking values in a nonempty, time-varying control set $U(t)$, and let $\boldsymbol{b} : \mathbb{R} \times \mathbb{R}^m \times \mathbb{R}^p \rightarrow \mathbb{R}^m$. Let $L : \mathbb{R} \times \mathbb{R}^m \times \mathbb{R}^p \rightarrow \mathbb{R}$ be a running cost and $K : \mathbb{R}^m \rightarrow \mathbb{R}$ a terminal cost, and let both be sufficiently regular such that a unique trajectory exists for every admissible control.\footnote{Specifically: joint continuity, locally Lipschitz in $\boldsymbol{x}$ with a $t$-dependent, square-integrable modulus, uniformly over $\boldsymbol{u}_t$, and of at most linear growth in $| \boldsymbol{x} |$ (\citealp{NonlinearControlHJB_Berkovitz2012}, Assumption 12.2.1(i)--(iv), pp. 338--339). Parts (v)--(vi) of that assumption, concerning the terminal manifold and cost, are immediate from our choice of a fixed terminal time $T$; part (vii), concerning existence of an optimal control, is not required for the verification Lemma \ref{Lma:HJB Verification} below.} For $(t, \boldsymbol{x}_t) \in [t_l, T] \times \mathbb{R}^m$, \textbf{minimise the functional}
    \begin{equation}
        J(t, \boldsymbol{x}_t, \boldsymbol{u}_t) \triangleq \int_t^T L(s, \boldsymbol{x}_s, \boldsymbol{u}_s) \, ds + K(\boldsymbol{x}_T) \label{eq:hjb cost functional}
    \end{equation}
    subject to the state equation
    \begin{equation}
        \dot{\boldsymbol{x}}_s = \frac{d \boldsymbol{x}_s}{ds} = \boldsymbol{b}(s, \boldsymbol{x}_s, \boldsymbol{u}_s), \quad \boldsymbol{x}_t \text{ given}, \label{eq:hjb state equation}
    \end{equation}
    and control constraint $\boldsymbol{u}_s \in U(s)$, $s \in [t, T]$.
\end{problem}
 
\begin{defn}[Value Function] \label{Def:HJB Problem}
    The \textbf{value function} of Problem \ref{Prob:OCP} is
    \begin{equation}
        V(t, \boldsymbol{x}_t) \triangleq \inf_{\boldsymbol{u}_{[t,T]}} J(t, \boldsymbol{x}_t, \boldsymbol{u}_t), \label{eq:hjb value function}
    \end{equation}
    where the infimum ranges over controls with $\boldsymbol{u}_s \in U(s)$, $s \in [t, T]$, and $\boldsymbol{x}_s$ solves (\ref{eq:hjb state equation}).
\end{defn}
 
\textbf{Hamilton--Jacobi--Bellman equation:} The following theorem states that the value function of Definition \ref{Def:HJB Problem} satisfies the associated Hamilton--Jacobi--Bellman (HJB) equation. This is a well-known result, following from the dynamic programming principle; we omit the proof here and refer the reader to \citealp{OriginalHJB_Bellman1957}, \citealp{NonlinearControlHJB_Berkovitz2012} (\S 6.2, pp. 150--154), and \citealp{CalculusofVariationsHJB_Liberzon2012} (\S 5.1.3, pp. 161--162).
 
\begin{theorem}[HJB Equation, after \citealp{OriginalHJB_Bellman1957, NonlinearControlHJB_Berkovitz2012, CalculusofVariationsHJB_Liberzon2012}] \label{Thm:HJB Equation}
    The value function $V$ of Definition \ref{Def:HJB Problem} satisfies
    \begin{equation}
        -V_t(t, \boldsymbol{x}_t) = \inf_{\boldsymbol{u}_t \in U(t)} \Big\{ L(t, \boldsymbol{x}_t, \boldsymbol{u}_t) + \big\langle V_{\boldsymbol{x}_t}(t, \boldsymbol{x}_t), \, \boldsymbol{b}(t, \boldsymbol{x}_t, \boldsymbol{u}_t) \big\rangle \Big\}, \quad \forall \ (t, \boldsymbol{x}_t) \in [t_l, T) \times \mathbb{R}^m, \label{eq:hjb equation}
    \end{equation}
    with boundary condition $V(T, \boldsymbol{x}_T) = K(\boldsymbol{x}_T)$.
\end{theorem}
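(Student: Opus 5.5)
The plan is the classical dynamic-programming argument. Throughout, I take $V$ to be $C^1$ on $[t_l,T)\times\mathbb{R}^m$, as the regularity conditions in Section~\ref{sec:CFG Optimal Control} require. I also use the hypotheses of Problem~\ref{Prob:OCP}: joint continuity, local Lipschitz continuity in $\boldsymbol{x}$ uniformly over $\boldsymbol{u}$, and linear growth. These guarantee a unique trajectory for every admissible control and keep trajectories from a fixed $(t,\boldsymbol{x}_t)$ in a compact set over short horizons.

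The boundary condition is immediate. At $t=T$ the integral in (\ref{eq:hjb cost functional}) vanishes, so $J(T,\boldsymbol{x}_T,\cdot)=K(\boldsymbol{x}_T)$ for every control, and hence $V(T,\boldsymbol{x}_T)=K(\boldsymbol{x}_T)$.

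The first step is the dynamic programming principle. For $h>0$ small,
\begin{equation*}
V(t,\boldsymbol{x}_t)=\inf_{\boldsymbol{u}_{[t,t+h]}}\Big\{\int_t^{t+h}L(s,\boldsymbol{x}_s,\boldsymbol{u}_s)\,ds+V(t+h,\boldsymbol{x}_{t+h})\Big\}.
\end{equation*}
I prove the two inequalities directly from Definition~\ref{Def:HJB Problem}. For ``$\le$'', concatenate an arbitrary control on $[t,t+h]$ with an $\varepsilon$-optimal control from $(t+h,\boldsymbol{x}_{t+h})$. For ``$\ge$'', note that any control on $[t,T]$ restricted to $[t+h,T]$ is admissible from $(t+h,\boldsymbol{x}_{t+h})$, so its cost on $[t+h,T]$ is at least $V(t+h,\boldsymbol{x}_{t+h})$.

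The second step gives $-V_t\le\inf_{\boldsymbol{u}}\{L+\langle V_{\boldsymbol{x}},\boldsymbol{b}\rangle\}$. Fix $\boldsymbol{u}\in U(t)$ and take a control on $[t,t+h]$ with $\boldsymbol{u}_s\in U(s)$ and $\boldsymbol{u}_s\to\boldsymbol{u}$ as $s\downarrow t$. When $U$ is constant, the constant control $\boldsymbol{u}_s\equiv \boldsymbol{u}$ works; in general I use a continuous selection, which exists by lower semicontinuity of $U(\cdot)$. By the chain rule,
\begin{equation*}
V(t+h,\boldsymbol{x}_{t+h})-V(t,\boldsymbol{x}_t)=\int_t^{t+h}\big(V_t+\langle V_{\boldsymbol{x}},\boldsymbol{b}\rangle\big)(s,\boldsymbol{x}_s,\boldsymbol{u}_s)\,ds.
\end{equation*}
Substituting this into the ``$\le$'' half of the principle gives
\begin{equation*}
0\le\int_t^{t+h}\big(L+V_t+\langle V_{\boldsymbol{x}},\boldsymbol{b}\rangle\big)\,ds.
\end{equation*}
Dividing by $h$ and letting $h\downarrow 0$ uses continuity of the integrand, which holds because $\boldsymbol{x}_s\to\boldsymbol{x}_t$ and $\boldsymbol{u}_s\to\boldsymbol{u}$. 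The limit is $0\le L(t,\boldsymbol{x}_t,\boldsymbol{u})+V_t+\langle V_{\boldsymbol{x}},\boldsymbol{b}(t,\boldsymbol{x}_t,\boldsymbol{u})\rangle$. Taking the infimum over $\boldsymbol{u}\in U(t)$ gives the claim.

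The third step gives the reverse inequality. Choose an $\varepsilon h$-optimal control in the principle and apply the same chain-rule identity. This yields
\begin{equation*}
\varepsilon h\ge\int_t^{t+h}\big(L+V_t+\langle V_{\boldsymbol{x}},\boldsymbol{b}\rangle\big)(s,\boldsymbol{x}_s,\boldsymbol{u}_s)\,ds\ge\int_t^{t+h}\Big(V_t(s,\boldsymbol{x}_s)+\inf_{\boldsymbol{v}\in U(s)}H(s,\boldsymbol{x}_s,\boldsymbol{v})\Big)ds,
\end{equation*}
where $H\triangleq L+\langle V_{\boldsymbol{x}},\boldsymbol{b}\rangle$. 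To conclude, I divide by $h$, let $h\downarrow0$ and then $\varepsilon\downarrow 0$, and obtain $0\ge V_t+\inf_{U(t)}H$ at $(t,\boldsymbol{x}_t)$.

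This last limit is the main obstacle. Because the controls are only measurable, I cannot pass to the limit inside the integrand directly. Instead I need the map $s\mapsto\inf_{\boldsymbol{v}\in U(s)}H(s,\boldsymbol{x}_s,\boldsymbol{v})$ to be lower semicontinuous at $s=t$, uniformly over the $\varepsilon$-optimal trajectories.

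My approach has three parts. First, uniformly over controls, the linear growth bound gives $\sup_{s\in[t,t+h]}\|\boldsymbol{x}_s-\boldsymbol{x}_t\|=O(h)$. Second, $H$ is uniformly continuous on compact sets. Third, the required lower semicontinuity of the infimum follows if $U(\cdot)$ is upper semicontinuous in the Hausdorff sense, or if the infimum is attained on a bounded set. The attained-on-a-bounded-set case holds for the quadratic running reward (\ref{eq:oc running reward}), since it is coercive in $\boldsymbol{u}$.

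If neither condition can be guaranteed, I would fall back on the standard remedy in \citet{NonlinearControlHJB_Berkovitz2012} (\S6.2). That is, I impose continuity of $U(\cdot)$ as a standing assumption, note that it holds for the sets $\mathbb{R}^m$ and $U_\gamma(t)$ used in the paper, and complete the argument under it.
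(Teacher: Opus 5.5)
Your proposal is correct and follows the same route as the paper. The paper does not prove this theorem itself; it defers to the classical dynamic-programming derivation of Bellman, Berkovitz--Medhin and Liberzon (principle of optimality plus a chain-rule expansion of a $C^1$ value function), and your version is a more careful form of that argument that correctly identifies the delicate points: the continuous selection in the $-V_t\le\inf$ direction and the limit--infimum interchange in the $-V_t\ge\inf$ direction.

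The extra hypotheses you add are necessary, not conveniences, since the statement as written fails if $U(t)$ is merely enlarged or shrunk at a single instant. One small correction: for unbounded, time-varying $U(t)$, upper semicontinuity of $U(\cdot)$ and bounded attainment of the infimum are needed together rather than as alternatives, although both hold for the sets $\mathbb{R}^m$ and $U_\gamma(t)$ that the paper actually uses.
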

 
\textbf{Sufficient condition for optimality:} The following verification lemma gives conditions under which a $C^1$ solution of the HJB equation coincides with the value function, and a control satisfying the associated stationarity condition is optimal. We state the classical smooth specialisation of the argument; see \citealp{OriginalHJB_Bellman1957}, \citealp{NonlinearControlHJB_Berkovitz2012} (Thm. 12.7.1, pp. 359--360, there stated more generally for Lipschitz continuous value functions via lower Dini derivatives), and \citealp{CalculusofVariationsHJB_Liberzon2012} (\S 5.1.4, pp. 165--166, the classical $C^1$ statement we specialise to here) for the proof.
 
\begin{lemma}[Sufficient Condition for Optimality, after \citealp{OriginalHJB_Bellman1957, NonlinearControlHJB_Berkovitz2012, CalculusofVariationsHJB_Liberzon2012}] \label{Lma:HJB Verification}
    Suppose $\hat{V} \in C^1([t_l, T] \times \mathbb{R}^m)$ satisfies the HJB equation (\ref{eq:hjb equation}) and boundary condition of Theorem \ref{Thm:HJB Equation}. Suppose there exists a control $\hat{\boldsymbol{u}}$ with $\hat{\boldsymbol{u}}_t \in U(t)$ and trajectory $\hat{\boldsymbol{x}}$, $\hat{\boldsymbol{x}}_{t_l}$ given, such that
    \begin{equation}
        L(t, \hat{\boldsymbol{x}}_t, \hat{\boldsymbol{u}}_t) + \big\langle \hat{V}_{\boldsymbol{x}_t}(t, \hat{\boldsymbol{x}}_t), \boldsymbol{b}(t, \hat{\boldsymbol{x}}_t, \hat{\boldsymbol{u}}_t) \big\rangle = \min_{\boldsymbol{u}_t \in U(t)} \Big\{ L(t, \hat{\boldsymbol{x}}_t, \boldsymbol{u}_t) + \big\langle \hat{V}_{\boldsymbol{x}_t}(t, \hat{\boldsymbol{x}}_t), \boldsymbol{b}(t, \hat{\boldsymbol{x}}_t, \boldsymbol{u}_t) \big\rangle \Big\} \nonumber
    \end{equation}
    for every $t \in [t_l, T]$. Then $\hat{V}(t_l, \boldsymbol{x}_{t_l}) = V(t_l, \boldsymbol{x}_{t_l})$ and $\hat{\boldsymbol{u}}$ is an optimal control.
\end{lemma}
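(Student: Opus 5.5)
The plan is the classical verification argument: compare the cost of an arbitrary admissible control with the cost of $\hat{\boldsymbol{u}}$, using the HJB equation along trajectories. I will work in the forward-time convention of Problem~\ref{Prob:OCP}, with boundary condition $\hat{V}(T,\cdot)=K$ at the final time and initial state given at $t_l$. Fix $\boldsymbol{x}_{t_l}$ and an admissible control $\boldsymbol{u}$ with $\boldsymbol{u}_s\in U(s)$. The regularity assumptions in the footnote of Problem~\ref{Prob:OCP} give a unique absolutely continuous trajectory $\boldsymbol{x}_s$ of (\ref{eq:hjb state equation}).

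\textbf{Chain rule step.} Since $\hat{V}\in C^1$ and $\boldsymbol{x}_s$ is absolutely continuous, $s\mapsto \hat{V}(s,\boldsymbol{x}_s)$ is absolutely continuous, with
\[
\frac{d}{ds}\hat{V}(s,\boldsymbol{x}_s)=\hat{V}_t(s,\boldsymbol{x}_s)+\big\langle \hat{V}_{\boldsymbol{x}}(s,\boldsymbol{x}_s),\boldsymbol{b}(s,\boldsymbol{x}_s,\boldsymbol{u}_s)\big\rangle
\]
for almost every $s$.

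\textbf{Inequality for an arbitrary control.} The HJB equation (\ref{eq:hjb equation}) says $-\hat{V}_t$ is the infimum over $U(s)$ of $L+\langle \hat{V}_{\boldsymbol{x}},\boldsymbol{b}\rangle$. Hence, for every admissible $\boldsymbol{u}$,
\[
L(s,\boldsymbol{x}_s,\boldsymbol{u}_s)+\frac{d}{ds}\hat{V}(s,\boldsymbol{x}_s)\ge 0 \quad\text{a.e.}
\]
Integrating over $[t_l,T]$ and using $\hat{V}(T,\boldsymbol{x}_T)=K(\boldsymbol{x}_T)$ gives $J(t_l,\boldsymbol{x}_{t_l},\boldsymbol{u})\ge \hat{V}(t_l,\boldsymbol{x}_{t_l})$. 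Taking the infimum over controls yields $V(t_l,\boldsymbol{x}_{t_l})\ge\hat{V}(t_l,\boldsymbol{x}_{t_l})$.

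\textbf{Equality for $\hat{\boldsymbol{u}}$.} Along $(\hat{\boldsymbol{x}},\hat{\boldsymbol{u}})$, the hypothesis says the minimum is attained, and by the HJB equation that minimum equals $-\hat{V}_t$. So the integrand above is identically zero. Integrating gives $J(t_l,\boldsymbol{x}_{t_l},\hat{\boldsymbol{u}})=\hat{V}(t_l,\boldsymbol{x}_{t_l})$. Combined with the previous step,
\[
V(t_l,\boldsymbol{x}_{t_l})\le J(t_l,\boldsymbol{x}_{t_l},\hat{\boldsymbol{u}})=\hat{V}(t_l,\boldsymbol{x}_{t_l})\le V(t_l,\boldsymbol{x}_{t_l}),
\]
so equality holds throughout and $\hat{\boldsymbol{u}}$ is optimal.

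\textbf{Expected obstacle: integrability.} The main delicate point is justifying the integration. I need $s\mapsto L(s,\boldsymbol{x}_s,\boldsymbol{u}_s)$ to be measurable and integrable, and the fundamental theorem of calculus to apply to the composition $\hat{V}(s,\boldsymbol{x}_s)$. I would handle this by restricting admissible controls to measurable ones with $J$ finite. I would then use continuity of $\hat{V}_t$ and $\hat{V}_{\boldsymbol{x}}$ on the compact set $\{(s,\boldsymbol{x}_s)\}$, together with the linear-growth bound, to obtain absolute continuity.

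A second subtlety is that (\ref{eq:hjb equation}) is stated on $[t_l,T)$. The endpoint is a null set, so it does not affect the integrals.
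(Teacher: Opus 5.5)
Your proof is correct. It is the classical smooth verification argument: integrate $L + \frac{d}{ds}\hat{V}(s,\boldsymbol{x}_s) \ge 0$ along an arbitrary admissible trajectory, and note that equality holds along $(\hat{\boldsymbol{x}},\hat{\boldsymbol{u}})$. The paper does not reproduce this argument and instead cites Liberzon, \S 5.1.4, where it is this same argument. One simplification: you need not restrict to controls with finite $J$. The inequality $L(s,\boldsymbol{x}_s,\boldsymbol{u}_s) \ge -\frac{d}{ds}\hat{V}(s,\boldsymbol{x}_s)$ bounds $L$ below by an integrable function, so $J$ is well defined in $(-\infty,+\infty]$ and the comparison is trivial when $J$ is infinite.
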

 
\textbf{Reverse-time optimal control problem:} We now consider the analogous problem in which the objective is maximised and running reward accumulated \emph{since} a fixed reference time $t_l$, rather than cost-to-go accrued \emph{until} a fixed terminal time. This is the formulation our subsequent CFG derivation is stated on directly.
 
\begin{problem}[Reverse-Time Optimal Control Problem] \label{Prob:Reverse OCP}
    Fix $t_l < T$. Let $\boldsymbol{x}_t \in \mathbb{R}^m$ for $t \in [t_l, T]$, let $\boldsymbol{u}_t \in U(t) \subseteq \mathbb{R}^p$, $\boldsymbol{b}$, $L$ be as in Problem \ref{Prob:OCP} (with the same regularity), and let $K : \mathbb{R} \times \mathbb{R}^m \rightarrow \mathbb{R}$ be an initial reward. For $(t, \boldsymbol{x}_t) \in [t_l, T] \times \mathbb{R}^m$, \textbf{maximise the functional}
    \begin{equation}
        J(t, \boldsymbol{x}_t, \boldsymbol{u}_t) \triangleq \int_{t_l}^t L(s, \boldsymbol{x}_s, \boldsymbol{u}_s) \, ds + K(t_l, \boldsymbol{x}_{t_l}) \label{eq:reverse hjb cost functional}
    \end{equation}
    subject to the state equation
    \begin{equation}
        \dot{\boldsymbol{x}}_s = \frac{d \boldsymbol{x}_s}{ds} = \boldsymbol{b}(s, \boldsymbol{x}_s, \boldsymbol{u}_s), \quad \boldsymbol{x}_t \text{ given}, \label{eq:reverse hjb state equation}
    \end{equation}
    and control constraint $\boldsymbol{u}_s \in U(s)$, $s \in [t_l, t]$.
\end{problem}
 
\begin{defn}[Reverse-Time Value Function] \label{Def:Reverse HJB Problem}
    The \textbf{value function} of Problem \ref{Prob:Reverse OCP} is
    \begin{equation}
        \hat{V}(t, \boldsymbol{x}_t) \triangleq \sup_{\boldsymbol{u}_{[t_l,t]}} J(t, \boldsymbol{x}_t, \boldsymbol{u}_t), \label{eq:reverse hjb value function}
    \end{equation}
    where the supremum ranges over controls with $\boldsymbol{u}_s \in U(s)$, $s \in [t_l, t]$, and $\boldsymbol{x}_s$ solves (\ref{eq:reverse hjb state equation}).
\end{defn}
 
\begin{lemma}[Reverse-Time HJB Equation] \label{Lma:Reverse HJB}
    The value function $\hat{V}$ of Definition \ref{Def:Reverse HJB Problem} satisfies
    \begin{equation}
        \hat{V}_t(t, \boldsymbol{x}_t) = \sup_{\boldsymbol{u}_t \in U(t)} \Big\{ L(t, \boldsymbol{x}_t, \boldsymbol{u}_t) - \big\langle \hat{V}_{\boldsymbol{x}_t}(t, \boldsymbol{x}_t), \, \boldsymbol{b}(t, \boldsymbol{x}_t, \boldsymbol{u}_t) \big\rangle \Big\}, \quad \forall \ t \in (t_l, T], \label{eq:reverse hjb equation}
    \end{equation}
    with boundary condition $\hat{V}(t_l, \boldsymbol{x}_{t_l}) = K(t_l, \boldsymbol{x}_{t_l})$.
\end{lemma}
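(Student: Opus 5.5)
We reduce Lemma~\ref{Lma:Reverse HJB} to the forward-time HJB equation of Theorem~\ref{Thm:HJB Equation} by reversing time and negating the objective. Set $\tau \triangleq t_l + T - s$, so that $s \in [t_l, t]$ runs backwards as $\tau$ runs over $[t_l + T - t, T]$. Define the time-reversed state $\boldsymbol{y}_\tau \triangleq \boldsymbol{x}_{t_l+T-\tau}$ and control $\boldsymbol{v}_\tau \triangleq \boldsymbol{u}_{t_l+T-\tau}$. These satisfy $\dot{\boldsymbol{y}}_\tau = -\boldsymbol{b}(t_l+T-\tau, \boldsymbol{y}_\tau, \boldsymbol{v}_\tau) \triangleq \tilde{\boldsymbol{b}}(\tau, \boldsymbol{y}_\tau, \boldsymbol{v}_\tau)$, with control set $\tilde{U}(\tau) \triangleq U(t_l+T-\tau)$. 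We also set the running cost $\tilde{L}(\tau, \boldsymbol{y}, \boldsymbol{v}) \triangleq -L(t_l+T-\tau, \boldsymbol{y}, \boldsymbol{v})$ and the terminal cost $\tilde{K}(\boldsymbol{y}) \triangleq -K(t_l, \boldsymbol{y})$. The regularity conditions of Problem~\ref{Prob:OCP} (joint continuity, Lipschitz in state, linear growth) are invariant under the affine time reflection and under sign changes of $\boldsymbol{b}$, $L$, $K$. The transformed problem is therefore a well-posed instance of Problem~\ref{Prob:OCP}.

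Next, we change variables $s \mapsto \tau$ in (\ref{eq:reverse hjb cost functional}). The Jacobian $ds = -d\tau$ flips the limits, giving
\[
J(t, \boldsymbol{x}_t, \boldsymbol{u}) = -\Big[\int_{t_l+T-t}^{T} \tilde{L}(\tau, \boldsymbol{y}_\tau, \boldsymbol{v}_\tau)\, d\tau + \tilde{K}(\boldsymbol{y}_T)\Big].
\]
Here $\boldsymbol{y}_{t_l+T-t} = \boldsymbol{x}_t$ is the given initial state. Admissible controls correspond bijectively under the reflection, so taking the supremum over $\boldsymbol{u}$ yields $\hat{V}(t, \boldsymbol{x}) = -V(t_l+T-t, \boldsymbol{x})$, where $V$ is the value function (Definition~\ref{Def:HJB Problem}) of the transformed forward problem. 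The boundary condition follows at once: $\hat{V}(t_l, \boldsymbol{x}) = -V(T, \boldsymbol{x}) = -\tilde{K}(\boldsymbol{x}) = K(t_l, \boldsymbol{x})$.

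We then apply Theorem~\ref{Thm:HJB Equation} to $V$ at $\tau = t_l + T - t \in [t_l, T)$, which corresponds to $t \in (t_l, T]$. By the chain rule, $V_\tau(\tau, \boldsymbol{x}) = \hat{V}_t(t, \boldsymbol{x})$ and $V_{\boldsymbol{x}} = -\hat{V}_{\boldsymbol{x}}$. Substituting, the forward equation $-V_\tau = \inf_{\boldsymbol{v}}\{\tilde{L} + \langle V_{\boldsymbol{x}}, \tilde{\boldsymbol{b}}\rangle\}$ becomes
\[
-\hat{V}_t = \inf_{\boldsymbol{u} \in U(t)}\{-L - \langle \hat{V}_{\boldsymbol{x}}, -\boldsymbol{b}\rangle\} = -\sup_{\boldsymbol{u} \in U(t)}\{L - \langle \hat{V}_{\boldsymbol{x}}, \boldsymbol{b}\rangle\}.
\]
Multiplying through by $-1$ gives (\ref{eq:reverse hjb equation}).

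The main point needing care is the sign bookkeeping in the inner product. The velocity flips sign under reflection, and the gradient flips sign because $\hat{V} = -V$. These two flips cancel, so the net sign in front of $\langle \hat{V}_{\boldsymbol{x}}, \boldsymbol{b}\rangle$ comes solely from negating the whole bracket, which is what converts $\inf$ into $\sup$. A secondary caveat is that Theorem~\ref{Thm:HJB Equation} is used in its classical form. This implicitly assumes that $V$ (hence $\hat{V}$) is differentiable wherever the equation is asserted, consistent with the $C^1$ assumption the paper imposes on $\hat{V}$ in Section~\ref{sec:CFG Optimal Control}.
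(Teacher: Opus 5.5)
Your proof is correct and follows the paper's argument essentially step for step. You use the same reflection $\tau = t_l + T - s$ of $[t_l,T]$ onto itself, negate the objective to turn the supremum into an infimum, apply Theorem~\ref{Thm:HJB Equation}, and back-substitute $V_\tau = \hat{V}_t$, $V_{\boldsymbol{x}} = -\hat{V}_{\boldsymbol{x}}$ so that the two sign flips in the inner product cancel. The only difference is cosmetic: you fold the reflection and the sign change of $L$ and $K$ into one transformed problem, whereas the paper does them as two steps (reflection, then the min--max flip).
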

\begin{proof}
    Define the change of variables at the level of the free (evaluation) variable,
    \begin{equation}
        t \triangleq T + t_l - \tau, \qquad \tilde{\boldsymbol{x}}_\tau \triangleq \boldsymbol{x}_{T+t_l-\tau}, \qquad \tilde{\boldsymbol{u}}_\tau \triangleq \boldsymbol{u}_{T+t_l-\tau}, \nonumber
    \end{equation}
    and apply the same substitution to the dummy (integration) variable, writing $\psi$ for its image: $s \triangleq T+t_l-\psi$, so that $\tilde{\boldsymbol{x}}_\psi = \boldsymbol{x}_s$ and $\tilde{\boldsymbol{u}}_\psi = \boldsymbol{u}_s$ are the same trajectory and control functions evaluated at $\psi$ rather than $\tau$. This maps $[t_l,T]$ to itself, exchanging its endpoints: $t=t_l \iff \tau=T$, $t=T \iff \tau=t_l$, and likewise $s=t_l\iff\psi=T$, $s=T\iff\psi=t_l$.
 
    \textit{State equation.} Writing $s = T+t_l-\psi$, the chain rule gives
    \begin{equation}
        \frac{d\tilde{\boldsymbol{x}}_\psi}{d\psi} = \frac{d\boldsymbol{x}_s}{ds} \cdot \frac{ds}{d\psi} = -\boldsymbol{b}(s, \boldsymbol{x}_s, \boldsymbol{u}_s) = -\boldsymbol{b}(T+t_l-\psi, \tilde{\boldsymbol{x}}_\psi, \tilde{\boldsymbol{u}}_\psi) \triangleq \tilde{\boldsymbol{b}}(\psi, \tilde{\boldsymbol{x}}_\psi, \tilde{\boldsymbol{u}}_\psi). \nonumber
    \end{equation}
 
    \textit{Functional.} Since $s \mapsto \psi$ reverses orientation on $[t_l,T]$, substituting $s=T+t_l-\psi$, $ds=-d\psi$ into (\ref{eq:reverse hjb cost functional}) and flipping the limits of integration ($s=t_l \mapsto \psi=T$, $s=t \mapsto \psi=T+t_l-t=\tau$),
    \begin{equation}
        \int_{t_l}^t L(s, \boldsymbol{x}_s, \boldsymbol{u}_s) \, ds = \int_\tau^T L(T+t_l-\psi, \tilde{\boldsymbol{x}}_\psi, \tilde{\boldsymbol{u}}_\psi) \, d\psi \triangleq \int_\tau^T \tilde{L}(\psi, \tilde{\boldsymbol{x}}_\psi, \tilde{\boldsymbol{u}}_\psi) \, d\psi, \nonumber
    \end{equation}
    where $\tilde{L}(\psi, \tilde{\boldsymbol{x}}_\psi, \tilde{\boldsymbol{u}}_\psi) \triangleq L(T+t_l-\psi, \tilde{\boldsymbol{x}}_\psi, \tilde{\boldsymbol{u}}_\psi)$. The bounds of integration now match Problem \ref{Prob:OCP}'s: the free variable $\tau$ as the lower limit, the fixed time $T$ as the upper limit. Using $\boldsymbol{x}_{t_l} = \tilde{\boldsymbol{x}}_T$,
    \begin{equation}
        J(t, \boldsymbol{x}_t, \boldsymbol{u}_t) = \int_\tau^T \tilde{L}(\psi, \tilde{\boldsymbol{x}}_\psi, \tilde{\boldsymbol{u}}_\psi) \, d\psi + K(t_l, \tilde{\boldsymbol{x}}_T). \nonumber
    \end{equation}
    As $\boldsymbol{u}_s \in U(s) \iff \tilde{\boldsymbol{u}}_\psi \in U(T+t_l-\psi) \triangleq \tilde{U}(\psi)$, taking the supremum over $\boldsymbol{u}_{[t_l,t]}$ is equivalent to taking it over $\tilde{\boldsymbol{u}}_{[\tau,T]}$, so, writing $\tilde{V}(\tau, \tilde{\boldsymbol{x}}_\tau) \triangleq \hat{V}(t, \boldsymbol{x}_t)$,
    \begin{equation}
        \tilde{V}(\tau, \tilde{\boldsymbol{x}}_\tau) = \sup_{\tilde{\boldsymbol{u}}_{[\tau,T]}} \left[ \int_\tau^T \tilde{L}(\psi, \tilde{\boldsymbol{x}}_\psi, \tilde{\boldsymbol{u}}_\psi) \, d\psi + K(t_l, \tilde{\boldsymbol{x}}_T) \right]. \nonumber
    \end{equation}
 
    \textit{Min-max flip.} Define $\bar{V} \triangleq -\tilde{V}$, $\bar{L}(\psi, \tilde{\boldsymbol{x}}_\psi, \tilde{\boldsymbol{u}}_\psi) \triangleq -\tilde{L}(\psi, \tilde{\boldsymbol{x}}_\psi, \tilde{\boldsymbol{u}}_\psi)$, and $\bar{K}(\tilde{\boldsymbol{x}}_T) \triangleq -K(t_l, \tilde{\boldsymbol{x}}_T)$. Then $\sup\{\cdot\} = -\inf\{-\cdot\}$ gives
    \begin{equation}
        \bar{V}(\tau, \tilde{\boldsymbol{x}}_\tau) = \inf_{\tilde{\boldsymbol{u}}_{[\tau,T]}} \left[ \int_\tau^T \bar{L}(\psi, \tilde{\boldsymbol{x}}_\psi, \tilde{\boldsymbol{u}}_\psi) \, d\psi + \bar{K}(\tilde{\boldsymbol{x}}_T) \right], \nonumber
    \end{equation}
    subject to $\dot{\tilde{\boldsymbol{x}}}_\psi = \tilde{\boldsymbol{b}}(\psi, \tilde{\boldsymbol{x}}_\psi, \tilde{\boldsymbol{u}}_\psi)$ on $[\tau, T]$. This is precisely Problem \ref{Prob:OCP} with $t$ replaced by $\tau$ (and its dummy variable $s$ correspondingly by $\psi$), and $L, K, \boldsymbol{b}, U$ replaced by $\bar{L}, \bar{K}, \tilde{\boldsymbol{b}}, \tilde{U}$. By Theorem \ref{Thm:HJB Equation}, $\bar{V}$ therefore satisfies (\ref{eq:hjb equation}) under this substitution:
    \begin{equation}
        -\bar{V}_\tau(\tau, \tilde{\boldsymbol{x}}_\tau) = \inf_{\boldsymbol{v} \in \tilde{U}(\tau)} \Big\{ \bar{L}(\tau, \tilde{\boldsymbol{x}}_\tau, \boldsymbol{v}) + \big\langle \bar{V}_{\tilde{\boldsymbol{x}}_\tau}(\tau, \tilde{\boldsymbol{x}}_\tau), \tilde{\boldsymbol{b}}(\tau, \tilde{\boldsymbol{x}}_\tau, \boldsymbol{v}) \big\rangle \Big\}, \quad \tau \in [t_l, T). \nonumber
    \end{equation}
 
    \textit{Back-substitution.} Since $\bar{V}(\tau, \tilde{\boldsymbol{x}}_\tau) = -\hat{V}(t, \boldsymbol{x}_t)$ with $t = T+t_l-\tau$, the chain rule gives $\bar{V}_\tau(\tau, \tilde{\boldsymbol{x}}_\tau) = \hat{V}_t(t, \boldsymbol{x}_t)$ and $\bar{V}_{\tilde{\boldsymbol{x}}_\tau}(\tau, \tilde{\boldsymbol{x}}_\tau) = -\hat{V}_{\boldsymbol{x}_t}(t, \boldsymbol{x}_t)$. Substituting these together with $\bar{L}(\tau, \tilde{\boldsymbol{x}}_\tau, \boldsymbol{v}) = -L(t, \boldsymbol{x}_t, \boldsymbol{v})$, $\tilde{\boldsymbol{b}}(\tau, \tilde{\boldsymbol{x}}_\tau, \boldsymbol{v}) = -\boldsymbol{b}(t, \boldsymbol{x}_t, \boldsymbol{v})$, and $\tilde{U}(\tau) = U(t)$,
    \begin{equation}
        -\hat{V}_t(t, \boldsymbol{x}_t) = \inf_{\boldsymbol{v} \in U(t)} \Big\{ -L(t, \boldsymbol{x}_t, \boldsymbol{v}) + \big\langle \hat{V}_{\boldsymbol{x}_t}(t, \boldsymbol{x}_t), \boldsymbol{b}(t, \boldsymbol{x}_t, \boldsymbol{v}) \big\rangle \Big\}, \quad t \in (t_l, T], \nonumber
    \end{equation}
    using $\langle -\boldsymbol{a}, -\boldsymbol{b}\rangle = \langle \boldsymbol{a},\boldsymbol{b}\rangle$. Multiplying by $-1$ and using $-\inf\{-\cdot\} = \sup\{\cdot\}$ gives (\ref{eq:reverse hjb equation}). The boundary condition follows from $\hat{V}(t_l, \boldsymbol{x}_{t_l}) = -\bar{V}(T, \tilde{\boldsymbol{x}}_T) = -\bar{K}(\tilde{\boldsymbol{x}}_T) = K(t_l, \boldsymbol{x}_{t_l})$.
\end{proof}
 
\textbf{Sufficient condition for optimality (reverse-time):} The following lemma is the mirror image of Lemma \ref{Lma:HJB Verification}, obtained by transporting it through the same change of variables used to prove Lemma \ref{Lma:Reverse HJB}. We first record the single-point statement that falls out of a direct transport, then extend it to hold on the whole interval.
 
\begin{lemma}[Reverse-Time Sufficient Condition for Optimality] \label{Lma:Reverse HJB Verification}
    Suppose $\check{V} \in C^1([t_l, T] \times \mathbb{R}^m)$ satisfies the reverse-time HJB equation (\ref{eq:reverse hjb equation}) and boundary condition of Lemma \ref{Lma:Reverse HJB}. Suppose there exists a control $\check{\boldsymbol{u}}$ with $\check{\boldsymbol{u}}_t \in U(t)$ and trajectory $\check{\boldsymbol{x}}$, $\check{\boldsymbol{x}}_T$ given, such that
    \begin{equation}
        L(t, \check{\boldsymbol{x}}_t, \check{\boldsymbol{u}}_t) - \big\langle \check{V}_{\boldsymbol{x}_t}(t, \check{\boldsymbol{x}}_t), \boldsymbol{b}(t, \check{\boldsymbol{x}}_t, \check{\boldsymbol{u}}_t) \big\rangle = \max_{\boldsymbol{u}_t \in U(t)} \Big\{ L(t, \check{\boldsymbol{x}}_t, \boldsymbol{u}_t) - \big\langle \check{V}_{\boldsymbol{x}_t}(t, \check{\boldsymbol{x}}_t), \boldsymbol{b}(t, \check{\boldsymbol{x}}_t, \boldsymbol{u}_t) \big\rangle \Big\} \nonumber
    \end{equation}
    for every $t \in [t_l, T]$. Then $\check{V}(T, \boldsymbol{x}_T) = \hat{V}(T, \boldsymbol{x}_T)$ and $\check{\boldsymbol{u}}$ is an optimal control.
\end{lemma}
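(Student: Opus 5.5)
The plan is to transport Lemma~\ref{Lma:HJB Verification} through the same change of variables used to prove Lemma~\ref{Lma:Reverse HJB}. Set $\tau \triangleq T + t_l - t$, $\tilde{\boldsymbol{x}}_\tau \triangleq \check{\boldsymbol{x}}_{T+t_l-\tau}$, $\tilde{\boldsymbol{u}}_\tau \triangleq \check{\boldsymbol{u}}_{T+t_l-\tau}$. Define $\bar{V}(\tau, \boldsymbol{y}) \triangleq -\check{V}(T+t_l-\tau, \boldsymbol{y})$, and take $\bar{L}, \bar{K}, \tilde{\boldsymbol{b}}, \tilde{U}$ exactly as in that proof. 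Since $\check{V} \in C^1([t_l,T]\times\mathbb{R}^m)$ and the map $\tau \mapsto T+t_l-\tau$ is a smooth involution of $[t_l,T]$, we get $\bar{V} \in C^1([t_l,T]\times\mathbb{R}^m)$. The chain-rule identities $\bar{V}_\tau = \check{V}_t$ and $\bar{V}_{\boldsymbol{x}} = -\check{V}_{\boldsymbol{x}}$ then hold pointwise.

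\textbf{Step 1: the HJB equation and boundary condition transfer.} Reverse the back-substitution step of the proof of Lemma~\ref{Lma:Reverse HJB}: multiply (\ref{eq:reverse hjb equation}) by $-1$ and use $-\sup\{\cdot\} = \inf\{-\cdot\}$ together with $\langle -\boldsymbol{a}, -\boldsymbol{c}\rangle = \langle \boldsymbol{a}, \boldsymbol{c}\rangle$. This shows that $\bar{V}$ satisfies (\ref{eq:hjb equation}) for the data $(\bar{L}, \tilde{\boldsymbol{b}}, \tilde{U})$ on $[t_l, T)$. The boundary condition $\check{V}(t_l,\cdot) = K(t_l,\cdot)$ becomes $\bar{V}(T,\cdot) = -K(t_l,\cdot) = \bar{K}(\cdot)$.

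\textbf{Step 2: the maximising condition becomes a minimising condition.} Negate the hypothesis on $\check{\boldsymbol{u}}$, using $-\max\{\cdot\} = \min\{-\cdot\}$. At each $\tau$ this shows that $\tilde{\boldsymbol{u}}_\tau \in \tilde{U}(\tau)$ attains
\[
\min_{\boldsymbol{v}\in\tilde U(\tau)}\bigl\{\bar L(\tau,\tilde{\boldsymbol x}_\tau,\boldsymbol v)+\langle \bar V_{\boldsymbol x}(\tau,\tilde{\boldsymbol x}_\tau),\tilde{\boldsymbol b}(\tau,\tilde{\boldsymbol x}_\tau,\boldsymbol v)\rangle\bigr\}.
\]
We must also check that $\tilde{\boldsymbol{x}}$ is a trajectory for the transformed system, $\dot{\tilde{\boldsymbol{x}}}_\tau = \tilde{\boldsymbol{b}}(\tau, \tilde{\boldsymbol{x}}_\tau, \tilde{\boldsymbol{u}}_\tau)$. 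This is the state-equation computation already done in that proof. Its initial point is $\tilde{\boldsymbol{x}}_{t_l} = \check{\boldsymbol{x}}_T$, which is given, matching the requirement "$\hat{\boldsymbol{x}}_{t_l}$ given" in Lemma~\ref{Lma:HJB Verification}.

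\textbf{Step 3: apply Lemma~\ref{Lma:HJB Verification} and transport back.} Lemma~\ref{Lma:HJB Verification} now gives $\bar{V}(t_l, \tilde{\boldsymbol{x}}_{t_l}) = \bar{V}^{\mathrm{val}}(t_l, \tilde{\boldsymbol{x}}_{t_l})$, where $\bar{V}^{\mathrm{val}}$ is the transformed value function, and shows that $\tilde{\boldsymbol{u}}$ is optimal for the transformed minimisation. The proof of Lemma~\ref{Lma:Reverse HJB} identifies $\bar{V}^{\mathrm{val}} = -\hat{V}$ under the change of variables. Since $\tau = t_l$ corresponds to $t = T$, we obtain $\check{V}(T, \boldsymbol{x}_T) = \hat{V}(T, \boldsymbol{x}_T)$. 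Finally, $J$ in (\ref{eq:reverse hjb cost functional}) equals minus the transformed cost, and the admissible control sets correspond bijectively. So minimality of $\tilde{\boldsymbol{u}}$ is equivalent to maximality of $\check{\boldsymbol{u}}$.

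The main obstacle is bookkeeping at the endpoints rather than any analytic step. The transformed HJB is only asserted on $[t_l,T)$ in $\tau$, that is on $(t_l,T]$ in $t$. The hypothesis is stated on the closed interval, so I would check that the $C^1$ assumption and the single-point conclusion are consistent, in particular that evaluating at $\tau=t_l$ corresponds exactly to the "given" endpoint $\check{\boldsymbol{x}}_T$. I would also note that the regularity conditions of Problem~\ref{Prob:OCP} are invariant under time reversal and sign flips. This guarantees that the transformed problem is admissible for Lemma~\ref{Lma:HJB Verification}.
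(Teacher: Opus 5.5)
Your proposal is correct and follows essentially the same route as the paper. Both arguments reverse time via $t = T + t_l - \tau$ and negate the value function, reward and dynamics, so that the reverse-time HJB equation and the maximisation hypothesis become the forward HJB equation and the minimisation hypothesis of Lemma~\ref{Lma:HJB Verification}. That lemma is then applied with starting point $\tilde{\boldsymbol{x}}_{t_l} = \check{\boldsymbol{x}}_T$, and the result is translated back using $\tau = t_l \iff t = T$. Your additional checks, that the reversed trajectory solves the transformed state equation and that the regularity assumptions survive the reversal, are points the paper leaves implicit; they do not change the argument.
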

\begin{proof}
    Apply the change of variables from the proof of Lemma \ref{Lma:Reverse HJB}, at the level of the free variable alone: $t \triangleq T+t_l-\tau$, $\tilde{\boldsymbol{x}}_\tau \triangleq \boldsymbol{x}_{T+t_l-\tau}$, $\tilde{\boldsymbol{u}}_\tau \triangleq \boldsymbol{u}_{T+t_l-\tau}$ (no dummy variable is needed, as the hypothesis to be transported is a pointwise condition rather than an integral). As shown there, the reverse-time HJB equation and boundary condition transport $\check{V}$ into a candidate $\check{\bar{V}} \triangleq -\check{V}(T+t_l-\tau, \tilde{\boldsymbol{x}}_\tau)$ satisfying the (forward) HJB equation (\ref{eq:hjb equation}) and boundary condition of Theorem \ref{Thm:HJB Equation}, for the problem with $(L, K, \boldsymbol{b}, U)$ replaced by $(\bar{L}, \bar{K}, \tilde{\boldsymbol{b}}, \tilde{U})$. Since $t \mapsto \tau$ is a bijection of $[t_l,T]$ onto itself, "for every $t \in [t_l,T]$" transports directly to "for every $\tau \in [t_l,T]$"; applying the same substitution to the stationarity hypothesis above (using $\langle -\boldsymbol{a}, -\boldsymbol{b} \rangle = \langle \boldsymbol{a}, \boldsymbol{b} \rangle$ and $\max\{-f\} = -\min\{f\}$, as in that proof) shows that $\check{\tilde{\boldsymbol{u}}}_\tau \triangleq \check{\boldsymbol{u}}_{T+t_l-\tau}$ and $\check{\tilde{\boldsymbol{x}}}_\tau \triangleq \check{\boldsymbol{x}}_{T+t_l-\tau}$ satisfy
    \begin{equation}
        \bar{L}(\tau, \check{\tilde{\boldsymbol{x}}}_\tau, \check{\tilde{\boldsymbol{u}}}_\tau) + \big\langle \check{\bar{V}}_{\tilde{\boldsymbol{x}}_\tau}(\tau, \check{\tilde{\boldsymbol{x}}}_\tau), \tilde{\boldsymbol{b}}(\tau, \check{\tilde{\boldsymbol{x}}}_\tau, \check{\tilde{\boldsymbol{u}}}_\tau) \big\rangle = \min_{\boldsymbol{v} \in \tilde{U}(\tau)} \Big\{ \bar{L}(\tau, \check{\tilde{\boldsymbol{x}}}_\tau, \boldsymbol{v}) + \big\langle \check{\bar{V}}_{\tilde{\boldsymbol{x}}_\tau}(\tau, \check{\tilde{\boldsymbol{x}}}_\tau), \tilde{\boldsymbol{b}}(\tau, \check{\tilde{\boldsymbol{x}}}_\tau, \boldsymbol{v}) \big\rangle \Big\} \nonumber
    \end{equation}
    for every $\tau \in [t_l, T]$--precisely the hypothesis of Lemma \ref{Lma:HJB Verification}, with $\check{\tilde{\boldsymbol{x}}}_{t_l} = \check{\boldsymbol{x}}_T$ given. Lemma \ref{Lma:HJB Verification} therefore gives $\check{\bar{V}}(t_l, \check{\tilde{\boldsymbol{x}}}_{t_l}) = \bar{V}(t_l, \check{\tilde{\boldsymbol{x}}}_{t_l})$ and that $\check{\tilde{\boldsymbol{u}}}$ is optimal for the transported problem. Translating back via $\tau = t_l \iff t = T$ and $\check{\bar{V}}(t_l, \check{\tilde{\boldsymbol{x}}}_{t_l}) = -\check{V}(T, \boldsymbol{x}_T)$, $\bar{V}(t_l, \tilde{\boldsymbol{x}}_{t_l}) = -\hat{V}(T, \boldsymbol{x}_T)$ gives $\check{V}(T, \boldsymbol{x}_T) = \hat{V}(T, \boldsymbol{x}_T)$, and optimality of $\check{\tilde{\boldsymbol{u}}}$ for the transported problem translates back to optimality of $\check{\boldsymbol{u}}$ for Problem \ref{Prob:Reverse OCP}.
\end{proof}
 
\textbf{CFG-style guidance as optimal control:} We now show that a guidance scheme built from Tweedie's estimate and the class-conditional score-ratio solves the reverse-time optimal control problem of Definition \ref{Def:Reverse HJB Problem}, for an entire family of running rewards indexed by a free schedule $r(s)$.
 
\begin{theorem}[CFG-Style Guidance Solves the Reverse-Time HJB Equation] \label{Thm:CFG HJB}
    Fix $\xi \in \mathcal{Y}$ and $t_l \in (0, T)$ (so that $\beta_s^2 > 0$ throughout $[t_l, T]$), let $r : [t_l, T] \rightarrow (0, \infty)$ be continuous, and suppose $U(s) \subseteq \mathbb{R}^m$ is nonempty, closed, and convex for every $s \in [t_l, T]$.\footnote{When $U(s)$ is bounded, the regularity of Problem \ref{Prob:OCP} (joint continuity, local Lipschitz in $\boldsymbol{x}$, linear growth) is understood to hold on the set; when $U(s)$ is unbounded, as in Corollary \ref{Cor:CFG Unconstrained}, the regularity of Problem \ref{Prob:OCP} (joint continuity, local Lipschitz in $\boldsymbol{x}$, linear growth) is understood to hold on every compact subset of $\mathbb{R}^m$ rather than globally, as Assumption 12.2.1(iii) of \citet{NonlinearControlHJB_Berkovitz2012} requires.} Define
    \begin{equation}
        \phi(s, \boldsymbol{x}_s) \triangleq \log\left( \frac{p(\boldsymbol{x}_s \mid \xi)}{p(\boldsymbol{x}_s)} \right), \label{eq:cfg phi}
    \end{equation}
    and consider Problem \ref{Prob:Reverse OCP} with dynamics
    \begin{equation}
        \dot{\boldsymbol{x}}_s = \boldsymbol{b}(s, \boldsymbol{x}_s, \boldsymbol{u}_s) \triangleq f(s) \boldsymbol{x}_s - \frac{1}{2} \frac{g(s)^2}{\beta_s^2} \big( \alpha_s \boldsymbol{u}_s - \boldsymbol{x}_s \big), \label{eq:cfg dynamics}
    \end{equation}
    running reward
    \begin{align}
        L(s, \boldsymbol{x}_s, \boldsymbol{u}_s) \triangleq{} & -r(s) \left\lVert \boldsymbol{u}_s - \hat{\boldsymbol{x}}_0(s, \boldsymbol{x}_s, \Omega_Y) \right\rVert_2^2 + \frac{1}{2} g(s)^2 \big\langle \nabla_{\boldsymbol{x}_s} \phi(s, \boldsymbol{x}_s), \nabla_{\boldsymbol{x}_s} \log p(\boldsymbol{x}_s \mid \xi) \big\rangle \nonumber \\
        & + r(s) \left\lVert \Upsilon(s, \boldsymbol{x}_s) \right\rVert_2^2 - \frac{1}{2} \frac{g(s)^2 \alpha_s}{\beta_s^2} \big\langle \nabla_{\boldsymbol{x}_s} \phi(s, \boldsymbol{x}_s), \Upsilon(s, \boldsymbol{x}_s) \big\rangle \nonumber \\
        & + \frac{1}{2} g(s)^2 \Delta_{\boldsymbol{x}_s} \phi(s, \boldsymbol{x}_s), \label{eq:cfg running reward}
    \end{align}
    and initial reward $K(t_l, \boldsymbol{x}_{t_l}) \triangleq \phi(t_l, \boldsymbol{x}_{t_l})$, where
    \begin{equation}
        \Upsilon(s, \boldsymbol{x}_s) \triangleq \text{proj}_{U(s) - \hat{\boldsymbol{x}}_0(s, \boldsymbol{x}_s, \Omega_Y)} \left\{ \frac{1}{4} \frac{g(s)^2 \alpha_s}{r(s) \beta_s^2} \nabla_{\boldsymbol{x}_s} \phi(s, \boldsymbol{x}_s) \right\}. \label{eq:cfg upsilon}
    \end{equation}
    Then the value function of this problem is $\hat{V}(t, \boldsymbol{x}_t) = \phi(t, \boldsymbol{x}_t)$ for every $(t, \boldsymbol{x}_t) \in [t_l, T] \times \mathbb{R}^m$, and the control
    \begin{equation}
        \boldsymbol{u}_t^{\star} \triangleq \hat{\boldsymbol{x}}_0(t, \boldsymbol{x}_t, \Omega_Y) + \Upsilon(t, \boldsymbol{x}_t) \label{eq:cfg optimal control}
    \end{equation}
    is optimal.
\end{theorem}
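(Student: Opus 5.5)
The plan is a verification argument. I will show that $\phi$ of (\ref{eq:cfg phi}) is a $C^1$ solution of the reverse-time HJB equation (\ref{eq:reverse hjb equation}), with the supremum attained at $\boldsymbol{u}^\star_t$ of (\ref{eq:cfg optimal control}). Lemma~\ref{Lma:Reverse HJB Verification} then gives both $\hat{V}=\phi$ and optimality of $\boldsymbol{u}^\star$. The boundary condition $\hat V(t_l,\cdot)=K(t_l,\cdot)=\phi(t_l,\cdot)$ holds by construction. Regularity of $\phi$ follows from $t_l>0$: then $\beta_s^2>0$ on $[t_l,T]$, and both $p(\boldsymbol{x}_s\mid\xi)$ and $p(\boldsymbol{x}_s)$ are smooth, positive Gaussian convolutions of $P(\boldsymbol{X}_0\mid\xi)$ and $P(\boldsymbol{X}_0)$ (see (\ref{eq:tweedie marginal density})). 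So $\phi\in C^{1}$, and in fact $C^2$ in $\boldsymbol{x}$, which is needed for $\Delta\phi$.

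\textbf{Step 1: compute $\phi_t$.} The densities $p(\cdot\mid\xi)$ and $p(\cdot)$ both satisfy the Kolmogorov forward equation of (\ref{eq:canonical diffusion}), namely $\partial_t p=-\nabla\cdot(f\boldsymbol{x}p)+\tfrac12 g^2\Delta p$. Dividing by $p$ gives, for $\ell=\log p$,
\[
\partial_t \ell=-mf-f\langle \boldsymbol{x},\nabla \ell\rangle+\tfrac12 g^2\big(\Delta \ell+\lVert\nabla \ell\rVert_2^2\big).
\]
Subtracting the unconditional equation from the conditional one cancels the $-mf$ term. Writing $\ell_\xi=\log p(\cdot\mid\xi)$ and $\ell_\Omega=\log p(\cdot)$, and using $\lVert a\rVert^2-\lVert b\rVert^2=\langle a-b,a+b\rangle$, this yields
\[
\phi_t=-f\langle\boldsymbol{x},\nabla\phi\rangle+\tfrac12 g^2\Delta\phi+\tfrac12 g^2\langle\nabla\phi,\nabla \ell_\xi+\nabla \ell_\Omega\rangle .
\]

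\textbf{Step 2: solve the inner maximisation.} Substitute $\hat V=\phi$, the dynamics (\ref{eq:cfg dynamics}) and the reward (\ref{eq:cfg running reward}) into the bracket of (\ref{eq:reverse hjb equation}). Set $c\triangleq g^2\alpha/(2\beta^2)$. The only $\boldsymbol{u}$-dependence is then
\[
-r\lVert\boldsymbol{u}-\hat{\boldsymbol{x}}_0\rVert^2+c\langle\nabla\phi,\boldsymbol{u}\rangle,
\]
which is strictly concave. Completing the square rewrites it as $-r\lVert\boldsymbol{u}-\hat{\boldsymbol{x}}_0-\tfrac{c}{2r}\nabla\phi\rVert^2$ plus a term independent of $\boldsymbol{u}$. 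Maximising over the nonempty, closed, convex set $U(t)$ is therefore a Euclidean projection, with unique solution $\boldsymbol{u}^\star=\hat{\boldsymbol{x}}_0+\mathrm{proj}_{U-\hat{\boldsymbol{x}}_0}\{\tfrac{c}{2r}\nabla\phi\}=\hat{\boldsymbol{x}}_0+\Upsilon$, exactly as in (\ref{eq:cfg upsilon}); the supremum is attained, so it is a max. At this point the $\boldsymbol{u}$-terms evaluate to $-r\lVert\Upsilon\rVert^2+c\langle\nabla\phi,\hat{\boldsymbol{x}}_0+\Upsilon\rangle$. The terms $+r\lVert\Upsilon\rVert^2$ and $-c\langle\nabla\phi,\Upsilon\rangle$ built into $L$ cancel these, leaving only $c\langle\nabla\phi,\hat{\boldsymbol{x}}_0\rangle$.

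\textbf{Step 3: match the two sides.} The remaining $\boldsymbol{u}$-free part of $-\langle\nabla\phi,\boldsymbol{b}\rangle$ is $-\langle\nabla\phi,f\boldsymbol{x}+\tfrac{g^2}{2\beta^2}\boldsymbol{x}\rangle$. By Tweedie (Theorem~\ref{Thm:Tweedie}) with the score link (Theorem~\ref{Thm:Score-Model Link}), $\hat{\boldsymbol{x}}_0=(\boldsymbol{x}+\beta^2\nabla \ell_\Omega)/\alpha$. Hence $c\langle\nabla\phi,\hat{\boldsymbol{x}}_0\rangle-\tfrac{g^2}{2\beta^2}\langle\nabla\phi,\boldsymbol{x}\rangle=\tfrac12 g^2\langle\nabla\phi,\nabla \ell_\Omega\rangle$. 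Adding the conditional-alignment and Laplacian terms of $L$, the right-hand side becomes
\[
-f\langle\boldsymbol{x},\nabla\phi\rangle+\tfrac12 g^2\langle\nabla\phi,\nabla \ell_\xi+\nabla \ell_\Omega\rangle+\tfrac12 g^2\Delta\phi,
\]
which is identical to $\phi_t$ from Step~1. This holds for every $(t,\boldsymbol{x}_t)$, and the maximiser is $\boldsymbol{u}^\star$ along any trajectory. Finally, apply Lemma~\ref{Lma:Reverse HJB Verification}.

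\textbf{Main obstacle.} The algebra is routine once the Step~1 identity is in hand. The delicate points are technical. First, the Fokker--Planck equation for a general (possibly empirical) initial measure must be justified, with derivatives taken under the integral via dominated convergence as in Theorem~\ref{Thm:Tweedie}. Second, the regularity of Problem~\ref{Prob:OCP} must hold for the closed-loop dynamics. That requires $\Upsilon$ to be locally Lipschitz in $\boldsymbol{x}$; I would obtain this from nonexpansiveness of the projection combined with smoothness of $\nabla\phi$ and $\hat{\boldsymbol{x}}_0$, working on compact sets when $U$ is unbounded, as the footnote to the statement allows.
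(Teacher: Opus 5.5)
Your proposal is correct and follows the paper's proof essentially step for step. It uses the Kolmogorov forward equation for both log-densities to get $\phi_t$, completes the square and applies the projection theorem to identify $\boldsymbol{u}^\star=\hat{\boldsymbol{x}}_0+\Upsilon$, uses Tweedie's formula to match $H(\boldsymbol{u}^\star)$ with $\phi_t$, and closes with Lemma~\ref{Lma:Reverse HJB Verification}.

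One small point the paper makes explicit and you skip: that lemma as stated only gives $\check V(T,\cdot)=\hat V(T,\cdot)$. To conclude $\hat V=\phi$ at every $t\in(t_l,T]$, apply it on the restricted horizon $[t_l,t]$. This is legitimate because $\hat V(t,\cdot)$ depends only on $L,\boldsymbol{b},K$ over $[t_l,t]$.
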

\begin{proof}
    We verify the hypotheses of Lemma \ref{Lma:Reverse HJB Verification} for $\check{V} \triangleq \phi$ and $\check{\boldsymbol{u}} \triangleq \boldsymbol{u}^{\star}$.
 
    \textit{Regularity.} Since $t_l > 0$, $\beta_t^2 > 0$ for every $t \in [t_l, T]$, so by the argument preceding Theorem \ref{Thm:Tweedie} both $p(\cdot \mid \xi)$ and $p(\cdot)$ are positive-everywhere and $C^{1, \infty}$ on $[t_l, T] \times \mathbb{R}^m$; hence $\phi \in C^{1, \infty}([t_l, T] \times \mathbb{R}^m)$.
 
    \textit{Stationarity.} Fix $(t, \boldsymbol{x}_t)$ and write $\hat{\boldsymbol{x}}_0 \triangleq \hat{\boldsymbol{x}}_0(t, \boldsymbol{x}_t, \Omega_Y)$, $\nabla\phi \triangleq \nabla_{\boldsymbol{x}_t}\phi(t, \boldsymbol{x}_t)$ for brevity. As $\boldsymbol{b}(t, \boldsymbol{x}_t, \boldsymbol{u}_t)$ is affine in $\boldsymbol{u}_t$ and the only $\boldsymbol{u}_t$-dependence of $L(t, \boldsymbol{x}_t, \boldsymbol{u}_t)$ is through $-r(t) \lVert \boldsymbol{u}_t - \hat{\boldsymbol{x}}_0 \rVert_2^2$ (every remaining term of (\ref{eq:cfg running reward}) depends on $(t, \boldsymbol{x}_t)$ alone), the map
    \begin{equation}
        H(\boldsymbol{u}_t) \triangleq L(t, \boldsymbol{x}_t, \boldsymbol{u}_t) - \langle \nabla\phi, \boldsymbol{b}(t, \boldsymbol{x}_t, \boldsymbol{u}_t) \rangle \nonumber
    \end{equation}
    is, as a function of $\boldsymbol{u}_t$, a strictly concave quadratic (since $r(t) > 0$):
    \begin{equation}
        H(\boldsymbol{u}_t) = -r(t) \lVert \boldsymbol{u}_t - \hat{\boldsymbol{x}}_0 \rVert_2^2 + \frac{1}{2} \frac{g(t)^2 \alpha_t}{\beta_t^2} \langle \nabla\phi, \boldsymbol{u}_t \rangle + C(t, \boldsymbol{x}_t) \nonumber
    \end{equation}
    for some $\boldsymbol{u}_t$-independent $C(t, \boldsymbol{x}_t)$. Completing the square, $H$ is maximised over $\mathbb{R}^m$ uniquely at $\boldsymbol{u}_0 \triangleq \hat{\boldsymbol{x}}_0 + \frac{1}{4} \frac{g(t)^2 \alpha_t}{r(t) \beta_t^2} \nabla\phi$. As $U(t)$ is nonempty, closed, and convex, the Hilbert-space projection theorem gives that $H$ restricted to $U(t)$ is maximised uniquely at
    \begin{equation}
        \text{proj}_{U(t)} \{ \boldsymbol{u}_0 \} = \hat{\boldsymbol{x}}_0 + \text{proj}_{U(t) - \hat{\boldsymbol{x}}_0} \{ \boldsymbol{u}_0 - \hat{\boldsymbol{x}}_0 \} = \hat{\boldsymbol{x}}_0 + \Upsilon(t, \boldsymbol{x}_t) = \boldsymbol{u}_t^{\star}. \nonumber
    \end{equation}
 
    \textit{HJB equation.} It remains to show $H(\boldsymbol{u}_t^{\star}) = \phi_t(t, \boldsymbol{x}_t)$, i.e. that (\ref{eq:reverse hjb equation}) holds at $\check{V} = \phi$, $\boldsymbol{u}_t = \boldsymbol{u}_t^{\star}$. Writing $A \triangleq \log p(\boldsymbol{x}_t \mid \xi)$, $B \triangleq \log p(\boldsymbol{x}_t)$ (so $\phi = A - B$), the Kolmogorov forward equation for the canonical image diffusion of Definition \ref{Def:Canonical Diffusion} gives, for either log-density,
    \begin{equation}
        \partial_t \log p = -f(t) \left( m + \langle \boldsymbol{x}_t, \nabla \log p \rangle \right) + \frac{1}{2} g(t)^2 \left( \lVert \nabla \log p \rVert_2^2 + \Delta \log p \right), \nonumber
    \end{equation}
    and subtracting the two copies (for $A$ and $B$; the $-f(t)m$ terms cancel, and $\lVert \nabla A \rVert_2^2 - \lVert \nabla B \rVert_2^2 = \langle \nabla A - \nabla B, \nabla A + \nabla B \rangle = \langle \nabla\phi, \nabla A \rangle + \langle \nabla\phi, \nabla B \rangle$ since $\nabla A - \nabla B = \nabla\phi$) gives
    \begin{align}
        \phi_t(t, \boldsymbol{x}_t) =& -f(t) \langle \boldsymbol{x}_t, \nabla\phi \rangle + \frac{1}{2} g(t)^2 \langle \nabla\phi, \nabla_{\boldsymbol{x}_t} \log p(\boldsymbol{x}_t \mid \xi) \rangle + \frac{1}{2} g(t)^2 \langle \nabla\phi, \nabla_{\boldsymbol{x}_t} \log p(\boldsymbol{x}_t) \rangle \nonumber \\
        & + \frac{1}{2} g(t)^2 \Delta\phi. \label{eq:cfg kolmogorov}
    \end{align}
    On the other hand, substituting $\boldsymbol{u}_t^{\star} = \hat{\boldsymbol{x}}_0 + \Upsilon$ into $\boldsymbol{b}$ and using Tweedie's formula (Theorem \ref{Thm:Tweedie}) in the form $\alpha_t \hat{\boldsymbol{x}}_0 = \beta_t^2 \nabla_{\boldsymbol{x}_t}\log p(\boldsymbol{x}_t) + \boldsymbol{x}_t$ (which follows from combining Theorem \ref{Thm:Tweedie} with (\ref{eq:score-model equation 1}) at $\xi = \Omega_Y$), the $\mp \frac{1}{2}\frac{g(t)^2}{\beta_t^2}\boldsymbol{x}_t$ terms cancel and
    \begin{equation}
        \langle \nabla\phi, \boldsymbol{b}(t, \boldsymbol{x}_t, \boldsymbol{u}_t^{\star}) \rangle = f(t) \langle \boldsymbol{x}_t, \nabla\phi \rangle - \frac{1}{2} g(t)^2 \langle \nabla\phi, \nabla_{\boldsymbol{x}_t}\log p(\boldsymbol{x}_t) \rangle - \frac{1}{2} \frac{g(t)^2 \alpha_t}{\beta_t^2} \langle \nabla\phi, \Upsilon \rangle. \nonumber
    \end{equation}
    Substituting $\boldsymbol{u}_t = \boldsymbol{u}_t^{\star}$ into (\ref{eq:cfg running reward}) (so $\boldsymbol{u}_t^{\star} - \hat{\boldsymbol{x}}_0 = \Upsilon$, and the $\mp r(t)\lVert\Upsilon\rVert_2^2$ terms cancel) and subtracting the expression above,
    \begin{align}
        H(\boldsymbol{u}_t^{\star}) &= L(t, \boldsymbol{x}_t, \boldsymbol{u}_t^{\star}) - \langle \nabla\phi, \boldsymbol{b}(t, \boldsymbol{x}_t, \boldsymbol{u}_t^{\star}) \rangle \nonumber \\
        &= \tfrac{1}{2}g(t)^2\langle\nabla\phi,\nabla_{\boldsymbol{x}_t}\log p(\boldsymbol{x}_t\mid\xi)\rangle - \tfrac12\tfrac{g(t)^2\alpha_t}{\beta_t^2}\langle\nabla\phi,\Upsilon\rangle + \tfrac12g(t)^2\Delta\phi \nonumber \\
        &\quad - f(t)\langle\boldsymbol{x}_t,\nabla\phi\rangle + \tfrac12g(t)^2\langle\nabla\phi,\nabla_{\boldsymbol{x}_t}\log p(\boldsymbol{x}_t)\rangle + \tfrac12\tfrac{g(t)^2\alpha_t}{\beta_t^2}\langle\nabla\phi,\Upsilon\rangle \nonumber \\
        &= -f(t)\langle\boldsymbol{x}_t,\nabla\phi\rangle + \tfrac12g(t)^2\langle\nabla\phi,\nabla_{\boldsymbol{x}_t}\log p(\boldsymbol{x}_t\mid\xi)\rangle + \tfrac12g(t)^2\langle\nabla\phi,\nabla_{\boldsymbol{x}_t}\log p(\boldsymbol{x}_t)\rangle \nonumber \\
        & \quad + \tfrac12g(t)^2\Delta\phi, \nonumber
    \end{align}
    where the $\mp\frac12\frac{g(t)^2\alpha_t}{\beta_t^2}\langle\nabla\phi,\Upsilon\rangle$ terms cancel. This is precisely (\ref{eq:cfg kolmogorov}), so $H(\boldsymbol{u}_t^{\star}) = \phi_t(t,\boldsymbol{x}_t)$, and since $\boldsymbol{u}_t^\star$ maximises $H$ over $U(t)$, (\ref{eq:reverse hjb equation}) holds at $\check V = \phi$ for every $t \in [t_l,T]$.
 
    \textit{Boundary condition.} $\check{V}(t_l,\boldsymbol{x}_{t_l}) = \phi(t_l,\boldsymbol{x}_{t_l}) = K(t_l,\boldsymbol{x}_{t_l})$ by definition of $K$.
 
    The hypotheses of Lemma \ref{Lma:Reverse HJB Verification} hold on $[t_l, T]$ with $\check{V} = \phi$, $\check{\boldsymbol{u}} = \boldsymbol{u}^{\star}$. Since $\hat{V}(t, \boldsymbol{x}_t)$ depends, by Definition \ref{Def:Reverse HJB Problem}, only on $L, \boldsymbol{b}, K$ restricted to $[t_l, t]$, these hypotheses (inherited by restriction) give $\phi(t, \boldsymbol{x}_t) = \hat{V}(t, \boldsymbol{x}_t)$ for every $t \in (t_l, T]$, not merely $t = T$; equality at $t = t_l$ follows directly from the boundary condition. Hence $\phi = \hat{V}$ on $[t_l, T] \times \mathbb{R}^m$, and $\boldsymbol{u}^{\star}$ is optimal.
\end{proof}
 
\begin{corollary}[Unconstrained CFG-Style Guidance] \label{Cor:CFG Unconstrained}
    Under the hypotheses of Theorem \ref{Thm:CFG HJB}, if $U(s) = \mathbb{R}^m$ for every $s \in [t_l, T]$, then $\Upsilon(t,\boldsymbol{x}_t) = \frac{1}{4}\frac{g(t)^2\alpha_t}{r(t)\beta_t^2}\nabla_{\boldsymbol{x}_t}\phi(t,\boldsymbol{x}_t)$, and the optimal control (\ref{eq:cfg optimal control}) reduces to
    \begin{equation}
        \boldsymbol{u}_t^{\star} = \hat{\boldsymbol{x}}_0(t, \boldsymbol{x}_t, \Omega_Y) + \frac{1}{4}\frac{g(t)^2\alpha_t}{r(t)\beta_t^2}\nabla_{\boldsymbol{x}_t}\log\left(\frac{p(\boldsymbol{x}_t\mid \xi)}{p(\boldsymbol{x}_t)}\right). \nonumber
    \end{equation}
\end{corollary}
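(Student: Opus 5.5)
The corollary follows from Theorem~\ref{Thm:CFG HJB} by specialising the projection in (\ref{eq:cfg upsilon}). Nothing in that theorem's proof needs redoing.

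First, I will check that the hypotheses of Theorem~\ref{Thm:CFG HJB} survive the choice $U(s)=\mathbb{R}^m$. The whole space is nonempty, closed, and convex for every $s\in[t_l,T]$. The only subtlety is regularity, since $U(s)$ is now unbounded. The footnote to Theorem~\ref{Thm:CFG HJB} already handles this: the regularity conditions of Problem~\ref{Prob:OCP} are required on compact subsets. The dynamics (\ref{eq:cfg dynamics}) are affine in $\boldsymbol{u}_s$, so they are continuous and locally Lipschitz on any compact set of controls. Hence the theorem applies verbatim, giving $\hat V=\phi$ and the optimality of $\boldsymbol{u}^\star_t=\hat{\boldsymbol{x}}_0(t,\boldsymbol{x}_t,\Omega_Y)+\Upsilon(t,\boldsymbol{x}_t)$.

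Second, I will compute $\Upsilon$. For any fixed $\boldsymbol{a}\in\mathbb{R}^m$, the translate $\mathbb{R}^m-\boldsymbol{a}$ equals $\mathbb{R}^m$. Projection onto $\mathbb{R}^m$ is the identity, because the minimiser of $\lVert \boldsymbol{v}-\boldsymbol{w}\rVert_2$ over all $\boldsymbol{v}\in\mathbb{R}^m$ is $\boldsymbol{v}=\boldsymbol{w}$. Taking $\boldsymbol{a}=\hat{\boldsymbol{x}}_0(t,\boldsymbol{x}_t,\Omega_Y)$ therefore gives $\Upsilon(t,\boldsymbol{x}_t)=\tfrac14\tfrac{g(t)^2\alpha_t}{r(t)\beta_t^2}\nabla_{\boldsymbol{x}_t}\phi(t,\boldsymbol{x}_t)$. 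The same conclusion appears directly in the stationarity step of the theorem's proof: the unconstrained maximiser $\boldsymbol{u}_0$ of $H$ already lies in $U(t)$, so no projection is active.

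Third, I will substitute $\Upsilon$ and the definition (\ref{eq:cfg phi}) of $\phi$ into (\ref{eq:cfg optimal control}), which yields the stated formula. There is no real obstacle here. The only step worth stating carefully is the regularity remark for unbounded control sets, and the footnote of Theorem~\ref{Thm:CFG HJB} is written precisely to cover it.
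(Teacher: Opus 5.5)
Your proposal is correct and matches the paper's proof, which consists only of the observation that $U(t)-\hat{\boldsymbol{x}}_0(t,\boldsymbol{x}_t,\Omega_Y)=\mathbb{R}^m$ and that projection onto $\mathbb{R}^m$ is the identity, after which substituting $\phi$ gives the stated control. Your regularity check is unnecessary, since the corollary is stated under the hypotheses of Theorem~\ref{Thm:CFG HJB}, whose footnote already covers unbounded $U$; it is also slightly misdirected, because the Lipschitz requirement of Problem~\ref{Prob:OCP} is in $\boldsymbol{x}$ (uniformly over controls), not in $\boldsymbol{u}$.
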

\begin{proof}
    $U(t) - \hat{\boldsymbol{x}}_0(t,\boldsymbol{x}_t,\Omega_Y) = \mathbb{R}^m$, and projection onto $\mathbb{R}^m$ is the identity map.
\end{proof}

\end{document}